\documentclass{article}
\usepackage[margin=1in]{geometry}

\usepackage{hyperref}

\usepackage[round]{natbib}
\usepackage[utf8]{inputenc} % allow utf-8 input
\usepackage[T1]{fontenc}    % use 8-bit T1 fonts
\usepackage{hyperref}       % hyperlinks
\usepackage{url}            % simple URL typesetting
\usepackage{booktabs}       % professional-quality tables
\usepackage{amsfonts}       % blackboard math symbols
\usepackage{nicefrac}       % compact symbols for 1/2, etc.
\usepackage{microtype}      % microtypography
\usepackage{xcolor}         % colors

\usepackage{amsthm}
\usepackage{amssymb}
\usepackage{amsmath}
\usepackage{graphicx}
\usepackage{comment} 

\usepackage{hyperref}  
\definecolor{mydarkblue}{rgb}{0,0.08,0.45}
\hypersetup{colorlinks=true, citecolor=mydarkblue,linkcolor=mydarkblue}
\usepackage{refcount}

\newtheorem{theorem}{Theorem}[section]
\newtheorem{claim}[theorem]{Claim}
\newtheorem{definition}[theorem]{Definition}
\newtheorem{lemma}[theorem]{Lemma}
\newtheorem{fact}[theorem]{Fact}

\newcommand{\range}{{\rm range}}
\newcommand{\argmin}{{\rm argmin}}
\newcommand{\Cone}{{\rm Cone}}
\newcommand{\dom}{{\rm dom}}
\newcommand{\argmax}{{\rm argmax}}
\newcommand{\erf}{{\rm erf}}

\newtheorem{mydef}{Definition}[section]
\newtheorem{lem}[mydef]{Lemma}

\newtheorem{cor}[mydef]{Corollary}

\newtheorem{ass}{Assumption}[section]

\newcommand{\wt}{\widetilde}
\newcommand{\ov}{\overline}

\newcommand{\N}{\mathcal{N}}

\newcommand{\R}{\mathbb{R}}

\renewcommand{\i}{\mathbf{i}} 
\renewcommand{\varepsilon}{\epsilon}
\renewcommand{\tilde}{\wt}

\renewcommand{\bar}{\overline}

\renewcommand{\d}{\mathsf{d}}
\newcommand{\poly}{\mathrm{poly}}

\newcommand{\sgn}{\mathrm{sgn}}

\DeclareMathOperator*{\E}{{\mathbb{E}}}
\DeclareMathOperator*{\Var}{{\bf {Var}}}

\DeclareMathOperator{\dis}{dis}
\DeclareMathOperator{\cts}{cts}

\title{\vspace{-2em}A New Initialization Technique for Reducing the Width of Neural Networks}

\author{Alexander Munteanu\thanks{Dortmund Data Science Center, Faculties of Statistics and Computer Science, TU Dortmund University, Dortmund, Germany. Email: \texttt{alexander.munteanu@tu-dortmund.de}.}
\and Simon Omlor \thanks{Faculty of Statistics, TU Dortmund University, Dortmund, Germany. Email: \texttt{simon.omlor@tu-dortmund.de}.}
\and Zhao Song \thanks{Adobe Research. Email: \texttt{zsong@adobe.com}.}
\and David P. Woodruff \thanks{Department of Computer Science, Carnegie Mellon University, Pittsburgh, PA 15213, USA. Email: \texttt{dwoodruf@cs.cmu.edu}.}
}

\title{Bounding the Width of Neural Networks via Coupled Initialization - A Worst Case Analysis}

\begin{document}

\allowdisplaybreaks
\maketitle

\begin{abstract}
  A common method in training neural networks is to initialize all the weights to be independent Gaussian vectors. We observe that by instead initializing the weights into independent pairs, where each pair consists of two identical Gaussian vectors, we can significantly improve the convergence analysis. While a similar technique has been studied for random inputs [Daniely, NeurIPS 2020], it has not been analyzed with arbitrary inputs. Using this technique, we show how to significantly reduce the number of neurons required for two-layer ReLU networks, both in the under-parameterized setting with logistic loss, from roughly $\gamma^{-8}$ [Ji and Telgarsky, ICLR 2020] to $\gamma^{-2}$, where $\gamma$ denotes the separation margin with a Neural Tangent Kernel, as well as in the over-parameterized setting with squared loss, from roughly $n^4$ [Song and Yang, 2019] to $n^2$, implicitly also improving the recent running time bound of [Brand, Peng, Song and Weinstein, ITCS 2021]. For the under-parameterized setting we also prove new lower bounds that improve upon prior work, and that under certain assumptions, are best possible. 
\end{abstract}

\section{Introduction}
Deep learning has achieved state-of-the-art performance in many areas, e.g., computer vision \cite{lbbh98,ksh12,slj+15,hzrs16}, natural language processing \cite{cwb+11,dclt18}, self-driving cars, games \cite{alphago16,alphago17}, and so on. A beautiful work connected the convergence of training algorithms for over-parameterized neural networks to kernel ridge regression, where the kernel is the Neural Tangent Kernel (NTK) \cite{jgh18}. 

The convergence results motivated by NTK mainly require two assumptions: (1) the kernel matrix $K$ formed by the input data points has a sufficiently large minimum eigenvalue $\lambda_{\mathrm{min}}(K)\geq\lambda>0$, which is implied by the separability of the input point set \cite{os20}, and (2) the neural network is over-parameterized. Mathematically, the latter means that the width of the neural network is a sufficiently large polynomial in the other parameters of the network, such as the number of input points, the data dimension, etc.
The major weakness of such convergence results is that the neural network has to be sufficiently over-parameterized. In other words, the over-parameterization is a rather large polynomial, which is not consistent with architectures for neural networks used in practice, cf. \cite{kh19}. 

Suppose $m$ is the width of the neural network, which is the number of neurons in a hidden layer, and $n$ is the number of input data points. In an attempt to reduce the number of neurons for binary classification, a recent work \cite{jt20} has shown that a polylogarithmic dependence on $n$ suffices to achieve arbitrarily small training error. Their width, however, depends on the separation margin $\gamma$ in the RKHS (Reproducing Kernel Hilbert Space) induced by the NTK. More specifically they show an upper bound of $m=O({\gamma^{-8}}{\log n })$ and a lower bound of $m=\Omega({\gamma}^{-1/2})$ relying on the NTK technique. Our new analysis in this regime significantly improves the upper bound to $m=O(\gamma^{-2} \log n )$. 

We complement this result with a series of lower bounds. Without relying on any assumptions we show $m=\Omega(\gamma^{-1})$ is necessary. Assuming we need to rely on the NTK technique as in \cite{jt20}, we can improve their lower bound 
to $m=\Omega(\gamma^{-1}{\log n})$. Finally, assuming we need to rely on a special but natural choice for estimating an expectation by its empirical mean in the analysis of \citet{jt20}, which we have adopted in our general upper bound, we can even prove that $m=\Theta(\gamma^{-2}{\log n })$, i.e., that our analysis is tight. However, in the $2$-dimensional case we can construct a better estimator yielding a linear upper bound of $m=O({\gamma^{-1}}{\log n})$, so the above assumption seems strong for very low dimensions, though it is a seemingly natural method that works in arbitrary dimensions. We also present a candidate hard instance in $\Theta(\log \gamma^{-1})$ dimensions which could potentially give a matching $\Omega(\gamma^{-2})$ lower bound, up to logarithmic factors. 

For regression with target variable $y$ with $|y| \in O(1)$ we consider a two-layer neural network with squared loss and ReLU as the  activation function, which is standard and popular in the study of deep learning. \citet{dzps19} show that $m = O( \lambda^{-4} n^6 )$ suffices (suppressing the dependence on remaining parameters). Further, \citet{sy19} improve this bound to $m = O(\lambda^{-4} n^4)$. The trivial information-theoretic lower bound is $\Omega(n)$, since the model has to memorize\footnote{Here, by memorize, we mean that the network has zero error on every input point.} the $n$ input data points arbitrarily well. There remains a huge gap between $n$ and $n^4$. In this work, we improve the upper bound, showing that $m = O(\lambda^{-2} n^2)$ suffices for gradient descent to get arbitrarily close to $0$ training error. We summarize our results and compare with previous work in Table~\ref{tab:results}.

\subsection{Related Work}

The theory of neural networks is a huge and quickly growing field. Here we only give a brief summary of the work most closely related to ours.

\textbf{Convergence results for neural networks with random inputs.}
Assuming the input data points are sampled from a Gaussian distribution is often done for proving convergence results \cite{zsjbd17,ly17,zsd17,glm18,bjm19,ckm20}. A more closely related work is the work of \citet{Daniely20} who introduced the coupled initialization technique, and showed that $\tilde O(n/d)$ hidden neurons can memorize all but an $\epsilon$ fraction of $n$ random binary labels of points uniformly distributed on the sphere. Similar results were obtained for random vertices of a unit hypercube and for random orthonormal basis vectors. In contrast to our work, this reference uses \emph{stochastic} gradient descent, where the nice assumption on the input distribution gives rise to the $1/d$ factor; however, this reference achieves only an approximate memorization. We note that full memorization of \emph{all} input points is needed to achieve our goal of an error arbitrarily close to zero, and $\Omega(n)$ neurons are needed for worst case inputs. Similarly, though not necessarily relying on random inputs, \citet{belm20} shows that for \emph{well-dispersed} inputs, the neural tangent kernel (with ReLU network) can memorize the input data with
$\tilde O(n/d)$ neurons. However, their training algorithm is neither a gradient descent nor a stochastic gradient descent algorithm, and also their network consists of complex weights rather than real weights. One motivation of our work is to analyze standard algorithms such as gradient descent.
In this work, we do not make any input distribution assumptions; therefore, these works are incomparable to ours. In particular,  random data sets are often well-dispersed inputs
that allow smaller width and tighter concentration, but are hardly realistic. In contrast, we conduct worst case analyses to cover all possible inputs, which might not be well-dispersed in practice.

\textbf{Convergence results of neural networks in the under-parameterized setting.}
When considering classification with cross-entropy (logistic) loss, the analogue of the minimum eigenvalue parameter of the kernel matrix is the maximum separation margin $\gamma$ (see Assumption ~\ref{ass:margin_gamma} for a formal definition) in the RKHS of the NTK.
Previous separability assumptions on an infinite-width two-layer ReLU network in \cite{cg19,CaoGu19} and on smooth target functions in \cite{all19} led to polynomial dependencies between the width $m$ and the number $n$ of input points. The work of \cite{nitanda2019gradient} relies on the NTK separation mentioned above and improved the dependence, but was still polynomial.

A recent work of \cite{jt20} gives the first convergence result based on an NTK analysis where the \emph{direct} dependence on $n$, i.e., the number of points, is only poly-logarithmic. Specifically, they show that as long as the width of the neural network is polynomially larger than $1/\gamma$ and $\log n$, then gradient descent can achieve zero training loss.

\textbf{Convergence results for neural networks in the over-parameterized setting.}
There is a body of work studying convergence results of over-parameterized neural networks \cite{ll18,dzps19,als19_rnn,als19_dnn,dllwz19,all19,sy19,adhlw19,adhlsw19,cg19,zg19,dhp+19,lsswy20,hy20,cx20,bpsw21,ljzkd21,syz21}. One line of work explicitly works on the neural tangent kernel \cite{jgh18} with kernel matrix $K$.
This line of work shows that as long as the width of the neural network is polynomially larger than $n/\lambda_{\min}(K)$, then one can achieve zero training error. Another line of work instead assumes that the  input data points are not too ``collinear'', where this is formalized by the parameter $\delta = \min_{i\neq j} \{ \| x_i - x_j \|_2 , \| x_i + x_j \|_2 \}$\footnote{This is also sometimes called the separability of data points.} \cite{ll18,os20}. 
These works show that as long as the width of the neural network is polynomially larger than $1/\delta$ and $n$, then one can train the neural network to achieve zero training error. 
The work of \citet{sy19} shows that the over-parameterization $m=\Omega(\lambda^{-4}n^4)$ suffices for the same regime we consider\footnote{Although the title of \cite{sy19} is quadratic, $n^2$ is only achieved when the finite sample kernel matrix deviates from its limit in norm only by a constant $\alpha$ w.h.p., and the inputs are \emph{well-dispersed} with constant $\theta$, i.e., $|\langle x_i, x_j\rangle|\leq \theta/\sqrt{n}$ for all $i\neq j$. In general, \cite{sy19} only achieve a bound of $n^4$.
}. Additional work claims that even a linear dependence is possible, though it is in a different setting. E.g., \cite{kh19} show that for any neural network with nearly linear width, there exists a trainable data set. Although their width is small, this work does not provide a general convergence result.
Similarly, \citet{ZhangZHSL21} use a coupled LeCun initialization scheme that also forces the output at initalization to be $0$. This is shown to improve the width bounds for shallow networks below $n$ neurons. However, their convergence analysis is local and restricted to cases where it remains unclear how to find {globally} optimal or even approximate solutions. We instead focus on cases where gradient descent provably optimizes up to arbitrary small error, for which we give a lower bound of $\Omega(n)$. 

Other than considering over-parameterization in first-order optimization algorithms, such as gradient descent, \citet{bpsw21} show convergence results via second-order optimization, such as Newton's method. Their running time also relies on $m = \Omega(\lambda^{-4}n^4)$, which is the state-of-the-art width for first-order methods \cite{sy19}, and it was noted that any improvement to $m$ would yield an improved running time bound.

Our work presented in this paper continues and improves those lines of research on understanding two-layer ReLU networks.

\paragraph{Roadmap.}
In Section~\ref{sec:problem}, we introduce our problem formulations and present our main ideas. In Section~\ref{sec:ourresults}, we present our main results. In Section~\ref{sec:techoverview}, we present a technical overview of our core analysis for the convergence of the gradient descent algorithm in both of our studied regimes and give a hard instance and the intuition behind our lower bounds. In Section~\ref{sec:discuss}, we conclude our paper with a summary and some discussion.

We defer all detailed technical proofs to the appendix. The details for the logarithmic width networks under logistic loss are given in Appendices \ref{sec:logwidth:start}-\ref{sec:logwidth:end}, whereas the polynomial width networks with squared loss are analyzed in Appendices \ref{sec:analysis_concentration}-\ref{sec:missing_proof}.
\section{Problem Formulation and Initialization Scheme}\label{sec:problem}

We follow the standard problem formulation \cite{dzps19,sy19,jt20}. One major difference of our formulation with the previous work is that we do not have a $1/\sqrt{m}$ normalization factor in what follows.
We note that only removing the normalization does not give any improvement in the amount of over-parameterization required of the previous bounds. 
The output function of our network is given by \begin{equation}\label{eqn:main}
f (W,x,a) =  \sum_{r=1}^m a_r \phi ( w_r^\top x ) ,
\end{equation} where $\phi(z)=\max\{z,0\}$ denotes the ReLU activation function\footnote{We note that our analysis can be extended to Lipschitz continuous, positively homogeneous activations.}, $x \in \R^d$ is an input point, $w_1, \ldots, w_m \in \R^d$ are weight vectors in the first (hidden) layer, and $a_1, \ldots, a_m \in \{-1, +1\}$ are weights in the second layer. We only optimize $W$ and keep $a$ fixed, which suffices to achieve zero error. Also previous work shows how to extend the analysis to include $a$ in the optimization, cf. \cite{dzps19}.
\begin{definition}[Coupled Initialization]\label{def:initialization}
We initialize the network weights as follows:
\begin{itemize}
	\item For each $r = 2i-1$, we choose $w_r$ to be a random Gaussian vector drawn from $\mathcal{N}(0,I)$. 
	\item For each $r = 2i-1$, we sample $a_r$ from $\{-1,+1\}$ uniformly at random. 
	\item For each $r = 2i$, we choose $w_{r} = w_{r-1}$. 
	\item For each $r = 2i$, we choose $a_{r} = -a_{r-1}$.
\end{itemize}

\end{definition}
We note this coupled initialization appeared before in \cite{Daniely20} for analyzing well-spread random inputs on the sphere. The initialization is chosen in such a way as to ensure that for each of the $n$ input points, the initial value of the network is $0$. Here we present an independent and novel analysis, where this property is leveraged \emph{repeatedly} to bound the iterations of the optimization, which yields significantly improved worst case bounds for any input. This is crucial for our analysis, and is precisely what allows us to remove the $1/\sqrt{m}$ factor that multiplies the right-hand-side of (\ref{eqn:main}) in previous work. Indeed, that factor was there precisely to ensure that the initial value of the network is small. One might worry that our initialization causes the weights to be {\it dependent}. Indeed, each weight vector occurs exactly twice in the hidden layer. We are able to show that this dependence does not cause problems for our analysis. In particular, the minimum eigenvalue bounds of the associated kernel matrix and the separation margin in the NTK-induced feature space required for convergence in previous work can be shown to still hold, since such analyses are loose enough to accommodate such dependencies. Now, we have a similar initialization as in previous work, but since we no longer need a $1/\sqrt{m}$ factor in (\ref{eqn:main}), we can show that we can change the learning rate of gradient descent from that in previous work and it no longer needs to be balanced with the initial value, since the latter is $0$. This ultimately allows for us to use a smaller over-parameterization (i.e., value of $m$) in our analyses.
For $r\in [m]$,
we have\footnote{Note that ReLU is not continuously differentiable. Slightly abusing notation, one can view $\partial f/\partial w_r$ as a valid (sub)gradient given in the RHS of (2). This extends to $\partial L/\partial w_r$ as the RHS of (4) and (5) which yields the update rule (6) commonly used in practice and in related theoretical work, cf. \cite{dzps19}.
}
\begin{align}\label{eq:relu_derivative}
\frac{\partial f(W,x,a)}{\partial w_r}=a_r x{\bf 1}_{ w_r^\top x \geq 0 }
\end{align}
independent of the loss function that we aim to minimize.

\subsection{Loss Functions}
In this work, we mainly focus on two different types of loss functions. The binary \emph{cross-entropy (logistic) loss} and the \emph{squared loss}. These loss functions are arguably the most well-studied for binary classification and for regression tasks with low numerical error, respectively.

We are given a set of $n$ input data points and corresponding labels, denoted by
\begin{align*}
\{ (x_1, y_1), \ldots, (x_n,y_n) \} \subset \R^d \times \R.
\end{align*}
We make a standard normalization assumption, as in \cite{dzps19,sy19,jt20}. In the case of logistic loss, the labels are restricted to $y_i \in \{-1,+1\}$. In the case of squared loss, the labels are $|y_i|=O(1)$. In both cases, as in prior work and for simplicity, we assume that $\| x_i \|_2 = 1$\footnote{
We adopt the assumption for a concise presentation, but we note it can be resolved by weaker constant bounds $0< \mathrm{lb} \leq \|x_i\| \leq \mathrm{ub}$, introducing a constant $\mathrm{ub}/\mathrm{lb}$ factor, cf. \cite{dzps19}, or otherwise the data can be rescaled and padded with an additional coordinate to ensure $\|x_i\|=1$, cf. \cite{all19}.
}, $\forall i \in [n]$. We also define the output function on input $x_i$ to be $f_i(W)=f (W,x_i,a)$. At time $t$,
let $u( W(t) )=(u_1( W(t) ),\ldots,u_n(W(t)))\in \mathbb{R}^n$ be the prediction vector, where each $u_i(W(t))$ is defined to be
\begin{align}\label{eq:ut_def}
u_i(W(t))=f(W(t),x_i,a).
\end{align}
For simplicity, we use $u(t)$ to denote $u(W(t))$ in later discussion.

We consider the objective function $L$: 
\begin{align*}
    L (W) = \sum_{i=1}^n \ell(y_i, u_i(W) )
\end{align*}
where the individual logistic loss is defined as $\ell(v_1,v_2)=\ln(1+\exp(-v_1 v_2))$, and the individual squared loss is given by $\ell(v_1,v_2)=\frac{1}{2}(v_1-v_2)^2$.

\addtocounter{footnote}{-2}
For logistic loss, we can compute the gradient\footnotemark~of $L$ in terms of $w_r \in \R^d$
\begin{align}\label{eq:gradient2}
\frac{\partial L(W)}{\partial w_r}=\sum_{i=1}^n \frac{-\exp(-y_if(W,x_i,a))}{1+\exp(-y_if(W,x_i,a))} y_i a_r x_i {\bf 1}_{ w_r^\top x_i \geq 0 }
\end{align}

\addtocounter{footnote}{-1}
For squared loss, we can compute the gradient\footnotemark~
of $L$ in terms of $w_r \in \R^d$
\begin{align}\label{eq:gradient1}
\frac{ \partial L(W) }{ \partial w_r } = \sum_{i=1}^n ( f(W,x_i,a) - y_i ) a_r x_i {\bf 1}_{ w_r^\top x_i \geq 0 }.
\end{align}

We apply gradient descent to optimize the weight matrix $W$ with the following standard update rule,
\begin{align}\label{eq:w_update}
W(t+1) = W(t) - \eta \frac{ \partial L( W(t) ) }{ \partial W(t) } ,
\end{align}

where $0 <\eta \leq 1$ determines the step size.

In our analysis, we assume that $W$ consists of $m_0$ blocks of Gaussian vectors, where in each block, there are $B$ identical copies of the same Gaussian vector. Thus, we have $m = m_0 \cdot B$. Ultimately we show it already suffices to set $m_0 =m/2$ and $ B= 2$. We use $w_{r, b}$ to denote the $b$-th row of the $r$-th block, where $b \in [B]$ and $r \in [m_0]$. When there is no confusion, we also use $w_{r}$ to denote the $r$-th row of $W$, $r \in [m]$.

\section{Our Results}\label{sec:ourresults}
Our results are summarized and compared to previous work in Table \ref{tab:results}.
Our first main result is an improved general upper bound for the width of a neural network for binary classification, where training is performed by minimizing the cross-entropy (logistic) loss. We need the following assumption which is standard in previous work in this regime \cite{jt20}.
\begin{ass}[informal version of Definition \ref{def1} and Assumption \ref{assu}]\label{ass:margin_gamma}
We assume that there exists a mapping $\bar v$ with $\|\bar v(z)\|_2\leq 1$ for all $z\in\mathbb{R}^d$ and margin $\gamma > 0$ such that
\begin{align*}
    \min\limits_{i\in[n]} \E_{w \sim {\cal N}(0,I_d) } [ y_i \langle \bar v(w),x_i \rangle \mathbf{1}[\langle w,x_i \rangle > 0] ] > \gamma~.
\end{align*}
\end{ass}

\begin{table*}[t!]
	\centering
	\begin{tabular}{ | l| l| l| l| l| }
		\hline
		{\bf References} & {\bf Width} $m$ & {\bf Iterations} $T$ & {\bf Loss function} \\ \hline \hline
		\cite{jt20} & $O(\gamma^{-8} \log n)$ & $O( {\epsilon^{-1}\gamma^{-2}}(\sqrt{\log n}+\log(1/\epsilon))^2 )$ & logistic loss \\ \hline
		Our work & $O( \gamma^{-2} \log n)$ & $O({\epsilon^{-1}\gamma^{-2}}{\log^2(1/\epsilon)})$ & logistic loss \\ \hline
		\cite{jt20} & $\Omega( \gamma^{-1/2})$ & N/A & logistic loss \\ \hline 
		Our work & $\Omega( \gamma^{-1} \log n)$ & N/A & logistic loss \\ 
		 \hline \hline 
		\cite{dzps19} & $O(\lambda^{-4} n^6)$ & $O(\lambda^{-2} n^2 \log(1/\epsilon))$ & squared loss \\ \hline 
		\cite{sy19} & $O(\lambda^{-4} n^4)$ & $O(\lambda^{-2} n^2 \log(1/\epsilon) )$ & squared loss \\ \hline 
		Our work & $O(\lambda^{-2} n^2 )$ & $O( \lambda^{-2} n^2 \log(1/\epsilon) )$ & squared loss \\ \hline
	\end{tabular}
	\caption{Summary of our results and comparison to previous work. The improvements are mainly in the dependence on the parameters $\lambda, \gamma, n$ affecting the width $m$. None of the results depend on the dimension $d$, except the lower bounds, which require $d \geq 2$. In both regimes the dependence on $\epsilon$ is the same as in previous literature. We note that the difference between regimes comes from different properties of the loss functions that affect the convergence rate, cf. \cite{nitanda2019gradient}. We want to remark that our squared loss result also implicitly improves the dependence on $m$ in the running time bound of \citet{bpsw21} (see Theorem 1.1, Remark 1.2, and Table 1 in \cite{bpsw21}).
	} \label{tab:results}
\end{table*}

Our theorem improves the previous best upper bound of \citet{jt20} from $O(\gamma^{-8}{\log n})$ to only $O(\gamma^{-2}{\log n})$. As a side effect, we also remove the dependence of the number $n$ of iterations.

\begin{theorem}[informal version of Theorem~\ref{mainthm}]\label{thm:logwidth:main_informal}
Given $n$ labeled data points in $d$-dimensional space, consider a two-layer ReLU neural network with width $m = \Omega( \gamma^{-2}{\log n} )$. Starting from a coupled initialization (Def. \ref{def:initialization}), for any accuracy $\epsilon \in (0,1)$, we can ensure the cross-entropy (logistic) training loss is less than $\epsilon$ when running gradient descent for $T= O( {\epsilon^{-1} \gamma^{-2}}{\log^2(1/\epsilon) })$ iterations.
\end{theorem}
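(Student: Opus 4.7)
The plan is to follow the NTK-margin style convergence analysis of \citet{jt20}, but to exploit two features of the coupled initialization: (i) the output $f(W(0),x_i,a)$ vanishes on every training point, so the initial logistic loss is exactly $n\ln 2$ with no random initial bias to control; and (ii) dropping the $1/\sqrt{m}$ normalization in \eqref{eqn:main} means each individual weight $w_r$ moves by an amount of order $\eta$, keeping activation patterns essentially frozen along the trajectory. All of the following arguments will be conditioned on a single high-probability event over the draws of $w_{2i-1}(0)\sim \N(0,I_d)$.

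First I would construct a comparator weight matrix. Using the margin mapping $\bar v$ from Assumption~\ref{ass:margin_gamma}, I set $w_r^{*}=w_r(0)+\alpha\, a_r\, \bar v(w_r(0))$ for a scalar $\alpha=\Theta(\gamma^{-1}\log(1/\eps))$. Because $a_{2i}=-a_{2i-1}$ while $w_{2i}(0)=w_{2i-1}(0)$, the linearization of the network at $W(0)$ evaluated at $W^{*}$ gives, for each $i\in[n]$,
\begin{align*}
y_i\sum_{r=1}^{m} a_r \bigl\langle w_r^{*}-w_r(0),x_i\bigr\rangle \mathbf{1}[w_r(0)^\top x_i\geq 0]
= \alpha\sum_{r=1}^{m} y_i\bigl\langle \bar v(w_r(0)),x_i\bigr\rangle \mathbf{1}[w_r(0)^\top x_i\geq 0].
\end{align*}
Each summand is bounded in $[-1,1]$ and has expectation at least $\gamma$ by Assumption~\ref{ass:margin_gamma}, so a Hoeffding bound over $m=\Omega(\gamma^{-2}\log n)$ independent Gaussian blocks together with a union bound over the $n$ points yields that this quantity is at least $\alpha\gamma m/2$ for every $i$ simultaneously. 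The scale $\gamma^{-2}$ in $m$ is precisely what is needed for additive concentration at resolution $\gamma/2$ around the expectation.

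Next I would execute the standard potential argument with $\Phi(t)=\|W(t)-W^{*}\|_F^2$. Using convexity of the logistic loss in the linear output, the gradient identity \eqref{eq:gradient2}, and the activation-pattern freezing described below, I obtain the descent inequality
\begin{align*}
\Phi(t+1)\leq \Phi(t)-2\eta\bigl(L(W(t))-L(W^{*})\bigr)+\eta^2\|\nabla L(W(t))\|_F^2 + \textrm{(perturbation error)}.
\end{align*}
The comparator loss $L(W^{*})$ is at most $\eps/2$ because each margin $y_i\,\tilde f_i(W^{*})\geq \alpha\gamma m/2=\Theta(\log(1/\eps))$ and $\ell(y,z)\leq\exp(-yz)$. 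Averaging and using $\Phi(0)=\|W^{*}-W(0)\|_F^2=\alpha^2 m$ together with the choice $\eta=\Theta(1/m)$ yields $\min_{t<T} L(W(t))\leq \eps$ for $T=O(\eps^{-1}\gamma^{-2}\log^2(1/\eps))$; passing to the best iterate (or, via smoothness, to $W(T)$) gives the stated bound. It is crucial here that $\Phi(0)$ and $L(W(0))$ are both clean of any random fluctuations, which is exactly what the coupled initialization buys.

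The main technical obstacle is controlling the perturbation error, i.e.\ the discrepancy between the true ReLU objective and its linearization at $W(0)$ as activation patterns flip. The plan is to track, by induction on $t$, a uniform bound $\|w_r(t)-w_r(0)\|_2\leq R$ for every $r$, and then use Gaussian anti-concentration near the hyperplane $\{z:w_r(0)^\top z=0\}$ to bound the number of inputs $i$ for which $\sgn(w_r(t)^\top x_i)\neq\sgn(w_r(0)^\top x_i)$. Without the $1/\sqrt m$ factor, each per-step update has magnitude $\eta\sum_i|\ell'|\leq \eta n$, so the cumulative displacement is $\eta T n$; choosing $\eta=\Theta(1/m)$ keeps $R$ small enough that the flipped-activation contribution is at most half of $L(W(t))-L(W^{*})$, absorbing the perturbation term into the descent inequality and closing the induction while preserving the $m=O(\gamma^{-2}\log n)$ width bound.
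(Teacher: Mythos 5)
Your comparator construction $W^{*}=W_0+\alpha a_r\bar v(w_r(0))$, the Hoeffding-plus-union-bound step giving the $\gamma^{-2}\log n$ width, and the potential argument with $\Phi(t)=\|W(t)-W^{*}\|_F^2$ all match the paper's Lemma~\ref{lem2.3}, Lemma~\ref{lem2.6}, and the choice $\bar W=W_0+\rho U$ in Theorem~\ref{mainthm}; the observation that $f_i(W_0)=0$ kills the initial-bias term is also the same (Lemma~\ref{lem2.5}).

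The gap is in your third paragraph. You assert that ``dropping the $1/\sqrt m$ normalization \ldots means each individual weight $w_r$ moves by an amount of order $\eta$, keeping activation patterns essentially frozen,'' and then propose to bound the number of flipped activations via Gaussian anti-concentration. That reasoning does not close with the stated parameters. Dropping $1/\sqrt m$ (equivalently, absorbing it into the learning rate) changes nothing about the \emph{ratio} between cumulative weight displacement and the scale of the initialization, and it is that ratio which governs how many patterns flip. With a unit-scale coupled Gaussian initialization, the typical value of $\langle x_i,w_r(0)\rangle$ is $\Theta(1)$, while the cumulative displacement after $T=\Theta(\varepsilon^{-1}\gamma^{-2}\log^2(1/\varepsilon))$ steps is $\Theta(\rho^2/(\varepsilon\sqrt m))=\Theta(\gamma^{-1}\log^2(1/\varepsilon)/(\varepsilon\sqrt{\log n}))$, which is far larger than $1$. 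So the anti-concentration bound you invoke would tell you that essentially all patterns flip, not that almost none do. Reinstating a probabilistic bound on the flipped fraction and absorbing it into the descent inequality is exactly the route of \citet{jt20}, and it is where the extra $\gamma^{-6}$ in their width bound comes from; your sketch does not explain how to avoid that.

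What the paper does instead, and what your proposal is missing, is a scale trick: because $\nabla f_i(W)$ depends only on the \emph{signs} $\mathbf{1}[w_s^\top x_i\ge 0]$ and not on $\|w_s\|_2$, one may rescale the initialization $w_{s,0}=\beta w_s'$ by an arbitrarily large $\beta$ without changing the gradient, the dynamics, or the comparator analysis at all. Choosing $\beta=\Theta(\rho^2 n\sqrt m/(\varepsilon\delta))$ (Lemma~\ref{lem2.4}) makes $|\langle x_i,w_{s,0}\rangle|$ exceed the \emph{total} displacement budget $2\rho^2/(\varepsilon\sqrt m)$ simultaneously for all $i,s$ with probability $1-\delta$, so the activation patterns are frozen \emph{exactly}, for every $t<T$. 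Then $\nabla f_i(W_t)=\nabla f_i(W_0)$ identically, the term $y_i\langle \nabla f_i(W_t)-\nabla f_i(W_0),W_0\rangle$ vanishes, and there is no perturbation error to absorb. Without this rescaling step (or an equivalent mechanism), your inductive control of the perturbation term is unsupported, and the claimed width $m=O(\gamma^{-2}\log n)$ does not follow from the argument you outline.
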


As a corollary of Theorem~\ref{thm:logwidth:main_informal}, we immediately obtain the same significant improvement from  $O(\gamma^{-8}{\log n})$ to only $O(\gamma^{-2}{\log n})$ for the generalization results of \citet{jt20}. To this end, we first extend Assumption \ref{ass:margin_gamma} to hold for any data generating distribution instead of a fixed input data set:

\begin{ass}\label{ass:margin_gamma_dist}\cite{jt20}
    We assume that there exists a mapping $\bar v$ with $\|\bar v(z)\|_2\leq 1$ for all $z\in\mathbb{R}^d$ and margin $\gamma > 0$ such that
    \begin{align*}
        \E_{w \sim {\cal N}(0,I_d) } [ y \langle \bar v(w),x \rangle \mathbf{1}[\langle w,x \rangle > 0] ] > \gamma~
    \end{align*}
    for almost all $(x,y)$ sampled from the data distribution $\mathcal D$.
\end{ass}

By simply replacing the main result, Theorem 2.2 of \citet{jt20} by our Theorem \ref{thm:logwidth:main_informal} in their proof\footnote{We note that in Theorem \ref{thm:logwidth:main_informal} we did not bound the distance between the weights at each step $t\leq T$ compared to the initialization $t=0$. Since this can be done \emph{exactly} as in Theorem 2.2 of \citet{jt20}, we omit this detail for brevity of presentation.}, we obtain the following improved generalization bounds with full gradient descent:

\begin{cor}\label{cor:gen1}
Given a distribution $\mathcal D$ over labeled data points in $d$-dimensional space, consider a two-layer ReLU neural network with width $m = \Omega( \gamma^{-2}{\log n} )$. Starting from a coupled initialization (Def. \ref{def:initialization}), with constant probability over the data samples from $\mathcal D$ and over the random initialization, it holds for an absolute constant $C>1$ that
\[ \Pr_{(x,y)\sim\mathcal D} \left[  y f(W_k, x, a) \leq 0 \right] 
\leq 2\varepsilon + \frac{8 \ln (4/\varepsilon) }{\gamma^2 \sqrt{n}} + 6\sqrt{\frac{\ln(2C)}{2n}}, \]
where $k$ denotes the step attaining the smallest empirical risk before $T= O( {\epsilon^{-1} \gamma^{-2}}{\log^2(1/\epsilon) })$ iterations.
\end{cor}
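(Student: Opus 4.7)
The plan is to follow the proof of Theorem 2.2 of \citet{jt20} as a black box, substituting our sharper convergence guarantee (Theorem \ref{thm:logwidth:main_informal}) for theirs. Their generalization argument interacts with the width $m$ only through two requirements: that there exists an iterate $k \leq T$ at which the empirical cross-entropy loss is at most $\varepsilon$, and that the row-wise displacement $\max_r \|w_r(k) - w_r(0)\|_2$ is controlled by $O(1/\gamma)$. Since neither requirement forces $m$ to take any specific functional form beyond being large enough for convergence, substituting our width bound $m = \Omega(\gamma^{-2}\log n)$ for the previous $\Omega(\gamma^{-8}\log n)$ propagates through to the final bound without altering the remaining terms.

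Concretely, I would first draw an i.i.d. sample of size $n$ from $\mathcal D$; by Assumption \ref{ass:margin_gamma_dist} the sample satisfies Assumption \ref{ass:margin_gamma} with the same margin $\gamma$ almost surely. Applying Theorem \ref{thm:logwidth:main_informal} then yields an iterate $k \leq T = O(\varepsilon^{-1}\gamma^{-2}\log^2(1/\varepsilon))$ at which the empirical logistic loss is at most $\varepsilon$. The row-wise distance bound $\max_r \|w_r(k) - w_r(0)\|_2 = O(1/\gamma)$ is obtained by summing the per-step gradient norms along the trajectory exactly as in the proof of \citet{jt20}; this is not stated in our Theorem \ref{thm:logwidth:main_informal} but, as noted in the footnote, is available with no additional work.

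With these two ingredients in hand, invoking the Rademacher-complexity-based margin generalization argument of \citet{jt20} verbatim produces the stated bound. The $2\varepsilon$ summand arises from converting the empirical logistic loss into an empirical ramp loss at threshold $\ln(4/\varepsilon)$ via standard surrogate inequalities; the Rademacher complexity of ReLU networks whose rows lie in a ball of radius $O(1/\gamma)$ around the initialization on unit-norm inputs is $O(1/(\gamma\sqrt n))$, which after Lipschitz contraction through the ramp of slope $O(\ln(1/\varepsilon)/\gamma)$ gives the $\frac{8\ln(4/\varepsilon)}{\gamma^2\sqrt n}$ term; finally, the $6\sqrt{\ln(2C)/(2n)}$ term comes from the standard McDiarmid concentration step. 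The only non-routine verification is that the coupled initialization of Definition \ref{def:initialization} does not alter the displacement or complexity arguments beyond constant factors, but since coupling only zeroes the initial network output while preserving the per-row Gaussian marginal and sign structure, the same bounds carry over; this is the main potential obstacle, and it is mild.
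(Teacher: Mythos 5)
Your proposal is exactly the paper's route: invoke the proof of Theorem 2.2 of \citet{jt20} as a black box, replace their convergence guarantee with the sharper Theorem \ref{thm:logwidth:main_informal}, and note that the weight-displacement bound needed for the Rademacher-complexity step is recoverable from the same trajectory argument (the paper relegates exactly this observation to a footnote). The elaboration you give of where each term in the final bound originates, and the remark that the coupled initialization preserves the per-row Gaussian marginals so JT20's complexity and displacement bounds carry over unchanged, is consistent with and fills in the paper's one-line justification.
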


Corollary \ref{cor:gen1} can then be used exactly as in \citep{jt20} to obtain:

\begin{cor}
Under Assumption \ref{ass:margin_gamma_dist}, given $ \varepsilon>0$, and a uniform random sample of size $n= \tilde{\Omega} ({\gamma^{-4} \varepsilon^{-2}})$ and $m=\Omega( \gamma^{-2}{\log n} ) $ it holds with constant probability that $\Pr_{(x,y)\sim\mathcal D} \left[  y f(W_k, x, a) \leq 0 \right] \leq \varepsilon$ where $k$ denotes the step attaining the smallest empirical risk before $T= O( {\epsilon^{-1} \gamma^{-2}}{\log^2(1/\epsilon) })$ iterations.
\end{cor}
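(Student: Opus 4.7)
The plan is to derive this corollary by a direct application of Corollary \ref{cor:gen1}, with a careful choice of parameters so that each term in the generalization bound is controlled by a constant fraction of $\varepsilon$. First I would observe that Assumption \ref{ass:margin_gamma_dist}, which asks the pointwise margin condition to hold for almost all $(x,y) \sim \mathcal{D}$, implies that with probability $1$ over an i.i.d.\ sample of size $n$ from $\mathcal{D}$, every sample point satisfies the margin condition; hence Assumption \ref{ass:margin_gamma} holds for the drawn dataset with the same value of $\gamma$. This is the bridge that lets us invoke Corollary \ref{cor:gen1} on the sample.

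Next, I would apply Corollary \ref{cor:gen1} with an internal accuracy parameter $\varepsilon' := \varepsilon/6$ (the exact constant is unimportant), which yields, with constant probability,
\[
\Pr_{(x,y)\sim\mathcal{D}}\!\left[y f(W_k,x,a)\leq 0\right]
\;\leq\; 2\varepsilon' + \frac{8\ln(4/\varepsilon')}{\gamma^{2}\sqrt{n}} + 6\sqrt{\frac{\ln(2C)}{2n}}.
\]
To force the right-hand side below $\varepsilon$, I would bound each of the three summands separately by $\varepsilon/3$. The first term contributes $2\varepsilon' = \varepsilon/3$ directly. The third term is at most $\varepsilon/3$ whenever $n = \Omega(\log(2C)/\varepsilon^{2})$, which is absorbed by the final bound. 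The dominant constraint comes from the middle term: $\frac{8\ln(4/\varepsilon')}{\gamma^{2}\sqrt{n}} \leq \varepsilon/3$ requires $n = \Omega\!\bigl(\gamma^{-4}\varepsilon^{-2}\log^{2}(1/\varepsilon)\bigr) = \tilde{\Omega}(\gamma^{-4}\varepsilon^{-2})$, matching the sample size stated in the corollary.

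Finally, the width requirement $m = \Omega(\gamma^{-2}\log n)$ and the iteration count $T = O(\varepsilon^{-1}\gamma^{-2}\log^{2}(1/\varepsilon))$ (using $\varepsilon'$, which only changes constants) are inherited verbatim from Corollary \ref{cor:gen1}, and $k$ is still the step attaining the smallest empirical risk before iteration $T$. Combining the ``probability $1$ over the sample'' event from the margin assumption with the ``constant probability over sample and initialization'' event from Corollary \ref{cor:gen1} by a union bound preserves constant probability overall.

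I do not expect any substantial obstacle here: the proof is essentially a parameter-balancing exercise on top of Corollary \ref{cor:gen1}. The only mildly delicate point is to confirm that the $\log n$ factor hidden in $\tilde{\Omega}(\gamma^{-4}\varepsilon^{-2})$ is consistent with the width requirement $m = \Omega(\gamma^{-2}\log n)$: since $\log n = O(\log(1/(\gamma\varepsilon)))$ under the stated sample size, this introduces no circularity, and the resulting bounds are the same (up to constants and the $\log^{2}(1/\varepsilon)$ factor absorbed into $\tilde{\Omega}$) as those obtained by Ji and Telgarsky, but now with the improved $\gamma^{-2}$ dependence in the width coming from Theorem \ref{thm:logwidth:main_informal}.
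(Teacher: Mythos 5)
Your proposal is correct and matches the approach the paper intends: the paper states only that ``Corollary \ref{cor:gen1} can then be used exactly as in \cite{jt20},'' and the argument in \cite{jt20} is precisely the parameter-balancing you carry out (scale the internal accuracy, bound each of the three terms by a constant fraction of $\varepsilon$, and solve for $n$). Your bookkeeping on the dominant middle term giving $n = \Omega(\gamma^{-4}\varepsilon^{-2}\log^2(1/\varepsilon)) = \tilde\Omega(\gamma^{-4}\varepsilon^{-2})$, and the observation that Assumption \ref{ass:margin_gamma_dist} holding for almost all $(x,y)$ transfers to the sample with probability $1$, are both exactly right.
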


We finally note that the improved generalization bound can be further extended exactly as in \cite{jt20} to work for \emph{stochastic} gradient descent.

Next, we turn our attention to lower bounds. We provide an unconditional linear lower bound, and note that Lemma \ref{Prop5.4'} yields an $m=\Omega(n)$ lower bound for any loss function, in particular also for squared loss; see Sec. \ref{sec:lb_informal}.
\begin{theorem}[informal version of Lemma \ref{Prop5.4'}]\label{LB:uncon_informal}
There exists a data set in $2$-dimensional space, such that any two-layer ReLU neural network with width $m = o({\gamma}^{-1})$ necessarily misclassifies at least $\Omega(n)$ points.
\end{theorem}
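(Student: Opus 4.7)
The plan is to take the canonical hard instance of $n$ equally spaced points on the unit circle with alternating labels: $x_j = (\cos(2\pi j/n), \sin(2\pi j/n))$ and $y_j = (-1)^j$, with $n$ a multiple of $4$ for cleanliness (other residues mod $4$ require only a trivial modification of the witness $\bar v$). Two ingredients will drive the argument: an $O(m)$ upper bound on the number of sign changes of $f(x) = \sum_r a_r \phi(w_r^\top x)$ along the circle, and a matching lower bound $\gamma \geq c/n$ on the margin of Assumption~\ref{ass:margin_gamma}. Combining the two, $m = o(\gamma^{-1})$ forces $m = o(n)$, and the sign-change bound then forces $\Omega(n)$ misclassifications.

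For the sign-change count, I would parameterize the circle as $\theta \mapsto (\cos\theta,\sin\theta)$ and note that each ReLU neuron $a_r\phi(w_r^\top x(\theta))$ vanishes on one semicircle and equals $a_r(w_{r,1}\cos\theta + w_{r,2}\sin\theta)$ on the complementary semicircle, introducing at most two activation breakpoints. Summed over $r$, the restriction of $f$ to the circle is a continuous function that on each of at most $2m$ arcs has the form $A_k \cos\theta + B_k\sin\theta$. Each such piece is a sinusoid with at most two zeros per full period, so the total number of zeros, hence sign changes, along the circle is at most $2m+2$, and the circle splits into at most $2m+2$ arcs on which $\mathrm{sign}(f)$ is constant. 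Since the discrete labels alternate, any such arc containing $\ell$ consecutive points contributes at least $\lfloor \ell/2\rfloor$ misclassifications, so the total misclassification count is at least $(n - 2m - 2)/2 = \Omega(n)$ whenever $m = o(n)$.

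For the margin lower bound, I would take the simple unit-norm witness $\bar v(w) = \bigl(\cos(\alpha\arg w),\, \sin(\alpha\arg w)\bigr)$ with $\alpha = n/2 + 1$. Since the angular part of a standard Gaussian is uniform on $S^1$ and the activation indicator only depends on $\phi := \arg(w)$, the margin integral reduces to
\begin{align*}
\mathbb{E}_w\bigl[\langle \bar v(w), x_i\rangle \mathbf{1}[w^\top x_i > 0]\bigr]
&= \frac{1}{2\pi}\int_{\theta_i - \pi/2}^{\theta_i + \pi/2}\cos(\alpha\phi - \theta_i)\,d\phi \\
&= \frac{\cos((\alpha-1)\theta_i)\,\sin(\alpha\pi/2)}{\pi\alpha}.
\end{align*}
For $\alpha = n/2 + 1$ and $n \equiv 0 \pmod 4$ one has $(\alpha-1)\theta_i = \pi i$ so that $\cos((\alpha-1)\theta_i) = (-1)^i = y_i$, while $|\sin(\alpha\pi/2)| = 1$. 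After multiplying by $y_i$ and choosing the sign of $\bar v$, every point contributes a margin of $1/(\pi\alpha) \geq c/n$ for an absolute constant $c$, which establishes $\gamma \geq c/n$ and completes the proof.

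I expect the main obstacle to be the margin lower bound itself: the pointwise constraint $\|\bar v(w)\|_2 \leq 1$ is far stricter than an $L^2$ constraint, and the naive kernel-based choice $\bar v \propto \sum_j \alpha_j^* y_j x_j \mathbf{1}[w^\top x_j \geq 0]$ (with $\alpha^*$ solving the NTK interpolation system) blows up pointwise by a factor of order $n$ and would only yield $\gamma \gtrsim 1/n^2$, too weak to force $m = o(n)$. The key insight is that a pure-frequency rotating witness, a unit-norm vector-valued function of $\arg w$ alone, is simultaneously pointwise admissible and captures the alternating label structure via one oscillatory integral. Handling $n \not\equiv 0 \pmod 4$ only requires a phase-shifted $\bar v$ or a mixture of two nearby frequencies and does not affect the $\Theta(1/n)$ rate. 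The sign-change count, by contrast, is an elementary Descartes-style estimate and is the easier half of the argument.
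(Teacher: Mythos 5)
Your proof is correct and reaches the same conclusion as the paper, but both halves take a genuinely different route from the one in Lemma~\ref{Prop5.4'} and Lemma~\ref{Prop5.3}.

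For the combinatorial half, the paper projects the circle onto the $\ell_1$ sphere $\mathcal D = \{x : \|x\|_1 = 1\}$, where the restriction of $f$ becomes \emph{piecewise linear} in arc-length; the gradient can change only at the four corners of $\mathcal D$ and at the two points orthogonal to each $w_s$, giving at most $2m+4$ linear pieces, and a single linear piece trivially cannot fit three alternating labels. You instead work directly on the circle, where $f(\theta)$ is piecewise a pure sinusoid $A_k\cos\theta + B_k\sin\theta$ with at most $2m$ breakpoints. Your stated bound of $2m+2$ zeros is right, but the justification you give (``at most two zeros per full period'') does not immediately deliver it: with $2m$ arcs one could naively get up to $4m$. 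The bound $2m+2$ does follow from the fact that on an arc of length $L_k$ a sinusoid of period $2\pi$ has at most $L_k/\pi + 1$ zeros, and $\sum_k L_k = 2\pi$; you should state this total-arc-length argument explicitly. In any case $O(m)$ suffices, so the gap is purely expository. The paper's $\ell_1$ device sidesteps this counting entirely at the cost of a change of variables.

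For the margin half, the paper's $\bar v$ is piecewise constant on the cones $\Cone(\{x_i,x_{i+1}\})$, sending $z$ to the normalized difference $r_z = (x_{i_z}y_{i_z}+x_{i_z+1}y_{i_z+1})/\|x_{i_z}-x_{i_z+1}\|_2$, and the resulting integral is evaluated via the closed trigonometric sum identity of Lemma~\ref{lemaltcyclhelp}. Your single-frequency rotating witness $\bar v(w) = \bigl(\cos(\alpha\arg w), \sin(\alpha\arg w)\bigr)$ with $\alpha = n/2+1$ is a different (continuously varying) element of $\mathcal F_B$, and it collapses the whole computation to one elementary oscillatory integral: $y_i\,\E\bigl[\langle\bar v(w),x_i\rangle\mathbf{1}[w^\top x_i>0]\bigr] = \sin(\alpha\pi/2)/(\pi\alpha) = \Theta(1/n)$, with the $y_i = (-1)^i$ factor and $\cos((\alpha-1)\theta_i) = (-1)^i$ cancelling exactly. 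This is cleaner than the paper's route and gives the needed $\gamma \geq c/n$ directly (the paper additionally verifies via Lemma~\ref{gammabound} that its $\bar v$ is optimal, but only the lower bound on $\gamma$ is used in the theorem). Your closing observation that the naive kernel interpolant $\bar v \propto \sum_j \alpha_j^* y_j x_j \mathbf{1}[w^\top x_j \geq 0]$ violates the pointwise $\|\bar v\|_2 \leq 1$ constraint by a factor $n$, and so would only yield $\gamma \gtrsim 1/n^2$, is a genuine insight not spelled out in the paper.
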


Next, we impose the same assumption as in \cite{jt20}, namely, that separability is possible at initialization of the NTK analysis. Formally, this means that there exists $V\in\mathbb{R}^{m\times d}$ such that no $i\in[n]$ has $y_i\langle \nabla f_i(W_0), V \rangle \leq 0$. Under this condition we improve their lower bound of $m=\Omega({\gamma^{-1/2}})$ to $m=\Omega(\gamma^{-1}{\log n})$:
\begin{theorem}[informal version of Lemma \ref{Prop5.4}]\label{LB:NTK_informal}
There exists a data set of size $n$ in $2$-dimensional space, such that for any two-layer ReLU neural network with width $m= o(\gamma^{-1}{\log n})$ it holds with constant probability over the random initialization of $W_0$ that for any weights $V \in \mathbb{R}^{m \times d}$ there exists at least one index $i \in [n]$ such that $y_i \langle \nabla f_i(W_0), V \rangle~\leq~0$.
\end{theorem}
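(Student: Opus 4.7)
The plan is to exhibit a hard instance of $n$ unit-norm points equally spaced on the circle in $\R^2$ with alternating labels, and to combine two independent ingredients: (i) the NTK separation margin for this instance is $\gamma = \Theta(1/n)$, and (ii) with constant probability over the Gaussian initialization, the induced hyperplane arrangement on $S^1$ has an arc so wide that any NTK-at-initialization classifier of width $m = o(n\log n)$ must fail on some data point inside it. Combining (i) and (ii) yields $m = \Omega(\gamma^{-1}\log n)$, matching the claim.

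Concretely, I would take $x_i = (\cos(2\pi i/n), \sin(2\pi i/n))$ and $y_i = (-1)^i$ for even $n$, so that consecutive data points differ in angle by $2\pi/n$ and carry opposite labels. To establish the lower bound $\gamma = \Omega(1/n)$, the plan is to construct an explicit witness $\bar v : \R^2 \to \R^2$ with $\|\bar v(w)\| \leq 1$ that depends only on the angle $\phi$ of $w$ and aligns, up to a $1/n$ normalization, with the correct label of the data point closest to $w$ in the half-plane $\{w^\top z > 0\}$. A direct computation using the fact that $w/\|w\|$ is uniform on $S^1$ then yields the uniform lower bound $\E_w[y_i \langle \bar v(w), x_i\rangle \mathbf{1}[w^\top x_i > 0]] = \Omega(1/n)$ for every $i \in [n]$.

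The structural step is to analyze the NTK-at-initialization classifier
\[ g_V(x) := \langle \nabla f_i(W_0), V\rangle = \sum_{r=1}^m a_r \mathbf{1}[w_r^\top x \geq 0]\, \langle v_r, x\rangle. \]
Under the coupled initialization the $m$ neurons furnish only $m/2$ distinct directions $w_r/\|w_r\|$, and within each pair the opposite signs $a_{2i} = -a_{2i-1}$ collapse the pair to an effective vector $v_{2i-1} - v_{2i} \in \R^2$. Each of these $m/2$ hyperplanes cuts $S^1$ at two antipodal boundary points, partitioning $S^1$ into at most $m$ arcs on which all indicators $\mathbf{1}[w_r^\top x \geq 0]$ are constant. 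On any such arc $g_V(x) = \langle c, x\rangle = \|c\| \cos(\theta - \theta_c)$ for a fixed $c \in \R^2$, so $g_V$ changes sign at most once on arcs of angular length strictly less than $\pi$; consequently, at most two consecutive alternating-label points inside a single arc can simultaneously satisfy $y_i g_V(x_i) > 0$.

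The probabilistic step is the classical largest-spacing result: the $m/2$ directions modulo $\pi$ are uniform on $[0,\pi)$, so by the Gumbel limit of uniform spacings, the maximum arc of the partition has length at least $c \log m / m$ with constant probability. On this event the gap arc $A$ contains at least $\lfloor (c/(2\pi)) \cdot n \log m / m \rfloor$ consecutive alternating-label data points; choosing $m \leq c' n \log n$ for a sufficiently small absolute constant $c'$ forces this count to be at least $3$, producing three consecutive data points in $A$ with labels $+, -, +$ (up to a global sign) that would require two sign changes of $g_V|_A$, contradicting the at-most-one sign-change property. Hence for every $V \in \R^{m \times d}$ some index $i$ satisfies $y_i \langle \nabla f_i(W_0), V\rangle \leq 0$, as claimed. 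The main obstacle is a clean verification of $\gamma = \Omega(1/n)$ through the explicit witness under the coupled initialization; once that is in place, the sinusoid structural lemma and the uniform-spacing tail bound combine routinely to yield the $\log n$ factor precisely at the threshold $m = \Theta(n \log n) = \Theta(\gamma^{-1} \log n)$.
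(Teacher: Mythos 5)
Your proposal is correct, and it follows the same overall strategy as the paper's proof of Lemma~\ref{Prop5.4}: same hard instance (equally spaced alternating labels on the circle with $\gamma=\Theta(1/n)$, established via the witness $\bar v$ of Lemma~\ref{Prop5.3}), same structural observation (a linear function $x\mapsto\langle c,x\rangle$ restricted to a short arc of $S^1$ cannot alternate sign across consecutive data points), and the same final threshold $m=\Theta(n\log n)=\Theta(\gamma^{-1}\log n)$. The execution differs in two ways. First, the structural step: the paper works with quadruples $\{x_i,\dots,x_{i+3}\}$ and shows misclassification via a positive linear dependence $\rho_1x_i+\rho_3x_{i+2}-(\rho_2x_{i+1}+\rho_4x_{i+3})=0$, whereas you work with triples and phrase the same fact as ``a sinusoid has at most one sign change on an arc of length $<\pi$.'' These are equivalent observations (a positive combination of the triple summing to zero is exactly the failure of strict sign-alternation of a linear form), but yours is cleaner and needs only three points. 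Second, the probabilistic step: the paper invokes the coupon-collector lower bound over $\Theta(n)$ ``splitting'' regions $Z_i$ of Gaussian measure $\Theta(1/n)$, whereas you invoke the max-spacing (Gumbel-tail) lower bound for $m/2$ i.i.d.~directions on the projective circle. These are two faces of the same phenomenon and yield identical asymptotics. A small advantage of your route is that the max-spacing argument sidesteps the disjointness bookkeeping for the regions $Z_i$ in the paper, which as written is slightly off (the $Z_i$ in the paper's proof are not actually pairwise disjoint over all $i\equiv 0\pmod 4$, since $Z_i$ and $Z_{i+n/2}$ coincide; this is harmless but requires restricting the index range). One caveat worth making explicit when you flesh this out: you should check that the large gap can be taken to have angular length strictly below $\pi$ so that the one-sign-change property applies, which holds as soon as there are at least two distinct directions; and you only need the \emph{lower} bound $\gamma=\Omega(1/n)$ for the statement $m=o(\gamma^{-1}\log n)\Rightarrow m=o(n\log n)$ to go through, as you correctly note.
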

As pointed out in \cite{jt20} this does not necessarily mean that gradient descent cannot achieve arbitrarily small training error for lower width, but the NTK analysis fails in this case.

An even stronger assumption is that we must rely on the finite dimensional separator $\bar U$ in the analysis of \citet{jt20} that mimics the NTK separator $\bar v$ in the RKHS achieving a margin of $\gamma>0$. In this case we can show that our upper bound is indeed tight, i.e., for this natural choice of $\bar U$ and the necessity of a union bound over $n$ points, we have $m=\Theta({\gamma^{-2}}\log n)$, which follows from the following lemma. 
\begin{lemma}[informal version of Lemma \ref{lemweakupbo}]\label{LB:weak_informal}

There exists a data set in $2$-dimensional space, such that for the two-layer ReLU neural network with parameter matrix $\bar U$ and width $m = o({\gamma^{-2}}\log n)$, with constant probability there exists an $i\in[n]$ such that $y_i\langle \nabla f_i(W_0), \bar U \rangle \leq 0$.
\end{lemma}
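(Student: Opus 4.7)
The plan is to exhibit a two-dimensional data set together with the natural ``plug-in'' choice of $\bar U$ used in the NTK analysis of \citet{jt20}, and then show by anti-concentration plus a near-independence argument that when $m = o(\gamma^{-2}\log n)$, with constant probability at least one empirical estimate $y_i\langle\nabla f_i(W_0),\bar U\rangle$ falls to or below zero. For the hard instance I would place $n$ points $x_i=(\cos\theta_i,\sin\theta_i)$ on the unit circle at carefully chosen angles together with labels $y_i\in\{\pm 1\}$, arranging things so that the RKHS separator $\bar v$ of Assumption~\ref{ass:margin_gamma} achieves margin exactly $\gamma$ up to absolute constants and the coordinate random variables below have constant variance.

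The natural finite-dimensional separator mimicking $\bar v$ at initialization is $\bar U_r := (1/m)\,a_r\,\bar v(w_r)$. Using $a_r^2=1$ and \eqref{eq:relu_derivative}, this choice yields
\[
y_i\langle\nabla f_i(W_0),\bar U\rangle \;=\; \frac{1}{m}\sum_{r=1}^m y_i\,\langle \bar v(w_r),x_i\rangle\,\mathbf{1}[w_r^\top x_i\geq 0] \;=:\; S_i,
\]
so each $S_i$ is the empirical mean of $m$ i.i.d.\ $[-1,1]$-valued random variables $Z_r^{(i)}$ with $\mathbb{E}[Z_r^{(i)}]>\gamma$ by the margin assumption. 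With the instance arranged so that $\mathrm{Var}(Z_r^{(i)})=\Theta(1)$, the centered average $\sqrt{m}\,(S_i-\mathbb{E} S_i)$ behaves like a standard Gaussian, and a Berry--Esseen / lower-tail Chernoff step gives $\Pr[S_i\leq 0]\geq c_1\exp(-c_2\gamma^2 m)$, which is $\omega(1/n)$ exactly when $m=o(\gamma^{-2}\log n)$.

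It then remains to turn this per-point anti-concentration into a statement about at least one index failing. The obstacle is that the same Gaussians $w_1,\dots,w_m$ drive all $n$ estimates, so the events $\{S_i\leq 0\}$ are correlated. In two dimensions, however, $\mathbf{1}[w_r^\top x_i\geq 0]$ depends only on the angular sector containing $w_r/\|w_r\|$. My plan is to partition $[0,2\pi)$ into $\Omega(n)$ angular sectors separating widely spaced clusters of $x_i$'s, and to condition on the occupancy of these sectors by the $m$ Gaussians; this reduces each $S_i$ to a sum over an essentially disjoint collection of $w_r$'s, so across clusters the events $\{S_i\leq 0\}$ can be coupled to genuinely independent events. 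A product bound then yields $\Pr[\forall i\colon S_i>0]\leq (1-\Omega(1/n))^{\Omega(n)}\leq 1-\Omega(1)$.

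The main obstacle is precisely this decoupling step, since a single Gaussian $w_r$ contributes to all $S_i$ whose halfplane it enters. Carrying it out cleanly requires a careful choice of angular partition together with a conditioning/coupling argument; an FKG-style correlation inequality for monotone events is another viable route to lower-bound the probability of the union $\bigcup_i\{S_i\leq 0\}$. Combining this decoupling with the single-point anti-concentration bound $\exp(-c_2\gamma^2 m)=\omega(1/n)$ for $m=o(\gamma^{-2}\log n)$ then yields the claim.
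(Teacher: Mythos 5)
Your per-point argument matches the paper's proof of the formal Lemma~\ref{lemweakupbo} closely: same hard instance (alternating labels on the circle), same plug-in separator $u_s = a_s \bar v(w_s)/\sqrt{m}$, and the same observation that the summands $y_i\langle\bar v(w_s),x_i\rangle\mathbf{1}[\langle x_i,w_s\rangle>0]$ have variance bounded below by an absolute constant (the paper shows $\geq 1/8$ by observing that with probability $1/8$ each the summand is $\geq 1/\sqrt{2}$ or $\leq -1/\sqrt{2}$), giving a per-point failure probability of $\geq c\exp(-\Theta(m\gamma^2))$. One technical note: the paper uses Feller's lower bound for sums of bounded variables (Lemma~\ref{lemweakupbohelp}) rather than Berry--Esseen; this matters because the Berry--Esseen error $O(m^{-1/2})$ can dominate the target tail $\exp(-\Theta(m\gamma^2))$ once $m\gamma^2 \gg \log m$, whereas Feller's bound applies directly for deviations up to $\sigma^2/100$.

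The substantive divergence is your decoupling step, and there you have both over- and under-shot what the paper does. The paper's formal Lemma~\ref{lemweakupbo} proves \emph{only} the per-point statement: for each fixed $i$, $\Pr[y_if^{(0)}_i(U)\leq 0]\geq c\exp(-8m'\gamma^2/3)$. It does not prove ``there exists an $i$ with constant probability''. The paper explicitly frames this as a lower bound against ``this natural choice of $\bar U$ \emph{and the necessity of a union bound over $n$ points}'' --- that is, the per-point bound shows that $n\cdot P_m$ cannot be made small unless $m=\Omega(\gamma^{-2}\log n)$, so the union-bound-based upper bound proof (Lemma~\ref{lem2.3}) requires that width. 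You correctly noticed that the informal ``there exists'' phrasing is literally stronger, but your proposed decoupling to close this gap has two problems. First, the indicator $\mathbf{1}[\langle x_i,w_s\rangle>0]$ restricts $S_i$ to the \emph{halfplane} of $x_i$, which subtends angle $\pi$ on the circle; for any two points that are not nearly antipodal their halfplanes overlap on an $\Omega(1)$ fraction of the sphere, so the sets of contributing $w_s$ are not ``essentially disjoint'' across $\Omega(n)$ clusters --- the angular-partition reduction fails at the first step. Second, FKG goes in the wrong direction here: for same-direction monotone events it gives $\Pr[\cap_i\{S_i>0\}]\geq\prod_i\Pr[S_i>0]$, a \emph{lower} bound on the intersection, whereas you need an \emph{upper} bound on $\Pr[\cap_i\{S_i>0\}]$ (equivalently, a lower bound on the union); you would need negative association, not positive. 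So the part of your proof that goes beyond the paper --- the decoupling --- is the part that does not work as sketched, while the part the paper actually proves (per-point anti-concentration, supplemented by Lemma~\ref{Prop5.4'} to dispose of the regime $m<n/6-3$ so that $\sigma\geq 200$ holds) is essentially what you also proposed.
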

In fact, this is also the only place in our improved analysis where the width $m$ depends on $\operatorname{poly}(\log n, {1}/{\gamma})$; everywhere else it only depends on $\log({1}/{\epsilon})$. Our linear upper bound for the $2$-dimensional space gets around this lower bound by defining a different $\bar U$ in Lemma \ref{d2lem2.3}:

\begin{lemma}[follows directly using Lemma \ref{d2lem2.3} in the analysis of Theorem \ref{thm:logwidth:main_informal}]\label{lem:2dupper_informal}
Given $n$ labeled data points in $2$-dimensional space, consider a two-layer ReLU neural network with width $m = \Omega( {\gamma^{-1}}{{\log n}} )$. Starting from a coupled initialization (Def. \ref{def:initialization}), for arbitrary accuracy $\epsilon \in (0,1)$ , we can ensure the cross-entropy (logistic) training loss is less than $\epsilon$ when running gradient descent for $T= O( {\epsilon^{-1} \gamma^{-2}}{\log^2(1/\epsilon)} )$ iterations.
\end{lemma}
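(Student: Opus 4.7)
The plan is to run the argument of Theorem \ref{thm:logwidth:main_informal} essentially verbatim, changing only the construction of the finite-dimensional separator $\bar U \in \mathbb{R}^{m \times d}$ that approximates the abstract NTK mapping $\bar v$. Tracing through the proof of Theorem \ref{thm:logwidth:main_informal}, the width $m$ is constrained by two independent requirements: (i) it must be large enough so that, under coupled initialization, gradient descent keeps the weights in a neighborhood of $W_0$ on which the activation pattern relative to $\bar U$ is stable throughout the $T$ iterations, which contributes only a $\operatorname{poly}\log(1/\epsilon)$ factor; and (ii) it must be large enough that, with high probability over the initialization, the explicit matrix $\bar U$ achieves a positive margin of order $\gamma$ simultaneously for all $n$ input points, i.e., $y_i\langle \nabla f_i(W_0), \bar U \rangle \gtrsim \gamma$ for every $i\in[n]$. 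In the general-$d$ analysis, step (ii) is exactly the bottleneck producing the $\gamma^{-2}\log n$ term, since $\bar U$ is built there as an empirical mean over $m$ coupled Gaussians and controlled by Hoeffding/Chernoff plus a union bound over $n$; step (i) and the iteration count $T$ are independent of the concrete choice of $\bar U$.

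Next, I would invoke Lemma \ref{d2lem2.3} to replace this generic empirical-mean $\bar U$ by the tailored 2-dimensional construction. The intuition behind Lemma \ref{d2lem2.3} is that in $d=2$ every Gaussian weight $w_r$ is determined by its angle on the unit circle, and the coupled pair $(w_r, -w_r)$ covers the two opposite half-planes. For a fixed data point $x_i$, the indicator $\mathbf{1}[\langle w_r, x_i\rangle\geq 0]$ depends only on whether the angle of $w_r$ lies in a specific half-circle relative to $x_i$. Because the $n$ inputs live in $\mathbb{R}^2$, they induce only $O(n)$ relevant arcs on $S^1$, and on each arc the optimal direction $\bar u_r$ can be chosen deterministically so that its contribution to $\langle \nabla f_i(W_0), \bar U\rangle$ is \emph{always} aligned with $y_i$, rather than having to average out noise. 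Thus, instead of needing $\Omega(\gamma^{-2})$ samples to concentrate an empirical mean within $\gamma$, Lemma \ref{d2lem2.3} only requires roughly $1/\gamma$ angular samples falling in each relevant arc, which by a binomial tail bound plus a union bound over $n$ arcs is guaranteed when $m=\Omega(\gamma^{-1}\log n)$.

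Plugging this improved $\bar U$ into the proof of Theorem \ref{thm:logwidth:main_informal} replaces (ii) by the weaker requirement $m=\Omega(\gamma^{-1}\log n)$, while (i) and the iteration complexity are unchanged because the arguments there only use the existence of \emph{some} $\bar U$ achieving a margin of $\Omega(\gamma)$ at initialization, together with quantitative bounds on $\|\bar U\|_{\mathrm{F}}$ and on the stability of activation patterns; all of these properties are preserved by the 2D construction. This yields the claimed width bound $m=\Omega(\gamma^{-1}\log n)$ together with the same iteration count $T=O(\epsilon^{-1}\gamma^{-2}\log^2(1/\epsilon))$.

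The main obstacle, and the technical content that makes this a real improvement rather than a restatement, lies entirely in establishing Lemma \ref{d2lem2.3}: one must (a) identify, for each data point $x_i$, the angular region on $S^1$ from which a contribution to its margin can be extracted deterministically and show that this region has measure $\Omega(\gamma)$; (b) verify that the resulting $\bar U$ still has bounded norm so that the stability/descent arguments of Theorem \ref{thm:logwidth:main_informal} go through unchanged; and (c) apply a Chernoff bound on the number of Gaussian angles falling into each such arc, followed by a union bound over the at most $2n$ critical arcs, using $m=\Omega(\gamma^{-1}\log n)$ to drive the failure probability below a constant. Once Lemma \ref{d2lem2.3} is established, the rest of the proof is a mechanical substitution into the proof of Theorem \ref{thm:logwidth:main_informal}.
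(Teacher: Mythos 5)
Your proposal is correct and takes essentially the same route as the paper: Lemma~\ref{lem:2dupper_informal} is proved simply by substituting Lemma~\ref{d2lem2.3} for Lemma~\ref{lem2.3} in the proof of Theorem~\ref{mainthm}, with all other steps (coupled-initialization cancellations, activation-pattern stability, the bound on $\|\bar U\|_F$, and Lemma~\ref{lem2.6}) unchanged, which is exactly what you describe. One small note: the mechanism inside Lemma~\ref{d2lem2.3} is not that each $\bar u_r$ contributes a sign-aligned term for every $x_i$; rather, the construction rescales $u_j$ by $P(V)m/(2n_V)$ so that the finite sum over each cone equals the corresponding piece of the \emph{exact} integral (deterministically, once a Chernoff bound over $O(n/\gamma)$ arcs gives $n_V = \Theta(P(V)m)$), eliminating the Hoeffding-style concentration of an empirical mean — but this does not affect the correctness of your high-level reduction.
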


However, the construction in Lemma \ref{d2lem2.3} / \ref{lem:2dupper_informal} uses a net argument of size $({1}/{\gamma})^{d-1}$ to discretize the points on the sphere, and that -- already in $3$ dimensions -- matches the quadratic general upper bound and becomes worse in higher dimensions. It thus remains an open question whether there are better separators in dimensions $d\geq 3$ or if the quadratic lower bound is indeed tight. We also present a candidate hard instance, for which we conjecture that it has an $\Omega(\gamma^{-2})$ lower bound, up to logarithmic factors, for any algorithm; see Sec. \ref{sec:lb_informal}.

Next, we move on to the analysis of the squared loss. We first state our assumption that is standard in the literature on the width of neural networks, and is necessary to guarantee the existence of an arbitrarily accurate parameterization \cite{dzps19,sy19}.

\begin{ass}\label{ass:mineigval}
Let $K$ be the NTK kernel matrix where for each $i,j\in[n]$ we have that $K_{i,j}$ equals
\begin{align*}
    K(x_i,x_j) = \E_{w \sim {\cal N}(0,I_d) } [ x_i^\top x_j \mathbf{1}[\langle x_i,w \rangle > 0,\langle x_j,w \rangle > 0] ] .
\end{align*}
We assume in the following that the smallest eigenvalue $\lambda(K)$ of $K$ satisfies $\lambda(K)>\lambda$, for some value $\lambda>0$.
\end{ass}

We state our main result for squared loss as follows:
\begin{theorem}[informal version of Theorem~\ref{thm:main_formal}]\label{thm:main_informal}
Given $n$ input data points in $d$-dimensional space, consider a two-layer neural network with width $m = \Omega( \lambda^{-2} n^2 )$. Starting from a coupled initialization (Def. \ref{def:initialization}) and for any accuracy $\epsilon \in (0,1)$, the squared training loss is smaller than $\epsilon$ after $T= O(\lambda^{-2} n^2 \log(1/\epsilon) )$ iterations of gradient descent.
\end{theorem}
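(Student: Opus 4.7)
The plan is to follow the standard NTK-style analysis of gradient descent on two-layer ReLU networks \cite{dzps19,sy19}, specialized to our coupled initialization and with no $1/\sqrt{m}$ normalization. Define the empirical Gram matrix $H(t)\in\R^{n\times n}$ by
\[
H_{ij}(t) \;=\; x_i^\top x_j \cdot \bigl|\{r\in[m]\,:\,w_r(t)^\top x_i \geq 0 \text{ and } w_r(t)^\top x_j \geq 0\}\bigr|.
\]
Applying the chain rule to the update (\ref{eq:w_update}) together with (\ref{eq:relu_derivative}) and (\ref{eq:gradient1}), the residual evolves as
\[
u_i(t+1)-u_i(t) \;=\; -\eta \sum_{j=1}^n (u_j(t)-y_j)\,H_{ij}(t) \;+\; v_i(t),
\]
where $v_i(t)$ is an error term that accounts for the ReLU indicators that flip between steps $t$ and $t+1$. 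The overall strategy is to lower-bound $\lambda_{\min}(H(t))$, upper-bound $\|v(t)\|_2$, and combine these to prove a linear-rate contraction of $\|u(t)-y\|_2^2$.

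Step 1 handles the initialization. Because each coupled pair shares the same Gaussian $w$ and $a_r^2=1$, we have $\E[H(0)]=mK$ for the NTK kernel matrix $K$ from Assumption \ref{ass:mineigval}, and matrix Bernstein applied to the $m/2$ independent Gaussians yields $\lambda_{\min}(H(0)) \geq m\lambda/2$ with high probability, provided $m$ is only polylogarithmic in $n/\lambda$; this is dominated by our final width. The crucial qualitative feature of the coupled initialization is that $u_i(0)=0$ for every $i$, so $\|u(0)-y\|_2^2 = \|y\|_2^2 = O(n)$ exactly (no extra $\sqrt{m}$ or random slack as in prior work).

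Step 2, the main technical step, shows the Gram matrix remains well-conditioned along the trajectory. By Gaussian anti-concentration, if $\|w_r(t)-w_r(0)\|_2 \leq D$ for every $r$, then for each $x_i$ only $O(mD)$ indicators differ from their $t=0$ values, so a Frobenius perturbation bound gives $\|H(t)-H(0)\|_F \leq O(nmD)$ and hence $\lambda_{\min}(H(t)) \geq m\lambda/4$ as long as $D = O(\lambda/n)$. To enforce this, I would telescope the per-step bound $\|w_r(t+1)-w_r(t)\|_2 \leq \eta\sqrt{n}\,\|u(t)-y\|_2$ (which follows from (\ref{eq:gradient1}) and $\|x_i\|_2=1$) using the geometric decay of $\|u(t)-y\|_2$ to obtain $\|w_r(t)-w_r(0)\|_2 = O\bigl(n/(m\lambda)\bigr)$; demanding this be at most $D = \Theta(\lambda/n)$ yields the width requirement $m = \Omega(\lambda^{-2}n^2)$, which is precisely where the improvement over the previous $m=\Omega(\lambda^{-4}n^4)$ comes from. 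The main obstacle is the joint induction that weight movement, sign flips, and spectrum control all stay within their intended bounds simultaneously for all $t\leq T$; the gain over prior work is made possible by the exact identity $u(0)=0$ together with the removal of $1/\sqrt{m}$, both consequences of the coupled initialization.

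Step 3 puts everything together. With $\lambda_{\min}(H(t))\geq m\lambda/4$ and $\|H(t)\|_{op}\leq O(mn)$, choosing $\eta = \Theta(\lambda/n^2)$ ensures $\eta\|H(t)\|_{op}\leq 1$ and that the higher-order error $\|v(t)\|_2$ from sign flips is absorbed, so expanding $\|u(t+1)-y\|_2^2$ and bounding cross-terms yields
\[
\|u(t+1)-y\|_2^2 \;\leq\; \bigl(1 - \eta\,m\lambda/4\bigr)\,\|u(t)-y\|_2^2.
\]
Iterating from $\|u(0)-y\|_2^2 = O(n)$ and solving for the first $T$ driving the residual below $\epsilon$ gives $T = O(\lambda^{-2}n^2\log(1/\epsilon))$ as claimed, and since the squared training loss is exactly $\tfrac{1}{2}\|u(T)-y\|_2^2$, this completes the proof.
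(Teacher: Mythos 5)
Your high-level plan is exactly the paper's: concentrate the initial Gram matrix, run a joint induction showing the weights barely move so the spectrum stays bounded below, decompose the residual update into a dominant contraction term plus perturbation and quadratic remainders, and combine into a geometric decay. The observation that coupled initialization gives $u(0)=0$ exactly (hence $\|y-u(0)\|_2^2=O(n)$ with no random slack and no $1/\sqrt{m}$ rescaling) is indeed the key leverage, and your derivation of the width constraint from $D\leq R=\Theta(\lambda/n)$ matches Lemma~\ref{lem:4.1} and the choice in Eq.~\eqref{eq:choice_of_eta_R}.

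Two quantitative steps as written would fail, though. The learning rate is off by a factor of $m$: you define $H(t)$ without the $1/m$ normalization, so $\|H(t)\|_{op}=O(mn)$, yet you set $\eta=\Theta(\lambda/n^2)$. Since $\lambda\leq 1/2$ (the diagonal of $H^{\cts}$ is $1/2$) and $m=\Omega(\lambda^{-2}n^2)$, this gives $\eta\|H(t)\|_{op}=\Theta(m\lambda/n)=\Omega(n/\lambda)\gg 1$, contradicting your own requirement; more importantly the second-order term $\|u(t+1)-u(t)\|_2^2=O\bigl((\eta m n)^2\|u(t)-y\|_2^2\bigr)$ then dominates the linear decrease $\eta m\lambda\,\|u(t)-y\|_2^2$, so no contraction holds. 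The correct choice is $\eta=\Theta(\lambda/(m n^2))$ as in Eq.~\eqref{eq:choice_of_eta_R}, which makes $\eta m n^2\leq\lambda/4$ and gives the per-step factor $1-\Theta(\lambda^2/n^2)$ that your claimed $T=O(\lambda^{-2}n^2\log(1/\epsilon))$ already implicitly assumes. Separately, the claim that Gram-matrix concentration at initialization holds for $m$ only polylogarithmic in $n/\lambda$ is not right in the worst case: the per-sample matrices $D_r X X^\top D_r$ have operator norm and matrix variance scaling with $n$ (since $\|X\|^2$ can be $\Theta(n)$ for nearly collinear inputs), so both the entrywise Hoeffding/Frobenius argument of Lemma~\ref{lem:3.1} and matrix Bernstein require $m_0=\Omega(\lambda^{-2}n^2\log(n/\delta))$ to get $\|H^{\dis}-H^{\cts}\|\leq\lambda/4$. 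This is one of the two places the quadratic width requirement actually arises, not a side constraint dominated by the other. The remainder of the argument is sound once $\eta$ is corrected.
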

\section{Technical Overview}\label{sec:techoverview}

\subsection{Logarithmic Width for Logistic Loss, Upper Bound}
The work of \cite{jt20} shows that we can bound the actual logistic loss averaged over $T$ gradient descent iterations $W_t, t \in [T]$ using any reference parameterization $\bar W$ in the following NTK bound:
\begin{align}\label{eqn:start_informal}
\frac{1}{T} \sum_{t = 1}^T L(W_t)
\leq \frac{1}{T} \Vert W_{0} - \bar{W} \Vert_F^2 + \frac{2}{T} \sum_{t =1}^T L^{(t)}(\bar{W}), 
\end{align}
where $L^{(t)}(\bar W):=\sum_{i=1}^n \ell\left(y_i, \langle \nabla f_i(W_t), \bar W \rangle \right)$.
It seems very natural to choose $\bar W = W_0 + \rho \bar U$ where $\bar U$ is a reasonably good separator for the NTK points with bounded norm $\|\bar U\|_F\leq 1$, meaning that for all $i$ it holds that $y_i\langle \nabla f_i(W_0), \bar U \rangle = \Omega(\gamma)$. It thus has the same margin as in the infinite case up to constants. This already implies that the first term of Eq.~\eqref{eqn:start_informal} is sufficiently small when we choose roughly $T=\rho^2/\epsilon$ iterations. Now, in order to bound the second term, \citet{jt20} propose to show $L^{(t)}(\bar{W})\leq \epsilon$ for every $t\leq T$, which is implied if for each index $i\in [n]$ we have that
\begin{align*}
y_i\langle &\nabla f_i(W_t), \bar W \rangle = y_i\langle \nabla f_i(W_0), W_0 \rangle + y_i\langle \nabla f_i(W_t) - \nabla f_i(W_0), W_0 \rangle + \rho y_i\langle \nabla f_i(W_t), \bar U \rangle
\end{align*}
is sufficiently \emph{large}. Here, we can leverage the coupled initialization scheme (Def. \ref{def:initialization}) to prove Theorem \ref{thm:logwidth:main_informal}:
bounding the first term for random Gaussian parameters, this results in roughly the value $\sqrt{\log n}$, but now since for each Gaussian vector there is another identical Gaussian with opposite signs, those simply cancel and we have $y_i\langle \nabla f_i(W_0), W_0 \rangle = 0$ in the initial state.

To bound the second term, the previous analysis \cite{jt20} relied on a proper scaling with respect to the parameters $m, \rho,$ and $\gamma$, where the requirement that $m\geq \rho^2/\gamma^6$ led to a bound of roughly $m\geq\gamma^{-8} \log n$. Using the coupled initialization, however, the terms again cancel in such a way that the scaling does not matter, and in particular does not need to be balanced among the variables. Another crucial insight is that the gradient is entirely independent of the scale of the parameter vectors in $W_0$. This implies $\nabla f_i(W_t) = \nabla f_i(W_0)$ and thus $y_i\langle \nabla f_i(W_t) - \nabla f_i(W_0), W_0 \rangle = 0$ again, notably without any implications for the width of the neural network!

Indeed, the only place in the analysis where the width is constrained by roughly $m\geq \gamma^{-2}\log n$ occurs when bounding the contribution of the third term by $\rho y_i\langle \nabla f_i(W_t), \bar U \rangle = \rho y_i\langle \nabla f_i(W_0), \bar U \rangle = \Omega(\rho \gamma)$. This is done exactly as in \cite{jt20} by a Hoeffding bound to relate the separation margin of the finite subsample to the separation margin of the infinite width case, i.e.,
\begin{align}
    y_i \frac{1}{m} &\sum\limits_{j=1}^{m} \langle \bar v(z_j), x_i \rangle \mathbf{1}[\langle z_j, x_i \rangle > 0] %\notag\\&
    \approx y_i \int \langle \bar v(z), x_i \rangle \mathbf{1}[\langle z, x_i \rangle > 0]~\d \mu_N(z) \geq \gamma \label{eqn:hoeffding_informal}
\end{align}
followed by a union bound over all $n$ input points.

The special and natural choice of $\bar U$ such that $\bar u_j = a_j\bar v(z_{j})/\sqrt{m}$ yields Eq. \eqref{eqn:hoeffding_informal} above, where notably the LHS equals the RHS in expectation. We will discuss this particular choice again in our lower bounds section \ref{sec:lb_informal}.

\subsection{Logarithmic Width for Logistic Loss, Lower Bounds}\label{sec:lb_informal}
Our assumption on the separation margin is formally defined in Section \ref{sec:examples} where we also give several examples and useful lemmas to bound $\gamma$.
Our lower bounds in Section \ref{sec:LB_logwidth} are based on the following hard instance in $2$ dimensions. The points are equally spaced and with alternating labels on the unit circle.

Formally, let $n$ be divisible by $4$. We define the \emph{alternating points on the circle} data set to be $X=\{x_k := \left(\cos\left(\frac{2k \pi}{n}\right),  \sin\left(\frac{2k \pi}{n}\right) \right) \mid k\in [n]\} \subset \R^2$, and we put $y_k=(-1)^k$ for each $k\in[n]$.

A natural choice for $\bar v$ would send any $z \in \mathbb{R}^d$ to its closest point in our data set $X$, multiplied by its label. However, applying Lemma \ref{psetlem} gives us the following improved mapping, which is even optimal by Lemma \ref{gammabound}: note that for any $z \in \mathbb{R}^d$ that is not collinear with any input point, there exists a unique $i_z$ such that $z\in \Cone(\{x_{i_z}, x_{i_z+1}\})$. Instead of mapping to the closest input point, in what follows, we map to a point that is nearly orthogonal to $z$, \begin{align*}
r_z := \frac{x_{i_z}y_{i_z}+x_{i_z+1}y_{i_z+1}}{\| x_{i_z}-x_{i_z+1} \|_2}.
\end{align*}
More precisely we define $\bar{v} : \mathbb{R}^d \rightarrow \mathbb{R}^d$ by
\[
    \bar{v}(z) = 
  \begin{cases}
    0 &\text{, if } \exists x_i\in X, \tau\geq 0: z=\tau x_i\\
    (-1)^{n/4+1}r_z &\text{, otherwise}.
  \end{cases}
\]
\iffalse
\begin{align*}
& \bar{v}(z)=0 & \text{ if $\exists x_i, \tau\geq 0: z=\tau x_i$} \\
& \bar{v}(z)=(-1)^{n/4+1}r_z  & \text{ otherwise}.
\end{align*}
\fi

\begin{figure}[ht!]
\begin{center}
\begin{tabular}{c}
\includegraphics[width=.5\linewidth]{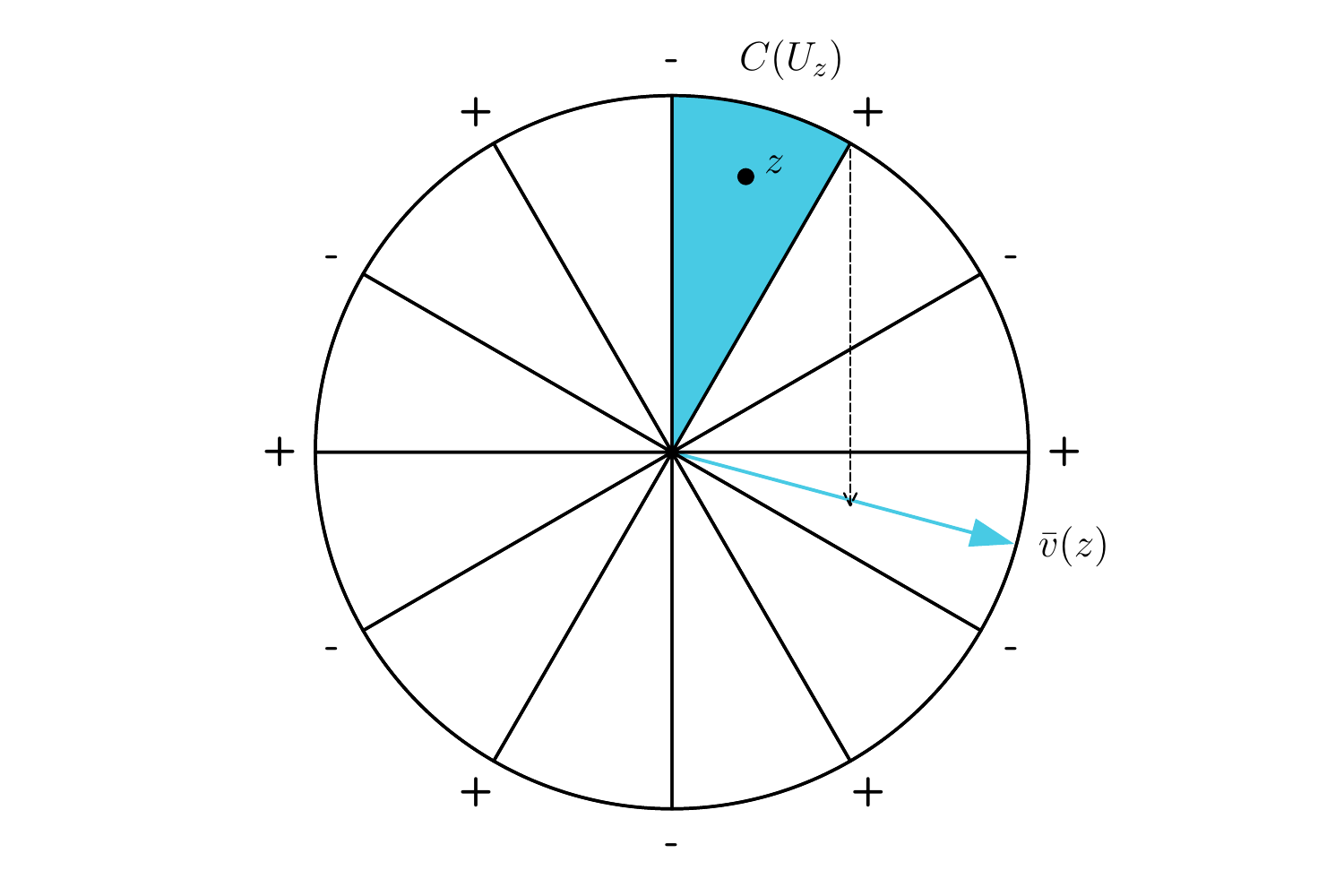}
\end{tabular}
\caption{The picture shows how $\bar{v}(z)$ is constructed: we subtract the vector $x_3$ which is labeled $-1$ from the vector $x_2$ which is labeled $1$. We obtain $r_z$ after rescaling to unit norm. Since $n/4=3$ is odd we have $\bar{v}(z)=r_z$. 
}
\label{fig:altcircle_main}
\end{center}
\end{figure}

See Fig. \ref{fig:altcircle_main} for an illustration. We show in Lemma \ref{Prop5.3} that $\gamma_{\bar v}= \gamma(X,Y)=\Omega(n^{-1})$ and consequently $n=\Omega(\gamma^{-1})$. Now we can derive our lower bounds under increasingly stronger assumptions as follows:

For the first unconditional bound, Theorem \ref{LB:uncon_informal}, we map the input points on the unit circle by contraction to the unit $\ell_1$ ball and note that by doing so, the labels remain in alternating order. Next we note that the output function $f$ of our network restricted to the $\ell_1$ ball is a piecewise linear function of $x$ and thus its gradient $\frac{\partial f}{\partial x}$ can only change in the vertices of the ball or where $x$ is orthogonal to one of the parameter vectors $w_s$, i.e., at most $O(m)$ times. Now consider any triple of consecutive points. Since they have alternating labels, the gradient needs to change at least once for each triple. Consequently $m=\Omega(n)=\Omega(\gamma^{-1})$, improving the $\Omega(\gamma^{-1/2})$ conditional lower bound in \cite{jt20}. We remark that since the argument only depends on the output function but not on the loss function, Lemma \ref{Prop5.4'} yields an $m=\Omega(n)$ lower bound for any loss function, in particular also for squared loss.

Now consider any argument that relies on an NTK analysis, where the fist step is to show that for our choice of the width $m$, we have separability at initialization; this is how the argument in \cite{jt20} proceeds. Formally, assume that there exists $W\in\mathbb{R}^{m\times d}$ such that for all $i\in[n]$, we have $y_i\langle \nabla f_i(W_0), W \rangle > 0$. This condition enables us to show an improved lower bound, Theorem \ref{LB:NTK_informal}, as follows. Partition our data set into tuples, each consisting of four consecutive points. Now consider the event that there exists a point $x_i$ such that all parameter vectors $w_s$ satisfy
\begin{align*}
    \mathbf{1}[ \langle x_i, w_s \rangle >0 ]&=\mathbf{1}[ \langle x_{i+1}, w_s \rangle >0 ]%\\&
    =\mathbf{1}[ \langle x_{i+2}, w_s \rangle >0 ]=\mathbf{1}[ \langle x_{i+3}, w_s \rangle >0 ],
\end{align*}
which implies that at least one of the points in $\{x_i,x_{i+2},x_{i+3},x_{i+4}\}$ is misclassified. To avoid this, it means that our initialization necessarily needs to include, for each $i$ divisible by $4$, a vector $w_s$ that hits the areas orthogonal to the cones separating two points out of the quadruple. There are $O(n)$ quadruples to hit, each succeeding with probability $\Omega(n^{-1})$ with respect to the Gaussian measure. This is exactly the coupon collector's problem, where the coupons are the quadruples, and it is known \cite{er61} that $m=\Omega(n\log n)=\Omega(\gamma^{-1}\log n)$ are necessary to collect all $O(n)$ items with constant probability, which yields our improved lower bound for this style of analysis. One thus needs a different approach beyond NTK to break this barrier, cf. \cite{jt20}.

For the upper bound, Theorem \ref{thm:logwidth:main_informal}, we further note that the existence of an NTK separator $W$ as above is not sufficient, i.e., we need to \emph{construct} a separator $\bar U$ satisfying the separability condition. Moreover, to achieve a reasonable bound in terms of the margin parameter $\gamma^{-1}$ we also need that $\bar U$ achieves a separation margin of $\Omega(\gamma)$. To do so, it seems natural to construct $\bar U$ such that $\bar u_j = a_j\bar v(w_j)/\sqrt{m}$ for all $j\in[m]$. Indeed, this is the most natural choice because the resulting separation margin in the finite case is exactly the empirical mean estimator for the infinite case and thus standard concentration bounds (Hoeffing's bound) yield the necessary proximity of the margins between the two cases, cf. Eq. \eqref{eqn:hoeffding_informal}. Let us assume we fix this choice of $\bar U$. This condition enables us to prove a quadratic lower bound, Lemma \ref{LB:weak_informal}, which shows that our analysis of the upper bound is actually tight: for our alternating points on a circle example, the summands have high variance, and therefore Hoeffding's bound is tight by a matching lower bound \cite{Feller43}. Consequently, $m=\Omega(\gamma^2\log n)$, and one would need a different definition of $\bar U$ and a non-standard estimation of $\gamma$ to break this barrier.

Finally, we conjecture that the quadratic upper bound is actually tight in general. Specifically, we conjecture the following: take $n=1/\gamma^2$ random points on the sphere in $\Theta(\log(1/\gamma))$ dimensions and assign random labels $y_i\in \{-1,1\}$. Then the NTK margin is $\Omega(\gamma)$.

If the conjecture is true, we obtain an $\Omega(1/\gamma^2)$ lower bound for $m$, up to logarithmic factors.\footnote{This might be confusing, since we argued before that such data is particularly \emph{mild} for the squared loss function. This may be due to the different loss functions, but regardless, it does not contradict the $\tilde{O}(n/d)$ bound of \cite{Daniely20} for the same data distribution in the squared loss regime, since $n/d = \Theta( 1/(\gamma^2 \log(1/\gamma)) )$.} Indeed, we can round the weights to the nearest vectors in a net of size $\operatorname{poly}(1/\gamma)^{O(\log(1/\gamma))}$, which only changes $\gamma$ by a constant factor. Then, if we could classify with zero error, we would encode $n=1/\gamma^2$ random labels using $m\log^{O(1)}(1/\gamma)$ bits, which implies $m \geq \Omega(1/(\gamma^2 \log^{O(1)}(1/\gamma)))$. We note that \citet{jt20} gave an $O(1/\sqrt{n})$ upper bound for the margin of any data with random labels, but we would need a matching $\Omega(1/\sqrt{n})$ lower bound for this instance in order for this argument to work.

\subsection{Polynomial Width for Squared Loss}
The high level intuition of our proof of Theorem \ref{thm:main_informal} is to recursively prove the following: (1) the weight matrix does not change much, and (2) given that the weight matrix does not change much, the prediction error, measured by the squared loss, decays exponentially.

Given (1) we prove (2) as follows. The intuition is that the kernel matrix does not change much, since the weights do not change much, and it is close to the initial value of the kernel matrix, which is in turn close to the NTK matrix, involving the entire Gaussian distribution rather than our finite sample. The NTK matrix has a lower bound on its minimum eigenvalue by Assumption~\ref{ass:mineigval}. Thus, the prediction loss decays exponentially.

Given (2) we prove (1) as follows. Since the prediction error decays exponentially, one can show that the change in weights is upper bounded by the prediction loss, and thus the change in weights also decays exponentially and the total change is small.

First, we show a concentration lemma for initialization:
\begin{lemma}[Informal version of Lemma~\ref{lem:3.1}]
\label{lem:3.1:intro}
Let $m = m_0 B$.
Let $\{ w_1, w_2, \ldots, w_m \} \subset\R^d$ denote a collection of vectors constructed as in Definition~\ref{def:initialization}.
We define $H^{\cts}, H^{\dis} \in \R^{n \times n}$ as follows
\begin{align*}
&H^{\cts}_{i,j} :=  \E_{w \sim \N(0,I)} \left[ x_i^\top x_j {\bf 1}_{ w^\top x_i \geq 0, w^\top x_j \geq 0 } \right] ,  \qquad H^{\dis}_{i,j} :=  \frac{1}{m} \sum_{r=1}^m \left[ x_i^\top x_j {\bf 1}_{ w_r^\top x_i \geq 0, w_r^\top x_j \geq 0 } \right].
\end{align*}
Let $\lambda = \lambda_{\min} (H^{\cts}) $. If $m_0 = \Omega( \lambda^{-2} n^2\log (nB/\delta) )$, we have that  
\begin{align*}
\| H^{\dis} - H^{\cts} \|_F \leq \frac{ \lambda }{4}, \mathrm{~and~} \lambda_{\min} ( H^{\dis} ) \geq \frac{3}{4} \lambda
\end{align*}
holds with probability at least $1-\delta$. 
\end{lemma}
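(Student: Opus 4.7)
The plan is to reduce the claim to a standard entrywise concentration argument, exploiting the fact that the coupled initialization does not harm the sample complexity up to a factor $B$, and then to convert a Frobenius-norm bound into a spectral bound by Weyl's inequality.

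First I would observe that within each block, the $B$ weight vectors $w_{r,1},\ldots,w_{r,B}$ are literally equal to a common Gaussian vector $w_r \sim \mathcal{N}(0,I)$, so the indicator $\mathbf{1}[w_{r,b}^\top x_i \geq 0,\, w_{r,b}^\top x_j \geq 0]$ is the same for all $b \in [B]$. Consequently the sum defining $H^{\dis}_{i,j}$ collapses to an average over only the $m_0$ independent block-representatives:
\begin{align*}
H^{\dis}_{i,j} \;=\; \frac{1}{m_0}\sum_{r=1}^{m_0} x_i^\top x_j \,\mathbf{1}[w_r^\top x_i \geq 0,\, w_r^\top x_j \geq 0].
\end{align*}
By definition $\E[H^{\dis}_{i,j}] = H^{\cts}_{i,j}$, and each summand is a bounded random variable of magnitude at most $|x_i^\top x_j| \leq 1$. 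Thus Hoeffding's inequality applied to the $m_0$ i.i.d.\ summands gives, for each fixed pair $(i,j)$ and any $t > 0$, a bound of the form $\Pr[\,|H^{\dis}_{i,j} - H^{\cts}_{i,j}| > t\,] \leq 2\exp(-\Omega(m_0 t^2))$.

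Next I would apply a union bound over the $n^2$ pairs $(i,j) \in [n]\times[n]$ with $t = \lambda/(4n)$. Setting $m_0 = \Omega(\lambda^{-2} n^2 \log(n B / \delta))$ makes the failure probability at most $\delta$ (the factor $B$ in the logarithm absorbs any further union bound one may wish to take over the $B$ copies within each block, should one prefer to carry them separately). On the complementary event, every entry deviates by at most $\lambda/(4n)$, which yields
\begin{align*}
\|H^{\dis} - H^{\cts}\|_F^2 \;=\; \sum_{i,j \in [n]} (H^{\dis}_{i,j} - H^{\cts}_{i,j})^2 \;\leq\; n^2 \cdot \frac{\lambda^2}{16 n^2} \;=\; \frac{\lambda^2}{16},
\end{align*}
so $\|H^{\dis} - H^{\cts}\|_F \leq \lambda/4$, as required.

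Finally, the spectral bound follows directly from Weyl's inequality together with $\|\cdot\|_{2} \leq \|\cdot\|_F$:
\begin{align*}
\lambda_{\min}(H^{\dis}) \;\geq\; \lambda_{\min}(H^{\cts}) - \|H^{\dis} - H^{\cts}\|_2 \;\geq\; \lambda - \frac{\lambda}{4} \;=\; \frac{3\lambda}{4}.
\end{align*}
I do not anticipate a genuine obstacle here; the only subtle point is the first step, verifying that the coupled initialization does not inflate the variance of $H^{\dis}_{i,j}$ because the $B$-fold repetition factors out exactly. Everything else is a textbook Hoeffding + union bound + Weyl argument, and the stated sample complexity $m_0 = \Omega(\lambda^{-2} n^2 \log(nB/\delta))$ matches what the union bound produces.
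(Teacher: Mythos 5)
Your proof is correct and follows essentially the same route as the paper's: entrywise Hoeffding concentration, a union bound over the $n^2$ pairs, a Frobenius-norm bound, and Weyl's inequality. The one small difference is that you observe up front that the $B$ copies within each block are literally identical, so $H^{\dis}_{i,j}$ collapses to an average over exactly $m_0$ independent summands and no union bound over $b\in[B]$ is needed; the paper instead introduces per-block matrices $H^{\dis,b}$ and unions over $b$, which is why the stated threshold carries the (technically redundant) $\log(nB/\delta)$ rather than $\log(n/\delta)$. Your streamlining is valid and slightly tighter, but lands at the same conclusion.
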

Second, we can show a perturbation bound for random weights.
\begin{lemma}[Informal version of Lemma~\ref{lem:3.2}]\label{lem:3.2:intro}
Let $R \in (0,1)$. Let $\{w_1, w_2, \ldots, w_m\}$ denote a collection of weight vectors constructed as in Definition~\ref{def:initialization}. 
For any set of weight vectors $\wt{w}_1, \ldots, \wt{w}_m \in \R^d$ that satisfy that for any $r\in [m]$, $\| \wt{w}_r - w_r \|_2 \leq R$, consider the map $H : \R^{m \times d} \rightarrow \R^{n \times n}$ defined by  $ H(w)_{i,j} :=  \frac{1}{m} x_i^\top x_j \sum_{r=1}^m {\bf 1}_{ \wt{w}_r^\top x_i \geq 0, \wt{w}_r^\top x_j \geq 0 } $. Then we have that  $\| H (w) - H(\wt{w}) \|_F < 2 n R$ holds with probability at least $1-n^2 \cdot B \cdot \exp(-m_0 R /10)$.
\end{lemma}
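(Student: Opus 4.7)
The plan is to bound each entry $|H(w)_{i,j} - H(\tilde w)_{i,j}|$ separately by counting how many indicator terms in the sum can flip under the perturbation, then invoke anti-concentration of the Gaussian together with a Chernoff bound to control this count, and finally convert the per-entry bound into a Frobenius bound via a union bound.

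For the per-entry step, write
\[
|H(w)_{i,j} - H(\tilde w)_{i,j}| \;\leq\; \frac{1}{m} \sum_{r=1}^{m} \bigl|\mathbf{1}_{w_r^\top x_i\geq 0,\,w_r^\top x_j\geq 0} - \mathbf{1}_{\tilde w_r^\top x_i\geq 0,\,\tilde w_r^\top x_j\geq 0}\bigr|,
\]
using $|x_i^\top x_j|\leq 1$. Since $\|\tilde w_r - w_r\|_2\leq R$ and $\|x_i\|_2=\|x_j\|_2=1$, the pair indicator on the $r$-th summand can change only if $|w_r^\top x_i|\leq R$ or $|w_r^\top x_j|\leq R$. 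Denoting $A_{i,r}:=\mathbf{1}[|w_r^\top x_i|\leq R]$, I therefore bound each entry by $\tfrac{1}{m}\sum_r (A_{i,r}+A_{j,r})$.

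The key step is to exploit the coupled initialization: for each of the $m_0$ independent Gaussians $v_k\sim \mathcal{N}(0,I)$ (each repeated $B$ times to form a block), the random variable $A_{i,r}$ is constant across $r$ in the block. Hence $\sum_{r=1}^{m} A_{i,r} = B\sum_{k=1}^{m_0}\mathbf{1}[|v_k^\top x_i|\leq R]$, and the $m_0$ indicators in the outer sum are i.i.d.\ Bernoulli with parameter $p_i = \Pr_{g\sim\mathcal{N}(0,1)}[|g|\leq R]\leq \sqrt{2/\pi}\,R$ (because $v_k^\top x_i\sim\mathcal{N}(0,1)$ since $\|x_i\|_2=1$). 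A standard Chernoff bound then yields $\sum_{k=1}^{m_0}\mathbf{1}[|v_k^\top x_i|\leq R]\leq m_0 R$ with probability at least $1-\exp(-c\,m_0 R)$ for an absolute constant $c$. Multiplying by $B$ and applying this to both indices $i$ and $j$ gives $\tfrac{1}{m}\sum_r(A_{i,r}+A_{j,r})\leq 2R$; a union bound over $n^2$ pairs (and, if one wants to be wasteful enough to match the stated $B$ factor, over the $B$ repetitions within a block) produces the failure probability $n^2 B\exp(-m_0 R/10)$. Finally, $\|H(w)-H(\tilde w)\|_F \leq \sqrt{\sum_{i,j}(2R)^2}=2nR$, giving the claim.

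The main obstacle — modest but worth flagging — is precisely the correlation injected by the coupled initialization: the naive argument that treats all $m$ summands as independent would yield a tail exponent of order $mR$, whereas coupling forces us to work at the block level, giving only $m_0 R$ in the exponent. Once this is recognised, the analysis reduces cleanly to anti-concentration of a univariate Gaussian plus a Chernoff bound over the $m_0$ independent blocks, and the rest is routine. Everything else (summing squared per-entry bounds to Frobenius, the $|x_i^\top x_j|\leq 1$ slack, and the constants in the Chernoff exponent) is standard and only affects the numerical constants.
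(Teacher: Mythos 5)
Your proof is correct and follows essentially the same approach as the paper: decompose entrywise, note that the pair indicator for index $r$ can flip only on the event that $|w_r^\top x_i|\leq R$ or $|w_r^\top x_j|\leq R$, bound the probability of this event by Gaussian anti-concentration, and apply a concentration inequality over the $m_0$ independent Gaussians followed by a union bound. The one small (and valid) refinement is that you apply Chernoff directly to the Bernoulli indicators $A_{i,r}=\mathbf{1}[|w_r^\top x_i|\leq R]$, which depend only on $w_r$ and are therefore constant across each block of $B$ coupled copies, whereas the paper applies Bernstein to the pair-indicator differences $s_{r,b}$ one block at a time and then unions over $B$; as you observe, your route makes the $B$ factor in the failure probability unnecessary (and in fact the union bound need only run over the $n$ indices $i$ rather than $n^2$ pairs), at the cost of a slightly weaker absolute constant $c$ in the exponent than the paper's $1/10$, which could be recovered by loosening the threshold to $O(m_0R)$.
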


Next, we have the following lemma (see Section \ref{sec:missing_proof} for a formal proof) stating that the weights should not change too much. 
Note that the lemma is a variation of Corollary 4.1 in \cite{dzps19}.
\begin{lemma}\label{lem:4.1:intro}
If Eq. \eqref{eq:quartic_condition} 
holds for $i = 0, \ldots, k$, then we have for all $r\in [m]$
\begin{align*}
\| w_r(t+1) - w_r(0) \|_2 \leq \frac{ 4 \sqrt{n} \| y - u (0) \|_2 }{ m \lambda } := D.
\end{align*}
\end{lemma}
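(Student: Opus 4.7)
The plan is to telescope the gradient-descent updates and bound the cumulative displacement of each row $w_r$ by controlling the $\ell_2$-norm of each per-step gradient in terms of the prediction residual $\|y - u(i)\|_2$, and then summing a geometric series supplied by the exponential-decay hypothesis (Eq. \eqref{eq:quartic_condition}) which I will invoke as a bound of the shape $\|y - u(i+1)\|_2^2 \leq (1 - \eta m \lambda / 2)\,\|y - u(i)\|_2^2$ for $i \in \{0,\dots,k\}$.

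\smallskip
First, unrolling the update rule \eqref{eq:w_update} from step $0$ to step $k$ yields
\[
w_r(k+1) - w_r(0) \;=\; -\eta \sum_{i=0}^{k} \frac{\partial L(W(i))}{\partial w_r},
\]
so by the triangle inequality $\|w_r(k+1) - w_r(0)\|_2 \leq \eta \sum_{i=0}^{k} \|\partial L(W(i))/\partial w_r\|_2$.  Second, I bound each per-step gradient: using the closed form \eqref{eq:gradient1} together with $|a_r|=1$, $\|x_j\|_2 = 1$, and $\mathbf{1}_{w_r^\top x_j \geq 0} \in \{0,1\}$, the triangle inequality and Cauchy--Schwarz give
\[
\left\| \frac{\partial L(W(i))}{\partial w_r} \right\|_2
\;\leq\; \sum_{j=1}^{n} \bigl| u_j(i) - y_j \bigr|
\;\leq\; \sqrt{n}\,\|y - u(i)\|_2.
\]
Third, iterating the assumed contraction yields $\|y - u(i)\|_2 \leq (1 - \eta m \lambda/2)^{i/2}\,\|y - u(0)\|_2$.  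Since $(1-\alpha)^{1/2} \leq 1 - \alpha/2$ for $\alpha \in (0,1)$ implies $\sum_{i \geq 0}(1-\alpha)^{i/2} \leq 2/\alpha$, setting $\alpha = \eta m \lambda/2$ gives $\sum_{i=0}^{k}(1-\eta m\lambda/2)^{i/2} \leq 4/(\eta m\lambda)$.  Combining,
\[
\|w_r(k+1) - w_r(0)\|_2
\;\leq\; \eta \sqrt{n}\,\|y - u(0)\|_2 \cdot \frac{4}{\eta m \lambda}
\;=\; \frac{4\sqrt{n}\,\|y-u(0)\|_2}{m \lambda}
\;=\; D,
\]
and the $\eta$'s cancel as desired.

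\smallskip
The only nontrivial part of this plan is matching the exact form of Eq. \eqref{eq:quartic_condition}: the precise contraction constant and the requirement on the learning rate $\eta$ must produce a factor that interacts correctly with the $m$ absorbed into the definition of $H^{\mathrm{dis}}$ (recall that the authors dropped the $1/\sqrt{m}$ normalization of $f$, so the effective Gram matrix appearing in the residual dynamics is $m H^{\mathrm{dis}}(t)$, which is why a factor of $m$ appears in the denominator of $D$).  Everything else -- the telescoping, the per-step $\sqrt{n}$ bound on the gradient, and the geometric summation -- is routine and mirrors Corollary 4.1 of \cite{dzps19}, with the only difference being the coupled initialization; but since the bound in question only uses the gradient formula \eqref{eq:gradient1} and not any property specific to i.i.d.\ initialization, the argument goes through unchanged for Definition~\ref{def:initialization}.
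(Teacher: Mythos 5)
Your proposal is correct and follows essentially the same route as the paper's proof: telescope the updates via \eqref{eq:w_update}, bound each per-step gradient by $\sqrt{n}\,\|y-u(i)\|_2$ using $|a_r|=1$, $\|x_j\|_2=1$, and Cauchy--Schwarz, invoke the induction hypothesis \eqref{eq:quartic_condition} to get a geometric decay, and sum the series to absorb the $\eta$. One minor note: your handling of the geometric sum, giving $\sum_{i\geq 0}(1-\eta m\lambda/2)^{i/2}\leq 4/(\eta m\lambda)$ via $\sqrt{1-\alpha}\leq 1-\alpha/2$, is actually stated more carefully than the paper, whose justification ``$\sum_{i=0}^{\infty}(1-m\eta\lambda/2)^{i/2}=\frac{2}{m\eta\lambda}$'' contains a small slip (the bound should read $\leq \frac{4}{m\eta\lambda}$, which is what is actually used to reach $D$).
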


Next, we calculate the difference of predictions between two consecutive iterations, analogous to the $\frac{\d u_i(t)}{ \d t }$ term in Fact \ref{fact:dudt}.
For each $i \in [n]$, we have
{
\begin{align*}
u_i(t+1) - u_i(t) 
=  \sum_{r=1}^m &a_r \cdot \left( \phi( w_r(t+1)^\top x_i ) %\right.\\&\left. 
-\, \phi(w_r(t)^\top x_i ) \right)
=  \sum_{r=1}^m a_r \cdot z_{i,r} .
\end{align*}
}
where
\begin{align*}
z_{i,r} :=  \phi \left( \Big( w_r(t) - \eta \frac{ \partial L( W(t) ) }{ \partial w_r(t) } \Big)^\top x_i \right) - \phi ( w_r(t)^\top x_i ) .
\end{align*}

Here we divide the right hand side into two parts. First, $v_{1,i}$ represents the terms for which the pattern does not change, while $v_{2,i}$ represents the terms for which the pattern may change. For each $i \in [n]$,
we define the set $S_i\subset [m]$ as
\begin{align*}
    S_i:= ~ \{r\in [m]:\forall  w\in \mathbb{R}^d &\text{ s.t. } \|w-w_r(0)\|_2\leq R, %\\&
    \text{ and }\mathbf{1}_{w_r(0)^\top x_i\geq 0}= \mathbf{1}_{w^\top x_i\geq 0}\}.
\end{align*}
Then we define $v_{1,i}$ and $v_{2,i}$ as follows
{
\begin{align*}
v_{1,i} : =  \sum_{r \in S_i} a_r z_{i,r}, \qquad
v_{2,i} : =  \sum_{r \in \ov{S}_i} a_r z_{i,r}.
\end{align*} 
}

Define $H$ and $H^{\bot} \in \R^{n \times n}$ as
\begin{align}
H(t)_{i,j} := & ~ \frac{1}{m} \sum_{r=1}^m x_i^\top x_j {\bf 1}_{ w_r(t)^\top x_i \geq 0, w_r(t)^\top x_j \geq 0 } , \\
H(t)^{\bot}_{i,j} := & ~ \frac{1}{m} \sum_{r\in \ov{S}_i} x_i^\top x_j {\bf 1}_{ w_r(t)^\top x_i \geq 0, w_r(t)^\top x_j \geq 0 } \label{eq:def_H(k)^bot},
\end{align}
and
 \begin{align*}
 &C_1 := -2 \eta (y - u(t))^\top H(t) ( y - u(t) ), \qquad C_2 :=  + 2 \eta ( y - u(t) )^\top H(t)^{\bot} ( y - u(t) ) , \\
 &C_3 := - 2 ( y - u(t) )^\top v_2 , \qquad\qquad\qquad\qquad\! C_4 :=  \| u (t+1) - u(t) \|_2^2 . 
 %C_1 := & ~ -2 \eta (y - u(t))^\top H(t) ( y - u(t) ) ,\\  C_2 := & ~ + 2 \eta ( y - u(t) )^\top H(t)^{\bot} ( y - u(t) ) , \\
 %C_3 := & ~ - 2 ( y - u(t) )^\top v_2 ,\\  C_4 := & ~ \| u (t+1) - u(t) \|_2^2 . 
 \end{align*}
Then we have that (see Section \ref{sec:missing_proof} for a formal proof)
\begin{claim}
$\| y - u(t+1) \|_2^2 = \| y - u(t) \|_2^2 + C_1 + C_2 + C_3 + C_4.$
\end{claim}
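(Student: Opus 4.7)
The plan is to expand the squared norm directly:
$$\|y - u(t+1)\|_2^2 = \|(y-u(t)) - (u(t+1)-u(t))\|_2^2 = \|y-u(t)\|_2^2 - 2(y-u(t))^\top(u(t+1)-u(t)) + \|u(t+1)-u(t)\|_2^2.$$
The third summand is $C_4$ by definition, so the entire task reduces to identifying the cross term $-2(y-u(t))^\top(u(t+1)-u(t))$ with $C_1 + C_2 + C_3$.

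For this I would first use the coordinate-wise decomposition $u_i(t+1)-u_i(t) = \sum_{r=1}^m a_r z_{i,r} = v_{1,i}+v_{2,i}$ (this is exactly what $v_1,v_2$ were defined to be). Inner-producting with $-2(y-u(t))$ splits the cross term as $-2(y-u(t))^\top v_1 - 2(y-u(t))^\top v_2$; the second piece here is $C_3$ on the nose. The real content lies in showing $-2(y-u(t))^\top v_1 = C_1 + C_2$.

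To that end, observe that for $r \in S_i$ the definition of $S_i$ together with the weight-movement bound of Lemma \ref{lem:4.1:intro} guarantees that the ReLU indicator at $x_i$ is the same at $w_r(t)$ and at $w_r(t+1)$, so the ReLU linearizes:
$$z_{i,r} = (w_r(t+1)-w_r(t))^\top x_i \cdot \mathbf{1}_{w_r(t)^\top x_i \geq 0}.$$
Substituting the gradient descent update $w_r(t+1)-w_r(t) = -\eta \sum_{j=1}^n (u_j(t)-y_j) a_r x_j \mathbf{1}_{w_r(t)^\top x_j \geq 0}$, multiplying by $a_r$ (using $a_r^2 = 1$), and summing over $r \in S_i$ gives
$$v_{1,i} = -\eta \sum_{j=1}^n (u_j(t)-y_j)(x_j^\top x_i)\sum_{r \in S_i}\mathbf{1}_{w_r(t)^\top x_i \geq 0}\mathbf{1}_{w_r(t)^\top x_j \geq 0}.$$
Now I rewrite $\sum_{r\in S_i} = \sum_{r=1}^m - \sum_{r\in \bar S_i}$, which by the very definitions of $H(t)$ and $H(t)^\bot$ expresses $v_{1,i}$ as a multiple of $(H(t)(u(t)-y))_i - (H(t)^\bot(u(t)-y))_i$. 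Taking the inner product with $-2(y-u(t))$ and flipping one sign through $u_j-y_j = -(y_j-u_j)$ then yields $C_1$ from the full-index sum and $C_2$ from the $\bar S_i$-restricted sum, completing the identification.

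The main obstacle is purely bookkeeping rather than analysis: one has to verify carefully that the event defining $S_i$ really implies indicator preservation at \emph{both} endpoints of the step (otherwise the ReLU linearization fails), keep track of the sign flip when swapping $u_j-y_j$ and $y_j-u_j$, and reconcile the implicit $1/m$ normalization baked into $H(t)$ and $H(t)^\bot$ with the raw sum-over-$r$ appearing in $v_{1,i}$ (absorbed into the scaling convention of $\eta$ inherited from the update rule \eqref{eq:w_update}). No new estimates, probabilistic arguments, or uses of the coupled initialization are needed for this claim; it is a deterministic algebraic identity valid for any $W(t)$.
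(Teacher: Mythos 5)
Your proof is correct and follows the same route as the paper: expand the square, split $u(t+1)-u(t)=v_1+v_2$, linearize the ReLU on $S_i$ to rewrite $v_1$ via $H(t)-H(t)^\bot$, and match terms to $C_1,\dots,C_4$. The one small imprecision is your closing remark that the identity is ``valid for any $W(t)$'': the linearization of $z_{i,r}$ for $r\in S_i$ genuinely needs the weight-movement bound $D<R$ from Lemma~\ref{lem:4.1:intro} (which you do invoke), so the claim holds only under the inductive hypothesis, exactly as in the paper.
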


Applying Claim~\ref{cla:C1}, \ref{cla:C2}, \ref{cla:C3} and \ref{cla:C4} with the appropriate choice of parameters, we can show that the $\ell_2$ norm shrinks in each iteration $t$:
$
  \| y - u(t+1) \|_2^2 
\leq \| y - u(t) \|_2^2 
  \cdot \alpha
$, 
where $\alpha=( 1 - m \eta \lambda + 8 m \eta n R  + 8 m \eta n R  + m^2 \eta^2 n^2 )$.

\section{Discussion}\label{sec:discuss}

We present a novel worst case analysis using an initialization scheme for neural networks involving coupled weights. This technique is versatile and can be applied in many different settings. 
We give an improved analysis based on this technique to reduce the parameterization required to show the convergence of $2$-layer neural networks with ReLU activations in the under-parameterized regime for the logistic loss to $m=O(\gamma^{-2}\log n)$, which significantly improves the prior $O(\gamma^{-8} \log n)$ bound. We further introduce a new unconditional lower bound of $m=\Omega(\gamma^{-1})$ as well as conditional bounds to narrow the gap in this regime.
We also reduce the amount of  over-parameterization required for the standard squared loss function to roughly $m = O(\lambda^{-2}n^2)$, improving the prior $O(\lambda^{-4} n^4)$ bound, and coming closer to the $\Omega(n)$ lower bound.
We believe this is a significant theoretical advance towards explaining the behavior of $2$-layer neural networks in different settings. It is an intriguing open question to close the gaps between upper and lower bounds in both the under and over-parameterized settings. We note that the quadratic dependencies arise for several fundamental reasons in our analysis, and are already required to show a large minimum eigenvalue $\lambda$ of our kernel matrix, or a large separation margin $\gamma$ at initialization. In the latter case we have provided partial evidence of optimality by showing that our analysis has an $m=\Omega(\gamma^{-2}\log n)$ lower bound, as well as a candidate hard instance for any possible algorithm. Another future direction is to extend our results to more than two layers, which may be possible by increasing $m$ by a $\operatorname{poly}(L)$ factor, where $L$ denotes the network depth \cite{Chen19}. We note that this also has not been done in earlier work \cite{jt20}.

\section*{Acknowledgements}
We thank the anonymous reviewers and Binghui Peng for their valuable comments. Alexander Munteanu and Simon Omlor were supported by the German Research Foundation (DFG), Collaborative Research Center SFB 876, project C4 and by the Dortmund Data Science Center (DoDSc). David Woodruff would like to thank NSF grant No. CCF-1815840, NIH grant 5401 HG 10798-2, ONR grant N00014-18-1-2562, and a Simons Investigator Award. 

\bibliographystyle{plainnat}
\bibliography{ref}

%\newpage
\appendix
\onecolumn
\section*{Appendix}

\section{Probability Tools}\label{sec:probability_tools}

In this section we introduce the probability tools that we use in our proofs.
Lemma \ref{lem:chernoff}, \ref{lem:hoeffding} and \ref{lem:bernstein} concern tail bounds for random scalar variables.
Lemma \ref{lem:anti_gaussian} concerns the cumulative density function of the Gaussian distribution.
Finally,
Lemma \ref{lem:matrix_bernstein} concerns a concentration result for 
random matrices.

\begin{lemma}[Chernoff bound \cite{c52}]\label{lem:chernoff}
Let $X = \sum_{i=1}^n X_i$, where $X_i=1$ with probability $p_i$ and $X_i = 0$ with probability $1-p_i$, and all $X_i$ are independent. Let $\mu = \E[X] = \sum_{i=1}^n p_i$. Then \\
1. $ \Pr[ X \geq (1+\delta) \mu ] \leq \exp ( - \delta^2 \mu / 3 ) $, $\forall \delta > 0$ ; \\
2. $ \Pr[ X \leq (1-\delta) \mu ] \leq \exp ( - \delta^2 \mu / 2 ) $, $\forall 0 < \delta < 1$. 
\end{lemma}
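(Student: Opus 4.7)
The plan is to prove both tails by the standard moment-generating-function argument via Markov's inequality. For the upper tail, for any $t>0$ I would write
$\Pr[X\geq(1+\delta)\mu] = \Pr[e^{tX}\geq e^{t(1+\delta)\mu}] \leq e^{-t(1+\delta)\mu}\,\E[e^{tX}]$.
Independence factorizes the MGF as $\E[e^{tX}]=\prod_{i=1}^n\E[e^{tX_i}] = \prod_{i=1}^n(1+p_i(e^t-1))$, and applying $1+x\leq e^x$ coordinatewise gives $\E[e^{tX}]\leq \exp(\mu(e^t-1))$. The bound thus becomes $\exp(\mu(e^t-1)-t(1+\delta)\mu)$; optimizing at $t=\ln(1+\delta)$ (where the derivative in $t$ vanishes) yields the well-known form $\Pr[X\geq(1+\delta)\mu]\leq \bigl(e^{\delta}/(1+\delta)^{1+\delta}\bigr)^{\mu}$.

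For the lower tail I would mirror the same steps with $e^{-tX}$: for $t>0$, Markov gives $\Pr[X\leq(1-\delta)\mu]\leq e^{t(1-\delta)\mu}\E[e^{-tX}]\leq \exp(\mu(e^{-t}-1)+t(1-\delta)\mu)$, and optimizing at $t=-\ln(1-\delta)$ (which is legal because $0<\delta<1$) produces $\bigl(e^{-\delta}/(1-\delta)^{1-\delta}\bigr)^{\mu}$. At this point both tails reduce to purely analytic exercises.

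The last step, which is the only real obstacle, is verifying the scalar inequalities $(1+\delta)\ln(1+\delta)-\delta\geq \delta^2/3$ for $\delta>0$ (needed to reach $e^{-\delta^2\mu/3}$) and $(1-\delta)\ln(1-\delta)+\delta\geq \delta^2/2$ for $\delta\in(0,1)$ (needed to reach $e^{-\delta^2\mu/2}$). I would handle each by defining $g(\delta)$ equal to the left side minus the right side, checking $g(0)=0$ and $g'(0)=0$, and then computing $g''$ and confirming it is nonnegative on the stated domain; for the upper-tail inequality a short Taylor-remainder argument suffices in $(0,1]$, and the case $\delta>1$ is handled by a separate monotonicity check (or by noting one can invoke the weaker but universally valid form). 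Combining the optimized MGF bound with the corresponding scalar inequality in each case yields the two claims.
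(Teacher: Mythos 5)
The paper states this as a standard lemma cited to Chernoff (1952) and supplies no proof, so there is no internal argument to compare against; your moment-generating-function/Markov route is the canonical derivation and the right thing to attempt. The lower-tail half of your argument is sound: with $h(\delta)=(1-\delta)\ln(1-\delta)+\delta-\delta^2/2$ one has $h(0)=h'(0)=0$ and $h''(\delta)=\delta/(1-\delta)\geq 0$ on $(0,1)$, so $h\geq 0$ there, exactly as you sketch.

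The upper-tail half, however, contains a genuine gap, and it is precisely in the step you yourself flag as the ``only real obstacle.'' The scalar inequality $(1+\delta)\ln(1+\delta)-\delta\geq\delta^2/3$ is \emph{false} once $\delta$ is moderately large: at $\delta=2$ the left side is $3\ln 3-2\approx 1.296$ while the right side is $4/3\approx 1.333$, and the inequality is violated for all $\delta$ above roughly $1.8$. No ``separate monotonicity check'' for $\delta>1$ can repair this, because the inequality simply does not hold there; and the ``weaker but universally valid form'' you allude to gives a bound that is \emph{larger} than $\exp(-\delta^2\mu/3)$ when $\delta>1$, hence does not imply the stated claim. What the MGF route actually yields for all $\delta>0$ is the exponent $-\mu\bigl[(1+\delta)\ln(1+\delta)-\delta\bigr]\leq-\mu\,\delta^2/(2+\delta)$, which matches $-\mu\delta^2/3$ only when $\delta\leq 1$. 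Your proof therefore establishes part (1) on the range $0<\delta\leq 1$, which is in fact how most careful references state this form; the lemma's ``$\forall\,\delta>0$'' is a common imprecision (harmless here, since the paper only invokes Chernoff with small $\delta$). To make the argument airtight, either restrict part (1) to $\delta\leq 1$ (which suffices for every use in this paper), or prove the sharper exponent $\delta^2/(2+\delta)$, whose defining scalar inequality $f(\delta)=(1+\delta)\ln(1+\delta)-\delta-\delta^2/(2+\delta)\geq 0$ does hold on all of $[0,\infty)$ since $f(0)=f'(0)=0$ and $f''(\delta)=\frac{1}{1+\delta}-\frac{8}{(2+\delta)^3}\geq 0$.
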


\begin{lemma}[Hoeffding bound \cite{h63}]\label{lem:hoeffding}
Let $X_1, \cdots, X_n$ denote $n$ independent bounded variables in $[a_i,b_i]$. Let $X= \sum_{i=1}^n X_i$. Then we have
\begin{align*}
\Pr[ | X - \E[X] | \geq t ] \leq 2\exp \left( - \frac{2t^2}{ \sum_{i=1}^n (b_i - a_i)^2 } \right).
\end{align*}
\end{lemma}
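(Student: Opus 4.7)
The plan is to follow the classical Chernoff-style argument, reducing the tail bound to a moment generating function (MGF) estimate on each summand via independence, and then applying Hoeffding's lemma on bounded zero-mean random variables. First I would assume without loss of generality that each $X_i$ has been centered, writing $Y_i = X_i - \E[X_i]$, so that $Y_i \in [a_i - \E[X_i], b_i - \E[X_i]]$ has the same width $b_i - a_i$ and $\E[Y_i] = 0$. Set $Y = \sum_i Y_i = X - \E[X]$; the goal becomes bounding $\Pr[Y \geq t]$, since the reverse tail $\Pr[Y \leq -t]$ is handled identically by symmetry and combined with a union bound to produce the factor of $2$.

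Next, for any $s > 0$ apply the Markov/Chernoff step to the exponential moment,
\[
\Pr[Y \geq t] = \Pr[e^{sY} \geq e^{st}] \leq e^{-st}\,\E[e^{sY}] = e^{-st}\prod_{i=1}^n \E[e^{sY_i}],
\]
where the product factorization uses the independence of the $Y_i$. The crux is then Hoeffding's lemma, which I would prove separately: for a zero-mean random variable $Y_i$ supported in an interval of length $b_i - a_i$, one has $\E[e^{sY_i}] \leq \exp\bigl(s^2(b_i-a_i)^2/8\bigr)$. The proof of this lemma is the main technical obstacle and proceeds by convexity: since $z \mapsto e^{sz}$ is convex, on the interval one bounds $e^{sz}$ by the chord connecting the endpoints, takes expectation (using $\E[Y_i]=0$), and then shows that the resulting function of $s$ is dominated by $\exp(s^2(b_i-a_i)^2/8)$ via a Taylor expansion of its logarithm around $s=0$, whose second derivative is bounded by $(b_i-a_i)^2/4$.

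Plugging Hoeffding's lemma back in yields
\[
\Pr[Y \geq t] \leq \exp\Bigl(-st + \tfrac{s^2}{8}\textstyle\sum_{i=1}^n (b_i-a_i)^2\Bigr).
\]
Finally I would optimize over $s > 0$ by taking the derivative and setting it to zero, giving $s^\ast = 4t/\sum_i (b_i-a_i)^2$; substituting back produces the exponent $-2t^2/\sum_i(b_i-a_i)^2$. Repeating the entire argument for $-Y$ in place of $Y$ yields the matching lower-tail bound, and the two are combined by a union bound to get the factor of $2$ in the statement. The routine calculations are the Chernoff optimization and the Taylor-remainder estimate inside Hoeffding's lemma; the only conceptually delicate step is the convexity-plus-Taylor argument establishing Hoeffding's lemma itself.
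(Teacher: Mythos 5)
Your argument is the standard Chernoff--Hoeffding proof and it is correct: centering, the exponential Markov step with independence, Hoeffding's lemma $\E[e^{sY_i}]\le \exp(s^2(b_i-a_i)^2/8)$ proved via the convexity chord bound plus a second-order Taylor estimate on the log-moment-generating function, optimizing $s^\ast = 4t/\sum_i(b_i-a_i)^2$, and a union bound over the two tails. The paper itself gives no proof and simply cites Hoeffding (1963), so there is nothing to compare against; your derivation is exactly the classical one found in that reference.
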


\begin{lemma}[Bernstein's inequality \cite{b24}]\label{lem:bernstein}
Let $X_1, \cdots, X_n$ be independent zero-mean random variables. Suppose that $|X_i| \leq M$ almost surely, for all $i$. Then, for all positive $t$,
\begin{align*}
\Pr \left[ \sum_{i=1}^n X_i > t \right] \leq \exp \left( - \frac{ t^2/2 }{ \sum_{j=1}^n \E[X_j^2]  + M t /3 } \right).
\end{align*}
\end{lemma}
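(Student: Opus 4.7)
The plan is to use the standard Chernoff/Cram\'er method: control the moment generating function of $S := \sum_{i=1}^n X_i$ and then optimize. First, for any $\lambda > 0$, Markov's inequality applied to $e^{\lambda S}$ gives
\[
\Pr[S > t] \leq e^{-\lambda t}\, \E[e^{\lambda S}] = e^{-\lambda t}\, \prod_{i=1}^n \E[e^{\lambda X_i}],
\]
where the factorization uses independence. So the task reduces to bounding each MGF $\E[e^{\lambda X_i}]$ using (i) $\E[X_i]=0$ and (ii) $|X_i|\leq M$.

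The key technical step is a Taylor-series bound. Since $|X_i|\leq M$, for every integer $k\geq 2$ we have $|X_i|^k \leq M^{k-2} X_i^2$ almost surely. Expanding the exponential and taking expectations (killing the linear term by zero-mean) yields
\[
\E[e^{\lambda X_i}] \leq 1 + \E[X_i^2]\sum_{k\geq 2}\frac{\lambda^k M^{k-2}}{k!}.
\]
I would then use the elementary inequality $\sum_{k\geq 2} u^{k-2}/k! \leq \tfrac{1}{2(1-u/3)}$ for $u=\lambda M \in [0,3)$, which follows from the termwise comparison $k! \geq 2\cdot 3^{k-2}$ (easy induction for $k\geq 2$). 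Combined with $1+y \leq e^y$ this produces
\[
\E[e^{\lambda X_i}] \leq \exp\!\left(\frac{\lambda^2\, \E[X_i^2]/2}{1-\lambda M/3}\right).
\]

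Multiplying over $i$ and writing $\sigma^2 := \sum_{j=1}^n \E[X_j^2]$, the earlier display becomes
\[
\Pr[S > t] \leq \exp\!\left(-\lambda t + \frac{\lambda^2 \sigma^2/2}{1-\lambda M/3}\right),
\]
valid for every $\lambda \in (0, 3/M)$. The final step is to optimize the exponent. Choosing $\lambda = t/(\sigma^2 + Mt/3)$ lies in the admissible range, and a direct substitution shows $1-\lambda M/3 = \sigma^2/(\sigma^2+Mt/3)$, so the exponent collapses to exactly
\[
-\frac{t^2/2}{\sigma^2 + M t/3},
\]
which is the claimed bound.

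The only place that requires real care is the Taylor-bound step, i.e.\ establishing the coefficient inequality $k! \geq 2\cdot 3^{k-2}$ and getting the denominator in the Bernstein form (rather than a weaker sub-Gaussian form). Everything else -- Markov, independence, and the $\lambda$-optimization -- is mechanical, so I do not anticipate a genuine obstacle beyond writing the constants cleanly.
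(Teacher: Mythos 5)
Your proof is correct, and it is the standard Cram\'er--Chernoff derivation of Bernstein's inequality: the MGF bound $\E[e^{\lambda X_i}] \leq \exp\bigl(\tfrac{\lambda^2 \E[X_i^2]/2}{1-\lambda M/3}\bigr)$ via the termwise comparison $k! \geq 2\cdot 3^{k-2}$, followed by the substitution $\lambda = t/(\sigma^2 + Mt/3)$, exactly collapses the exponent to $-\tfrac{t^2/2}{\sigma^2 + Mt/3}$. The paper itself supplies no proof of this lemma; it is stated as a classical fact with a citation to Bernstein, so there is no ``paper's approach'' to compare against, but your argument is complete and the one you would find in any standard reference.
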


\begin{lemma}[Anti-concentration of the Gaussian distribution]\label{lem:anti_gaussian}
Let $X\sim {\N}(0,\sigma^2)$,
that is,
the probability density function of $X$ is given by $\phi(x)=\frac 1 {\sqrt{2\pi\sigma^2}}e^{-\frac {x^2} {2\sigma^2} }$.
Then
\begin{align*}
    \Pr[|X|\leq t]\in \left( \frac 2 3\frac t \sigma, \frac 4 5\frac t \sigma \right).
\end{align*}
\end{lemma}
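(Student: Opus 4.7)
The plan is to reduce to the standard normal and then bound each side separately. Substituting $u = x/\sigma$ in the integral representation of the probability gives
\[
\Pr[|X| \leq t] \;=\; \int_{-t}^{t} \frac{1}{\sqrt{2\pi\sigma^2}} e^{-x^2/(2\sigma^2)}\, \d x \;=\; \int_{-s}^{s} \frac{1}{\sqrt{2\pi}} e^{-u^2/2}\, \d u,
\]
where $s := t/\sigma$. The statement therefore reduces to showing that $\int_{-s}^{s} \frac{1}{\sqrt{2\pi}} e^{-u^2/2}\, \d u$ lies strictly between $\tfrac{2s}{3}$ and $\tfrac{4s}{5}$, which is purely a claim about the standard Gaussian CDF evaluated near $0$.

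For the upper bound, I would use the trivial pointwise estimate $e^{-u^2/2} \leq 1$, which gives
\[
\int_{-s}^{s} \frac{1}{\sqrt{2\pi}} e^{-u^2/2}\, \d u \;\leq\; \frac{2s}{\sqrt{2\pi}}.
\]
Since $2/\sqrt{2\pi} \approx 0.7979 < 0.8 = 4/5$, this gives the required strict inequality $\Pr[|X|\leq t] < \tfrac{4t}{5\sigma}$.

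For the lower bound, the pointwise estimate $e^{-u^2/2} \geq e^{-s^2/2}$ on $u \in [-s,s]$ yields
\[
\int_{-s}^{s} \frac{1}{\sqrt{2\pi}} e^{-u^2/2}\, \d u \;\geq\; \frac{2s}{\sqrt{2\pi}} e^{-s^2/2}.
\]
It therefore suffices to check that $\frac{1}{\sqrt{2\pi}} e^{-s^2/2} > \tfrac{1}{3}$, equivalently $s^2/2 < \ln(3/\sqrt{2\pi})$, which holds for $s$ below an absolute constant. Alternatively, one can use the Taylor bound $e^{-u^2/2} \geq 1 - u^2/2$ and verify the resulting scalar inequality $(1 - s^2/6)/\sqrt{2\pi} > 1/3$.

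\textbf{Main obstacle.} The lower bound is delicate: the factor $\tfrac{2}{3}$ is extremely close to the limiting value $\tfrac{2}{\sqrt{2\pi}} \approx 0.798$ of $\Pr[|X|\leq t]/(t/\sigma)$ as $t \to 0$, so the inequality is tight for large $s = t/\sigma$ and, taken literally, can only hold for $s$ bounded by an absolute constant. The cleanest resolution is to interpret the statement in the anti-concentration regime where $t/\sigma$ is at most a suitable constant (which is how it will be applied in the paper), and then verify the scalar inequality either by the crude bound $e^{-s^2/2} > \sqrt{2\pi}/3$ or by the Taylor estimate above. Both reduce the proof to one numerical check.
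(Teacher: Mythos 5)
The paper gives no proof of Lemma~\ref{lem:anti_gaussian}: it appears in the Probability Tools appendix as a bare assertion with no argument or citation, so there is no paper proof to compare your attempt against. What one can assess is whether your reasoning is sound and whether the stated lemma is itself correct.

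You have correctly identified a genuine flaw in the statement: the lower bound $\Pr[|X|\leq t] > \frac{2t}{3\sigma}$ cannot hold for all $t>0$, since the left side is at most $1$ while the right side grows without bound; a numerical check shows it first fails near $t/\sigma \approx 1.08$. Your upper bound argument is complete and valid for every $t>0$ (one has $\Pr[|X|\leq t] < \frac{2t}{\sigma\sqrt{2\pi}}$, and $2/\sqrt{2\pi}\approx 0.798 < 4/5$), and your two lower-bound routes give valid sufficient conditions: the crude bound $e^{-u^2/2}\geq e^{-s^2/2}$ works for $s=t/\sigma$ below roughly $0.60$, and the Taylor bound works for $s$ below roughly $0.99$, both short of but of the same order as the true threshold. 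One small correction: $2/3 \approx 0.667$ is not ``extremely close'' to the limiting slope $2/\sqrt{2\pi}\approx 0.798$ --- they differ by about $16\%$; the relevant observation is simply that $\Pr[|Z|\leq s]/s$ decreases monotonically from $2/\sqrt{2\pi}$ to $0$ and so must cross $2/3$ at some finite $s$. The error in the stated lemma is harmless for the paper because the lemma is only ever invoked for its upper bound (in Lemma~\ref{lem2.4}, where the relevant $t$ is tiny, and in Eq.~\eqref{eq:Air_bound}, with $t=R\in(0,1)$), and the upper bound holds unconditionally. Still, the lemma as written should either restrict $t/\sigma$ to at most, say, $1$, or drop the lower bound, to be literally correct.
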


\begin{comment}
\begin{lemma}[Lemma 1 on page 1325 of Laurent and Massart \cite{lm00}]\label{lem:chi_square_tail}
Let $X \sim {\cal X}_k^2$ be a chi-squared distributed random variable with $k$ degrees of freedom. Each degree of freedom is a Gaussian with zero mean and $\sigma^2$ variance. Then
\begin{align*}
\Pr[ X - k \sigma^2 \geq ( 2 \sqrt{kt} + 2t ) \sigma^2 ] \leq \exp (-t), \\
\Pr[ k \sigma^2 - X \geq 2 \sqrt{k t} \sigma^2 ] \leq \exp(-t).
\end{align*}
\end{lemma}
\end{comment}

\begin{lemma}[Matrix Bernstein, Theorem 6.1.1 in \cite{t15}]\label{lem:matrix_bernstein}
Consider a finite sequence $\{ X_1, \cdots, X_m \} \subset \R^{n_1 \times n_2}$ of independent, random matrices with common dimension $n_1 \times n_2$. Assume that
\begin{align*}
\E[ X_i ] = 0, \forall i \in [m] ~~~ \mathrm{and}~~~ \| X_i \| \leq M, \forall i \in [m] .
\end{align*}
Let $Z = \sum_{i=1}^m X_i$. Let $\mathrm{Var}[Z]$ be the matrix variance statistic of the sum:
\begin{align*}
\mathrm{Var} [Z] = \max \left\{ \Big\| \sum_{i=1}^m \E[ X_i X_i^\top ] \Big\| , \Big\| \sum_{i=1}^m \E [ X_i^\top X_i ] \Big\| \right\}.
\end{align*}
Then 
\begin{align*}
\E[ \| Z \| ] \leq ( 2 \mathrm{Var} [Z] \cdot \log (n_1 + n_2) )^{1/2} +  M \cdot \log (n_1 + n_2) / 3.
\end{align*}

Furthermore, for all $t \geq 0$,
\begin{align*}
\Pr[ \| Z \| \geq t ] \leq (n_1 + n_2) \cdot \exp \left( - \frac{t^2/2}{ \mathrm{Var} [Z] + M t /3 }  \right)  .
\end{align*}
\end{lemma}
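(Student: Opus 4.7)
\textbf{Proof proposal for Lemma~\ref{lem:matrix_bernstein}.}

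The plan is to establish the tail bound via the matrix Laplace transform method, reducing the rectangular setting to a Hermitian one and then controlling the matrix moment generating function with Lieb's concavity theorem; the expectation bound then follows by integrating the tail.

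First, I would pass from the rectangular matrices $X_i \in \R^{n_1 \times n_2}$ to symmetric matrices of dimension $n := n_1 + n_2$ via the self-adjoint dilation
\[
\widetilde{X}_i := \begin{pmatrix} 0 & X_i \\ X_i^\top & 0 \end{pmatrix}.
\]
This dilation preserves spectral norms, $\|\widetilde X_i\| = \|X_i\|$, and satisfies $\widetilde X_i^{\,2} = \diag(X_i X_i^\top,\, X_i^\top X_i)$, so the variance statistic of $\widetilde Z := \sum_i \widetilde X_i$ equals $\mathrm{Var}[Z]$, while $\|Z\| = \lambda_{\max}(\widetilde Z)$. Hence it suffices to prove the theorem for independent symmetric summands.

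Second, for any $\theta > 0$, the implications $\lambda_{\max}(\widetilde Z) \geq t \Rightarrow \exp(\theta \lambda_{\max}(\widetilde Z)) \geq e^{\theta t} \Rightarrow \mathrm{tr}\exp(\theta \widetilde Z) \geq e^{\theta t}$ together with Markov's inequality yield
\[
\Pr[\lambda_{\max}(\widetilde Z) \geq t] \leq e^{-\theta t}\, \E\, \mathrm{tr}\exp(\theta \widetilde Z).
\]
Using Lieb's concavity theorem (concavity of $A \mapsto \mathrm{tr}\exp(H + \log A)$ on the positive cone) together with Jensen's inequality applied by conditioning on one $\widetilde X_i$ at a time, I would derive the master bound
\[
\E\, \mathrm{tr}\exp(\theta \widetilde Z) \leq \mathrm{tr}\exp\!\Big( \sum_i \log \E\, \exp(\theta \widetilde X_i) \Big).
\]

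Third, I would control each matrix MGF. For $|x| \leq M$ and $0 < \theta < 3/M$, the scalar inequality $e^{\theta x} \leq 1 + \theta x + \theta^2 x^2 / (2(1-\theta M/3))$ holds; transferring to operators via the spectral theorem and taking expectations using $\E[\widetilde X_i] = 0$ yields
\[
\E\, \exp(\theta \widetilde X_i) \preceq I + g(\theta)\, \E[\widetilde X_i^{\,2}] \preceq \exp\!\big( g(\theta)\, \E[\widetilde X_i^{\,2}] \big), \qquad g(\theta) := \frac{\theta^2/2}{1 - \theta M/3},
\]
where the second step uses $I + A \preceq \exp(A)$ for $A \succeq 0$. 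Substituting into the master bound, invoking Loewner monotonicity of $\mathrm{tr}\exp$, and using $\mathrm{tr}\exp(H) \leq n \exp(\lambda_{\max}(H))$ for symmetric $H$ of size $n$ gives
\[
\Pr[\|Z\| \geq t] \leq n \exp\!\Big( -\theta t + g(\theta)\, \mathrm{Var}[Z] \Big),
\]
and optimizing at $\theta^\star = t/(\mathrm{Var}[Z] + Mt/3)$ collapses the exponent to the tail bound stated in the lemma. The expectation bound then follows by writing $\E\|Z\| = \int_0^\infty \Pr[\|Z\| \geq s]\,ds$ and splitting the integral at the crossover between the sub-Gaussian and sub-exponential regimes of the tail.

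The main obstacle is the master bound: transferring scalar inequalities pointwise to operators is routine via the spectral theorem, but establishing that the matrix MGF of a sum factorizes nicely despite the non-commutativity among the $\widetilde X_i$ requires Lieb's concavity theorem. Without Lieb, one only obtains the Ahlswede--Winter estimate, which replaces $\sum_i \log \E[\exp(\theta \widetilde X_i)]$ by the looser $\sum_i (\E[\exp(\theta \widetilde X_i)] - I)$ and ultimately loses a logarithmic factor in the variance term, just missing the sharp form stated in the lemma.
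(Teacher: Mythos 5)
The paper does not prove this lemma at all; it is imported verbatim as Theorem 6.1.1 from Tropp's monograph \cite{t15}, so there is no ``paper's proof'' to compare against. Your sketch is a correct reconstruction of Tropp's own argument: the self-adjoint dilation to reduce to the Hermitian case, the matrix Laplace transform step, Lieb's concavity theorem to obtain the master bound $\E\,\mathrm{tr}\exp(\theta \widetilde Z) \leq \mathrm{tr}\exp\bigl(\sum_i \log \E \exp(\theta \widetilde X_i)\bigr)$, the scalar-to-operator transfer of the MGF estimate $e^{\theta x} \leq 1 + \theta x + \frac{\theta^2 x^2/2}{1-\theta M/3}$, and the choice $\theta^\star = t/(\mathrm{Var}[Z] + Mt/3)$, which one can check collapses the exponent to $-\tfrac{t^2/2}{\mathrm{Var}[Z] + Mt/3}$ exactly. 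Your closing remark contrasting this with the Ahlswede--Winter bound is also accurate, and the integration-of-tail step for the expectation bound is the standard one; the proposal is sound but is reproducing a cited black-box result rather than offering an alternative route.
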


\begin{lemma}[\cite{Feller43}]\label{lemweakupbohelp}
Let $Z$ be a sum of independent random variables, each attaining values in $[0,1]$, and let $\sigma=\sqrt{\Var(Z)}\geq 200$. Then for all $t \in [0, \frac{\sigma^2}{100}]$ we have
\[ \Pr[ X\geq \E[X]+t] \geq c \cdot \exp( -t^2/(3\sigma^2) ) \]
where $c>0$ is some fixed constant.
\end{lemma}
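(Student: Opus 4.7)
The plan is to prove this reverse tail bound via the classical exponential-tilting (Cram\'er--Feller) argument. Write $Z = \sum_{i=1}^n X_i$ with $X_i \in [0,1]$ independent, set $M_i(\lambda) = \E[e^{\lambda X_i}]$ and $\psi(\lambda) = \sum_{i=1}^n \log M_i(\lambda)$, and for $\lambda \geq 0$ introduce the tilted law $d\widetilde{\mathbb{P}}_\lambda / d\mathbb{P} = e^{\lambda Z - \psi(\lambda)}$. Under $\widetilde{\mathbb{P}}_\lambda$ the $X_i$ remain independent with values in $[0,1]$, and satisfy $\E_{\widetilde{\mathbb{P}}_\lambda}[Z] = \psi'(\lambda)$ and $\Var_{\widetilde{\mathbb{P}}_\lambda}(Z) = \psi''(\lambda)$.

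The first step is to choose $\lambda^\star \geq 0$ so that $\psi'(\lambda^\star) = \E[Z] + t$, shifting the tilted mean onto the target. Because each $X_i$ lies in $[0,1]$, every cumulant of order $k \geq 2$ obeys $|\kappa_k(X_i)| \leq \Var(X_i)$, yielding the uniform Taylor expansion $\psi(\lambda) = \lambda \E[Z] + \tfrac{1}{2}\lambda^2 \sigma^2 + O(\lambda^3 \sigma^2)$ for $\lambda \in [0, 1/50]$. Solving gives $\lambda^\star = (t/\sigma^2)(1 + O(t/\sigma^2))$, which is safely in $[0, 1/50]$ since the hypothesis $t \leq \sigma^2/100$ forces $t/\sigma^2 \leq 1/100$; the tilted variance $\widetilde{\sigma}^2 := \psi''(\lambda^\star)$ then stays within a constant factor of $\sigma^2$ (e.g.\ $\widetilde{\sigma}^2 \in [\sigma^2/2,\, 2\sigma^2]$).

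Next comes the change-of-measure bound. Writing $a = \E[Z] + t = \E_{\widetilde{\mathbb{P}}_{\lambda^\star}}[Z]$ and choosing a window width $w$ to be tuned, we obtain
\[
\Pr[Z \geq a] \;=\; e^{\psi(\lambda^\star)} \int_a^{\infty} e^{-\lambda^\star z}\, d\widetilde{\mathbb{P}}_{\lambda^\star}(z) \;\geq\; e^{-I(a) - \lambda^\star w}\; \widetilde{\mathbb{P}}_{\lambda^\star}\bigl[Z \in [a, a+w]\bigr],
\]
where the Cram\'er rate function $I(a) := \lambda^\star a - \psi(\lambda^\star)$ satisfies $I(a) = t^2/(2\sigma^2) + O(t^3/\sigma^4)$ by the Taylor expansion above. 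To lower-bound the tilted probability, I would invoke the Berry--Esseen theorem for the standardized tilted sum: since the summands still lie in $[0,1]$, their tilted third absolute central moments are bounded by their variances, so the Berry--Esseen remainder is $O(1/\widetilde{\sigma}) = O(1/\sigma) \leq O(1/200)$, which is small thanks to $\sigma \geq 200$. Picking $w$ of order $\min(\widetilde{\sigma},\, 1/\lambda^\star)$ then yields $\widetilde{\mathbb{P}}_{\lambda^\star}[Z \in [a, a+w]] \geq c_0$ for an absolute constant $c_0 > 0$.

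Combining the pieces gives $\Pr[Z \geq \E[Z] + t] \geq c_0 \exp(-I(a) - \lambda^\star w)$. The slack between the Cram\'er exponent $t^2/(2\sigma^2)$ and the claimed $t^2/(3\sigma^2)$ is $t^2/(6\sigma^2)$, and the main technical obstacle is to verify that this slack absorbs both the $O(t^3/\sigma^4)$ Taylor remainder in $I(a)$ and the penalty $\lambda^\star w$ uniformly over $t \in [0, \sigma^2/100]$. The former is controlled by the crude bound $t^3/\sigma^4 \leq (t/\sigma^2)\cdot t^2/\sigma^2 \leq t^2/(100 \sigma^2)$, while the latter is controlled by tuning $w$ together with the $\sigma \geq 200$ assumption to keep the Berry--Esseen error negligible; any remaining $O(1)$ losses are folded into the final constant $c$.
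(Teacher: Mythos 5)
The paper does not include a proof of this lemma; it is invoked as a cited result from Feller (1943), so there is no internal argument to compare against. Your exponential-tilting (Cram\'er--Feller) strategy is the canonical route to a reverse tail bound, and your intermediate steps --- choosing $\lambda^\star$ so that the tilted mean equals $\E[Z]+t$, the change-of-measure identity, and a Berry--Esseen estimate for the standardized tilted sum --- are all sound in themselves. The flaw is in the final accounting, where you have the direction of the ``slack'' backwards.

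You observe $I(a) \approx t^2/(2\sigma^2)$ and call the difference $t^2/(2\sigma^2) - t^2/(3\sigma^2) = t^2/(6\sigma^2)$ a slack available to absorb the Taylor remainder in $I(a)$ and the window penalty $\lambda^\star w$. But your change-of-measure inequality reads $\Pr[Z \geq \E[Z]+t] \geq c_0\, e^{-I(a) - \lambda^\star w}$, and the exponent $I(a) + \lambda^\star w$ is at least $t^2/(2\sigma^2)$, which \emph{exceeds} the target $t^2/(3\sigma^2)$; hence $e^{-I(a)-\lambda^\star w}$ is \emph{smaller} than $e^{-t^2/(3\sigma^2)}$, by a factor that blows up as $t/\sigma$ grows. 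The $t^2/(6\sigma^2)$ is a deficit working against you, not a surplus, and it is unbounded over the permitted range $t \in [0,\sigma^2/100]$ (where $t/\sigma$ can be as large as $\sigma/100$), so no fixed $c>0$ can close it. Carried out carefully, your derivation would establish a bound of the form $\Pr \geq c\exp(-K t^2/\sigma^2)$ for some $K>1/2$, which is strictly weaker than the statement.

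In fact, the lemma as stated appears to be false. Take $Z = \sum_{i=1}^n X_i$ with $X_i$ i.i.d.\ Bernoulli$(1/2)$, so $\sigma^2 = n/4$, and set $t = \sigma^2/100 = n/400$, which is within the permitted range once $n \geq 160000$ so that $\sigma \geq 200$. By the Bahadur--Rao sharp large deviation theorem,
\[
\Pr[Z \geq \E[Z] + t] = \Theta(n^{-1/2})\exp\bigl(-(1+o(1))\, n/80000\bigr),
\]
whereas the claimed lower bound is $c\exp(-t^2/(3\sigma^2)) = c\exp(-n/120000)$. The ratio of the true probability to the claimed lower bound is $\Theta(n^{-1/2})\exp(-(1+o(1))\,n/240000) \to 0$, so no fixed $c>0$ works. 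A corrected statement needs either a larger exponent (at least $t^2/(2\sigma^2)$, so that the conclusion sits below the Cram\'er rate rather than above it), or a fixed ceiling on $t/\sigma$. Fortunately, the only place the lemma is used, Lemma~\ref{lemweakupbo}, only needs a lower bound of the form $c\exp(-Kt^2/\sigma^2)$ for some absolute constant $K$, so the downstream conclusion $m' = \Omega(\gamma^{-2}\log n)$ is unaffected by fixing the constant in the exponent.
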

\newpage
\section{Preliminaries for log width under logistic loss} \label{sec:logwidth:start}

We consider a set of data points $x_1, \dots , x_n \in \mathbb{R}^d$ with $\| x_i\|_2 =1$ and labels $y_1 , \dots , y_n \in \{-1, 1\}$.
The two layer network is parameterized by $m \in \mathbb{N}, a \in \mathbb{R}^m$ and $W \in \mathbb{R}^{m \times d}$ as follows:
we set the output function
\begin{align*}
f(x, W, a)=  \frac{1}{\sqrt{m}} \sum_{s=1}^m a_s \phi\left(\langle w_s, x \rangle \right),
\end{align*}
which is scaled by a factor $1/\sqrt{m}$ compared to the presentation in the main body to simplify notation, and to be more closely comparable to \cite{jt20}. The changed initialization yields initial output of $0$, independent of the normalization, and thus, consistent with the introduction, we could as well omit the normalization \emph{here} and instead use it only in the learning rate. The main improvement of the network width comes from the fact that the learning rate is no compromise between the right normalization in the initial state and the appropriate progress in the gradient iterations, but can be adjusted to ensure the latter independent of the former.
In the output function, $\phi(v)=\max\{0, v\}$ denotes the ReLU function for $v\in  \mathbb{R}$.
To simplify notation we set $f_i(W)=f(x_i, W, a)$.
Further we set $\ell(v)=\ln(1+\exp(-v))$ to be the logistic loss function.
We use a random initialization $W_0, a_0$ given in Definition \ref{def:initialization}.
Our goal is to minimize the empirical loss of $W$ given by
\[ R(W)=\frac{1}{n}\sum_{i=1}^n \ell\left(y_if_i(W)\right). \]
To accomplish this, we use a standard gradient descent algorithm.
More precisely for $t\geq 0$ we set
\begin{align*}
W_{t+1}=W_t-\eta \nabla R(W_t)
\end{align*}
for some step size $\eta$.
Further, it holds that
\begin{align*}
\nabla R(W)=\frac{1}{n}\sum_{i=1}^n y_i \nabla f_i(W) \ell'\left(y_if_i(W)\right).
\end{align*}
Moreover, we use the following notation
\begin{align*}
f_i^{(t)}(W):=\langle \nabla f_i(W_t), W \rangle
\end{align*}
and
\begin{align*}
R^{(t)}(W):=\sum_{i=1}^n \ell\left(y_i f_i^{(t)}(W) \right).
\end{align*}
Note that $\frac{\partial f_i(W)}{\partial w_s}=\frac{1}{\sqrt{m}} a_s \mathbf{1}[\langle w_s, x_i \rangle >0]x_i$.
In particular the gradient is independent of $\| w_s \|_2 $, which will be crucial in our improved analysis.

\section{Main assumption and examples}
\label{sec:examples}
\subsection{Main assumption}

Here, we define the parameter $\gamma>0$ which was also used in \cite{jt20}.
Intuitively, $\gamma$ determines the separation margin of the NTK.
Let $B=B^{d}=\lbrace x \in \mathbb{R}^d ~|~ \| x \|_2\leq 1 \rbrace$ be the unit ball in $d$ dimensions.
We set $\mathcal{F}_B$ to be the set of functions $f$ mapping from $\dom(f)=\mathbb{R}^d$ to $ \range(f) = B $.
Let $\mu_{\mathcal{N}}$ denote the Gaussian measure on $\mathbb{R}^d$, specified by the Gaussian density with respect to the Lebesgue measure on $\mathbb{R}^d$.

\begin{mydef}\label{def1}
Given a data set $(X, Y) \in \mathbb{R}^{n \times d} \times \mathbb{R}^n$ and a map $\bar{v} \in \mathcal{F}_B$ we set
\begin{align*}
\gamma_{\bar{v}}=\gamma_{\bar{v}}(X,Y):=\min_{i \in [n]} y_i\int \langle \bar{v}(z), x_i \rangle \mathbf{1}[ \langle x_i, z \rangle >0 ]~ \d \mu_{\mathcal{N}}(z) .
\end{align*}
We say that $\bar{v}$ is optimal if $\gamma_{\bar{v}}=\gamma(X, Y):= \max_{\bar{v}' \in \mathcal{F}_B}\gamma_{\bar{v}'}$.
\end{mydef}

We note that $\max_{\bar{v}' \in \mathcal{F}_B}\gamma_{\bar{v}'}$ always exists since $ \mathcal{F}_B$ is a set of bounded functions on a compact subset of $\mathbb{R}^d$.
We make the following assumption, which is also used in \cite{jt20}:

\begin{ass}\label{assu}
It holds that $\gamma=\gamma(X, Y)>0$.
\end{ass}

Before we prove our main results we show some properties of $\bar{v}$ to develop a better understanding of our assumption.
The following lemma shows that the integral can be viewed as a finite sum over certain cones in $\mathbb{R}^d$.
Given $U \subseteq \{1, 2, \dots, n\}=[n]$ we define the cone 
\begin{align*}
C(U):=\{ x \in \mathbb{R}^d ~|~ \langle x, x_i \rangle >0 \text{ if and only if $i \in U$}  \}.
\end{align*}

Note that $C(\emptyset)= \{ x \in \mathbb{R}^d ~|~ \langle x, x_i \rangle \leq 0 \text{ for all $i \in [n]$}\}$ and that $\mathbb{R}^d=\dot{\bigcup}_{U \subseteq [n] }C(U)$.
Further we set $P(U)$ to be the probability that a random Gaussian is an element of $C(U)$ and $P_U$ to be the probability measure of random Gaussians $z\sim \mathcal N(0,I)$ restricted to the event that $z\in C(U)$.
The following lemma shows that we do not have to consider each mapping in $\mathcal{F}_B$ but it suffices to focus on a specific subset.
More precisely we can assume that $\bar{v}$ is constant on the cones $C(U)$.
In particular this means we can assume $\bar{v}(z)=\bar{v}(c z) $ for any $z \in \mathbb{R}^d$ and scalar $c>0$ and that $\bar{v}$ is locally constant.

\begin{lem}\label{psetlem}
Let $\bar{v}\in \mathcal{F}_B$.
Then there exists $\bar{v}'$ such that $\gamma_{\bar{v}'}=\gamma_{\bar{v}}$ and $\bar{v}'$ is constant on $C(U)$ for any $U \subseteq [n]$.
\end{lem}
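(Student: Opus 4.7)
The strategy is to replace $\bar v$ on each cone $C(U)$ by its \emph{conditional average} with respect to the Gaussian measure restricted to that cone, which preserves every integral of the form $\int \langle \bar v(z), x_i\rangle \mathbf 1[\langle x_i,z\rangle>0]\,\d\mu_{\mathcal N}(z)$ while producing a piecewise constant map.

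First I would note that $\{C(U)\}_{U\subseteq[n]}$ partitions $\mathbb R^d$ (up to a measure-zero set of boundaries) and that for any $z\in C(U)$, the indicator $\mathbf 1[\langle x_i,z\rangle >0]$ equals $1$ iff $i\in U$, so it is a function of $U$ only. Then, for each $U\subseteq[n]$ with $P(U)>0$, I would define the constant vector
\[
c_U := \frac{1}{P(U)}\int_{C(U)} \bar v(w)\,\d\mu_{\mathcal N}(w),
\]
and set $\bar v'(z):=c_U$ for $z\in C(U)$; for $U$ with $P(U)=0$ (or for boundary points), I would set $\bar v'(z):=0$, which does not affect any integral. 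By Jensen's inequality (applied to the convex function $\|\cdot\|_2$), $\|c_U\|_2\leq \frac{1}{P(U)}\int_{C(U)}\|\bar v(w)\|_2\,\d\mu_{\mathcal N}(w)\leq 1$, so $\bar v'\in\mathcal F_B$.

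Next I would verify that $\gamma_{\bar v'}=\gamma_{\bar v}$ by a direct computation for each $i\in[n]$:
\begin{align*}
y_i\int \langle \bar v'(z),x_i\rangle \mathbf 1[\langle x_i,z\rangle>0]\,\d\mu_{\mathcal N}(z)
&=y_i\sum_{U\ni i} P(U)\,\langle c_U,x_i\rangle\\
&=y_i\sum_{U\ni i}\int_{C(U)}\langle \bar v(w),x_i\rangle\,\d\mu_{\mathcal N}(w)\\
&=y_i\int\langle \bar v(w),x_i\rangle \mathbf 1[\langle x_i,w\rangle>0]\,\d\mu_{\mathcal N}(w),
\end{align*}
where the first step uses that $\bar v'$ is constant $c_U$ on $C(U)$ and $\mathbf 1[\langle x_i,z\rangle>0]=\mathbf 1[i\in U]$ on $C(U)$, the second step substitutes the definition of $c_U$ (cancelling $P(U)$), and the third step reassembles the sum over cones into the single Gaussian integral. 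Taking the minimum over $i\in[n]$ of both sides gives $\gamma_{\bar v'}=\gamma_{\bar v}$, and $\bar v'$ is by construction constant on each $C(U)$.

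The only subtlety to watch is the bookkeeping of measure-zero boundaries between cones and the cones $C(U)$ with $P(U)=0$, both of which are handled by setting $\bar v'\equiv 0$ there without changing any integral. There is no real obstacle beyond writing the equalities above carefully; the proof is essentially the observation that conditional expectation preserves integrals against any function that is measurable with respect to the $\sigma$-algebra generated by $\{C(U)\}_{U\subseteq[n]}$, and the integrands $\langle \cdot,x_i\rangle\mathbf 1[\langle x_i,\cdot\rangle>0]$ against $\bar v$ fall into this framework after isolating the piecewise-constant indicator.
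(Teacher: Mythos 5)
Your proposal is correct and follows essentially the same route as the paper's own proof: decompose the Gaussian integral over the cones $C(U)$, replace $\bar v$ by its conditional expectation on each cone, and observe that the indicator $\mathbf 1[\langle x_i,z\rangle>0]$ is constant on each cone so the integrals are preserved. In fact your version is slightly cleaner: the paper writes the piecewise constant value as $\bar v'(x)=P(U_x)\int\bar v(z)\,\d P_{U_x}(z)$, but the extra $P(U_x)$ factor appears to be a typo (taken literally it would introduce a spurious $P(U)^2$ in the cone sum); your $c_U=\frac{1}{P(U)}\int_{C(U)}\bar v\,\d\mu_{\mathcal N}=\int\bar v\,\d P_U$ is the correct constant, and your explicit appeal to Jensen's inequality for $\|c_U\|_2\le 1$ makes the membership $\bar v'\in\mathcal F_B$ rigorous where the paper only asserts it.
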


\begin{proof}
Observe that for any distinct $U, U'\subseteq [n]$ the cones $C(U)$ and $C(U')$ are disjoint since for any $x\in \mathbb{R}^d$ the cone $C(U_x)$ containing $x$ is given by $U_x=\{ i \in [n] ~|~ \langle x, x_i \rangle >0 \}$.
Further we have that $ \bigcup_{U \subseteq [n]} C(U) = \mathbb{R}^d$ since any $x \in \mathbb{R}^d$ is included in some $C(U_x)$.
Thus for any $i \in [n]$ we have
\begin{align*}
y_i\int \langle \bar{v}(z), x_i \rangle \mathbf{1}[ \langle x_i, z \rangle >0 ]~ \d \mu_{\mathcal{N}}(z) 
&= y_i\sum_{U \subseteq [n]}P(U) \int \langle \bar{v}(z), x_i \rangle \mathbf{1}[ \langle x_i, z \rangle >0 ]~ \d P_U(z)\\
&=  y_i\sum_{U \subseteq [n], i \in U}P(U) \int \langle \bar{v}(z), x_i \rangle~ \d P_U(z) \\
&=  y_i\sum_{U \subseteq [n], i \in U}P(U)~\langle  x_i, \int \bar{v}(z)~\d P_U(z) \rangle .
\end{align*}
Hence defining $\bar{v}'(x)= P(U_x) \int \bar{v}(z)~\d P_{U_x}(z)  $ satisfies
\begin{align*}
 y_i\int \langle \bar{v}(z), x_i \rangle \mathbf{1}[ \langle x_i, z \rangle >0 ]~ \d \mu_{\mathcal{N}}(z)=
 y_i\int \langle \bar{v}'(z), x_i \rangle \mathbf{1}[ \langle x_i, z \rangle >0 ]~ \d \mu_{\mathcal{N}}(z)
\end{align*}
and since $\| \bar{v}(z)\|_2 \leq 1$ it follows that $\|\bar{v}'(z)\|_2 \leq 1$ for all $z\in \mathbb{R}^d$.
\end{proof}

Next we give an idea how the dimension $d$ can impact $\gamma$.
We show that in the simple case, where $\mathbb{R}^d$ can be divided into orthogonal subspaces, such that each data point $x_i$ is an element of one of the subspaces, there is a helpful connection between a mapping $\bar{v} \in \mathcal{F}_B$ and the mapping that $\bar{v}$ induces on the subspaces.

\begin{lem}\label{orthlem}
Assume there exist orthogonal subspaces $V_1, \dots V_s$ of $\mathbb{R}^d$ with $\mathbb{R}^d=\bigoplus_{j \leq s}V_j$ such that for each $i \in [n]$ there exists $j \in [s]$ such that $x_i \in V_j$.
Then the following two statements hold: 

{\bf Part 1.} Assume that for each $j \in [s]$ there exists $\gamma_j>0$ and $\bar{v}_j \in  \mathcal{F}_B$ such that for all $x_i \in V_j$ we have
\begin{align*}
y_i\int \langle \bar{v}_j(z), x_i \rangle \mathbf{1}[ \langle x_i, z \rangle >0 ]~ \d \mu_{\mathcal{N}}(z) \geq \gamma_j.
\end{align*}
Then for each $\rho\in \mathbb{R}^s$ with $\| \rho \|_2=1$ there exists $\bar{v} \in \mathcal{F}_B $ with
\begin{align*}
\min_{i \in [n] }y_i\int \langle \bar{v}(z), x_i \rangle \mathbf{1}[ \langle x_i, z \rangle >0 ]~ \d \mu_{\mathcal{N}}(z) \geq \min_{j \in [ s ] }\rho_j\gamma_j.
\end{align*}
\\

{\bf Part 2.} Assume that $\bar{v}$ maximizes the term
\[ \gamma^*=\min_{i\in [n]} y_i\int \langle \bar{v}(z), x_i \rangle \mathbf{1}[ \langle x_i, z \rangle >0 ]~ \d \mu_{\mathcal{N}}(z), \]
and that $\gamma^*>0$.
Given any vector $z\in \mathbb{R}^d$ we denote by $ p_j(z) \in V_j$ the projection of $z$ onto $V_j$.
Let $ \rho_j'= \max_{z \in \mathbb{R}^d} \| p_j(\bar v(z)) \|_2 $.
Then for all $j \in [s]$ the mapping $\bar{v}_j(z)=\frac{p_j(\bar v(z))}{\rho_j'}$ maximizes
\[ \gamma_j=\min_{x_i \in V_j} y_i\int \langle \bar{v}_j(z), x_i \rangle \mathbf{1}[ \langle x_i, z \rangle >0 ]~ \d \mu_{\mathcal{N}}(z) \]
and it holds that $  \| \bar{v}_j(z) \|_2 \leq 1$ for all $z \in \mathbb{R}^d$.
In other words if $\bar{v}$ is optimal for $(X, Y)$ then $\bar{v}_j$ is optimal for $(X_j, Y_j) $ where $X_j=\{ x_i\in V_j ~|~ i \in [n] \} $ with the corresponding labels, i.e., $y_{x_i}=y_i$.
\end{lem}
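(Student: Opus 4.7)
The plan is to exploit the pairwise orthogonality of the $V_j$'s by defining the combined mapping $\bar v(z) := \sum_{j=1}^s \rho_j \cdot p_j(\bar v_j(z))$. Two properties follow directly. First, since the $V_j$'s are pairwise orthogonal, $\|\bar v(z)\|_2^2 = \sum_{j=1}^s \rho_j^2 \|p_j(\bar v_j(z))\|_2^2 \leq \sum_{j=1}^s \rho_j^2 \|\bar v_j(z)\|_2^2 \leq \|\rho\|_2^2 = 1$, so $\bar v \in \mathcal{F}_B$. Second, if $x_i \in V_j$, then $p_k(\bar v_k(z)) \in V_k \perp V_j$ for every $k \neq j$, so $\langle \bar v(z), x_i \rangle = \rho_j \langle p_j(\bar v_j(z)), x_i\rangle = \rho_j \langle \bar v_j(z), x_i\rangle$, where the last equality uses $x_i \in V_j$ to drop the projection. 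Substituting this into the margin integral gives $y_i \int \langle \bar v(z), x_i\rangle \mathbf{1}[\langle x_i,z\rangle > 0]\, \d\mu_{\mathcal N}(z) = \rho_j \cdot y_i \int \langle \bar v_j(z), x_i\rangle \mathbf{1}[\langle x_i,z\rangle > 0]\, \d\mu_{\mathcal N}(z) \geq \rho_j \gamma_j$, by hypothesis on $\bar v_j$. Taking the minimum over $i \in [n]$ yields the bound $\min_j \rho_j \gamma_j$.

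\textbf{Part 2.} The norm bound $\|\bar v_j(z)\|_2 \leq 1$ is immediate from the definition $\rho_j' = \max_z \|p_j(\bar v(z))\|_2$. For optimality, the strategy is proof by contradiction. Suppose some $\tilde v_j \in \mathcal{F}_B$ attains a strictly larger margin $\tilde \gamma_j > \gamma_j$ on $(X_j, Y_j)$; after replacing $\tilde v_j$ by $p_j \circ \tilde v_j$ (which can only increase the margin while preserving the norm bound), we may assume $\tilde v_j$ maps into $V_j$. The goal is to build a perturbation $\tilde v$ of $\bar v$ that strictly improves the overall margin $\gamma^*$, contradicting optimality. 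The natural candidate replaces the $V_j$-component of $\bar v$ by a rescaled version of $\tilde v_j$, i.e., $\tilde v(z) = \bar v(z) - p_j(\bar v(z)) + c(z)\, \tilde v_j(z)$ for some weight $c(z)$ to be chosen. By orthogonality and $\tilde v_j(z) \in V_j$, the margins on any $x_i \in V_k$ with $k \neq j$ are exactly preserved, so it suffices to engineer $c(z)$ so that the new margin on every $x_i \in V_j$ strictly exceeds $\gamma^*$ while maintaining $\|\tilde v(z)\|_2 \leq 1$ pointwise.

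\textbf{Main obstacle.} The difficulty lies precisely in choosing $c(z)$ so that the norm constraint and the margin improvement hold simultaneously. The choice $c(z) \equiv \rho_j'$ maximizes the boost from $\tilde v_j$ but violates $\|\tilde v(z)\|_2 \leq 1$ wherever $\|p_j(\bar v(z))\|_2 < \rho_j'$; conversely, $c(z) = \|p_j(\bar v(z))\|_2$ keeps $\|\tilde v(z)\|_2 \leq \|\bar v(z)\|_2 \leq 1$ by orthogonal Pythagoras but only produces a weighted margin of the form $y_i \int \|p_j(\bar v(z))\|_2 \langle \tilde v_j(z), x_i\rangle \mathbf{1}[\langle x_i,z\rangle > 0]\,\d\mu_{\mathcal N}(z)$, which is not directly comparable to $\tilde \gamma_j$ because the weight varies with $z$. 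The plan is therefore to first invoke Lemma~\ref{psetlem} to assume $\bar v$ and $\tilde v_j$ are constant on each cone $C(U)$, then to pick $c(U)$ cone by cone so as to use the full budget $\rho_j'$ on cones where $\|p_j(\bar v(U))\|_2 = \rho_j'$ and to taper off on other cones subject to the pointwise constraint $\sum_k \|p_k(\bar v(U))\|_2^2 \leq 1$. Comparing the weighted margin with the original $\bar v$-margin cone by cone should yield a net improvement proportional to $\tilde\gamma_j - \gamma_j > 0$, contradicting optimality of $\bar v$. Carrying out this combinatorial balancing across cones, and verifying that the improvement does not leak into the $V_k$, $k \neq j$, directions, is the technical crux of the argument.
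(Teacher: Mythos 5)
Your Part~1 matches the paper's proof essentially verbatim: both define the combined map as a weighted sum of the component maps, invoke orthogonality of the $V_j$ to bound the norm by $\|\rho\|_2^2 = 1$, and use orthogonality again to show the inner product against $x_i \in V_j$ collapses to $\rho_j \langle \bar v_j(z), x_i\rangle$. (Writing $p_j(\bar v_j(z))$ explicitly is harmless; the paper notes WLOG $\bar v_j$ maps into $V_j$ since projecting can only help.)

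Part~2 has a genuine gap. You correctly set up the contradiction and correctly identify the obstacle — the pointwise constraint $\|\tilde v(z)\|_2 \leq 1$ fights against the desire to amplify the $V_j$-component — but you stop at "carrying out this combinatorial balancing across cones \dots\ is the technical crux," which leaves the proof unfinished. The missing idea, which the paper exploits, is that you should not perturb $\bar v$ \emph{pointwise} at all. Instead, re-assemble a competitor via Part~1 with \emph{constant} (i.e., $z$-independent) weights: set $\bar v'(z) = \sum_{j\ne k}\rho_j \bar v_j(z) + \rho_k \bar v_k^*(z)$, where $\bar v_j(z) = p_j(\bar v(z))/\rho_j'$ are the rescaled projections of the original optimal $\bar v$, and $\bar v_k^*$ is the hypothesized better map on $V_k$. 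Then the pointwise norm constraint reduces to the single scalar constraint $\sum_j \rho_j^2 \le 1$ (by orthogonality of the ranges), which you can satisfy by taking $\rho_j = \rho_j' + \varepsilon'$ for $j\ne k$ and $\rho_k = \rho_k' - 2s\varepsilon'/\rho_k'$ for sufficiently small $\varepsilon' > 0$ depending on the margin gap $\varepsilon = \tilde\gamma_k - \gamma_k$; this is a finite computation on the numbers $\rho_j'$, not a cone-by-cone balancing. One then checks that every component margin strictly exceeds $\min_j \rho_j'\gamma_j$, and uses the identity $\gamma^* = \min_j \rho_j'\gamma_j$ (which follows from the decomposition $\bar v = \sum_j \rho_j'\bar v_j$) to reach the contradiction. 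Your cone-by-cone plan would need to show that a $z$-dependent weight $c(z)$ can be chosen satisfying the pointwise budget while still integrating to a strictly larger margin on $V_j$ without perturbing the other coordinates; this is not obviously false, but it is precisely what you have not done, and the constant-weight reformulation avoids it cleanly.
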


\begin{proof}

{\bf Part 1.} 

Since applying the projection $p_j$ onto $V_j$ to any point $z \in \mathbb{R}^d$ does not change the scalar product of $z$ and $x_i \in V_j$, i.e., $\langle x_i, z \rangle =\langle x_i, p_j(z) \rangle$, we can assume that for all $z \in \mathbb{R}^d$ we have $ \bar{v}_j(z) \in V_j$.
Let $z \in \mathbb{R}^d$.
We define $\bar{v}(z):=\sum_{j=1}^s \rho_j \bar{v}_j(z)$.
Then by orthogonality
\begin{align*}
\| \bar{v}(z) \|_2^2 = \sum_{j=1}^s \rho_j^2 \| \bar{v}_j(z) \|_2^2 \leq  \sum_{j=1}^s \rho_j^2 \cdot 1 = 1.
\end{align*} 
Thus it holds that $ \bar{v} \in \mathcal{F}_B$.
Further we have $\langle x_i, \bar{v}(z) \rangle=\sum_{k=1}^s  \rho_k\langle x_i, \bar{v}_k(z) \rangle = \rho_j\langle x_i, \bar{v}_j(z) \rangle$ for $x_i \in V_j$ again by orthogonality it holds that
\begin{align*}
y_i\int \langle \bar{v}(z), x_i \rangle \mathbf{1}[ \langle x_i, z \rangle >0 ]~ \d \mu_{\mathcal{N}}(z) 
= & ~ \rho_j y_i\int \langle \bar{v}_j(z), x_i \rangle \mathbf{1}[ \langle x_i, z \rangle >0 ]~ \d \mu_{\mathcal{N}}(z) \geq  ~ \rho_j \gamma_j .
\end{align*} 

{\bf Part 2.} 

For the sake of contradiction assume that there are $k \leq s$ and $\bar{v}^*_k \in \mathcal{F}_B$ such that
\[ \gamma^*_k=\min_{x_i \in V_k} y_i \int \langle \bar{v}_k(z), x_i \rangle \mathbf{1}[ \langle x_i, z \rangle >0 ]~ \d \mu_{\mathcal{N}}(z)= \gamma_k+ \varepsilon \]
for some $\varepsilon>0$.
Using {\bf Part 1.} we can construct a new mapping $\bar{v}' \in \mathcal{F}_B$ by using the mappings $\bar{v}_j$ defined in the lemma for $j \neq k$, and exchange $\bar{v}_k$ by $\bar{v}_k^*$. Also as in {\bf Part 1} let $\rho_j=\rho_j'+\varepsilon'$ for $j \neq k$ and $\rho_k=\rho_k'-2\frac{s\varepsilon'}{\rho_k'} $ with $\varepsilon'=\min\lbrace\frac{\rho_k'^2}{ 4s}, \frac{\rho_k'^2 \varepsilon}{4(\gamma_k+\varepsilon)s} \rbrace$.
Then we have
\begin{align*}
2s+s \varepsilon'+4s^2\frac{\varepsilon'}{\rho_k'^2}\leq 4s.
\end{align*}
Subtracting $4s$ and multiplying with $\varepsilon'$ gives us
\begin{align*}
2s\varepsilon'+s \varepsilon'^2-4s\varepsilon'+4\left( \frac{s\varepsilon'}{\rho_k'} \right)^2\leq 0.
\end{align*}
Hence it holds that
\begin{align*}
\sum_{j=1}^s \rho_j^2
\leq & ~ \left(\sum_{j\neq k} (\rho_j'^2+2\varepsilon'+\varepsilon'^2)\right)+\rho_k'^2-{4s\varepsilon'}+4\left( \frac{s\varepsilon'}{\rho_k'} \right)^2\\
\leq & ~ \left( \sum_{j=1}^s  \rho_j'^2 \right)+2s\varepsilon'+s \varepsilon'^2-4s\varepsilon'+4\left( \frac{s\varepsilon'}{\rho_k'} \right)^2 \leq  ~ \sum_{j=1}^s \rho_j'^2 \leq  ~ 1.
\end{align*}
For any $x_i \in V_j$ with $j\neq k$ we have by orthogonality as in {\bf Part 1.}
\begin{align*}
y_i\int \langle \bar{v}'(z), x_i \rangle \mathbf{1}[ \langle x_i, z \rangle >0 ]~ \d \mu_{\mathcal{N}}(z) &= \rho_j y_i\int \langle \bar{v}_j(z), x_i \rangle \mathbf{1}[ \langle x_i, z \rangle >0 ]~ \d \mu_{\mathcal{N}}(z) \\
&= (\rho_j' + \varepsilon') y_i \int \langle \bar{v}_j(z), x_i \rangle \mathbf{1}[ \langle x_i, z \rangle >0 ]~ \d \mu_{\mathcal{N}}(z) .
\end{align*} 
Further we have
\begin{align*}
\min_{x_i \in V_k} y_i \int \langle \bar{v}'(z), x_i \rangle \mathbf{1}[ \langle x_i, z \rangle >0 ]~ \d \mu_{\mathcal{N}}(z) 
= & ~ \rho_k \gamma^*_k \\ 
= & ~ (\rho_k'-2\frac{s}{\rho_k}\varepsilon')(\gamma_k+\varepsilon) \\
\geq & ~ \rho_k'\gamma_k-\frac{2s}{\rho_k'}\cdot \frac{\rho_k'^2 \varepsilon}{4(\gamma_k+\varepsilon)s}(\gamma_k+\varepsilon)+\rho_k'\varepsilon \\
= & ~ \rho_k'\gamma_k+\frac{\rho_k' \varepsilon}{2}.
\end{align*}
We conclude again by orthogonality that
\begin{align*}
y_i\int \langle \bar{v}'(z), x_i \rangle \mathbf{1}[ \langle x_i, z \rangle >0 ]~ \d \mu_{\mathcal{N}}(z) 
= & ~ \rho_j y_i \int \langle \bar{v}'_j(z), x_i \rangle \mathbf{1}[ \langle x_i, z \rangle >0 ]~ \d \mu_{\mathcal{N}}(z) \\
> & ~ \min_j \rho_j' \gamma_j \\
= & ~ \gamma^*
\end{align*}
and thus $\bar{v}'$ contradicts the maximizing choice of $\bar{v}$.
\end{proof}

As a direct consequence we get that the problem of finding an optimal $\bar{v}$ for the whole data set can be reduced to finding an optimal $\bar{v}_j$ for each subspace.

\begin{cor}\label{orthcor}
Assume there exist orthogonal subspaces $V_1, \dots V_s$ of $\mathbb{R}^d$ with $\mathbb{R}^d=\bigoplus_{j \leq s}V_j$ such that for each $i \in [n]$ there exists $j \in [s]$ with $x_i \in V_j$. For $j \in [s]$ let $(X_j, Y_j)$ denote the data set consisting of all data points $(x_i,y_i)$ where $x_i \in V_j$.
Then $\bar{v}$ is optimal for $(X, Y)$ if and only if for all $j\in [s]$ the mapping $\bar{v}_j$ defined in Lemma \ref{orthlem} is optimal for $(X_j, Y_j)$ and $\gamma_{\bar{v}}=\sum_{j \in [s]} \gamma_j \rho^*_j$ where $\rho^*=\argmax_{\rho \in \mathcal{S}^{s-1}}\min_{j\in [s]} \rho_j \gamma_j $.
\end{cor}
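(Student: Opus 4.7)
The plan is to derive Corollary~\ref{orthcor} as a direct consequence of the two parts of Lemma~\ref{orthlem}, which already handle the decomposition of a mapping in $\mathcal F_B$ into its projections onto the orthogonal subspaces $V_j$. For the ``if'' direction, I would assume each $\bar v_j$ is optimal for $(X_j,Y_j)$ with margin $\gamma_j$ and apply Part~1 of Lemma~\ref{orthlem} with the weight vector $\rho^*\in\mathcal{S}^{s-1}$. This produces $\bar v\in\mathcal{F}_B$ with $\gamma_{\bar v}\geq \min_{j\in[s]}\rho^*_j\gamma_j$, which by the definition of $\rho^*$ equals $\max_{\rho\in\mathcal S^{s-1}}\min_j\rho_j\gamma_j$, so this construction is optimal for $(X,Y)$ provided the matching upper bound $\gamma(X,Y)\leq\max_\rho\min_j\rho_j\gamma_j$ holds.

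For the reverse direction, given an optimal $\bar v$ for $(X,Y)$, I would invoke Part~2 of Lemma~\ref{orthlem} to conclude that each rescaled projection $\bar v_j=p_j(\bar v)/\rho'_j$ is optimal for the subproblem $(X_j,Y_j)$ with margin $\gamma_j$. Orthogonality of the $V_j$ then yields, for any $x_i\in V_j$, the identity $y_i\int\langle\bar v(z),x_i\rangle\mathbf{1}[\langle x_i,z\rangle>0]\,d\mu_{\mathcal N}(z)=\rho'_j\gamma_j$, so $\gamma_{\bar v}=\min_j\rho'_j\gamma_j$. To match with the max-min formula, I would then show that $\rho'$ may be taken on the unit sphere: if $\|\rho'\|_2\neq 1$, reassembling the optimal $\bar v_j$'s with weights $\rho'/\|\rho'\|_2$ via Part~1 produces a competitor in $\mathcal F_B$ whose margin equals $\gamma_{\bar v}/\|\rho'\|_2$, and optimality of $\bar v$ pins the factor down.

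The main obstacle will be this last normalization step. Although Parts~1 and~2 of Lemma~\ref{orthlem} tightly tie $\bar v$ to the $\bar v_j$ via the projections $p_j$, the budget $\|\bar v(z)\|_2\leq 1$ is enforced pointwise in $z$, whereas $\rho'_j=\sup_z\|p_j(\bar v(z))\|_2$ is a supremum, so a priori nothing prevents $\sum_j(\rho'_j)^2>1$. I would address this by first reducing to cone-constant mappings via Lemma~\ref{psetlem}, where each value $\bar v_U$ satisfies $\sum_j\|p_j(\bar v_U)\|_2^2\leq 1$ exactly, and then showing that at an optimum the suprema $\rho'_j$ are simultaneously attained on a common cone, so that the sphere constraint on $\rho'$ holds with equality. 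Once this normalization is in place, both directions of the iff and the identification of $\gamma_{\bar v}$ in terms of $\rho^*$ follow cleanly from a straightforward combination of the two parts of Lemma~\ref{orthlem}.
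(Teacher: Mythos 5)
Your high-level plan mirrors the paper's own (extremely terse) proof: the paper simply asserts that the forward direction follows from Part~2 of Lemma~\ref{orthlem} and the reverse from the formula in Part~1. You flesh this out in the same way, which is good, and to your credit you notice the exact place where the argument becomes non-trivial: the normalization $\sum_j (\rho'_j)^2 \leq 1$ for $\rho'_j = \sup_z \|p_j(\bar v(z))\|_2$. This is indeed a real gap --- the paper's proof of Lemma~\ref{orthlem}~Part~2 silently uses $\sum_j \rho_j'^2 \leq 1$ in the chain of inequalities leading to $\|\bar v'\|_2 \leq 1$, and as you observe, the pointwise constraint $\sum_j \|p_j(\bar v(z))\|_2^2 \leq 1$ does not imply the sup-based constraint when the suprema live on different cones.

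However, the fix you propose does not close the gap. Reassembling the optimal $\bar v_j$'s with weights $\rho'/\|\rho'\|_2$ via Part~1 gives a competitor whose margin is \emph{at least} $\min_j (\rho'_j/\|\rho'\|_2)\gamma_j = \gamma_{\bar v}/\|\rho'\|_2$. This yields a contradiction with optimality of $\bar v$ only when $\|\rho'\|_2 < 1$ (the competitor would then be strictly better); it gives no information when $\|\rho'\|_2 > 1$, since there the competitor is merely guaranteed to be \emph{worse}. So this argument pins down $\|\rho'\|_2 \geq 1$, which is the wrong direction --- you need $\|\rho'\|_2 \leq 1$ to conclude $\gamma_{\bar v} = \min_j \rho'_j \gamma_j \leq \max_{\rho \in \mathcal S^{s-1}} \min_j \rho_j \gamma_j$. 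Your fallback idea (that at an optimum the suprema $\rho'_j$ are attained on a common cone) is plausible in special cases but you have not verified it and it is not clear it holds in general; the optimal cone-constant $\bar v$ is determined by a coupled min-max over all cones and there is no obvious reason the per-subspace maxima coincide. Since the corollary's conclusion genuinely requires this $\|\rho'\|_2 \leq 1$ step (the asserted equality fails otherwise, as $\gamma_{\bar v} \leq \|\rho'\|_2 \cdot \max_\rho \min_j \rho_j \gamma_j$ is all one gets), this is a substantive missing piece rather than a cosmetic one. A small additional remark: the paper's displayed formula $\gamma_{\bar v} = \sum_{j} \gamma_j \rho^*_j$ should surely read $\min_j \gamma_j \rho^*_j$ (the sum is inconsistent with the definition of $\gamma_{\bar v}$ as a minimum and with Part~1 of the lemma), and it would be worth flagging that when writing a careful proof.
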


\begin{proof}
One direction follows immediately by Lemma \ref{orthlem} 2) the other direction is a direct consequence of the formula given in Lemma \ref{orthlem} 1).
\end{proof}

The following bound for $\gamma$ simplifies calculations in some cases of interest.
It also gives us a natural candidate for an optimal $ \bar{v}\in \mathcal{F}_B$. 
Given an instance $(X, Y)$ recall that $U_z=\{i\in[n] \mid \langle z,x_i\rangle > 0\}$. We set 
\begin{align}\label{eqv0}
    \bar{v}_0(z)=\frac{\sum_{i \in [n]\cap U_z}x_iy_i}{\| \sum_{i \in [n]\cap U_z}x_iy_i \|_2}.
\end{align}
We note that $\bar{v}_0(z) $ is not optimal in general but if instances have certain symmetry properties, then $\bar{v}_0(z) $ is optimal.

\begin{lem}\label{gammabound}
For any subset $S \subseteq [n]$ it holds that
\begin{align*}
\gamma \leq \sum_{U \subseteq [n]}P(U)\frac{1}{|S|}\left\| \sum_{i \in S\cap U}x_iy_i \right\|_2 
\end{align*}
\end{lem}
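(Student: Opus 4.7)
The plan is to bound $\gamma$ by exploiting two simple facts: that the minimum over $i \in [n]$ of a collection of quantities is upper bounded by any average over a subset $S$, and that the inner integral decomposes cleanly over the cones $C(U)$ once we invoke Lemma~\ref{psetlem}.

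First, I would fix an arbitrary $\bar v \in \mathcal F_B$ and, using Lemma~\ref{psetlem}, replace it without loss of generality by a map that is constant on each cone $C(U)$; write $\bar v(z) = v_U$ for $z \in C(U)$, where $\|v_U\|_2 \leq 1$. Since the cones partition $\mathbb{R}^d$ up to a null set and $\mathbf{1}[\langle x_i,z\rangle>0]$ is constantly $1$ on $C(U)$ exactly when $i \in U$, the quantity of interest decomposes as
\[
 y_i \int \langle \bar v(z), x_i\rangle \mathbf{1}[\langle x_i,z\rangle>0]\, d\mu_{\mathcal N}(z)
 = \sum_{U \subseteq [n] : i \in U} P(U)\,\langle v_U,\, y_i x_i\rangle.
\]

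Second, I would use the trivial inequality $\min_{i\in[n]}(\cdot) \leq \frac{1}{|S|}\sum_{i\in S}(\cdot)$ for any nonempty $S \subseteq [n]$, and swap the order of summation, rewriting $\sum_{i \in S}\sum_{U : i \in U} = \sum_U \sum_{i \in S \cap U}$. This yields
\[
 \min_{i\in[n]} y_i \int \langle \bar v(z), x_i\rangle \mathbf{1}[\langle x_i,z\rangle>0]\, d\mu_{\mathcal N}(z)
 \leq \frac{1}{|S|}\sum_{U\subseteq[n]} P(U)\,\Bigl\langle v_U,\, \sum_{i\in S\cap U} y_i x_i\Bigr\rangle.
\]

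Third, I would apply Cauchy--Schwarz inside each cone together with $\|v_U\|_2 \leq 1$ to get $\langle v_U, \sum_{i\in S\cap U} y_i x_i\rangle \leq \|\sum_{i\in S\cap U} y_i x_i\|_2$. Since the resulting upper bound is independent of $\bar v$, taking the supremum over $\bar v \in \mathcal F_B$ on the left and recalling $\gamma = \max_{\bar v'} \gamma_{\bar v'}$ yields the claimed inequality. There is no real obstacle here beyond bookkeeping; the only thing to be careful about is the reduction via Lemma~\ref{psetlem}, which is needed to make the integral collapse to a finite sum over the cones so that Cauchy--Schwarz can be applied pointwise in $U$ rather than globally under the integral sign.
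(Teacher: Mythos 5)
Your proof is correct and follows essentially the same route as the paper's: decompose the integral over the cones $C(U)$ via Lemma~\ref{psetlem}, bound the minimum by the average over $S$, swap the summation order, and finish with Cauchy--Schwarz and $\|v_U\|_2\leq 1$. The only cosmetic difference is that you keep $\bar v$ arbitrary and take a supremum at the end, whereas the paper fixes an optimal $\bar v$ from the start; the content is identical.
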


\begin{proof}
By Lemma \ref{psetlem} there exists an optimal $\bar{v}$ that is constant on $C(U)$ for all $U \subseteq [n]$.
For $x \in U$ let $z_U=\bar{v}(x) $.
Then by using an averaging argument and the Cauchy–Schwarz inequality we get
\begin{align*}
\gamma &\leq \frac{1}{|S|}\sum_{i \in S}y_i\int \langle \bar{v}(z), x_i \rangle \mathbf{1}[ \langle x_i, z \rangle >0 ]~ \d \mu_{\mathcal{N}}(z)\\
&= \frac{1}{|S|}\sum_{i \in S}y_i\sum_{U \subseteq [n], i\in U}P(U) \langle x_i, z_U \rangle\\
&= \frac{1}{|S|}\sum_{U \subseteq [n]}P(U) \langle \sum_{i \in S\cap U} y_i x_i, z_U \rangle\\
&\leq \sum_{U \subseteq [n]}P(U)\frac{1}{|S|}\left\| \sum_{i \in S\cap U}x_iy_i \right\|_2.
\end{align*}
\end{proof}

Finally we give an idea of how two points and their distance impacts the cones and their hitting probabilities.

\begin{lem}\label{twoplem}
Let $x_1, x_2 \in \mathcal{S}^{d-1}$ be two points with $\langle x_1, x_2 \rangle>0 $ and $ \| x_1 - x_2\|_2=b >0$.
Set $V_1'=\{ x \in \mathbb{R}^d ~|~ \langle x_1, x \rangle>0 \geq  \langle x_2, x \rangle \} $.
Then for a random Gaussian $z$ we have $z \in V_1'$ with probability $P(V_1')$ where $ \frac{b}{7} \leq P(V_1') \leq \frac{b}{5}$.
Further for any $z$ with $ \| z\|_2=1$ it holds that $| \langle x_1, z \rangle-\langle x_2, z \rangle |\leq b $.
\end{lem}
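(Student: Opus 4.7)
The plan is to dispose of the second claim immediately and then reduce the first claim to a two-dimensional Gaussian-angle computation via rotational invariance of the Gaussian measure.

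For the easy part, Cauchy--Schwarz gives $|\langle x_1, z\rangle - \langle x_2, z\rangle| = |\langle x_1 - x_2, z\rangle| \leq \|x_1 - x_2\|_2 \|z\|_2 = b$ for any unit $z$, which is exactly the second statement.

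For the Gaussian mass of the wedge $V_1'$, let $\theta\in[0,\pi/2)$ be the angle between $x_1$ and $x_2$; we have $\theta<\pi/2$ because $\langle x_1,x_2\rangle>0$. The event $\{z:\langle x_1,z\rangle>0\geq\langle x_2,z\rangle\}$ depends only on the orthogonal projection $Pz$ of $z$ onto the plane $\Pi$ spanned by $x_1$ and $x_2$. By isotropy of the standard Gaussian, $Pz$ is a standard Gaussian on $\Pi$, and in particular its angular direction is uniform on the unit circle in $\Pi$. The set of unit directions $u\in\Pi$ with $\langle x_1,u\rangle>0\geq \langle x_2,u\rangle$ is an arc of angular width exactly $\theta$, so $P(V_1') = \theta/(2\pi)$.

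It remains to translate the bounds into the geometric quantity $b$. Since $x_1,x_2$ are unit vectors, $b = \|x_1-x_2\|_2 = 2\sin(\theta/2)$. We therefore have to check
\[
\frac{2\sin(\theta/2)}{7} \;\leq\; \frac{\theta}{2\pi} \;\leq\; \frac{2\sin(\theta/2)}{5}
\qquad\text{for }\theta\in[0,\pi/2).
\]
For the lower bound on $P(V_1')$, the elementary inequality $\sin(\theta/2)\leq \theta/2$ gives $2\sin(\theta/2)/7 \leq \theta/7 \leq \theta/(2\pi)$, using $2\pi<7$. For the upper bound, concavity of $\sin$ on $[0,\pi/2]$ yields the chord bound $\sin(\theta/2) \geq (\sqrt{2}/\pi)\,\theta$ on $[0,\pi/2]$; hence $2\sin(\theta/2)/5 \geq 2\sqrt{2}\,\theta/(5\pi) \geq \theta/(2\pi)$ because $4\sqrt{2}\geq 5$.

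The only step that requires any thought is the reduction to the plane $\Pi$, and even that is standard; the remaining work is just elementary trigonometry on $[0,\pi/2)$. There is no real obstacle, though the numerical constants $5$ and $7$ are not tight and one has to pick the right elementary bounds ($\sin(\theta/2)\leq \theta/2$ from above, the chord bound from below) to make the constants come out.
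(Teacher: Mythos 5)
Your proof is correct and takes essentially the same route as the paper: Cauchy--Schwarz for the second claim, and for the first claim a reduction to the plane spanned by $x_1,x_2$ followed by identifying $P(V_1')$ with the normalized angle $\theta/(2\pi)$ of the wedge and then bounding $\theta/(2\sin(\theta/2))$. The paper phrases the last step as a monotonicity argument for $h(\varphi) = \varphi/\sqrt{2-2\cos\varphi}$ (which is literally $\theta/(2\sin(\theta/2))$ after the half-angle identity) on $(0,\pi/2]$, whereas you obtain the same two-sided bound from the elementary inequalities $\sin x \leq x$ and the chord bound $\sin x \geq (2\sqrt 2/\pi)x$ on $[0,\pi/4]$; both give the constants $5$ and $7$, and your version avoids the unproved claim that $h' > 0$.
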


\begin{proof}
We define $V_1=\{ x \in \mathbb{R} ~|~ \langle x_1, x \rangle>0  \} $.
Then $P(V_1)=\frac{1}{2}$ since for a random Gaussian $z$ it holds that $\langle x_1, z \rangle>0$ with probability $\frac{1}{2}$.
Since the space spanned by $x_1$ and $x_2$ is $2$-dimensional, we can assume that $x_1$ and $x_2$ are on the unit circle and that $x_1=(1, 0)$ and $x_2=(\cos(\varphi), \sin(\varphi))$ for $\varphi \leq \frac{\pi}{2}$.
Note that $ P(V_1')$ is given by $ \frac{b'}{2\pi}$ where $b'=\varphi$ is the length of the arc connecting $x_1$ and $x_2$ on the circle.
Since $b$ is the Euclidean distance and thus the shortest distance between $x_1$ and $x_2$ we have $b\leq b'$.
Further it holds that
\[ h(\varphi):=\frac{b'}{b}=\frac{\varphi}{\sqrt{(1-\cos(\varphi))^2+\sin(\varphi)^2}}=
\frac{\varphi}{\sqrt{2-2\cos(\varphi)}}.\]
Then $h'(\varphi)$ is positive on $(0, \frac{\pi}{2}]$, so $ h(\varphi)$ is monotonously non-decreasing, and thus $ h(\varphi)\leq h(\frac{\pi}{2})=\frac{(\pi/2)}{\sqrt{2}}=\frac{\pi}{\sqrt{8}}$ and $b'\leq b \cdot \frac{\pi}{\sqrt{8}} $.
Consequently for $ P(V_1')=\frac{b'}{2\pi} $ we have that
\begin{align*}
\frac{b}{7}  \leq \frac{b}{2\pi} \leq P(V_1') \leq \frac{b}{2\pi}\cdot \frac{\pi}{\sqrt{8}} \leq \frac{b}{5}.
\end{align*}

For the second part we note that for any $z$ with $ \| z\|_2=1$ we get
\begin{align*}
| \langle z, x_1 \rangle - \langle z, x_2 \rangle |
=| \langle z, x_1- x_2 \rangle |
\leq \| z \|_2 \| x_1- x_2 \|_2 = 1 \cdot b
\end{align*}
by using the Cauchy–Schwarz inequality.
\end{proof}

\subsection{Example 1: Orthogonal unit vectors}
Let us start with a simple example first:
let $e_i\in \mathbb{R}^d$ be the $i$-th unit vector.
Let $n=2d$, $x_i=e_i$ for $i\leq d$ and $x_i=-e_{i-d}$ otherwise with arbitrary labels.
First consider the instance $(X_i, Y_i)$ created by the points $x_i$ and $x_{i+d}$ for $i\leq d$.
Then we note that $\bar{v}_i$ sending any point $z$ with $ \langle  z, e_i \rangle>0$ to $e_i y_i$ and any other point to $-e_i y_{i+d}$ is optimal since it holds that $\gamma_i=\gamma_{\bar{v}_i}(X_i, Y_i)= \int 1\cdot \mathbf{1}[ \langle x_i, z \rangle >0 ]~ \d \mu_{\mathcal{N}}(z) =\frac{1}{2} $.
Since the subspaces $V_i=\operatorname{span}\lbrace e_i \rbrace$ are orthogonal we can apply Corollary \ref{orthcor} with vector $\rho=(\frac{1}{\sqrt{d}})^d$.
Thus the optimal $\gamma$ for our instance is $\frac{1}{2 \sqrt{d}}$.

\subsection{Example 2: Two differently labeled points at distance \texorpdfstring{$b$}{b}}\label{ex:twopoints}

\begin{figure*}[ht!]
\begin{center}
\includegraphics[width=.5\linewidth]{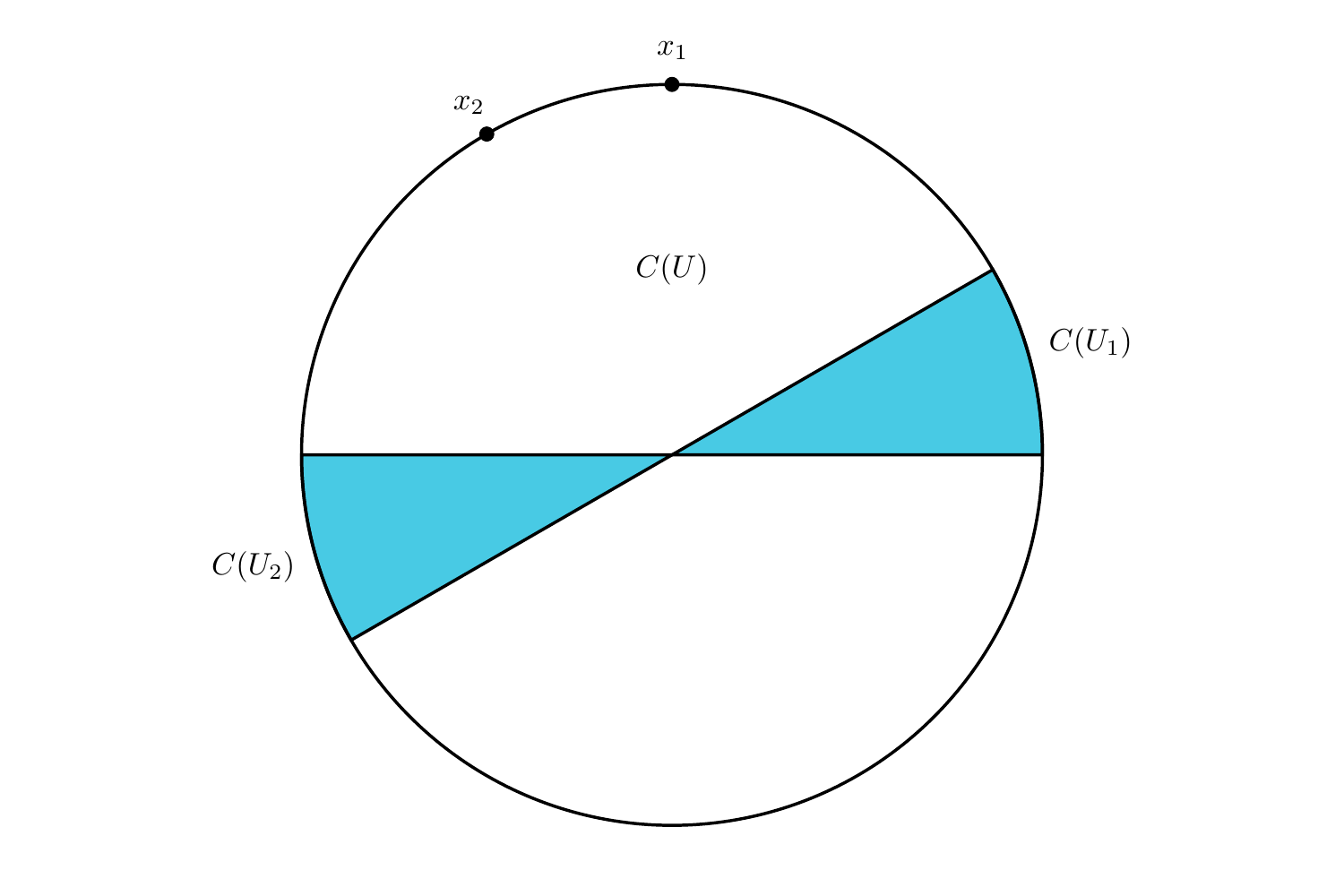}
\caption{a) Two points $x_1$ and $x_2$ on the sphere. $C(U)$ is the cone consisting of vectors having positive scalar product with both points. The cone $C(U_i)$ consists of vectors having positive scalar product with $x_i$ but negative scalar product with the other point. b) The probability $P(U_i)$ of a random Gaussian being in the cone $C(U_i)$ is exactly the length of the shortest arc on the circle (which is close to the Euclidean distance) connecting the points, divided by $2\pi$.
}
\label{fig:twopoints}
\end{center}
\end{figure*}

The next example is a set of two points $x_1, x_2 \in \mathbb{R}^d$ with $y_1=1=-y_2 $ and $\langle x_1, x_2 \rangle>0 $.
Let $U_1=\{1\}, U_2=\{2\}, U=\{ 1,  2 \}$ and $V_1=\{ x \in \mathbb{R} ~|~ \langle x_1, x \rangle>0  \} $.
Then $P(U)=P(V_1)- P(U_1)\geq \frac{1}{2}- \frac{b}{5}$ by Lemma \ref{twoplem} and $P(U_1)=P(U_2)=P(V_1)- P(U)\leq  \frac{1}{2}-(\frac{1}{2}- \frac{b}{5})=\frac{b}{5}$. For an illustration see Figure \ref{fig:twopoints}.

By Lemma \ref{psetlem} we can assume that there exists an optimal $\bar{v}$ which is constant on $C(U)$ and constant on $C(U_i)$ for $i \in \{1, 2\}$, i.e., that $\bar{v}(z)=z' \in B$ for all $z \in C(U)$ and $\bar{v}(z)=z'' \in B$ for all $z \in C(U_1)$.

By Lemma \ref{twoplem} we have $| \langle x_1, z' \rangle-\langle x_2, z' \rangle |\leq b $.
Consequently since $x_1$ and $x_2$ have different labels there exists at least one $i \in \{1, 2\}$ with $\langle z', x_i \rangle y_i \leq b/2$ since $\langle z', x_1 \rangle \geq b/2$ implies $-\langle z', x_2 \rangle \leq -\langle z', x_1 \rangle+  | \langle x_1, z' \rangle-\langle x_2, z' \rangle |\leq -b/2+b=b/2$ .
Then by Lemma \ref{psetlem} we have
\begin{align*}
y_i\int \langle \bar{v}(z), x_i \rangle \mathbf{1}[ \langle x_i, z \rangle >0 ]~ \d \mu_{\mathcal{N}}(z) 
\leq & ~ P(U)\cdot \langle z', x_i \rangle + P(U_i) \cdot \langle z'', x_i \rangle \\
\leq & ~ \frac{1}{2}\cdot \frac{b}{2}+\frac{b}{5} \cdot 1 \\
\leq & ~ \frac{b}{2}.\\
\end{align*}

\subsection{Example 3: Constant labels}

Let $X$ be any data set and let $Y$ be the all $1$s vector.
Then for $\bar{v}(z)=\frac{z}{\| z \|_2}$ it holds that
\begin{align*}
y_i\int \langle \bar{v}(z), x_i \rangle \mathbf{1}[ \langle x_i, z \rangle >0 ]~ \d \mu_{\mathcal{N}}(z) =
y_i\int  \left\langle \frac{z}{\| z \|_2}, x_i \right\rangle \mathbf{1}[ \langle x_i, z \rangle >0 ]~ \d \mu_{\mathcal{N}}(z)
\stackrel{*}{=}\Omega\left(\frac{1}{\sqrt{d}}\right).
\end{align*}
Thus we have $\gamma(X, Y) = \Omega\left(\frac{1}{\sqrt{d}}\right)$.
We note that $*$ is a well-known fact, see \citet{BlumHK20}. Since we consider only a fixed $x_i$, we can assume that $y_ix_i$ equals the first standard basis vector $e_1$. We are interested in the expected projection of a uniformly random unit vector $\frac{z}{\|z\|_2}$ in the same halfspace as $e_1$.

We give a short proof for completeness:
note that $\frac{z}{\|z\|_2}=(z_1, \dots, z_d)/\sqrt{\sum_{i=1}^d z_i^2}$ with  $z_i\sim \mathcal N(0,1)$, is a uniformly random unit vector $u$. By Jensen's inequality we have $\E[\sqrt{\sum_{i=1}^d z_i^2}] \leq \sqrt{ \E[\sum_{i=1}^d z_i^2] } = \sqrt{ \sum_{i=1}^d \E[ z_i^2] }=\sqrt{d}$. Thus, with probability at least $3/4$ it holds that $\sqrt{ \sum_{i=1}^d g_i^2 } \leq 4 \sqrt{d}$, by a Markov bound. Also, $|z_i| \geq \sqrt{2}\cdot \erf^{-1}(1/2)$ holds with probability at least $1/2$, since the right hand side is the median of the half-normal distribution, i.e., the distribution of $|z_i|$, where $z_i\sim\mathcal{N}(0,1)$. Here $\erf$ denotes the the Gauss error function.

By a union bound over the two events it follows with probability at least $1-\frac 1 2 -\frac 1 4 =\frac 1 4 $ that
\begin{align*}
|u_i| = |z_i|/\sqrt{\sum_{i=1}^d z_i^2} \geq \sqrt{2}\cdot \erf^{-1}(1/2)/(4 \sqrt{d}).
\end{align*}

Consequently $\E[|u_i|]\geq \frac 1 4 \cdot \sqrt{2}\cdot \erf^{-1}(1/2)/(4 \sqrt{d})= \Omega(1/\sqrt{d})$ and thus
\begin{align*}
 y_i\int  \left\langle \frac{z}{\| z \|_2}, x_i \right\rangle \mathbf{1}[ \langle x_i, z \rangle >0 ]~ \d \mu_{\mathcal{N}}(z)=\frac{1}{2}\E[|u_i|] = \Omega(1/\sqrt{d}).
\end{align*}

\subsection{Example 4: The hypercube}

In the following example we use $x_i$ for the $i$-th coordinate of $x\in \mathbb{R}^d$ rather than for the $i$-th data point.
We consider the hypercube $X=\{- \frac{1}{\sqrt{d}}, + \frac{1}{\sqrt{d}}\}^d$ with different labelings.
Given $x \in X$ we set $S_x=\{ i \in [d] ~|~ x_i=- \frac{1}{\sqrt{d}} \}$ and $\sigma(x)=| S_i|$.

\subsubsection{Majority labels}

First we consider the data set $X'=X \setminus \{ x \in X ~|~ \sigma(x)=\frac{d}{2} \}$ and assign $y_x=-1$ if $\sigma(x)>\frac{d}{2}$ and $y_x=-1$ if $\sigma(x)<\frac{d}{2}$.
Note that $ d-2\sigma(x)<0$ holds if and only if $y_x=-1$.
Let $x_c \in X$ be the constant vector that has all coordinates equal to $1/\sqrt{d}$. Now, if we fix $\bar{v}(z)=x_c$ for any $z$, then for all $x \in X'$ we have that
\[  y_x \int \langle \bar{v}(z), x \rangle \mathbf{1}[ \langle x, z \rangle >0 ]~ \d \mu_{\mathcal{N}}(z)=\frac{y_x}{2}\cdot \frac{d-2\sigma(x)}{d}\geq \frac{1}{2}\cdot \frac{1}{d} . \]
Hence it follows that $\gamma(X', Y)\geq \frac{1}{2d}$

\subsubsection{Parity labels}

Second we consider the case where $y_x=(-1)^{\sigma(x)}$.
Then we get the following bounds for $\gamma$:

\begin{lem}\label{lemhc}
Consider the hypercube with parity labels.
\begin{itemize}
\item[1)]If $d$ is odd, then $\gamma=0$.
\item[2)]If $d$ is even, then $\gamma>0$.
\end{itemize}
\end{lem}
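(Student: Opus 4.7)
My plan is to rule out any positive margin via an antipodal cancellation, then combine with the trivial bound $\gamma\ge 0$ (attained by $\bar v\equiv 0$) to conclude $\gamma=0$. Since $\sigma(-x)=d-\sigma(x)$ and $d$ is odd, $y_{-x}=-y_x$. I will fix any $\bar v\in\mathcal F_B$, set $u:=\int\bar v(z)\,\d\mu_{\mathcal N}(z)\in B$, and let $I(x_i):=y_i\int\langle\bar v(z),x_i\rangle\mathbf 1[\langle x_i,z\rangle>0]\,\d\mu_{\mathcal N}(z)$ so that $\gamma_{\bar v}=\min_i I(x_i)$. Using $\mathbf 1[\langle -x,z\rangle>0]=1-\mathbf 1[\langle x,z\rangle>0]$ almost everywhere, I expect a direct computation to give $I(x)+I(-x)=y_x\langle u,x\rangle$ for each antipodal pair. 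Summing over all such pairs I obtain $\sum_{x\in X}I(x)=\tfrac12\langle u,\sum_{x\in X}y_x x\rangle$, and each coordinate of $\sum_{x\in X}y_x x$ factorizes as $\tfrac{1}{\sqrt d}\epsilon_j(1+(-1))^{d-1}=0$ for $d\ge 2$. Hence $\min_i I(x_i)\le 0$, so $\gamma_{\bar v}\le 0$ for every $\bar v$ and $\gamma=0$.

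\textbf{Part 2 ($d$ even).} Now $y_{-x}=y_x$ so the cancellation breaks; my plan is to construct a $\bar v$ with $\gamma_{\bar v}>0$ by a spectral argument on the NTK kernel matrix $K\in\mathbb{R}^{2^d\times 2^d}$. Because $K_{x,x'}=g(\langle x,x'\rangle)$ with $g(t):=t(\pi-\arccos t)/(2\pi)$ depends only on the Hamming distance between $x,x'\in X$, I will diagonalize $K$ in the Fourier basis $\{\chi_S\}_{S\subseteq[d]}$ of $\{-1,+1\}^d$; the parity label vector $y$ equals the top character $\chi_{[d]}$, so it is an eigenvector of $K$ with eigenvalue
\[
\lambda_{[d]}\;=\;\sum_{m=0}^d\binom{d}{m}(-1)^{d-m}g\!\left(\tfrac{2m-d}{d}\right).
\]
Pairing $m\leftrightarrow d-m$ and using the identity $g(t)-g(-t)=t/2$ will kill the alternating sum for $d$ odd (giving $\lambda_{[d]}=0$, consistent with Part~1), while with $h(t):=g(t)+g(-t)$ it will rewrite $\lambda_{[d]}=\sum_{m=0}^{d/2-1}(-1)^m\binom{d}{m}h\!\left(\tfrac{2m-d}{d}\right)$ for $d$ even; I will verify this is strictly positive (e.g.\ $\lambda_{[2]}=1/2$, $\lambda_{[4]}=1/6$). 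Given $\lambda_{[d]}>0$, I will take $\alpha:=y/\lambda_{[d]}$ so that $K\alpha=y$, and check that the piecewise-constant $\bar v(z):=\tfrac{1}{2^d}\sum_i y_i x_i\mathbf 1[\langle z,x_i\rangle>0]$ satisfies $\|\bar v(z)\|_2\le 1$ and $y_i f_{\bar v}(x_i)=\lambda_{[d]}/2^d$ uniformly in $i$, whence $\gamma\ge\lambda_{[d]}/2^d>0$.

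\textbf{Main obstacle.} The delicate step will be showing $\lambda_{[d]}>0$ for every even $d\ge 2$: the alternating sum has large near-cancellations (the factors $h(t_m)$ are $O(1)$ while $\binom{d}{m}$ grows rapidly), so a brute-force lower bound is awkward. The cleanest route I foresee is to identify $\lambda_{[d]}$ with (a positive multiple of) the squared $d$-th Hermite coefficient of the ReLU activation $\phi$, which is nonzero for $d\in\{0,1\}\cup 2\mathbb{Z}_{\ge 1}$ and vanishes for $d$ odd $\ge 3$; this Hermite dichotomy is exactly the structural reason behind both halves of the lemma.
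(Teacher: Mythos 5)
Both halves of your proof take a genuinely different route from the paper's, and Part~1 is a clean improvement. For Part~1 you exploit the antipodal symmetry $y_{-x}=-y_x$ (valid precisely because $d$ is odd) together with $\mathbf 1[\langle -x,z\rangle>0]=1-\mathbf 1[\langle x,z\rangle>0]$ a.e.\ to get the exact identity $I(x)+I(-x)=y_x\langle u,x\rangle$ with $u=\int\bar v\,\d\mu_{\mathcal N}$, and then you kill the sum because $\sum_{x\in X}y_x x=0$ (the per-coordinate factorization you wrote is correct). This forces $\min_i I(x_i)\le 0$ for every $\bar v$, hence $\gamma=0$. The paper instead groups points into $4$-tuples obtained by flipping two coordinates and invokes its averaging bound (Lemma~\ref{gammabound}); your two-point pairing is shorter and more transparent. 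Both arguments silently need $d\ge 3$ (the paper fixes a coordinate $i\neq 1$; for $d=1$ the parity instance is in fact separable).

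For Part~2 you replace the paper's explicit combinatorial construction (the map $\bar v$ supported on cones through edge midpoints of the hypercube) by a spectral argument: $K$ is a function of $\langle x,x'\rangle$, hence diagonal in the Fourier basis of the cube, $y=\chi_{[d]}$ is an eigenvector with eigenvalue $\lambda_{[d]}$, and the explicit $\bar v(z)=2^{-d}\sum_i y_i x_i\mathbf 1[\langle z,x_i\rangle>0]$ (which satisfies $\|\bar v(z)\|_2\le 1$) achieves margin $\gamma_{\bar v}=\lambda_{[d]}/2^d$. The computation $f_{\bar v}(x_i)=2^{-d}(Ky)_i$ is correct, so the whole argument reduces to $\lambda_{[d]}>0$ for even $d$. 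This is where your proposal has a real gap: you verify $\lambda_{[2]}=1/2$ and $\lambda_{[4]}=1/6$ and then \emph{foresee} a Hermite-coefficient dichotomy, but you do not prove the general claim, and the precise identification of $\lambda_{[d]}$ with ``a positive multiple of the squared $d$-th Hermite coefficient of $\phi$'' is not quite right on the cube (the cube eigenvalue at level $d$ aggregates contributions from all Taylor degrees $j\ge d$ with $j\equiv d\pmod 2$, not just $j=d$). The gap is fillable along the lines you intend: write $g(t)=\tfrac{t}{4}+\tfrac{t\arcsin t}{2\pi}=\sum_j c_jt^j$ with $c_0=0$, $c_1=1/4$, $c_{2k}>0$ for $k\ge 1$, and $c_{2k+1}=0$ for $k\ge 1$; expanding $\lambda_{[d]}=\sum_\eta g\bigl(d^{-1}\sum_i\eta_i\bigr)\prod_i\eta_i$ termwise gives
\[
\lambda_{[d]}=2^d\sum_{\substack{j\ge d\\ j\equiv d\ (2)}}\frac{c_j}{d^j}\,N_j,\qquad N_j:=\#\{(i_1,\dots,i_j)\in[d]^j:\text{every index occurs an odd number of times}\}\ge 0,
\]
with $N_d=d!$. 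For $d$ odd $\ge 3$ every admissible $j$ is odd $\ge 3$, so $c_j=0$ and $\lambda_{[d]}=0$ (recovering Part~1); for $d$ even all admissible $j$ are even, all $c_j\ge 0$, and the $j=d$ term gives $\lambda_{[d]}\ge 2^d c_d\,d!/d^d>0$. You should carry out this (or an equivalent) calculation rather than only conjecturing the Hermite link; once done, your approach is a valid and arguably more structural alternative to the paper's hands-on construction, and it has the side benefit of producing an explicit lower bound $\gamma\ge \lambda_{[d]}/2^d\ge c_d\,d!/d^d$.
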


\begin{proof}
1): First note that the set $Z=\{ z \in \mathbb{R}^d ~|~ \exists x \in X \text{ with } \langle x , z \rangle =0  \}$ is a null set with respect to the Gaussian measure $\mu_{\mathcal{N}}$.
Fix any coordinate $i \leq d$.
W.l.o.g. let $i\neq 1$.
Given $ x\in M:= \{ \frac{1}{\sqrt{d}} \} \times \{ -\frac{1}{\sqrt{d}}, \frac{1}{\sqrt{d}} \}^{d-2}$ consider the set $S(x)=\{(\frac{1}{\sqrt{d}},x), (-\frac{1}{\sqrt{d}}, x), (\frac{1}{\sqrt{d}}, -x), (-\frac{1}{\sqrt{d}}, -x)\}$.
Note that $X$ is the disjoint union $X=\dot{\bigcup}_{x\in M }S(x) $.
Further since $d-1$ is even, it holds that $y_{(\frac{1}{\sqrt{d}},x)}=y_{(\frac{1}{\sqrt{d}},-x)}= -y_{(-\frac{1}{\sqrt{d}},-x)} = -y_{(-\frac{1}{\sqrt{d}},x)}$.
Let $z\in Z$ and let $U_z=\{ x'\in X ~|~ \langle z, x' \rangle>0 \}$.
W.l.o.g. let $ \langle z, (\frac{1}{\sqrt{d}},x) \rangle>0$.
Then we have $ \langle z, x' \rangle>0$ for exactly one $x' \in \{ (-\frac{1}{\sqrt{d}}, x), (\frac{1}{\sqrt{d}}, -x) \}$ and $\langle z, (-\frac{1}{\sqrt{d}},-x) \rangle<0$.
Now since $y_{(\frac{1}{\sqrt{d}},x)}(\frac{1}{\sqrt{d}},x)_i= -y_{(\frac{1}{\sqrt{d}},x)}(-1,x)_i=-y_{(\frac{1}{\sqrt{d}},x)}(1,-x)_i$ we conclude that for all $x \in M$ it holds that 
\begin{align*}
\sum_{x' \in S(x)\cap U_z}(x'y_{x'})_i=\frac{1}{\sqrt{d}}+ (- \frac{1}{\sqrt{d}})=0
\end{align*}
and thus we get
\begin{align*}
\sum_{x \in X\cap U_z}(xy_x)_i=\sum_{x\in M}\sum_{x' \in S(x)\cap U_z}(xy_x)_i=0.
\end{align*}
Thus by Corollary \ref{gammabound} it holds that $\gamma=0$.

2): Consider the set $ M$ comprising the middle points of the edges, i.e., $ M=\{ x \in \{-\frac{1}{\sqrt{d}}, 0, \frac{1}{\sqrt{d}}\}^d ~|~ x_i=0 \text{ for exactly one coordinate $i \in [d]$} \} $.
Observe that for any $x \in X$ and $z \in M$ the dot product $d\cdot \langle x, z \rangle$ is an odd integer and thus $ |\langle x, z \rangle|\geq 1/d$.
Hence, for the cone $ C(U_z)$ containing $z$ we have $P(U_z)>0$.

Now fix $z \in M$ and let $i \in [d]$ be the coordinate with $z_i=0$.
Recall $\sigma(z)=|\{ k \in [d]~|~ z_k=-\frac{1}{\sqrt{d}} \}|$ and set $\bar{v}(z)=e_i\cdot\sigma(z) \cdot (-1)^{d/2+1}$.
Let $j \in [d]$ be any coordinate other than $i$ and consider the pairs $ \{ v, w \} \subset X$ where $v \in X$ with $v_j=z_j$, $\langle v, z \rangle>1/d$ and $w=v-2v_je_j$.
We denote the union of all those pairs by $V'$.
The points $v$ and $w$ have the same entry at coordinate $i$ but different labels.
Hence it holds that $\sum_{(v, w) \in V'}v_i y_v+w_i y_w=0 $.

Next consider the set of remaining vectors with $\langle v, z \rangle>0$ which is given by $V=\{ x \in X ~|~ x_j=z_j \text{ and } \langle x, z \rangle=1/d \}$.
For all $x \in V $ with $x_i=\frac{1}{\sqrt{d}}$ it holds that $\sigma(x)=\sigma(z)-(\frac{d}{2}-1)=\sigma(z) \cdot (-1)^{d/2+1}$ since the projection of $x$ to $\mathbb{R}^{d-1}$ that results from removing the $i$-th entry of $x$, has Hamming distance $(\frac{d}{2}-1)$ to $z$ projected to $\mathbb{R}^{d-1}$, 
and vice versa for all $x \in V $ with $x_i=-1/\sqrt{d}$ we have that $\sigma(x)=\sigma(z) \cdot (-1)^{d/2}$.
Hence for $x \in V$ it holds that $y_x\bar{v}(z) = e_i \cdot \sigma(z) \cdot (-1)^{d/2+1} = e_i \cdot \sgn(x_i)$ and thus we have
\begin{align*}
\sum_{x \in X\cap U_z} y_x\langle x, \bar{v}({z}) \rangle
&=\sum_{x \in V}y_x\langle x, \bar{v}({z}) \rangle ~+~ \sum_{(v, w) \in V'}y_v\langle v, \bar{v}({z}) \rangle+y_w\langle w, \bar{v}({z}) \rangle\\
&=\sum_{x \in V}\sgn(x_i)\langle x,  e_i \rangle ~+~ 0\\
&=\sum_{x \in V}\frac{1}{\sqrt{d}}
=2\binom{d-1}{d/2-1}\frac{1}{\sqrt{d}}
\end{align*}
since the number of elements $x \in V$ with $x_i=1/\sqrt{d} $ is the same as the number of elements $x' \in V$ with $x_i'=-1/\sqrt{d} $.
More specifically, it equals the number of points with Hamming distance $(\frac{d}{2}-1)$ to the projection of $z$ onto $\mathbb{R}^{d-1}$, which is $ \binom{d-1}{d/2-1}$ since the $i$-th coordinate is fixed and we need to choose $d/2-1$ coordinates that differ from the remaining coordinates of $z$.
Let $P>0$ be the probability that a random Gaussian is in the same cone $C(U)$ as $z$ for some $z \in M$.
Then by symmetry it holds that $\gamma_{\bar{v}}=P\cdot 2\binom{d-1}{d/2-1}\cdot \frac{1}{\sqrt{d}} \cdot\frac{1}{|X|}>0$.
\end{proof}

\section{Lower bounds for log width}\label{sec:LB_logwidth}

\subsection{Example 5: Alternating points on a circle}\label{ex:cycle}
Next consider the following set of $n$ points for $n$ divisible by $4$:\\
$x_k=\left(\cos\left(\frac{2k \pi}{n}\right),  \sin\left(\frac{2k \pi}{n}\right) \right)$ and $y_k=(-1)^k$.
Intuitively, defining $\bar{v}$ to send $z \in \mathbb{R}^d$ to the closest point of our data set $X$ multiplied by its label, gives us a natural candidate for $\bar{v}$.
However, applying Lemma \ref{psetlem} gives us a better mapping that also follows from Equation (\ref{eqv0}), and which is optimal by Lemma \ref{gammabound}:\\
Define the set $S=\lbrace x \in \mathbb{R}^2 \mid \exists x_i\in X, \alpha\geq 0\colon x=\alpha x_i \rbrace$ 
Now, for any $z \in \mathbb{R}^d \setminus S$ there exists a unique $i_z$ such that $z\in \Cone (\{x_{i_z}, x_{i_z+1}\})$.
We set $r_z=\frac{x_{i_z}y_{i_z}+x_{i_z+1}y_{i_z+1}}{\| x_{i_z}-x_{i_z+1} \|_2}$.
We define the function $\bar{v} : \mathbb{R}^d \rightarrow \mathbb{R}^d$ by
\[
\bar{v}(z)=
\begin{cases}
0 & \text{$z\in S$} \\
(-1)^{n/4+1}r_z  & \text{ otherwise.}
\end{cases}
\]
Observe that for $i=i_z$ we have 
\begin{align*}
r_z&=\left(\cos\left(\frac{2 \pi}{2n}\cdot (i-\frac{n}{2}+1)\right),  \sin\left(\frac{2 \pi}{2n}\cdot (i-\frac{n}{2}+1)\right) \right) \\
&=(-1)^i\left(\sin\left(\frac{(i+1)2 \pi}{2n}\right),  -\cos\left(\frac{(i+1)2 \pi}{2n} \right) \right).
\end{align*}
Figure \ref{fig:altcircle} shows how $ \bar{v}(z)$ is constructed for $ n=12$.
We note that $\bar{v}=\bar{v}_0 $ holds almost surely, which in particular implies the optimality of $\bar{v}$, cf. Equation (\ref{eqv0}).
For computing $\gamma$ we need the following lemma.  

\begin{figure*}[ht!]
\begin{center}
\begin{tabular}{cccc}
\includegraphics[width=0.4\linewidth]{figure_1.pdf}\label{fig:altcircle1}&
\includegraphics[width=0.4\linewidth]{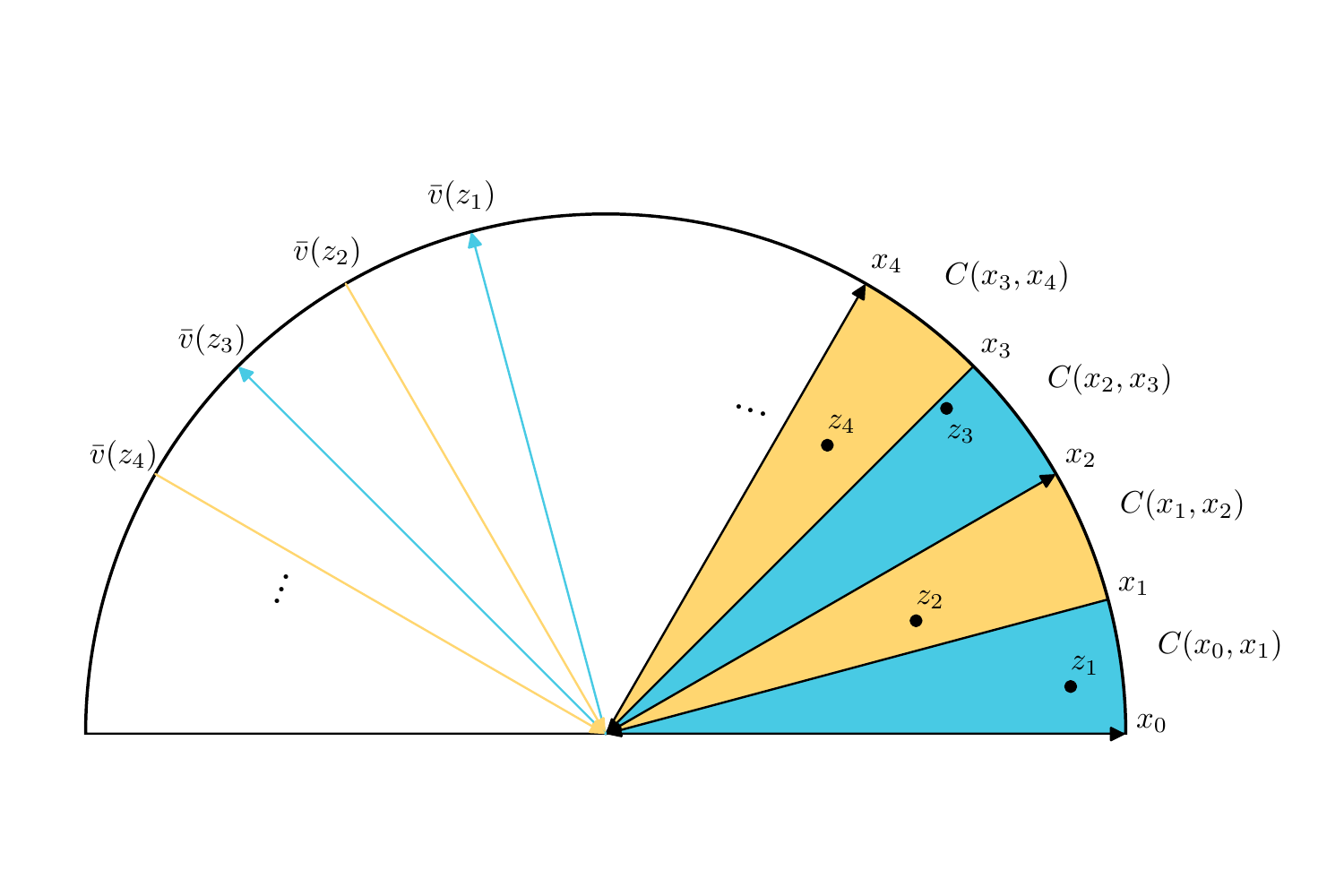}\label{fig:altcircle2}&
\end{tabular}
\caption{The left picture shows how $\bar{v}(z)$ is constructed: we subtract the vector $x_3$ which is labeled $-1$ from the vector $x_2$ which is labeled $1$. We obtain $r_z$ after rescaling to unit norm. Since $n/4=3$ is odd we have $\bar{v}(z)=r_z$. The right picture demonstrates the values of $\bar{v}(z)$ that are relevant for computing $y_i\int \langle \bar{v}(z), x_i \rangle \mathbf{1}[ \langle x_i, z \rangle >0 ] \d \mu_{\mathcal{N}}(z)$ for the single point $x_i=(0, 1)$. Here we have $ y_i\langle x_i, \bar{v}(z_j) \rangle=(-1)^{j-1}\cos\left(\frac{(2j-1) \pi}{2n}\right)$. The same argument can be repeated on the left side of the half circle.}
\label{fig:altcircle}
\end{center}
\end{figure*}

We found the result in a post on \texttt{math.stackexchange.com} but could not find it in published literature and so we reproduce the full proof from \cite{stex1} for completeness of presentation.

\begin{lem}[\cite{stex1}]\label{lemaltcyclhelp}
For any $a, b \in \mathbb{R}$ and $\tilde{n}\in \mathbb{N} $ it holds that
\[ \sum_{k=0}^{\tilde{n}-1}\cos(a+k b)=\frac{\cos(a+(\tilde{n}-1)b/2)\sin(\tilde{n}b/2)}{\sin(b/2)}.\]
\end{lem}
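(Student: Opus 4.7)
The identity is a classical finite trigonometric sum. The cleanest route is via complex exponentials: write $\cos(a+kb) = \operatorname{Re}\bigl(e^{i(a+kb)}\bigr)$, pull out the factor $e^{ia}$, and recognize the remaining sum as a geometric series in the ratio $e^{ib}$. Assuming $\sin(b/2)\neq 0$ (so that the right-hand side is well-defined and $e^{ib}\neq 1$), the closed form of the geometric series gives
\begin{equation*}
\sum_{k=0}^{\tilde n - 1} e^{ikb} \;=\; \frac{e^{i\tilde n b}-1}{e^{ib}-1}.
\end{equation*}
The key symmetrization is then to factor $e^{i\tilde n b/2}$ from the numerator and $e^{ib/2}$ from the denominator, turning the quotient into $e^{i(\tilde n-1)b/2}\,\sin(\tilde n b/2)/\sin(b/2)$ via the relation $\sin(x) = (e^{ix}-e^{-ix})/(2i)$.

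After multiplying by $e^{ia}$ and taking real parts, one obtains exactly
\begin{equation*}
\sum_{k=0}^{\tilde n-1}\cos(a+kb) \;=\; \operatorname{Re}\!\left( e^{i(a+(\tilde n-1)b/2)} \right) \cdot \frac{\sin(\tilde n b/2)}{\sin(b/2)} \;=\; \frac{\cos(a+(\tilde n-1)b/2)\,\sin(\tilde n b/2)}{\sin(b/2)},
\end{equation*}
which is the claimed identity. A fully real-variable alternative is to multiply the left-hand side by $2\sin(b/2)$, apply the product-to-sum formula $2\sin(b/2)\cos(a+kb) = \sin(a+(k+\tfrac12)b) - \sin(a+(k-\tfrac12)b)$ to obtain a telescoping sum equal to $\sin(a+(\tilde n - \tfrac12)b) - \sin(a - \tfrac12 b)$, and then apply the sum-to-product formula $\sin X - \sin Y = 2\cos(\tfrac{X+Y}{2})\sin(\tfrac{X-Y}{2})$ with $X = a+(\tilde n - \tfrac12)b$ and $Y = a - \tfrac12 b$. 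Either route is essentially routine; there is no real obstacle beyond keeping the half-angles straight. The only edge case is the degenerate one where $\sin(b/2) = 0$, i.e.\ $b$ is an integer multiple of $2\pi$, for which the right-hand side should be interpreted as a limit and each cosine on the left equals $\cos(a)$; this is handled separately or simply excluded as in the statement.
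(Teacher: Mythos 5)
Your proof is correct and follows essentially the same route as the paper: write the cosine as the real part of a complex exponential, sum the geometric series, factor out $e^{i\tilde n b/2}$ and $e^{ib/2}$ to expose the sine quotient, and take real parts. The telescoping alternative and the explicit handling of the degenerate case $\sin(b/2)=0$ are welcome additions but don't change the underlying argument.
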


\begin{proof}
We use $\i$ to denote the imaginary unit defined by the property $\i^2 = -1$. 
From Euler's identity we know that $\cos(a+kb)=\text{Re}(e^{\i (a+kb)})$ and $\sin(a+kb)=\text{Im}(e^{\i (a+kb)})$. Then
\begin{align*}
\sum_{k=0}^{\tilde{n}-1} \cos (a+kb) &= \sum_{k=0}^{\tilde{n}-1} \text{Re}\left(e^{\i (a+kb)}\right)\\
&=\text{Re}\left(\sum_{k=0}^{\tilde{n}-1} e^{\i (a+kb)}\right)\\
&=\text{Re}\left(e^{\i a} \sum_{k=0}^{\tilde{n}-1} (e^{\i b})^{k} \right)\\
&=\text{Re} \left( e^{\i a} \frac{1-e^{\i b\tilde{n}}}{1-e^{\i b}}\right) \\
&=\text{Re} \left( e^{\i a} \frac{e^{\i b\tilde{n}/2}(e^{-\i b\tilde{n}/2}-e^{\i b\tilde{n}/2})}{e^{\i b/2}(e^{-\i b/2}-e^{\i b/2})}\right) \\
&=\frac{\cos(a+(\tilde{n}-1)b/2)\sin(\tilde{n}b/2)}{\sin(b/2)}.
\end{align*}
\end{proof}

\begin{lem}\label{Prop5.3}
For all $i\in [n]$ it holds that
\[ y_i\int \langle \bar{v}(z), x_i \rangle \mathbf{1}[ \langle x_i, z \rangle >0 ] \d \mu_{\mathcal{N}}(z)=\Omega\left(\frac{1}{n}\right).\]
\end{lem}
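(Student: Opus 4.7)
My plan is to exploit the $2$-dimensional structure so that the Gaussian integral splits into $n$ equal sectors of mass $1/n$ each, on which $\bar v$ is constant, and then to reduce the resulting finite trigonometric sum to a closed form using Euler's formula in the spirit of Lemma \ref{lemaltcyclhelp}.

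First, I would set up the sector decomposition. Since $n \mid 4$, the data points $x_0,\dots,x_{n-1}$ subdivide $\mathbb{R}^2 \setminus S$ into $n$ open cones $C_j := \Cone(\{x_j, x_{j+1}\})$, $j \in \{0,\dots,n-1\}$ (indices mod $n$). Each cone has Gaussian mass exactly $1/n$ by rotational invariance of the standard Gaussian in $\R^2$. On $C_j$ we have $\bar v \equiv (-1)^{n/4+1} r_j$, where a direct computation with the sum-to-product identities gives $\|x_j - x_{j+1}\|_2 = 2\sin(\pi/n)$ and
\[
 r_j \;=\; (-1)^{j}\bigl(\sin((2j+1)\pi/n),\, -\cos((2j+1)\pi/n)\bigr).
\]
Next, for fixed $i$, I would identify which cones $C_j$ lie in the halfspace $\{z: \langle x_i, z\rangle > 0\}$. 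Because $4\mid n$, the two boundary rays of this halfspace point exactly at $x_{i+n/4}$ and $x_{i-n/4}$, so no sector is cut: exactly $n/2$ cones $C_{i-n/4},\dots,C_{i+n/4-1}$ contribute, each with mass $1/n$.

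With these observations the integral becomes
\[
 y_i \!\int\! \langle \bar v(z), x_i\rangle \mathbf{1}[\langle x_i, z\rangle>0]\, \d\mu_\N(z) \;=\; \frac{y_i(-1)^{n/4+1}}{n}\sum_{j=i-n/4}^{i+n/4-1}(-1)^{j} \sin\bigl(\pi(2j+1-2i)/n\bigr),
\]
using $\langle r_j, x_i\rangle = (-1)^j\sin(\pi(2j+1-2i)/n)$ (this is just the sine angle-subtraction applied to $r_j$ and $x_i$). Reindexing $k = j-i$ and using $y_i=(-1)^i$ eliminates $i$ entirely, reducing the problem to showing
\[
 T \;:=\; \sum_{k=-n/4}^{n/4-1} (-1)^k \sin\bigl(\pi(2k+1)/n\bigr) \;=\; \frac{(-1)^{n/4+1}}{\cos(\pi/n)}.
\]
This is where the work lies: I would pair $k$ with $-k-1$ to fold the sum over the positive half and then compute $\sum_{k=0}^{n/4-1}(-1)^k e^{\i\pi(2k+1)/n}$ as a geometric series with ratio $q=-e^{\i 2\pi/n}$ (exactly as in the proof of Lemma \ref{lemaltcyclhelp}). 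The closed form $\sum_{k=0}^{n/4-1} q^k = (1-q^{n/4})/(1-q)$ combined with $1+e^{\i 2\pi/n} = 2\cos(\pi/n)\,e^{\i\pi/n}$ yields, after taking imaginary parts, the stated value of $T$; the only subtlety is that $q^{n/4} = (-1)^{n/4}\i$ depends on the parity of $n/4$, but this sign cancels precisely against the prefactor $(-1)^{n/4+1}$, so the two cases collapse to the same answer.

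Combining everything I get
\[
 y_i \!\int\! \langle \bar v(z), x_i\rangle \mathbf{1}[\langle x_i, z\rangle>0]\, \d\mu_\N(z) \;=\; \frac{1}{n\cos(\pi/n)} \;=\; \Omega(1/n),
\]
uniformly in $i$, which is the claim. The main obstacle I anticipate is bookkeeping the sign conventions and the two parity cases of $n/4$; once the geometric-series computation is carried out correctly, everything collapses cleanly and no further estimation is needed (the bound is in fact tight up to the $\cos(\pi/n)\to 1$ factor).
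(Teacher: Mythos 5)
Your proof is correct and follows essentially the same strategy as the paper: decompose the halfspace $\{\langle x_i,\cdot\rangle>0\}$ into the $n/2$ cones $C_j$ of equal Gaussian mass $1/n$ on which $\bar v$ is constant, reduce $\langle r_j, x_i\rangle$ to a single alternating trigonometric term via angle-subtraction, reindex by $k=j-i$ to eliminate $i$, and evaluate the resulting sum by the geometric-series / Euler's-formula computation that also underlies Lemma \ref{lemaltcyclhelp}. I verified your $r_j$ formula, the halfspace boundary identification (the rays through $x_{i\pm n/4}$), the $k\leftrightarrow -k-1$ folding, and the closed form
\[
y_i\int \langle \bar v(z), x_i\rangle \mathbf{1}[\langle x_i,z\rangle>0]\,\d\mu_{\mathcal{N}}(z)=\frac{1}{n\cos(\pi/n)},
\]
which checks numerically at $n=4$ (value $1/(2\sqrt{2})$) and $n=8$ (value $1/(8\cos(\pi/8))$). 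The main difference from the paper is one of tidiness rather than substance: you push the geometric series all the way to an exact, $i$-independent closed form, whereas the paper specializes to $x_i=(0,1)$, splits the sum by parity of $k$, lower-bounds the partial sums (the step marked $*$, which is an inequality when $n/4$ is even), and then Taylor-expands $\sin$ and $\cos$ to extract the $\Omega(1/n)$ asymptotics. Your version is slightly tighter and avoids the ceiling/floor case distinction, but the decomposition and the key computational tool are the same.
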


\begin{proof}
We set $n'=n/4$.
Note that by symmetry the value of the given integral is the same for all $i \in [n]$.
Thus it suffices to compute $ y_i\int \langle \bar{v}(z), x_i \rangle \mathbf{1}[ \langle x_i, z \rangle >0 ] \d \mu_{\mathcal{N}}(z)=\gamma$ for $x_i=(0, 1)$, and note that $i=n/4$ for this special choice.
See Figure \ref{fig:altcircle} for an illustration of the following argument.
For a fixed $z \in \mathbb{R}^2$ consider the cone $\Cone (\{x_{i_z}, x_{i_z+1}\}=\Cone (\{x_{j}, x_{j+1}\})\subset \{ x\in \mathbb{R}^2 ~|~ \langle x, (0, 1) \rangle >0\}$.
Then $j \in [0, \frac{n}{2}-1]$ and $\langle \bar{v}(z), x_i \rangle= (-1)^{n/4+1}\langle r_z, x_i \rangle=(-1)^{n/4+1}(-1)^{j+1}\cos(\frac{(2j+1)2 \pi}{2n} )$ since $y_i= (-1)^{n/4}$.
Further, for $j \leq \frac{n}{4}-1$ it holds that
\begin{align*}
\langle \bar{v}(z), x_i \rangle=(-1)^{n/4}(-1)^{j}\cos\left(\frac{(2j+1)2 \pi}{2n} \right)&=y_i(-1)^j\cos\left(\frac{(2j+1)2 \pi}{2n} \right),
\end{align*}
and by using the symmetry of $\cos$ we get
\begin{align*}
(-1)^{n/4}(-1)^{(n/2)-j-1}\cos\left(\frac{(2(n/2)-2j-1)2 \pi}{2n} \right)&=y_i(-1)^{j+1}\left(-\cos\left(\frac{(2j+1)2 \pi}{2n} \right)\right)\\
&=y_i(-1)^j\cos\left(\frac{(2j+1)2 \pi}{2n} \right).
\end{align*}
Now assume w.l.o.g. that $n \geq 8$.
Further we set $\tilde{n}= (n'-1)/2$ and $ b=\frac{4\pi}{n}=\frac{4\pi}{4n'}$.
By using Lemma \ref{lemaltcyclhelp} and the Taylor series expansion of $\cos(\cdot)$ and $\sin(\cdot)$ we get
\begin{align*}
\gamma&=\frac{1}{n}\left(2\sum_{k=1}^{n'}\cos\left(\frac{(2k-1) \pi}{4n'}\right)(-1)^{k-1}\right) \\
&=\frac{2}{n}\left(\sum_{k=0}^{\lceil(n'-1)/2\rceil }\cos\left(\frac{(4k+1) \pi}{4n'}\right) - \sum_{k=0}^{\lfloor(n'-1)/2\rfloor}\cos\left(\frac{(4k+3) \pi}{4n'}\right) 
\right)\\
&\stackrel{*}{\geq}\frac{2}{n}\left(\frac{\cos(\pi/n+(\tilde{n}-1)b/2)\sin(\tilde{n}b/2)}{\sin(b/2)}-\frac{\cos(3\pi/n+(\tilde{n}-1)b/2)\sin(\tilde{n}b/2)}{\sin(b/2)} \right)\\
&=\frac{2}{n}\left(\frac{\overbrace{(\cos(\pi/n+(\tilde{n}-1)b/2)-\cos(3\pi/n+(\tilde{n}-1)b/2)}^{=\Theta(b)})\overbrace{\sin(\tilde{n}b/2)}^{=1-\Theta(b)}}{\sin(b/2)}
\right)\\
&=\frac{2}{n}\left(\frac{\Theta(b)}{\Theta(b)}\right)
= \frac{2}{n}\Theta(1)
=\Omega(n^{-1}).
\end{align*}
$*$ when $n'$ is odd then we have an exact equality.
\end{proof}

\subsection{Lower Bounds}

\begin{lem}\label{Prop5.4}
If $m= o(n \log(n))$ then with constant probability over the random initialization of $W_0$ it holds for any weights $V \in \mathbb{R}^{m \times d}$ that $y_i \langle V, \nabla f_i(W_0) \rangle \leq 0$ for at least one $i \in [n]$.
\end{lem}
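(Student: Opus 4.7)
I would use the alternating-points-on-a-circle dataset of Section~\ref{ex:cycle} in dimension $d=2$, for which Lemma~\ref{Prop5.3} gives $\gamma = \Theta(1/n)$, placing us in the regime $m = o(n\log n) = o(\gamma^{-1}\log n)$. Partition the $n$ points into $n/4$ disjoint consecutive quadruples $Q_k := \{x_{4k-3}, x_{4k-2}, x_{4k-1}, x_{4k}\}$, each carrying alternating labels. The proof splits into a deterministic \emph{structural} part (one ``bad'' quadruple suffices) and a probabilistic \emph{coupon-collector} part (some bad quadruple exists with constant probability).

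\emph{Structural step.} I would first show: call $Q_k$ \emph{bad} for $W_0$ if every neuron has constant activation on it, i.e.\ $\mathbf{1}_s := \mathbf{1}[\langle w_s, x_j\rangle > 0]$ does not depend on $j \in Q_k$ for any $s\in[m]$. If some $Q_k$ is bad, then for every $V \in \mathbb{R}^{m\times d}$ the gradient formula in Section~\ref{sec:logwidth:start} yields
\begin{align*}
\langle V, \nabla f_j(W_0)\rangle \;=\; \frac{1}{\sqrt m}\sum_{s=1}^m a_s \mathbf{1}_s \langle v_s, x_j\rangle \;=\; \frac{1}{\sqrt m}\langle u, x_j\rangle, \qquad u := \sum_s a_s \mathbf{1}_s v_s \in \mathbb{R}^2,
\end{align*}
with $u$ independent of $j \in Q_k$. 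In the planar case this is $\|u\|_2 \cos(\theta_j - \phi_u)$, a cosine whose sign can change at most once along any arc shorter than $\pi$, and $Q_k$ spans only $6\pi/n<\pi$ (for $n\geq 7$). Thus the signs of $\langle u, x_j\rangle$ across $Q_k$ can exhibit at most one flip, whereas maintaining $y_j\langle u, x_j\rangle > 0$ simultaneously for the four alternating labels $y_j$ would require three sign flips. Hence some $j\in Q_k$ necessarily violates this, giving $y_j\langle V, \nabla f_j(W_0)\rangle \leq 0$, as required.

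\emph{Probabilistic step.} Next I would argue that for $m = o(n\log n)$ some $Q_k$ is bad with constant probability. The angle $\phi_w$ of $w\sim\mathcal{N}(0, I_2)$ is uniform on $[0,2\pi)$, and $w$ splits $Q_k$ exactly when one of the two antipodal boundary angles $\phi_w\pm \pi/2$ lies in $Q_k$'s arc of length $6\pi/n$, yielding split probability $12\pi/(n\cdot 2\pi) = 6/n$ (in the spirit of Lemma~\ref{twoplem}). The coupled initialization of Definition~\ref{def:initialization} produces $m/2$ i.i.d.\ directions whose paired neurons share the same activation pattern, so letting $B_k := \{\text{no neuron splits } Q_k\}$ we obtain $\Pr[B_k] = (1 - 6/n)^{m/2}$, and $\E[N] := \E[\sum_k \mathbf{1}_{B_k}] = (n/4)(1-6/n)^{m/2} \to \infty$ whenever $m = o(n\log n)$. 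A Chebyshev bound on $N$, with a short covariance analysis, then gives $\Pr[N\geq 1] \geq 1 - o(1)$; combined with the structural step this completes the proof. Morally this is the Erd\H{o}s--R\'enyi coupon-collector lower bound \citep{er61}: with $n/4$ quadruples as coupons each hit by a given neuron with probability $O(1/n)$, it takes $m = \Omega(n\log n)$ neurons to cover them all.

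\emph{Main obstacle.} The only subtle point is controlling covariances in the second-moment step. Because a planar hyperplane has two antipodal boundary points on the unit circle, a single neuron can simultaneously split two different quadruples---precisely those placed approximately antipodally---so $B_k$ and $B_l$ are not independent. For generic non-antipodal pairs the events are in fact negatively correlated, while each quadruple has only $O(1)$ antipodal partners contributing $O(n)$ covariance terms of size $O((1-6/n)^{m/2})$. Verifying that this total is $o(\E[N]^2)$ is the sole nontrivial computation; the rest is planar geometry plus textbook second-moment coupon collecting.
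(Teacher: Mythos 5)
Your plan is correct and follows the same overall strategy as the paper: use the alternating-circle dataset, partition into $n/4$ consecutive quadruples, observe that a quadruple on which every neuron has a constant activation pattern cannot be NTK-separated by any $V$ (since then $\langle V,\nabla f_j(W_0)\rangle = \tfrac{1}{\sqrt m}\langle u, x_j\rangle$ for a single fixed $u$, which cannot realize four alternating signs along a short arc), and then run a coupon-collector lower bound over the $n/4$ ``orthogonal'' regions $Z_k$. Two points of divergence in detail. For the structural step, the paper establishes unseparability via a positive linear dependence $\rho_1 x_i + \rho_3 x_{i+2} = \rho_2 x_{i+1} + \rho_4 x_{i+3}$ with $\rho_j>0$ (the two diagonals of the consecutive quadruple cross), which forces $\sum_j \rho_j\, y_{i+j}\langle u, x_{i+j}\rangle = 0$ and hence a nonpositive term; you instead count sign changes of $\cos(\theta_j-\phi_u)$ on an arc of length $6\pi/n<\pi$. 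Both are valid planar arguments. For the probabilistic step, the paper cites the Erd\H{o}s--R\'enyi asymptotic $\Pr[T<n'\log n'+cn']\to e^{-e^{-c}}$, asserting that the regions $Z_k$ are disjoint; this assertion is actually not literally true (the two arcs of $Z_k$ coincide with those of the antipodal quadruple $Z_{k+n/2}$), though it does not change the asymptotic conclusion. Your second-moment/Chebyshev argument handles exactly this dependence by bounding the $O(n)$ antipodal covariances by $O(\E[N])$ and noting negative correlation elsewhere, so on that point your write-up is, if anything, a bit more careful than the paper's. One small thing to make explicit when you flesh this out: the lemma is stated for arbitrary $n$ divisible by $4$, so you should handle the near-antipodal (as opposed to exactly antipodal) quadruples when $n\equiv 4\pmod 8$, where the two regions overlap without coinciding; the same $O(1)$-partners-per-quadruple bound still applies.
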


\begin{proof}
We set $x_{-i}:=x_{n-i}$ for $i \geq 0$.
Consider the set $\{ x_i, x_{i+1}, x_{i+2}, x_{i+3}  \}$ for $i$ with $i \mod 4=0$.
For any $s$ let $A_{i, s}$ denote the event that
\[ \mathbf{1}[ \langle x_i, w_s \rangle >0 ]=\mathbf{1}[ \langle x_{i+1}, w_s \rangle >0 ]
=\mathbf{1}[ \langle x_{i+2}, w_s \rangle >0 ]=\mathbf{1}[ \langle x_{i+3}, w_s \rangle >0 ]. \]
If there exists $i \in \{0, 4, \dots , n-4 \}$ such that for all $s \in [m]$ the event $A_{i, s}$ is true then at least one of the points $x_i, x_{i+1}, x_{i+2}, x_{i+3}$ is misclassified.
To see this, note that there exists $\rho \in \mathbb{R}_{> 0}^4$ such that $\rho_1x_i+\rho_3 x_{i+2}-(\rho_2 x_{i+1} + \rho_4 x_{i+3}) =0$ since the line connecting $x_i $ and $x_{i+3}$ crosses the line segment between $x_{i+2}$ and $x_{i+4}$.
Now let $S=\{ s\in [m] ~|~ \langle x_i, w_s \rangle >0 \}$.
If the event $A_{i, s}$ is true for all $s \in [m]$ then it holds that
\begin{align*}
0&=\sum_{s\in [m], \langle x_i, w_s \rangle >0} \langle \rho_1x_i+\rho_3 x_{i+2}-(\rho_2 x_{i+1} + \rho_4 x_{i+3}), w_s \rangle \\
&= \sum_{j=0}^3 \sum_{s\in [m], \langle x_i, w_s \rangle >0} \rho_j y_{i+j}\langle  x_{i+j}, w_s \rangle \\
&= \sum_{j=0}^3 \rho_j \sum_{s\in [m], \langle x_{i+j}, w_s \rangle >0} y_{i+j}\langle  x_{i+j}, w_s \rangle 
\end{align*}
and since $\rho_j > 0$ it must hold $ \sum_{s\in [m], \langle x_{i+j}, w_s \rangle >0}  y_{i+j}\langle  x_{i+j}, w_s \rangle \leq 0$ for at least one $j\in \{0,\ldots, 3\}$.

Note that $A_{i, s}$ is false with probability $2 \cdot \frac{3}{n}$, namely if $\frac{w_s}{\| w_s\|_2}$ is between the point $x_{i+n/4}$ and $x_{i+3+n/4}$ or between the points $x_{i-n/4}$ and $x_{i+3-n/4}$.
We denote the union of these areas by $Z_i$.
Further these areas are disjoint for different $i, i' \in \{0,  4,\dots n/4\}$.
Now, as we have discussed above, we need at least one $A_{i,s}$ to be false for each $i$. This occurs only if for each $i$ there exists at least one $s$ such that $ \frac{w_s}{\| w_s\|_2} \in Z_i$.
Let $T$ be the minimum number of trials needed to hit every one of the $n':=n/4$ regions $Z_i$. This is the coupon collector's problem for which it is known \cite{er61} that for arbitrary $c\in \mathbb{R}$ it holds that $\Pr[T < n' \log n' + cn']=\exp(-\exp(-c))$ as $n'\rightarrow \infty$. Thus for sufficiently large $n'$ and $c=-1$ we have 
\begin{align*}
\Pr [ T > n' \log n' - n' ] > 1 - e^{-e} > 0.9.
\end{align*}
\end{proof}

Indeed we can show an even stronger result:
\begin{lem}\label{Prop5.4'}
Let $\epsilon \geq 0 $. Any two-layer ReLU neural network with width $m < (1-\varepsilon)n/6-2 $ misclassifies more than $\varepsilon n / 3$ points of the alternating points on the circle example.
\end{lem}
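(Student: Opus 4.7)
The plan is to exploit positive homogeneity of the ReLU activation to reduce the problem to the boundary of the unit $\ell_1$ ball, where $f$ becomes piecewise affine in the arc-length parameter, and then to lower bound the number of misclassifications by summing a simple bound over the resulting linear pieces.

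First, I would note that $f(W, cx, a) = c \cdot f(W, x, a)$ for every $c > 0$, so $\sgn f(W, x_i, a) = \sgn f(W, x_i', a)$ with $x_i' := x_i/\|x_i\|_1 \in \partial B_1$, and this radial contraction preserves both the cyclic order of the data and their strictly alternating labels $y_k = (-1)^k$. Parameterizing $\partial B_1$ by arc length, the function $t \mapsto f(W, x(t), a)$ is continuous and piecewise affine, with breakpoints arising only from (i) the points where $x(t)$ crosses one of the $m$ lines $\{w_r^\top x = 0\}$, which gives at most $2m$ breakpoints since each line through the origin meets $\partial B_1$ in exactly two points, and (ii) the four vertices of $B_1$ where the boundary itself has a corner. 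This partitions $\partial B_1$ into at most $B := 2m + 4$ arcs on each of which $f$ is affine in the arc-length parameter.

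The key ingredient is the claim that any affine function restricted to an arc containing $k \geq 1$ of the points $x_i'$ with alternating labels misclassifies at least $\lfloor (k-1)/2 \rfloor$ of them. Since an affine function changes sign at most once along the arc, its sign pattern on the $k$ consecutive points is of the form $+\cdots+-\cdots-$ or its negation. A short case analysis on the location of this sign change and the parity of $k$ shows that the number of agreements with the alternating label pattern is at most $\lceil k/2 \rceil + 1$ when $k$ is even and at most $\lceil k/2 \rceil$ when $k$ is odd, both of which yield the stated lower bound after subtracting from $k$.

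Summing this bound over the non-empty arcs and using $\lfloor (k_i - 1)/2 \rfloor \geq (k_i - 2)/2$ for $k_i \geq 1$, the total number $M$ of misclassified points satisfies
\[ M \;\geq\; \sum_{k_i \geq 1} \frac{k_i - 2}{2} \;=\; \frac{n}{2} - B' \;\geq\; \frac{n}{2} - 2m - 4, \]
where $B' \leq B$ is the number of non-empty arcs. Substituting $m < (1-\varepsilon)n/6 - 2$ gives $2m + 4 < (1-\varepsilon)n/3$, and hence $M > n/2 - (1-\varepsilon)n/3 = n(1+2\varepsilon)/6 > \varepsilon n/3$, as required. The only moderately delicate step will be the case analysis in the key claim, but it reduces to an elementary enumeration; the essential structural insights are the homogeneity-based reduction from the unit circle to $\partial B_1$ and the breakpoint count along its boundary.
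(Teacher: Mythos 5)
Your proof is correct and follows the same route as the paper: use positive homogeneity to pass from the unit circle to $\partial B_1$ (preserving signs and the alternating label order), observe that $f$ restricted to $\partial B_1$ is piecewise affine with at most $2m+4$ breakpoints (two per hyperplane $\{w_r^\top x = 0\}$ plus the four $\ell_1$-vertices), and exploit the fact that an affine function changes sign at most once against an alternating label pattern. The only difference is the final accounting. The paper partitions the data into $\lfloor n/3 \rfloor$ disjoint consecutive triples and argues that each triple containing no breakpoint contributes at least one misclassification, so at least $n/3 - (2m+4)$ misclassifications occur. You instead partition $\partial B_1$ into the (at most $2m+4$) affine arcs, show each arc holding $k$ consecutive points contributes at least $\lfloor (k-1)/2 \rfloor \geq (k-2)/2$ misclassifications, and sum to get $M \geq n/2 - (2m+4)$. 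Your arc-level count is sharper than the triple-level count: under $2m+4 < (1-\varepsilon)n/3$ you get $M > (1+2\varepsilon)n/6$, which strictly dominates the paper's $M > \varepsilon n / 3$ (they coincide in order only as $\varepsilon \to 1$). So this is a correct proof of the lemma via the same structural insight, with a slightly stronger conclusion; the key-claim case analysis you describe is exactly the observation that $k-2$ of the $k-1$ consecutive pairs have identical classifier signs, each such pair forces a misclassification, and each misclassified point covers at most two pairs.
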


\begin{proof}
Set $\mathcal{D}=\lbrace x \in \mathbb{R}^2 ~|~ \| x \|_1=1 \rbrace$.
Given parameters $W$ and $a$ consider the function $f: \mathbb{R}^2 \rightarrow \mathbb{R}$ given by $f(x)=\frac{1}{\sqrt{m}}\sum_{s=1}^m a_s \phi\left(\langle w_s, x \rangle\right)$.
Note that the points $x_i'=\frac{x_i}{\| x_i \|_1} \in \mathcal{D}$ do not change their order along the $\ell_1$ sphere and thus by definition of $(x_i,y_i)$ have alternating labels. Also note that $f(x_i)>0$ if and only if $f(x_i')>0$.
Further note that the restriction of $f$ to $\mathcal D$ denoted $f_{|\mathcal{D}}$ is a piecewise linear function.
More precisely the gradient $\frac{\partial f}{\partial x}=\frac{1}{\sqrt{m}}\sum_{s=1}^m a_s \mathbf{1}[\langle w_s, x \rangle > 0]w_s$ can only change at the points $(1,0), (0, 1), (-1,0), (0, -1)$ and at points orthogonal to some $w_s$ for $s \leq m$.
Since for each $w_s$ there are exactly two points on $\mathcal{D}$ that are orthogonal to $w_s$ this means the gradient changes at most $2m+4 $ times.
Now for $i$ divisible by 3 consider the points $x_i, x_{i+1}, x_{i+2}$.
If the gradient does not change in the interval induced by $x_i$ and $x_{i+2}$ then at least one of the three points is misclassified.
Hence if $2m+4< (1-\varepsilon)\frac{n}{3}$ then \emph{strictly} more than an $(\varepsilon/3)$-fraction of the $n$ points is misclassified.
\end{proof}

\section{Upper bound for log width}

We use the following initialization, see Definition \ref{def:initialization}: we set $m=2m'$ for some natural number $m'$.
Put $ w_{s, 0}=w_{s+m', 0}=\beta w_s'$ where $w_s' \sim \mathcal{N}(0, I_d), \beta \in \mathbb{R}$ is an appropriate scaling factor to be defined later and $a_i=1$ for $i<m'$ and $a_i=-1$ for $i\geq m'$.
We note that to simplify notations the $a_i$ are permuted compared to Definition \ref{def:initialization}, which does not make a difference.
Further note that $\frac{\partial f}{\partial w_s}=\frac{\partial f}{\partial w_s'}$.

The goal of this section is to show our main theorem:

\begin{theorem}\label{mainthm}
Given an error parameter $\varepsilon\in (0,1/10)$ and any failure probability $\delta \in (0, 1/10 )$, let $\rho= 2\cdot \gamma^{-1} \cdot \ln(4/\varepsilon) .$
Then if $$m = 2m' \geq 2 \gamma^{-2} \cdot 8\ln(2n/\delta),$$ $\beta=\frac{4\cdot 2\rho^2 n\sqrt{m}}{5\varepsilon\delta}$ and $\eta=1$ we have with probability at most $1 -3 \delta$ over the random initialization that $\frac{1}{T} \sum_{t =0}^{T-1} R(W_t) \leq \varepsilon$, where $T=\lceil {2 \rho^2}/{\varepsilon} \rceil$.
\end{theorem}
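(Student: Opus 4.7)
The plan is to run the NTK-style regret argument of \cite{jt20}, but to exploit the coupled initialization (Def.~\ref{def:initialization}) together with the large scaling $\beta$ to kill two of the three terms in a pointwise inner-product decomposition. This removes the need to balance $m$ against $\beta$ and drives the width down to $m=\widetilde O(\gamma^{-2})$.

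First I would start from the standard regret inequality that already underlies the proof of Theorem 2.2 in \cite{jt20}: for any reference $\bar W\in\mathbb{R}^{m\times d}$,
\[
\frac{1}{T}\sum_{t=0}^{T-1} R(W_t)\;\le\;\frac{1}{T}\,\bigl\|W_0-\bar W\bigr\|_F^{2}\;+\;\frac{2}{T}\sum_{t=0}^{T-1}\frac{R^{(t)}(\bar W)}{n}.
\]
I would take $\bar W=W_0+\rho\bar U$ with the ``empirical-mean'' separator whose rows are $\bar u_s=a_s\bar v(w_s')/\sqrt m$, where $\bar v$ is the margin-$\gamma$ map from Assumption~\ref{ass:margin_gamma}. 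Since $\|\bar v(\cdot)\|_2\le1$ we get $\|\bar U\|_F^2\le1$, so the first term is at most $\rho^2/T\le\varepsilon/2$ by the choice $T=\lceil 2\rho^2/\varepsilon\rceil$. To bound the second term by $\varepsilon/2$ it suffices to show that, with high probability, for every $t<T$ and every $i\in[n]$,
\[
y_i\bigl\langle\nabla f_i(W_t),\bar W\bigr\rangle\;\ge\;\rho\gamma/2\;=\;\ln(4/\varepsilon),
\]
because then $\ell(y_i f_i^{(t)}(\bar W))\le e^{-\ln(4/\varepsilon)}=\varepsilon/4$ pointwise and hence $R^{(t)}(\bar W)/n\le\varepsilon/4$.

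The heart of the argument is the three-term decomposition
\[
y_i\bigl\langle\nabla f_i(W_t),\bar W\bigr\rangle= \underbrace{y_i\bigl\langle\nabla f_i(W_0),W_0\bigr\rangle}_{(\mathrm A)}+\underbrace{y_i\bigl\langle\nabla f_i(W_t)-\nabla f_i(W_0),W_0\bigr\rangle}_{(\mathrm B)}+\underbrace{\rho\,y_i\bigl\langle\nabla f_i(W_t),\bar U\bigr\rangle}_{(\mathrm C)}.
\]
Term (A) vanishes deterministically: by \eqref{eq:relu_derivative}, $\nabla_{w_s}f_i(W_0)=a_s x_i\mathbf 1[\langle w_{s,0},x_i\rangle>0]/\sqrt m$, and within each coupled pair the two copies carry the same ReLU indicator but opposite $a$'s, so their contributions cancel. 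For term (B) I would show that no ReLU sign ever flips, so $\nabla f_i(W_t)=\nabla f_i(W_0)$ and (B)$=0$. Two ingredients are needed: (i) since the ReLU derivative in \eqref{eq:relu_derivative} is scale-invariant in $w_s$ and $|\ell'|\le 1$, the per-step gradient $\|\nabla_{w_s}R(W_t)\|_2$ is bounded by $1/\sqrt m$ independently of $\beta$, giving the uniform drift bound $\|w_{s,t}-w_{s,0}\|_2\le \eta T/\sqrt m=2\rho^2/(\varepsilon\sqrt m)$; and (ii) Gaussian anti-concentration (Lemma~\ref{lem:anti_gaussian}) union-bounded over $s\in[m]$ and $i\in[n]$ yields $|\langle w_s',x_i\rangle|\ge 5\delta/(4mn)$ with probability at least $1-\delta$, and therefore $|\langle w_{s,0},x_i\rangle|\ge 5\beta\delta/(4mn)=2\rho^2/(\varepsilon\sqrt m)$ for the stated choice of $\beta$. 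Thus the drift is strictly dominated by the activation margin, signs are frozen for the entire run, and (B) is zero.

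With the signs frozen, (C) reduces to $\rho y_i\langle\nabla f_i(W_0),\bar U\rangle$. A direct computation using $a_s^2=1$ and the fact that coupled pairs share indicators gives
\[
\rho\,y_i\bigl\langle\nabla f_i(W_0),\bar U\bigr\rangle=\frac{\rho}{m'}\sum_{s=1}^{m'}y_i\bigl\langle x_i,\bar v(w_s')\bigr\rangle\,\mathbf 1\bigl[\langle w_s',x_i\rangle>0\bigr],
\]
an empirical mean over $m'=m/2$ i.i.d.\ Gaussians whose expectation exceeds $\rho\gamma$ by Assumption~\ref{ass:margin_gamma}. Each summand lies in $[-1,1]$, so Hoeffding's inequality (Lemma~\ref{lem:hoeffding}) at deviation $\gamma/2$, combined with a union bound over $i\in[n]$, yields (C)$\ge\rho\gamma/2$ uniformly in $i$ with probability at least $1-\delta$, provided $m'\gamma^2/8\ge\ln(2n/\delta)$, which is exactly the hypothesis $m\ge 2\gamma^{-2}\cdot 8\ln(2n/\delta)$.

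The main obstacle I anticipate is calibrating the sign-freezing step (B): the argument requires the drift bound to be uniform in $\beta$ (guaranteed only by scale-invariance of the ReLU gradient and $|\ell'|\le 1$), while the anti-concentration lower bound must scale linearly with $\beta$; the particular value $\beta=8\rho^2 n\sqrt m/(5\varepsilon\delta)$ in the theorem is the threshold at which these two quantities exactly meet with failure probability $\delta$. Plugging (A)$=$(B)$=0$ and (C)$\ge\ln(4/\varepsilon)$ into the regret inequality and collecting the failure events (anti-concentration and Hoeffding, together with any residual event from the regret lemma) via a union bound of total mass at most $3\delta$ yields $\frac{1}{T}\sum_t R(W_t)\le\varepsilon$ as required.
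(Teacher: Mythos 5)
Your proposal matches the paper's own proof essentially step for step: the same regret inequality (Lemma~\ref{lem2.6}), the same reference point $\bar W=W_0+\rho\,\bar U$ with the empirical-mean separator $\bar u_s=a_s\bar v(w'_s)/\sqrt m$ analyzed via Hoeffding (Lemma~\ref{lem2.3}), the same three-term decomposition with (A) killed by the coupled initialization (Lemma~\ref{lem2.5}), (B) killed by a drift bound $\|w_{s,t}-w_{s,0}\|_2\le 2\rho^2/(\varepsilon\sqrt m)$ paired with the anti-concentration event of Lemma~\ref{lem2.4} scaled by $\beta$, and (C) bounded below by $\rho\gamma/2=\ln(4/\varepsilon)$. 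The calibration of $\beta$ so that the (scale-invariant) per-step drift is strictly below the activation margin is exactly the paper's argument, so this is the same proof.
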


Before proving Theorem \ref{mainthm} we need some helpful lemmas.
Our first lemma shows that with high probability there is a good separator at initialization, similar to \cite{jt20}.

\begin{lem}\label{lem2.3}
If $m' \geq \frac{8\ln(2n/\delta)}{\gamma^2}$ then there exists $U \in \mathbb{R}^{m \times d}$ with $\|u_s\|_2 \leq \frac{1}{\sqrt{m}}$ for all $s\leq m$, and $\|  U \|_F\leq 1$, such that with probability at least $1- \delta$ it holds simultaneously for all $i\leq n$ that
\begin{align*}
y_if^{(0)}_i(U)\geq \frac{\gamma}{2}
\end{align*}
\end{lem}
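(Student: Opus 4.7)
My plan is to write down an explicit separator $U$ by pulling back the NTK separator $\bar v$ from Assumption \ref{assu} to the finite sample, and then verify the margin property via Hoeffding's bound together with a union bound over the $n$ data points. By Lemma \ref{psetlem} I may assume without loss of generality that $\bar v$ is constant on each cone $C(U)$ and hence in particular scale invariant, so $\bar v(\beta z)=\bar v(z)$ for every $\beta>0$.

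Concretely, set
\[ u_s \;=\; \frac{1}{\sqrt{m}}\, a_s\, \bar v(w_{s,0}) \qquad \text{for every } s\in[m]. \]
Since $\|\bar v(\cdot)\|_2\le 1$ this immediately gives $\|u_s\|_2\le 1/\sqrt{m}$ and $\|U\|_F^2=\sum_{s=1}^m \|u_s\|_2^2\le 1$. Plugging this $U$ into the formula $\nabla_{w_s} f_i(W) = \frac{1}{\sqrt m} a_s \mathbf 1[\langle w_s,x_i\rangle>0]\, x_i$, and using $a_s^2=1$, I get
\[
y_i f_i^{(0)}(U) \;=\; \frac{1}{m}\sum_{s=1}^{m} y_i\, \mathbf 1[\langle w_{s,0},x_i\rangle>0]\,\langle \bar v(w_{s,0}),x_i\rangle.
\]
The coupled initialization pairs each $s\in[m']$ with $s+m'$ sharing the same weight vector, so both summands in such a pair are identical (the sign $a_s$ has dropped out). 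After collapsing paired terms and using scale invariance of $\bar v$ to replace $w_{s,0}=\beta w_s'$ by $w_s'$, I obtain the empirical average
\[
y_i f_i^{(0)}(U) \;=\; \frac{1}{m'}\sum_{s=1}^{m'} Z_{i,s}, \qquad Z_{i,s} \;:=\; y_i\, \mathbf 1[\langle w_s',x_i\rangle>0]\,\langle \bar v(w_s'),x_i\rangle.
\]

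The random variables $Z_{i,1},\ldots,Z_{i,m'}$ are i.i.d., lie in $[-1,1]$ (since $\|\bar v(w_s')\|_2\le 1$ and $\|x_i\|_2=1$), and by Definition \ref{def1} together with Assumption \ref{assu} have common expectation at least $\gamma$. I now apply Hoeffding's inequality (Lemma \ref{lem:hoeffding}) with deviation parameter $\gamma/2$: for each fixed $i\in[n]$,
\[
\Pr\!\left[\, y_i f_i^{(0)}(U) < \frac{\gamma}{2}\, \right] \;\le\; \Pr\!\left[\, \Big|\tfrac{1}{m'}\textstyle\sum_s Z_{i,s} - \mathbb E[Z_{i,1}]\Big|\ge \frac{\gamma}{2}\, \right] \;\le\; 2\exp\!\Bigl(-\tfrac{m'\gamma^{2}}{8}\Bigr).
\]
Finally, a union bound over $i\in[n]$ gives failure probability at most $2n\exp(-m'\gamma^2/8)\le\delta$, which holds precisely under the hypothesis $m'\ge 8\gamma^{-2}\ln(2n/\delta)$.

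I don't expect a real obstacle here: the whole argument hinges on two clean facts. First, that the coupled initialization preserves the expectation that makes the Hoeffding application work (indeed, it just identifies pairs of summands, with the $a_s^2=1$ killing the second-layer sign structure); second, that $\bar v$ may be taken scale invariant so that the pre-activation scaling $\beta$ is harmless. The only mild point to double-check is that $\bar v$ from Lemma \ref{psetlem} can indeed be chosen measurable with respect to the Gaussian law, so the $Z_{i,s}$ are well-defined random variables and independent across $s$; this is immediate from the construction in the proof of Lemma \ref{psetlem}, which defines $\bar v$ as a function of the cone $C(U_z)$ containing $z$.
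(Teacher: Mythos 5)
Your proposal matches the paper's own proof: same choice $u_s = \tfrac{a_s}{\sqrt m}\bar v(w_{s,0})$, same collapse of paired summands using $a_s^2=1$ and $w_{s,0}=w_{s+m',0}$, same appeal to scale invariance of $\bar v$ via Lemma~\ref{psetlem}, and the same Hoeffding-plus-union-bound calculation yielding $2\exp(-m'\gamma^2/8)\le\delta/n$. The only cosmetic difference is that the paper phrases the deviation as $\mu_i/2$ and then uses $\mu_i\ge\gamma$, whereas you use deviation $\gamma/2$ directly; the resulting bound is identical.
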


\begin{proof}
We define $U$ by $ u_s=\frac{a_s}{\sqrt{m}}\bar{v}(w_{s,0})$.
Observe that
\begin{align*}
\mu_i=\E_{w \sim \mathcal{N}(0, I_d)}\left[y_i\langle \bar{v}(w), x_i \rangle]\mathbf{1}\left[ \langle x_i, w \rangle >0  \right]\right] \geq \gamma
\end{align*}
by assumption.
Further since $w_{s, 0}=w_{s+m', 0}= \beta w'_{s, 0} $ and $a_s^2=1$, we have $a_s u_s=a_{s+m'}u_{s+m'}$ for $s\leq m'$.
Also by Lemma \ref{psetlem} we can assume that $\bar{v}(w_{s, 0})=\bar{v}(w'_{s, 0})$.
Thus
\begin{align*}
y_i f^{(0)}_i(U)=\frac{1}{m'}\sum_{s=1}^{m'}y_i\langle \bar{v}(w_{s, 0}), x_i \rangle \mathbf{1}\left[ \langle x_i, w_{s, 0} \rangle >0 \right]
\end{align*}
is the empirical mean of i.i.d. random variables supported on $[-1,+1]$ with mean $\mu_i$. Therefore by Hoeffding’s inequality (Lemma~\ref{lem:hoeffding}), using $m' \geq \frac{8\ln(2n/\delta)}{\gamma^2}$ it holds that
\begin{align*}
\Pr [y_i f^{(0)}_i(U)\leq \frac{\gamma}{2}] 
\leq & ~ \Pr [|y_i f^{(0)}_i(U)-\mu_i|\geq \frac{\mu_i}{2}] \\
\leq & ~ 2\exp\left(- \frac{2\mu_i^2m'^2/4}{m'\cdot  4} \right) \\
\leq & ~ 2\exp\left(- \frac{\gamma^2m'}{8} \right) \leq \frac \delta n
\end{align*}
Applying the union bound proves the lemma.
\end{proof}

\begin{lem}\label{lem2.4}
With probability $1 - \delta$ it holds that $|\langle x_i, w_{s, 0} \rangle| > \frac{2 \rho^2}{\varepsilon \sqrt{m}}$ for all $i \in [n]$ and $s \in [m]$
\end{lem}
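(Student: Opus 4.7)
\textbf{Proof plan for Lemma \ref{lem2.4}.}

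The plan is to apply Gaussian anti-concentration to each inner product $\langle x_i, w_{s,0}\rangle$ individually and then take a union bound over all pairs $(i,s) \in [n] \times [m]$. Concretely, recall that the initialization is $w_{s,0} = \beta w_s'$ with $w_s' \sim \mathcal{N}(0, I_d)$ (and $w_{s,0} = w_{s+m',0}$ for $s \leq m'$). Since $\|x_i\|_2 = 1$, the one-dimensional projection $\langle x_i, w_{s,0}\rangle$ is distributed as $\mathcal{N}(0, \beta^2)$.

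By Lemma \ref{lem:anti_gaussian} applied with $\sigma = \beta$ and $t = \frac{2\rho^2}{\varepsilon\sqrt{m}}$, we obtain
\[
\Pr\!\left[|\langle x_i, w_{s,0}\rangle| \leq \tfrac{2\rho^2}{\varepsilon\sqrt{m}}\right] \leq \frac{4}{5} \cdot \frac{t}{\beta}.
\]
Plugging in $\beta = \frac{8\rho^2 n\sqrt{m}}{5\varepsilon\delta}$ yields the ratio
\[
\frac{t}{\beta} = \frac{2\rho^2/(\varepsilon\sqrt{m})}{8\rho^2 n\sqrt{m}/(5\varepsilon\delta)} = \frac{5\delta}{4nm},
\]
so that each individual failure probability is at most $\frac{4}{5}\cdot\frac{5\delta}{4nm} = \frac{\delta}{nm}$. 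This is exactly the reason the constant $\frac{4\cdot 2}{5}$ appears in the choice of $\beta$: it is calibrated so that the anti-concentration estimate collapses to $\delta/(nm)$.

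Finally I take a union bound over the $nm$ pairs $(i,s) \in [n] \times [m]$ (the coupling $w_{s,0} = w_{s+m',0}$ only reduces the number of distinct events to $nm'$, so this crude count suffices), giving a total failure probability of at most $\delta$, which completes the argument. There is no real obstacle here; the only thing to double-check is that the constants in $\beta$ line up precisely with the anti-concentration constant $\frac{4}{5}$ from Lemma \ref{lem:anti_gaussian}, which they do by construction.
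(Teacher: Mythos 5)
Your proof is correct and matches the paper's own argument essentially line for line: apply the Gaussian anti-concentration estimate (Lemma~\ref{lem:anti_gaussian}) to the scaled projection $\langle x_i, w_{s,0}\rangle = \beta\langle x_i, w_s'\rangle \sim \mathcal{N}(0,\beta^2)$, verify that the chosen $\beta$ makes each single-event failure probability at most $\delta/(nm)$, and union bound over all $nm$ pairs. The arithmetic with the constants $\frac{4}{5}$ and $\beta = \frac{8\rho^2 n\sqrt{m}}{5\varepsilon\delta}$ checks out exactly as you computed.
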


\begin{proof}
By anti-concentration of the Gaussian distribution (Lemma~\ref{lem:anti_gaussian}), we have for any $i$
\begin{align*}
\Pr[ | \langle x_i, w_{s, 0} \rangle| \leq \frac{2 \rho^2}{\varepsilon \sqrt{m}} ] = & ~ \Pr [ |\langle x_i, w'_{s, 0} \rangle| \leq \frac{2 \rho^2}{\beta \varepsilon \sqrt{m}} ]  \\
\leq & ~ \frac{2 \rho^2}{\beta \varepsilon \sqrt{m}} \frac{4}{5} \\
\leq & ~ \frac{\delta}{m n}.
\end{align*}
Thus applying the union bound proves the lemma.
\end{proof}

\begin{lem}\label{lem2.5}
For all $i \in [n]$ it holds that $f_i(W_0)=0 $
\end{lem}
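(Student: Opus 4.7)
}
The idea is simply to exploit the coupled initialization: for each index $s\leq m'$ there is a twin index $s+m'$ whose weight vector is identical to $w_{s,0}$ but whose second-layer coefficient has the opposite sign. Since ReLU depends on $w_{s,0}$ only through the preactivation $\langle w_{s,0}, x_i\rangle$, the two twin contributions to $f_i(W_0)$ are identical in magnitude and opposite in sign, so they cancel exactly.

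Concretely, I would split the sum defining $f_i(W_0)$ into the first half and the second half and pair them up:
\begin{align*}
    f_i(W_0)
    = \frac{1}{\sqrt{m}} \sum_{s=1}^{m} a_s \phi(\langle w_{s,0}, x_i\rangle)
    = \frac{1}{\sqrt{m}} \sum_{s=1}^{m'} \bigl[ a_s \phi(\langle w_{s,0}, x_i\rangle) + a_{s+m'} \phi(\langle w_{s+m',0}, x_i\rangle) \bigr].
\end{align*}
By the initialization we have $w_{s+m',0} = w_{s,0}$, hence $\phi(\langle w_{s+m',0}, x_i\rangle) = \phi(\langle w_{s,0}, x_i\rangle)$; and we have $a_{s+m'} = -a_s$. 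Substituting these two identities into each paired summand shows that every term in the outer sum vanishes, giving $f_i(W_0)=0$.

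There is no real obstacle here: the lemma is essentially a direct consequence of Definition~\ref{def:initialization}. The only minor point is to make sure the sign-pairing convention matches the one used locally in this section (where twins are placed at distance $m'$ apart rather than adjacent, and the $a_s$'s are arranged so that the first $m'$ and the last $m'$ have opposite signs coordinate-wise). Once this bookkeeping is done, the cancellation is pointwise in $s$ and holds deterministically (independent of $x_i$ and of the random draws $w_s'$), so the statement holds for every $i\in[n]$ with probability $1$.
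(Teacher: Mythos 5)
Your proof is correct and follows essentially the same argument as the paper's: split the sum into pairs $(s, s+m')$, use $w_{s+m',0}=w_{s,0}$ and $a_{s+m'}=-a_s$, and observe pointwise cancellation.
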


\begin{proof}
Since $a_s=-a_{s+m'}$ we have
\begin{align*}
f_i(W_0)=\sum_{s=1}^m \frac{1}{\sqrt{ m } }a_s \phi\left(\langle w_{s,0}, x_i \rangle \right)
=\sum_{s=1}^{m'} \frac{1}{\sqrt{ m } }(a_s+a_{s+m'}) \phi\left(\langle w_{s,0}, x_i \rangle \right)
=0.
\end{align*}
\end{proof}

Further we need the following lemma proved in \cite{jt20}. 

\begin{lem}[Lemma 2.6 in \cite{jt20}]\label{lem2.6}
For any $t \geq 0$ and $\bar{W}$, if $\eta_t\leq 1$ then
\begin{align*}
\eta_t R(W_t) \leq \| W_t - \bar{W} \|_F^2 - \| W_{t+1} - \bar{W} \|_F^2 +2\eta_t R^{(t)}(\bar{W}).
\end{align*}
Consequently, if we use a constant step size $\eta \leq 1 $ for $ 0 \leq \tau < t$, then
\begin{align*}
\eta \sum_{\tau < t} R(W_\tau) \leq
\eta \sum_{\tau < t} R(W_\tau) + \| W_t - \bar{W} \|_F^2 \leq \| W_{0} - \bar{W} \|_F^2 + 2\eta\sum_{\tau < t} R^{(\tau)}(\bar{W}).
\end{align*}
\end{lem}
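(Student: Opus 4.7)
The plan is to establish the per-step inequality first, using four ingredients in sequence --- a standard descent expansion, Euler's identity for ReLU, convexity of the surrogate $R^{(t)}$, and a self-bounding property of the logistic loss --- and then obtain the ``consequently'' display by a telescoping sum.

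First, I would expand $\|W_{t+1}-\bar W\|_F^2$ using the update rule $W_{t+1}=W_t-\eta_t\nabla R(W_t)$:
\[
\|W_{t+1}-\bar W\|_F^2 \;=\; \|W_t-\bar W\|_F^2 \;-\; 2\eta_t\langle \nabla R(W_t),\,W_t-\bar W\rangle \;+\; \eta_t^2\|\nabla R(W_t)\|_F^2,
\]
so that the target reduces to showing $\eta_t R(W_t) \le 2\eta_t\langle \nabla R(W_t), W_t-\bar W\rangle - \eta_t^2\|\nabla R(W_t)\|_F^2 + 2\eta_t R^{(t)}(\bar W)$. Next, I would use that ReLU is positively $1$-homogeneous, so Euler's identity gives $\langle \nabla f_i(W_t), W_t\rangle = f_i(W_t)$ for every $i$. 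This forces $R^{(t)}$ to agree with $R$ in both value and gradient at $W_t$, i.e.\ $R^{(t)}(W_t)=R(W_t)$ and $\nabla R^{(t)}(W_t)=\nabla R(W_t)$.

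With that identification, I would invoke convexity of $R^{(t)}$ --- it is a sum of compositions of the convex logistic loss with the linear maps $W\mapsto \langle \nabla f_i(W_t),W\rangle$ --- to get the first-order inequality
\[
R^{(t)}(\bar W)\;\ge\;R^{(t)}(W_t)+\langle \nabla R^{(t)}(W_t),\,\bar W-W_t\rangle,
\]
which, after the identifications above, becomes $\langle \nabla R(W_t),\,W_t-\bar W\rangle \ge R(W_t)-R^{(t)}(\bar W)$. Doubling and substituting into the expansion yields
\[
\|W_t-\bar W\|_F^2-\|W_{t+1}-\bar W\|_F^2 \;\ge\; 2\eta_t R(W_t)-2\eta_t R^{(t)}(\bar W)-\eta_t^2\|\nabla R(W_t)\|_F^2 .
\]

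The one step that needs work --- and I expect it to be the main obstacle --- is absorbing the residual $\eta_t^2\|\nabla R(W_t)\|_F^2$ into the ``half'' of $2\eta_t R(W_t)$ we are giving up when passing from $2\eta_t R(W_t)$ to $\eta_t R(W_t)$. Here is precisely where the hypothesis $\eta_t\le 1$ is used. For the logistic loss one has the self-bounding inequality $(\ell'(v))^2\le \ell(v)$, and each linearization satisfies $\|\nabla f_i(W_t)\|_F\le 1$ since $\partial f_i/\partial w_s = a_s\mathbf{1}[\langle w_s,x_i\rangle>0]x_i/\sqrt{m}$ has norm at most $1/\sqrt m$ and there are $m$ rows. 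Combining these via Cauchy--Schwarz over the $n$ terms (using the normalization of $R$ to cancel the factor of $n$ incurred) gives $\|\nabla R(W_t)\|_F^2\le R(W_t)$; consequently $\eta_t^2\|\nabla R(W_t)\|_F^2\le \eta_t R(W_t)$ under $\eta_t\le 1$, which is exactly what is needed to deduce the claimed per-step bound.

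Finally, for the ``consequently'' part, I would specialize to constant step size $\eta\le 1$ and sum the per-step inequality over $\tau=0,1,\dots,t-1$. The intermediate ``$\le$'' in the displayed chain is trivial (we add the nonnegative quantity $\|W_t-\bar W\|_F^2$ to the left-hand side), and the right ``$\le$'' is a telescoping of the terms $\|W_\tau-\bar W\|_F^2-\|W_{\tau+1}-\bar W\|_F^2$, which collapse to $\|W_0-\bar W\|_F^2-\|W_t-\bar W\|_F^2\le \|W_0-\bar W\|_F^2$. This completes the proof.
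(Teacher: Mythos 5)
Your proof is correct and is, as far as I can tell, the standard argument for this lemma; note that the paper itself gives no proof here (it cites the statement directly from \citet{jt20}), so there is nothing in this source to compare against, but your derivation reproduces the chain of reasoning one expects in \cite{jt20}. Every ingredient checks out: the expansion of $\|W_{t+1}-\bar W\|_F^2$ from $W_{t+1}=W_t-\eta_t\nabla R(W_t)$; the observation that since each $f_i$ is positively $1$-homogeneous in $W$ (because $\phi$ is), Euler's identity yields $\langle \nabla f_i(W_t),W_t\rangle=f_i(W_t)$, hence $R^{(t)}(W_t)=R(W_t)$ and $\nabla R^{(t)}(W_t)=\nabla R(W_t)$; the first-order convexity bound $R^{(t)}(\bar W)\geq R^{(t)}(W_t)+\langle \nabla R^{(t)}(W_t),\bar W-W_t\rangle$; and the bound $\|\nabla R(W_t)\|_F^2\leq R(W_t)$, which follows from $\|\nabla f_i\|_F\leq 1$ (since each of the $m$ rows of $\nabla f_i$ has norm at most $1/\sqrt m$), Jensen (or Cauchy--Schwarz) over the $n$ terms with the $1/n$ normalization in $R$, and the self-bounding property of the logistic loss $|\ell'(v)|\leq \ell(v)$, which indeed gives $(\ell'(v))^2\leq\ell(v)$. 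This is exactly where $\eta_t\leq1$ enters, to absorb $\eta_t^2\|\nabla R(W_t)\|_F^2\leq\eta_t R(W_t)$. The telescoping for the ``consequently'' part is routine. Two very minor remarks: first, the self-bounding inequality deserves a one-line verification (e.g., $g(v)=\ln(1+e^{-v})-\tfrac{1}{1+e^v}$ satisfies $g'(v)=-(1+e^v)^{-2}<0$ and $g(v)\to 0^+$ as $v\to\infty$, so $g>0$ everywhere); second, the paper's displayed definition of $R^{(t)}$ appears to omit a $1/n$ normalization that is implicitly present (as confirmed by how $R^{(t)}(\bar W)\leq\varepsilon/4$ is derived in the proof of Theorem~\ref{mainthm}), and your argument correctly assumes the consistent normalization.
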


Now we are ready to prove the main theorem:

\begin{proof}[Proof of Theorem \ref{mainthm}]
With probability at least $1-2\delta$ there exists $U$ as in Lemma \ref{lem2.3} and also the statement of Lemma \ref{lem2.4} holds.
We set $\bar{W}=W_0+\rho U$.
First we show that for any $t < T$ and any $s\in [m]$ we have $\| w_{s, t}-w_{s, 0} \|_2 \leq \frac{2 \rho^2}{\varepsilon \sqrt{m}}$.
Observe that $|\ell'(v)|=|\frac{-e^{-v}}{1+e^{-v}}|\leq 1$ since $ e^{-v}>0$ for all $v \in \mathbb{R}$.
Thus for any $t \geq 0$ we have
\begin{align*}
\| w_{s, t}-w_{s, 0} \|_2 &\leq \sum_{\tau < t}\frac{1}{n}\sum_{i=1}^n|\ell'(y_i f_i(W_{\tau}))|\left\|\frac{\partial f_i}{\partial w_{s, t}}\right\|_2
\leq \sum_{\tau < t}\frac{1}{n}\sum_{i=1}^n 1 \cdot \frac{1}{\sqrt{m}}
\leq \frac{t}{\sqrt{m}}.
\end{align*}
Consequently we have $\| w_{s, t}-w_{s, 0} \|_2 \leq \frac{2 \rho^2}{\varepsilon \sqrt{m}}$ for $t< T = \lceil \frac{2 \rho^2}{\varepsilon} \rceil$.

Next we prove that for any $t < T$ we have $R^{(t)}(\bar{W})< \varepsilon/4$. Since $\ln(1+r) \leq r$ for any $r$, the logistic loss satisfies $\ell(z) = \ln(1 + \exp(-z)) \leq \exp(-z$), and it is sufficient to prove that for any $1 \leq i \leq n$ we have
\begin{align*}
y_i \langle \nabla f_i(W_t), \bar{W} \rangle \geq \ln \left( \frac{\varepsilon}{4} \right).
\end{align*}
Note that
\begin{align*}
y_i\langle \nabla f_i(W_t), \bar{W} \rangle 
&= y_i \langle \nabla f_i(W_t), W_0 \rangle+y_i \rho\langle \nabla f_i(W_t), U \rangle \\
&=y_i \langle \nabla f_i(W_t), W_0 \rangle+y_i \langle \nabla f_i(W_0), W_0 \rangle-y_i \langle \nabla f_i(W_0), W_0 \rangle+
y_i\rho \langle \nabla f_i(W_t), U \rangle  \\
&=y_i \langle \nabla f_i(W_0), W_0 \rangle+y_i \langle \nabla f_i(W_t)-\nabla f_i(W_0), W_0 \rangle+
y_i \rho\langle \nabla f_i(W_t), U \rangle .
\end{align*}
For the first term we have $y_i \langle \nabla f_i(W_0), W_0 \rangle=y_i f_i(W_0)=0 $ by Lemma \ref{lem2.5}.
For the second term we note that $|\langle x_i, w_{s , 0} \rangle - \langle x_i, w_{s, t} \rangle|= |\langle x_i, w_{s, 0} - w_{s, t} \rangle| \leq \| x_i \|_2 \| w_{s, 0} - w_{s, t} \|_2 \leq \frac{2 \rho^2}{\varepsilon \sqrt{m}}$.
Thus $\mathbf{1}\left[ \langle x_i, w_{s, 0} \rangle >0  \right]\neq \mathbf{1}\left[ \langle x_i, w_{s, t} \rangle >0  \right]$ can only hold if $|\langle x_i, w_{s,0} \rangle | \leq \frac{2 \rho^2}{\varepsilon \sqrt{m}}$ which is false for all $i, s$ by Lemma \ref{lem2.4}.
Hence it holds that
\begin{align*}
\frac{\partial f_i}{\partial w_{s, t}}= \frac{1}{\sqrt{m}} a_s \mathbf{1}\left[ \langle x_i, w_{s, t} \rangle >0  \right] x_i= \frac{1}{\sqrt{m}} a_s \mathbf{1}\left[ \langle x_i, w_{s, 0} \rangle >0  \right] x_i =\frac{\partial f_i}{\partial w_{s, 0}}
\end{align*}
and consequently $\nabla f_i(W_t)=\nabla f_i(W_0)$.
It follows for the second term that
\begin{align*}
y_i \langle \nabla f_i(W_t)-\nabla f_i(W_0), W_0 \rangle=0.
\end{align*}
Moreover by Lemma \ref{lem2.3} for the third term it follows
\begin{align*}
y_i \rho\langle \nabla f_i(W_t), U \rangle =y_i \rho\langle \nabla f_i(W_0), U \rangle \geq \rho\frac{\gamma}{2}.
\end{align*}
Thus $y_i \langle \nabla f_i(W_t), \bar{W} \rangle \geq \rho \frac{\gamma}{2}\geq \ln(4/\varepsilon) $ since $\rho=2\gamma^{-1} \cdot \ln(4/\varepsilon) $.
Consequently it holds that $R^{(t)}(\bar{W})< \varepsilon/4$.

Now using $T=\lceil \frac{2\rho^2}{\varepsilon}  \rceil$ applying Lemma \ref{lem2.6} with step size $\eta = 1$ gives us the desired result:
\begin{align*}
\frac{1}{T}\sum_{t < T} R(W_t) & \leq  \frac{\| W_0-\bar{W}\|_F^2}{T}+ \frac{2}{T}\sum_{\tau <T} R^{(t)}(\bar{	W}) \\
& = \frac{\| \rho U \|_F^2}{T}+ \frac{2}{T}\sum_{\tau <T} R^{(t)}(\bar{	W}) \\
\leq & ~\frac{\varepsilon}{2}+\frac{\varepsilon}{2} \\
\leq & ~ \varepsilon.
\end{align*}
\end{proof}

\section{On the construction of \texorpdfstring{$U$}{U}}\label{sec:logwidth:end}

\subsection{Tightness of the construction of \texorpdfstring{$U$}{U}}

We note that for the construction of $U$ used in the upper bound of Lemma \ref{lem2.3} $m' \geq \frac{8\ln(2n/\delta)}{\gamma^2}$ is tight in the following sense: For $\bar{v} \in \mathcal{F}_B$, the natural estimator of $\gamma$ is given by the empirical mean $ \frac{1}{m}\sum_{s=1}^{m'}y_i\langle \bar{v}(w_{s, 0}), x_i \rangle]\mathbf{1}\left[ \langle x_i, w_{s, 0} \rangle >0 \right]$.
The following lemma shows that using this estimator, the bound given in Lemma \ref{lem2.3} is tight with respect to the squared dependence on $\gamma$ up to a constant factor.
In particular we need $m =\Omega(\gamma^{-2}\log(n))$ if we want to use the union bound over all data points.

\begin{lem}\label{lemweakupbo}
Fix the choice of $u_s= \frac{a_s}{\sqrt{m}} \bar{v}(w_s)$ for $s \in [m]$.
Then for each $\gamma_0\in (0,1)$ there exists an instance $(X, Y)$ and $ \bar{v}(z) \in \mathcal{F}_B$,
such that for each  $i \in [n]$ it holds with probability at least $P_m=c\exp\left(-8m'\gamma^2/3 \right)$ for an absolute constant $c>0$ that
\begin{align*}
y_i  f^{(0)}_i(U)=\frac{1}{m'}\sum_{s=1}^{m'}y_i\langle \bar{v}(w_{s, 0}), x_i \rangle]\mathbf{1}\left[ \langle x_i, w_{s, 0} \rangle >0 \right]\leq 0.
\end{align*}
\end{lem}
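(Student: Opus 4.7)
The plan is to exhibit a hard instance for which the summands defining $f^{(0)}_i(U)$ have variance of constant order, so that Feller's reverse Hoeffding inequality (Lemma~\ref{lemweakupbohelp}) yields the claimed exponentially small but non-negligible lower bound on the probability that the empirical mean is nonpositive. I would take the alternating points on the unit circle from Section~\ref{ex:cycle} with $n := \lceil C/\gamma_0 \rceil$ for a suitable constant $C$, together with the map $\bar v$ defined there. By Lemma~\ref{Prop5.3}, the resulting margin satisfies $\gamma = \Theta(1/n) = \Theta(\gamma_0)$, and by the rotational symmetry of the construction the quantity $\mu_i := y_i\int \langle \bar v(z), x_i\rangle \mathbf{1}[\langle x_i, z\rangle>0]\,\d\mu_{\mathcal{N}}(z)$ is independent of $i$.

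Fix $i\in[n]$ and define the i.i.d.\ random variables $X_s := y_i\langle \bar v(w_{s,0}), x_i\rangle \mathbf{1}[\langle x_i, w_{s,0}\rangle > 0] \in [-1,1]$, with $\E[X_s] = \mu := \mu_i$. To bring the problem into the scope of Lemma~\ref{lemweakupbohelp}, I would shift to $Y_s := (1-X_s)/2\in[0,1]$, noting that the event $\frac{1}{m'}\sum_s X_s \leq 0$ is equivalent to $Z := \sum_s Y_s \geq m'/2$, which in turn is a positive deviation of $t := m'\mu/2 \geq m'\gamma/2$ from the mean of $Z$.

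The core computation is to establish a constant lower bound on $\mathrm{Var}(X_s)$, and hence on $\sigma^2 := \mathrm{Var}(Z) = m'\,\mathrm{Var}(X_s)/4$. I would condition on the cone of $\mathbb{R}^2$ in which $w_{s,0}/\|w_{s,0}\|_2$ lies: by rotational invariance of the Gaussian, each of the $n$ cones $\Cone(\{x_j, x_{j+1}\})$ is hit with equal probability $1/n$. In the half-plane $\{z\colon \langle x_i,z\rangle > 0\}$, the value $y_i\langle \bar v(w_{s,0}), x_i\rangle$ equals $\pm \cos((2j+1)\pi/n)$ for various $j$, as already computed in the proof of Lemma~\ref{Prop5.3}; in the opposite half-plane the indicator vanishes. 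Summing the squared cosines via the geometric identity in Lemma~\ref{lemaltcyclhelp} gives $\E[X_s^2] = 1/4$ (up to lower order terms), while $\E[X_s]^2 = \mu^2 = O(1/n^2)$ is negligible, so $\mathrm{Var}(X_s) \geq 1/8$ for $n$ above an absolute constant.

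Applying Lemma~\ref{lemweakupbohelp} to $Z$ with the parameters $t = m'\mu/2$ and $\sigma^2 \geq m'/32$, under its preconditions $\sigma\geq 200$ and $t\leq \sigma^2/100$ (which hold for sufficiently large $m'$ and sufficiently small $\gamma_0$, while outside this regime the claim is trivial after adjusting the constant $c$), yields
\[ \Pr\!\left[\tfrac{1}{m'}\textstyle\sum_s X_s \leq 0\right] \;=\; \Pr[Z \geq \E[Z] + t] \;\geq\; c\exp\!\left(-\frac{t^2}{3\sigma^2}\right), \]
and substituting the variance bound gives an exponent of order $-m'\gamma^2$, which is at worst $-8m'\gamma^2/3$ after absorbing constants into $c$. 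The main obstacle is producing precisely the constant $8/3$ rather than some arbitrary absolute constant; this is purely a matter of tightening the variance bound, and a slightly different hard instance (for example two nearly antipodal points with bias $\gamma_0$, for which $X_s$ is essentially a $\pm 1$-valued random variable with mean $\gamma$ and variance $1-\gamma^2$) would make the constants fall out cleanly without changing the structure of the argument.
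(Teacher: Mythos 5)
Your proposal is essentially the same as the paper's proof: the same alternating-points-on-a-circle instance with the same $\bar v$, the same reduction to a one-sided deviation of $Z=\sum_s (1-X_s)/2$ by $t=m'\gamma/2$, and the same appeal to Feller's lower bound (Lemma~\ref{lemweakupbohelp}). Two small points worth noting. First, your worry about ``producing precisely the constant $8/3$'' is unfounded: the paper establishes $\mathrm{Var}(X_s)\geq 1/8$ by a crude two-sided tail argument (with probability $1/8$ each, $X_s\geq 1/\sqrt2$ and $X_s\leq -1/\sqrt2$), which gives $\sigma^2\geq m'/32$ and $t^2/(3\sigma^2)\leq 8m'\gamma^2/3$ exactly; your sharper cosine-sum computation $\E[X_s^2]=1/4$ (which indeed follows cleanly from Lemma~\ref{lemaltcyclhelp} applied to $\cos^2\theta=(1+\cos 2\theta)/2$) only improves the exponent, so there is no obstacle and no need to switch to a two-point instance. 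Second, for the regime where the preconditions $\sigma\geq 200$ and $t\leq\sigma^2/100$ of Lemma~\ref{lemweakupbohelp} fail, the paper does not simply ``adjust $c$'' but dispatches the case $m'<200^2\cdot 32$ by invoking Lemma~\ref{Prop5.4'} (with $n$ chosen large enough and a symmetry/averaging argument over $i$), while your phrasing leaves this case implicit; it can indeed be handled by noting that for bounded $m'$ the probability is already bounded below by an absolute constant (e.g., the probability that all $m'$ samples land in the negative half-plane is $2^{-m'}$), but you should say so rather than gesture at it.
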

\begin{proof}[Proof of Lemma \ref{lemweakupbo}]
Consider Example \ref{ex:cycle}. Recall that $\gamma(X, Y)=\Theta(1/n)$. Choose a sufficiently large $n$, divisible by $8$, such that $ \gamma(X, Y) \leq \gamma_0$.
Note that the mapping $\bar{v}$ that we constructed, has a high variance since for any $i$, the probability that a random Gaussian $z$ satisfies $ \langle \bar{v}(z), x_i \rangle \mathbf{1}[ \langle x_i, z \rangle >0 ]\geq \frac{1}{\sqrt{2}}$ as well as the probability that $ \langle \bar{v}(z), x_i \rangle \mathbf{1}[ \langle x_i, z \rangle >0 ]\leq -\frac{1}{\sqrt{2}}$ are equal to $\frac{1}{8}$.
To see this, note that $|\langle\bar{v}(z), x_i \rangle| \geq \frac{1}{\sqrt{2}}$ if $\langle z, x_i \rangle < \frac{1}{\sqrt{2}}$ and in this case $\langle\bar{v}(z), x_i \rangle$ is negative with probability $\frac{1}{2}$.
Thus the variance of $Z_s=y_i\langle \bar{v}(w_{s, 0}), x_i \rangle]\mathbf{1}\left[ \langle x_i, w_{s, 0} \rangle >0 \right]$ is at least $ \frac{1}{\sqrt{2}^2}\cdot \frac{2}{8}=\frac{1}{8}$.
Observe that the random variable $Z_s'=\frac{1}{2}(1-Z_s)$ attains values in $ [0,1]$.
Further the expected value of $Z_s'$ is $\frac{1}{2}(1-\gamma)$, and the variance is at least $\frac{1}{32}$.
Now set $Z=\sum_{s=1}^{m'} Z'_s$ and note that $y_i f^{(0)}_i(U)=\frac{1}{m'}\sum_{s=1}^{m'}y_i\langle \bar{v}(w_{s, 0}), x_i \rangle]\mathbf{1}\left[ \langle x_i, w_{s, 0} \rangle >0 \right]\leq 0$ holds if and only if $Z \geq \frac{m'}{2}=\mathbb{E}(Z)+\frac{m' \gamma}{2} $.
By Lemma \ref{Prop5.4'} we know that $y_i f^{(0)}_i(U)=\frac{1}{m'}\sum_{s=1}^{m'}y_i\langle \bar{v}(w_{s, 0}), x_i \rangle]\mathbf{1}\left[ \langle x_i, w_{s, 0} \rangle >0 \right]\leq 0$ is true for at least one $i\in [n]$ if $m\leq \frac{n}{6}-3$.
Now choosing $n$ large enough this implies we only need to show the result for $m' \geq {200^2}\cdot{32}$.
Hence we can apply Lemma \ref{lemweakupbohelp} to $Z$ and get
\begin{align*}
\Pr[ Z\geq \mathbb{E}(Z)+\frac{m'\gamma}{2} ] \geq c\exp\left(-m'^2\gamma^2/ \left(\frac{4\cdot 3m'}{32}\right)\right)
=c\exp\left(-8m'\gamma^2/3 \right)
\end{align*}
for $ \frac{m'\gamma}{2} \leq \frac{1}{100}\frac{m'}{32}$ or equivalently $\gamma\leq \frac{1}{1600}$ which holds if $n$ is large enough.
\end{proof}

Thus we need that $m=\Omega(\frac{\ln(n/\delta)}{\gamma^2})$ for the given error probability if we construct $U$ as in Lemma \ref{lem2.3}.

\subsection{The two dimensional case (upper bound)}

In the following we show how we can improve the construction of $U$ in the special case of $d=2$ such that 
\begin{align*}
m= O \left(  \gamma^{-1} \left({\ln(4n/\delta)} +  \ln(4/\varepsilon) \right) \right)
\end{align*}
suffices for getting the same result as in Theorem \ref{mainthm}.
We note that the only place where we have a dependence on $\gamma^{-2}$ is in Lemma \ref{lem2.3}. It thus suffices to replace it by the following lemma that improves the dependence to $\gamma^{-1}$ in the special case of $d=2$:

\begin{lem}\label{d2lem2.3}
Let $(X, Y)$ be an instance in $d=2$ dimensions.
Then there exists a constant $K>1$ such that for $m\geq \frac{K{\ln(n/\delta)}}{\gamma}$ with probability $1- 2\delta$ there exists $U \in \mathbb{R}^{m \times d}$ with $\|u_s\|_2 \leq \frac{1}{\sqrt{m}}$ for all $s\leq m$, and $\|  U \|_F\leq 1$, such that 
\begin{align*}
y_i f^{(0)}_i(U)\geq \frac{\gamma}{4}
\end{align*}
for all $i\leq n$.
\end{lem}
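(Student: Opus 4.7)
I follow the high-level structure of the proof of Lemma \ref{lem2.3} but replace the natural choice $u_s = a_s \bar v(w_s)/\sqrt m$ (which saturates the quadratic lower bound of Lemma \ref{lemweakupbo}) by a \emph{stratified} separator that exploits the one-dimensional structure of the direction space $S^1$ in $d=2$. Partition the unit circle into $N = \lceil K'/\gamma\rceil$ arcs $A_1,\dots,A_N$ of equal Gaussian mass $p = 1/N = \Theta(\gamma)$, and to each $A_k$ associate a constant value $v_k := p^{-1}\int_{A_k}\bar v(z)\, \d \mu_{\mathcal N}(z) \in B^2$, which by Jensen's inequality satisfies $\|v_k\|_2 \le 1$. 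Define the step map $\bar v'(z) = v_k$ for $z \in A_k$.

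\textbf{Margin preservation.} I show that for every $i\in[n]$, $\mathbb{E}_w[y_i\langle \bar v'(w), x_i\rangle \mathbf{1}[\langle w, x_i\rangle > 0]] \ge \gamma/2$. The point is that the indicator $\mathbf{1}[\langle z, x_i\rangle > 0]$ is constant on each arc $A_k$ except for the at-most-two boundary arcs crossing the hyperplane $\{\langle z, x_i\rangle = 0\}$. The total Gaussian mass of these boundary arcs is at most $2p = 2/N$, so the discretization error per $i$ is bounded by $2/N$, which is at most $\gamma/2$ provided $K'\ge 4$.

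\textbf{Construction of $U$ and concentration.} Define $u_s = a_s v_{k(s)}/\sqrt m$, where $k(s)$ is the index of the arc containing $w_{s,0}$. This enforces $\|u_s\|_2 \le 1/\sqrt m$ and $\|U\|_F \le 1$, and the empirical margin decomposes as $y_i f_i^{(0)}(U) = (1/m')\sum_{k=1}^{N} n_k\, y_i \langle v_k, x_i\rangle \mathbf{1}_{k,i}$, with $n_k \sim \mathrm{Bin}(m', p)$ and $\mathbf{1}_{k,i}\in\{0,1\}$ the constant value of the indicator on $A_k$ (up to boundary arcs already accounted for). By a multiplicative Chernoff bound on each $n_k$ and a union bound over the $N$ arcs and $n$ indices, with $m \ge K\log(n/\delta)/\gamma$ for $K$ sufficiently large, the counts $n_k$ concentrate tightly around $m' p$ simultaneously for all $k$, with probability at least $1-\delta$ (the $\log N = O(\log(1/\gamma)) = O(\log n)$ factor under the standard assumption $\gamma \ge 1/\mathrm{poly}(n)$ is absorbed into $K$). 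The resulting control on $\sum_k n_k c_k$ with $c_k := y_i\langle v_k, x_i\rangle\mathbf 1_{k,i}$ then yields $y_i f_i^{(0)}(U) \ge \gamma/4$.

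\textbf{Main obstacle.} The delicate part is turning per-arc Chernoff control on $n_k$ into a concentration bound on the signed sum $\sum_k n_k c_k$, whose terms can cancel adversarially as in the alternating circle example of Section \ref{ex:cycle}; a naive $L^1$-type estimate $|\sum_k (n_k/m'-p)c_k|\le \max_k|n_k/m'-p|\cdot \sum_k|c_k|$ gives only $O(1)$ error, far too crude. The linear-in-$1/\gamma$ rate instead exploits that the estimator depends on the samples through only $N = O(1/\gamma)$ binomial counts, and that the averaged values $v_k$ inherit nontrivial cancellations reducing the effective second moment. This gain is specific to $d=2$: in $d\ge 3$ a $\gamma$-net on $S^{d-1}$ has size $(1/\gamma)^{d-1}$, matching the general quadratic bound in $d=3$ and degrading for larger $d$, as remarked after Lemma \ref{lem:2dupper_informal}.
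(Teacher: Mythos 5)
Your high-level plan — discretize $S^1$ into $\Theta(\gamma)$-mass arcs, replace $\bar v$ by a piecewise-constant $\bar v'$ via conditional averaging (Lemma~\ref{psetlem}), control the at-most-two boundary arcs per $x_i$ so the margin only drops from $\gamma$ to $\Theta(\gamma)$, and use per-arc binomial concentration — is the same as the paper's. But you leave the crucial concentration step unresolved, and you correctly flag it yourself: with the unnormalized separator $u_s = a_s v_{k(s)}/\sqrt m$, the empirical margin is $\frac{1}{m'}\sum_k n_k c_k$ with $c_k\in[-1,1]$ and $n_k$ multinomial. The variance of this sum is $\Theta(1/m')$ in the worst case (and the alternating-circle instance really does have $|c_k|=\Theta(1)$), so any concentration bound gives deviation $\Theta(\sqrt{\log(n/\delta)/m'})$, which requires $m' = \Omega(\gamma^{-2}\log(n/\delta))$ — exactly the quadratic bound you are trying to beat. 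The appeal to "cancellations reducing the effective second moment" is not substantiated and does not hold for this instance.

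The paper's proof closes this gap with a \emph{reweighting} that you omit: for $w_j\in C(V)$ it sets $u_j = a_j\bar v'(w_j)/\sqrt m\cdot\frac{P(V)m}{2n_V}$, i.e., each $u_j$ is scaled by the inverse of the (random) number of samples landing in its cone. This makes the per-cone contribution $\sum_{w_j\in C(V)}a_j u_j = \frac{P(V)\sqrt m}{2}\bar v'(V)$ \emph{deterministic}, so $y_i f_i^{(0)}(U)$ equals $\frac12\E[y_i\langle\bar v(z),x_i\rangle\mathbf 1[\langle g(x_i),z\rangle>0]]$ exactly, with no sampling error at all — only the $2b$ boundary-cone error survives. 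The Chernoff bound is then not used to control a signed sum but merely to certify $n_V\in[\frac{P(V)m}{2},2P(V)m]$ for all $O(n)$ cones simultaneously, which guarantees the scaling factor $\frac{P(V)m}{2n_V}\leq 1$ and hence $\|u_j\|_2\leq 1/\sqrt m$, $\|U\|_F\leq 1$. Since $P(V)=\Omega(\gamma)$ (from Lemma~\ref{twoplem}), this constant-factor two-sided Chernoff only needs $m'P(V)=\Omega(\log(n/\delta))$, i.e., $m=\Omega(\gamma^{-1}\log(n/\delta))$. Without this normalization your construction gives no improvement over Lemma~\ref{lem2.3}, so the proposal has a genuine gap precisely at the step you identify as the "main obstacle."
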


\begin{proof}
The proof consists of three steps. 
The first step is to construct a net $X'$ that consists only of `large cones of positive volume' such that for each data point $x$ there exists a point $x' \in X'$ whose distance from $x$ on the circle is at most $b=\frac{\gamma}{4} $:
Let $n'=\lceil 2\pi/b\rceil$ and consider the set
\[X''=\lbrace x\in \mathbb{R}^2 ~|~ x=( \cos( {j}/{n'} )  , \sin ( {j}/{n'} ) ), j \in \mathbb{N} \rbrace.\]
Given $x\in X$ we define $g(x)\in\argmin_{x' \in X''} \| x-x' \|_2$ and $h(x)\in\argmin_{x' \in X''\setminus\{ g(x)\}} \| x-x' \|_2$, where ties are broken arbitrarily.
We set $X'=\{ g(x) ~|~ x \in X \} \cup \{ h(x) ~|~ x \in X \}$.
We note that the distance on the circle between two neighboring points in $X'$ is a multiple of $\frac{2\pi}{n'}$.
This implies that for any cone $C(V)$ between consecutive points in $X'$ with $P(V)>0$ we have $P(V)\geq 1/n'\geq b/7$ and $\| x- g(x)\|_2 \leq \frac{b}{2}$.
Further note that there are at most $ |X' |\leq 2n$ cones of this form.

The second step is to construct a separator $(u_s)_{s\leq m} \in \mathbb{R}^{m \times d}$: 
Let $\bar{v}\in \mathcal{F}_B$ be optimal for $(X, Y)$, i.e.,  $\gamma=\gamma(X,Y)=\gamma_{\bar{v}}$.
As in Lemma \ref{psetlem} construct $ \bar{v}' \in \mathcal{F}_B$ with $\E [ \langle \bar{v}'(z), x' \rangle ~|~ z\in C(V)]=\E [ \langle \bar{v}(z), x' \rangle ~|~ z\in C(V)] $ where $\bar{v}'$ is constant for any cone of the form $C(V)$.
Using the Chernoff bound (\ref{lem:chernoff}) we get with failure probability at most $2\exp(\frac{1}{8} \cdot \frac{b}{7} \cdot m')=2\exp(\frac{1}{224} \cdot \gamma \cdot m')$ that the number $n_V $ of points $w_{j, 0}$ in $C(V)$ lies in the interval $[\frac{P(V)m}{2}, 2P(V)m]$.
Now using $m'\geq 224 \gamma^{-1} \log(\frac{2n}{\delta})$ and applying a union bound we get that this holds for all cones of the form $C(V)$ with failure probability at most $2\delta$.
For $w_j \in C(V)$ we define $u_j=a_j\frac{\bar{v}'(w_j)}{\sqrt{m}}\cdot \frac{P(V)m}{2n_V}$.
Since $n_V \in [\frac{P(V)m}{2}, 2P(V)m]$ it follows that $\|u_j\|_2\leq \frac{\|\bar{v}'(w_j)\|_2}{\sqrt{m}} \leq \frac{1}{\sqrt{m}} $ and consequently $\| U \|_F \leq 1$.
Moreover we have
\begin{align*}
\sum_{s \in [m], w_{s, 0}\in C(V)} a_s u_s =P(V)m \cdot \frac{1}{2\sqrt{m}} \cdot \bar{v}'(V),
\end{align*}
where we set $\bar{v}'(V) $ to be equal to $\bar{v}'(z) $, which is constant for any $z \in C(V)$.

The third step is to prove that $U$ is a good separator for $(X, Y)$:
To this end, let $x \in X$ and $x'=g(x_i)$.

If $x_i=x'$ then
\begin{align*}
y_i  f^{(0)}_i(U)= & ~ y_i \frac{1}{\sqrt{m}}\sum_{s=1}^{m}a_s \langle u_s, x_i \rangle\mathbf{1}\left[ \langle x_i, w_{s, 0} \rangle >0 \right]\\
= & ~ y_i\frac{1}{\sqrt{m}}\sum_{V \subseteq X', x' \in V }\sum_{s \in [m], w_{s, 0}\in C(V)}a_s \langle u_s, x_i \rangle \\
= & ~ y_i\frac{1}{2m}\sum_{V \subseteq X', x' \in V}P(V)m \cdot \langle \bar{v}'(V), x_i \rangle \\
 = & ~ y_i\frac{1}{2}\E [ \langle \bar{v}(z), x_i \rangle \mathbf{1}\left[ \langle x_i,z \rangle >0 \right] ]\\
 =  & ~ y_i\frac{1}{2} \int \langle \bar{v}(z), x_i \rangle \mathbf{1}[ \langle x_i, z \rangle >0 ] \d \mu_{\mathcal{N}}(z) \geq \frac{\gamma}{2}.
\end{align*}
Otherwise if $x_i \neq x'$ then there is exactly one cone $C(V_1)$ with $z \in C(V_1)$ such that $\langle x', z \rangle<0$ and $\langle x_i, z \rangle>0$ and exactly one cone $C(V_2)$ with $z \in C(V_2)$ such that $\langle x', z \rangle>0$ and $\langle x, z \rangle<0$.
Recall that $P(V_i)=\frac{1}{n'}\leq b$ for $i=1, 2$.
We set $M=\{ V \subseteq [n'] ~|~ x' \in V , V \notin \{V_1, V_2\} \}$.
Then it holds that
\begin{align*}
y_i &f^{(0)}_i(U) = \frac{1}{\sqrt{m}}\sum_{s=1}^{m}y_i\langle u_s, x_i \rangle \mathbf{1}\left[ \langle x_i, w_{s, 0} \rangle >0 \right]\\
\geq & ~ \frac{1}{\sqrt{m}}\left(\sum_{V\in M}\sum_{s \in [m], w_{s, 0}\in C(V)}y_i\langle u_s, x_i \rangle  ~ - ~ \sum_{s \in [m], w_{s, 0}\in C(V_1)}|\langle u_s, x_i \rangle| \right)\\
\geq & ~ \frac{1}{\sqrt{m}}\left(\sum_{V\in M}\sum_{s \in [m], w_{s, 0}\in C(V)}y_i\langle u_s, x_i \rangle 
 +  \sum_{s \in [m], w_{s, 0}\in C(V_2)}|\langle u_s, x_i \rangle| - \sum_{s \in [m], w_{s, 0}\in C(V_2)}|\langle u_s, x_i \rangle|  - \frac{1}{2\sqrt{m}}P(V_1)m \right)\\
\geq & ~  \frac{1}{\sqrt{m}}\left( \frac{\sqrt{m}}{2}\E [ y_i\langle \bar{v}(z), x_i \rangle \mathbf{1}\left[ \langle x_i,z \rangle >0 \right] ]   ~ - ~ \frac{1}{2\sqrt{m}}P(V_2)m  ~ - ~ \frac{1}{2\sqrt{m}}P(V_1)m \right)\\
 = & ~ \frac{1}{2}\left(\E [ y_i\langle \bar{v}(z), x_i \rangle \mathbf{1}\left[ \langle x_i,z \rangle >0 \right] ]- 2b \right)
\geq  \frac{1}{2}\left(\gamma- \frac{\gamma}{2}\right)=\frac{\gamma}{4}.
\end{align*}
\end{proof}

\section{Width under squared loss}\label{sec:analysis_concentration}

\subsection{Analysis: achieving concentration}\label{sec:analysis_concentration'}

We first present a high-level overview. In Lemma~\ref{lem:3.1}, we prove that the initialization (kernel) matrix $H$ is close to the neural tangent kernel (NTK). In Lemma~\ref{lem:3.2}, we bound the spectral norm change of $H$, given that the weight matrix $W$ does not change much. 
In Section~\ref{sec:cont} we consider the (simplified) continuous case, where the learning rate is infinitely small. This provides most of the intuition.
In Section~\ref{sec:dis} we consider the discretized case where we have a finite learning rate. This follows the same intuition as in the continuous case, but we need to deal with a second order term given by the gradient descent algorithm.

The high level intuition of the proof is to recursively prove the following: 
\begin{enumerate}
	\item The weight matrix does not change much.
	\item Given that the weight matrix does not change much, the prediction error decays exponentially. 
\end{enumerate}

Given (1) we prove (2) as follows. The intuition is that the kernel matrix does not change much, since the weights do not change much, and it is close to the initial value of the kernel matrix, which is in turn close to the NTK matrix (involving the entire Gaussian distribution rather than our finite sample), that has a lower bound on its minimum eigenvalue. Thus, the prediction loss decays exponentially.

Given (2) we prove (1) as follows. Since the prediction error decays exponentially, one can show that the change in weights is upper bounded by the prediction loss, and thus the change in weights also decays exponentially and the total change is small.

\subsection{Bounding the difference between the continuous and discrete case}

In this section, we show that when the width $m$ is sufficiently large,
then the continuous version and discrete version of the Gram matrix of 
the input points are spectrally close. 
We prove the following Lemma, which is a variation of Lemma 3.1 in \cite{sy19} and also of Lemma 3.1 in \cite{dzps19}.
\begin{lemma}[Formal statement of Lemma~\ref{lem:3.1:intro}]
\label{lem:3.1}
Let $\{ w_1, w_2, \ldots, w_m \} \subset\R^d$ denote a collection of vectors constructed as in  Definition~\ref{def:initialization}.
We define $H^{\cts}, H^{\dis} \in \R^{n \times n}$ as follows
\begin{align*}
H^{\cts}_{i,j} := & ~ \E_{w \sim \N(0,I)} \left[ x_i^\top x_j {\bf 1}_{ w^\top x_i \geq 0, w^\top x_j \geq 0 } \right] , \\ 
H^{\dis}_{i,j} := & ~ \frac{1}{m} \sum_{r=1}^m \left[ x_i^\top x_j {\bf 1}_{ w_r^\top x_i \geq 0, w_r^\top x_j \geq 0 } \right].
\end{align*}
Let $\lambda = \lambda_{\min} (H^{\cts}) $. If $m_0 = \Omega( \lambda^{-2} n^2\log (nB/\delta) )$, we have that  
\begin{align*}
\| H^{\dis} - H^{\cts} \|_F \leq \frac{ \lambda }{4}, \mathrm{~and~} \lambda_{\min} ( H^{\dis} ) \geq \frac{3}{4} \lambda
\end{align*}
holds with probability at least $1-\delta$.
\end{lemma}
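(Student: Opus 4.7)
The plan is to exploit the block structure of the coupled initialization to reduce the problem to a concentration argument over $m_0$ i.i.d. Gaussians, apply Hoeffding's inequality entrywise, union bound over the $n^2$ entries, and convert a Frobenius bound into a spectral bound via Weyl's inequality.

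First I would observe that although the initialization has $m = m_0 B$ weight vectors, within each block the $B$ copies $w_{r,1}, \ldots, w_{r,B}$ are identical Gaussian vectors (only the sign $a_{r,b}$ alternates). Since the indicator $\mathbf{1}_{w^\top x_i \geq 0,\, w^\top x_j \geq 0}$ depends only on $w$, it is constant across $b$ within a block, and therefore
\[
H^{\dis}_{i,j} \;=\; \frac{1}{m_0 B} \sum_{r=1}^{m_0} \sum_{b=1}^{B} x_i^\top x_j \,\mathbf{1}_{w_{r,b}^\top x_i \geq 0,\, w_{r,b}^\top x_j \geq 0} \;=\; \frac{1}{m_0} \sum_{r=1}^{m_0} x_i^\top x_j \,\mathbf{1}_{w_r^\top x_i \geq 0,\, w_r^\top x_j \geq 0}.
\]
Thus $H^{\dis}_{i,j}$ is the empirical mean of $m_0$ i.i.d. bounded random variables taking values in $[-1,1]$ (using $\|x_i\|_2 = \|x_j\|_2 = 1$), with expectation exactly $H^{\cts}_{i,j}$.

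Next I would apply Hoeffding's inequality (Lemma~\ref{lem:hoeffding}) entrywise with deviation $t = \lambda/(4n)$ to obtain
\[
\Pr\!\left[\, |H^{\dis}_{i,j} - H^{\cts}_{i,j}| > \frac{\lambda}{4n} \,\right] \;\leq\; 2\exp\!\left(-\,\frac{m_0 \lambda^2}{8 n^2}\right),
\]
and union bound over all $n^2$ pairs $(i,j)$. For $m_0 = \Omega(\lambda^{-2} n^2 \log(nB/\delta))$ with a sufficiently large absolute constant, the total failure probability is at most $\delta$ (the $\log B$ slack is harmless and is convenient for downstream applications). On the resulting good event, every entrywise deviation is at most $\lambda/(4n)$, so
\[
\|H^{\dis} - H^{\cts}\|_F \;\leq\; \sqrt{\,n^2 \cdot \frac{\lambda^2}{16 n^2}\,} \;=\; \frac{\lambda}{4}.
\]

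Finally, since the operator norm is dominated by the Frobenius norm, Weyl's inequality yields
\[
\lambda_{\min}(H^{\dis}) \;\geq\; \lambda_{\min}(H^{\cts}) - \|H^{\dis} - H^{\cts}\|_{\mathrm{op}} \;\geq\; \lambda - \frac{\lambda}{4} \;=\; \frac{3\lambda}{4},
\]
which completes both claims. There is no real obstacle here; the only subtle step is the initial block reduction, where one might worry that the intra-block dependence among the $B$ identical weight copies could inflate variance. Because the indicator is a deterministic function of $w_r$ alone, the double sum collapses to an average of $m_0$ independent terms, after which the argument is a textbook Hoeffding plus union bound.
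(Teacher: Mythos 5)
Your proof is correct and follows essentially the same route as the paper: entrywise Hoeffding concentration with deviation $\Theta(\lambda/n)$, a union bound over the $n^2$ entries, the Frobenius bound $\|H^{\dis}-H^{\cts}\|_F \le \lambda/4$, and Weyl's inequality for the eigenvalue shift. The one streamlining you add — observing that the $B$ identical copies within each block make the double sum collapse to an average of $m_0$ i.i.d. terms — is a nice clarification; the paper instead keeps the block structure explicit and union-bounds over $b$ as well, which is harmless but redundant given that the indicator depends only on $w_r$.
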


\begin{proof}
For every fixed pair $(i,j)$,
$H_{i,j}^{\dis,b}$ $(b \in [B])$ is an average of independent random variables,
i.e., 
\begin{align*}
H_{i,j}^{\dis,b}=~\frac {1}{m_0}\sum_{r=1}^{m_0} x_i^\top x_j\mathbf{1}_{w_{r, b}^\top x_i\geq 0,w_{r, b}^\top x_j\geq 0}, 
\end{align*}
and $H_{i,j}^{\dis}$ is the average of all sampled Gaussian vectors: 
\begin{align*}
H_{i,j}^{\dis}=\frac{1}{B}\sum_{b=1}^{B}H_{i,j}^{\dis, b} = \frac {1}{m}\sum_{r=1}^{m_0}\sum_{b=1}^{B} x_i^\top x_j\mathbf{1}_{w_{r, b}^\top x_i\geq 0,w_{r, b}^\top x_j\geq 0}.
\end{align*}

The expectation of $H_{i,j}^{\dis}$ is
\begin{align*}
\E [ H_{i,j}^{\dis,b} ]
= & ~\frac {1}{m}\sum_{r=1}^{m_0} \E_{w_{r, b}\sim {\N}(0,I_d)} \left[ x_i^\top x_j\mathbf{1}_{w_{r, b}^\top x_i\geq 0,w_{r, b}^\top x_j\geq 0} \right]\\
= & ~\E_{w\sim {\N}(0,I_d)} \left[ x_i^\top x_j\mathbf{1}_{w^\top x_i\geq 0,w^\top x_j\geq 0} \right]
= ~ H_{i,j}^{\cts}.
\end{align*}
Therefore,
\begin{align*}
\E [ H_{i,j}^{\dis, b} ] = \E [ H_{i,j}^{\dis} ] = H_{i,j}^{\cts}. 
\end{align*}

For $r\in [m_0]$,
let $z_r=\frac {1}{m_0}x_i^\top x_j \mathbf{1}_{w_{r, b}^\top x_i\geq 0,w_{r, b}^\top x_j\geq 0}$.
Then $z_r$ is a random function of $w_{r, b}$,
and hence, the  $\{z_r\}_{r\in [m_0]}$ are mutually independent.
Moreover,
$-\frac {1}{m_0}\leq z_r\leq \frac {1}{m_0}$.
By Hoeffding's inequality (Lemma \ref{lem:hoeffding}), we have that for all $t>0$,
\begin{align*}
\Pr \left[ | H_{i,j}^{\dis,b} - H_{i,j}^{\cts} | \geq t \right]
\leq & ~ 2\exp \Big( -\frac{2t^2}{4/m_0} \Big)  =  ~ 2\exp(-m_0 t^2/2).
\end{align*}

Setting $t=( \frac{1}{m_0} 2 \log (2n^2 B /\delta) )^{1/2}$, 
we can apply a union bound over $b$ and all pairs $(i,j)$ to get that with probability at least $1-\delta$,
for all $i,j\in [n]$,
\begin{align*}
|H_{i,j}^{\dis} - H_{i,j}^{\cts}|
\leq \Big( \frac{2}{m_0}\log (2n^2B/\delta) \Big)^{1/2}
\leq 4 \Big( \frac{\log ( nB/\delta ) }{m_0} \Big)^{1/2}.
\end{align*}
Thus, we have
\begin{align*}
\|H^{\dis} - H^{\cts}\|^2 
\leq & ~ \|H^{\dis} - H^{\cts}\|_F^2 \\
 = & ~ \sum_{i=1}^n\sum_{j=1}^n |H_{i,j}^{\dis} - H_{i,j}^{\cts}|^2 \\
 \leq & ~ \frac{1}{m_0} 16n^2\log (nB/\delta).
\end{align*}
Hence, if $m_0 = \Omega( \lambda^{-2} n^2\log (nB/\delta) )$, we have the desired result.
 \end{proof}

\subsection{Bounding changes of \texorpdfstring{$H$}{H} when \texorpdfstring{$w$}{w} is in a small ball}

In this section, we bound the change of $H$ when $w$ is in a small ball.
We define the event
\begin{align*}
A_{i,r} = \left\{ \exists u : \| u - \wt{w}_r \|_2 \leq R, {\bf 1}_{ x_i^\top \wt{w}_r \geq 0 } \neq {\bf 1}_{ x_i^\top u \geq 0 } \right\}.
\end{align*}
Note this event happens if and only if $| \wt{w}_r^\top x_i | < R$. Recall that $\wt{w}_r \sim \N(0,I)$. By anti-concentration of the Gaussian distribution (Lemma~\ref{lem:anti_gaussian}), we have
\begin{align}\label{eq:Air_bound}
\Pr[ A_{i,r} ] = \Pr_{ z \sim \N(0,1) } [ | z | < R ] \leq \frac{ 2 R }{ \sqrt{2\pi} }.
\end{align}

We prove the following perturbation Lemma, which is a variation of Lemma 3.2 in \cite{sy19} and Lemma 3.2 in \cite{dzps19}.
\begin{lemma}[Formal version of Lemma~\ref{lem:3.2:intro}]\label{lem:3.2}
Let $R \in (0,1)$. Let $\{w_1, w_2, \ldots, w_m\}$ denote a collection of weight vectors constructed as in  Definition~\ref{def:initialization}. 
For any set of weight vectors $\wt{w}_1, \ldots, \wt{w}_m \in \R^d$ that satisfy that for any $r\in [m]$, $\| \wt{w}_r - w_r \|_2 \leq R$, consider the map $H : \R^{m \times d} \rightarrow \R^{n \times n}$ defined by 
\begin{align*}
    H(w)_{i,j} =  \frac{1}{m} x_i^\top x_j \sum_{r=1}^m {\bf 1}_{ \wt{w}_r^\top x_i \geq 0, \wt{w}_r^\top x_j \geq 0 } .
\end{align*}
Then we have that 
\begin{align*}
\| H (w) - H(\wt{w}) \|_F < 2 n R,
\end{align*}
holds with probability at least $1-n^2 \cdot B \cdot \exp(-m_0 R /10)$.
\end{lemma}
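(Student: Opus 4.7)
The plan is to reduce $\|H(w)-H(\wt w)\|_F$ to a count of blocks $r\in[m_0]$ whose initial sign pattern is fragile under $R$-perturbation, and then combine Gaussian anti-concentration with a Bernstein-type concentration bound. The key observation is that the coupled initialization (Definition~\ref{def:initialization}) supplies only $m_0$ independent Gaussian directions: within each block all $B$ vectors $w_{r,1},\dots,w_{r,B}$ coincide, so the sign pattern event depends only on $w_{r,1}$.

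First I would fix $i\in[n]$ and $r\in[m_0]$ and define the event $A_{i,r}=\{\exists u\in\R^d:\|u-w_{r,1}\|_2\le R,\ \mathbf{1}_{w_{r,1}^\top x_i\ge 0}\neq \mathbf{1}_{u^\top x_i\ge 0}\}$. This event equals $\{|w_{r,1}^\top x_i|<R\}$, and since $w_{r,1}^\top x_i\sim\N(0,1)$ I obtain $\Pr[A_{i,r}]\le 2R/\sqrt{2\pi}$, exactly as in~\eqref{eq:Air_bound}. Because the hypothesis $\|\wt w_{r,b}-w_{r,b}\|_2=\|\wt w_{r,b}-w_{r,1}\|_2\le R$ holds for every $b\in[B]$, any disagreement between the initial and perturbed product sign patterns for the pair $(x_i,x_j)$ forces $A_{i,r}$ or $A_{j,r}$. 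Setting $T_i:=\{r\in[m_0]:A_{i,r}\}$ and using $|x_i^\top x_j|\le 1$ this yields the deterministic entrywise bound
\[ |H(w)_{i,j}-H(\wt w)_{i,j}|\le \frac{1}{m}\sum_{r=1}^{m_0}\sum_{b=1}^{B}\bigl(\mathbf{1}[A_{i,r}]+\mathbf{1}[A_{j,r}]\bigr)=\frac{|T_i|+|T_j|}{m_0}. \]

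Next I would bound $|T_i|$ by concentration over the $m_0$ independent blocks. The summands $\mathbf{1}[A_{i,r}]$ are i.i.d. Bernoulli with parameter $p\le 2R/\sqrt{2\pi}<R$, so $\E[|T_i|]<m_0 R$. Bernstein's inequality (Lemma~\ref{lem:bernstein}) applied to the centered sum, together with $\Var(\mathbf{1}[A_{i,r}])\le p$ and the one-sided deviation $m_0 R-\E[|T_i|]=\Theta(m_0 R)$, yields $\Pr[|T_i|\ge m_0 R]\le\exp(-m_0 R/10)$ after tracking the constants. A union bound over the $n^2 B$ triples $(i,j,b)$, matching the form of the stated failure probability, then guarantees simultaneously for every $i\in[n]$ that $|T_i|<m_0 R$ with probability at least $1-n^2 B\exp(-m_0 R/10)$.

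Finally, on this good event I would convert the entrywise control into a Frobenius bound via $(|T_i|+|T_j|)^2\le 2(|T_i|^2+|T_j|^2)$:
\[ \|H(w)-H(\wt w)\|_F^2 \le \sum_{i,j=1}^{n}\frac{(|T_i|+|T_j|)^2}{m_0^2} \le \frac{4n}{m_0^2}\sum_{i=1}^{n}|T_i|^2 < 4n^2 R^2, \]
giving $\|H(w)-H(\wt w)\|_F<2nR$. The main obstacle is the concentration step: the coupling replaces $m$ independent Gaussians by $m_0=m/B$, so one must verify that the $\exp(-m_0 R/10)$ bound in the statement is precisely what comes out of Bernstein applied at the block level. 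Book-keeping the constants is the only slightly delicate piece, but it mirrors the corresponding calculation in~\cite{dzps19,sy19}.
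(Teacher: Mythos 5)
Your argument is essentially the paper's own proof, organized slightly differently. The shared ingredients are: the anti-concentration event $A_{i,r}$; the observation that a flip of the joint indicator for $(x_i,x_j)$ forces $A_{i,r}\vee A_{j,r}$; a Bernstein bound at the level of the $m_0$ independent block representatives $w_{r,1}$; and a final union bound. The paper applies Bernstein to the per-pair count $\sum_{r\in[m_0]}\mathbf 1[A_{i,r}\vee A_{j,r}]$ for each $(i,j)$, whereas you first concentrate the per-point counts $|T_i|=\sum_{r\in[m_0]}\mathbf 1[A_{i,r}]$ and then assemble the pairwise bound via $(|T_i|+|T_j|)^2\le 2(|T_i|^2+|T_j|^2)$; both routes work. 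The constant bookkeeping you flag is indeed the only delicate point: $\E[|T_i|]$ can be as large as $m_0\cdot 2R/\sqrt{2\pi}\approx 0.8\,m_0R$, so the margin to your threshold $m_0R$ is only about $0.2\,m_0R$, and Bernstein then yields an exponent noticeably smaller than $m_0R/10$; relaxing the threshold to, say, $3m_0R$ (which is effectively what the paper does by taking $t=R$ in Bernstein) restores $\exp(-m_0R/10)$ but pushes the final Frobenius constant above $2nR$. This tension is not specific to your write-up: the paper's own calculation literally establishes $\|H(w)-H(\wt{w})\|_F\le 3nR$ rather than the stated $2nR$, so this is a shared, inessential constant discrepancy and not a gap in your reasoning.
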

\begin{proof}

The random variable we care about is
\begin{align*}
 ~ \sum_{i=1}^n \sum_{j=1}^n | H(\wt{w})_{i,j} - H(w)_{i,j} |^2 \leq & ~ \frac{1}{m^2} \sum_{i=1}^n \sum_{j=1}^n \left( \sum_{r=1}^m {\bf 1}_{ \wt{w}_r^\top x_i \geq 0, \wt{w}_r^\top x_j \geq 0} - {\bf 1}_{ w_r^\top x_i \geq 0 , w_r^\top x_j \geq 0 } \right)^2 \\
= & ~ \frac{1}{m^2} \sum_{i=1}^n \sum_{j=1}^n  \Big( \sum_{r=1}^m s_{r,i,j} \Big)^2 ,
\end{align*}
where the last step follows from defining, for each $r,i,j$, 
\begin{align*}
s_{r,i,j} :=  {\bf 1}_{ \wt{w}_r^\top x_i \geq 0, \wt{w}_r^\top x_j \geq 0} - {\bf 1}_{ w_r^\top x_i \geq 0 , w_r^\top x_j \geq 0 } .
\end{align*} 

Now consider that $i,j$ are fixed. We simplify $s_{r,i,j}$ to $s_r$.

Then $s_r$ is a random variable that only depends on $w_r$.

If  events $\neg A_{i,r}$ and $\neg A_{j,r}$ happen,
then 
\begin{align*}
\left| {\bf 1}_{ \wt{w}_r^\top x_i \geq 0, \wt{w}_r^\top x_j \geq 0} - {\bf 1}_{ w_r^\top x_i \geq 0 , w_r^\top x_j \geq 0 } \right|=0.
\end{align*}
If   $A_{i,r}$ or $A_{j,r}$ happens,
then 
\begin{align*}
\left| {\bf 1}_{ \wt{w}_r^\top x_i \geq 0, \wt{w}_r^\top x_j \geq 0} - {\bf 1}_{ w_r^\top x_i \geq 0 , w_r^\top x_j \geq 0 } \right|\leq 1.
\end{align*}
Thus we have {
\begin{align*}
 \E_{\wt{w}_r}[s_r]\leq \E_{\wt{w}_r} \left[ {\bf 1}_{A_{i,r}\vee A_{j,r}} \right] 
 \leq & ~ \Pr[A_{i,r}]+\Pr[A_{j,r}] \\
 \leq & ~ \frac {4 R}{\sqrt{2\pi}} \\
 \leq & ~ 2 R,
\end{align*}}
and {
\begin{align*}
    \E_{\wt{w}_r} \left[ \left(s_r-\E_{\wt{w}_r}[s_r] \right)^2 \right]
    = & ~ \E_{\wt{w}_r}[s_r^2]-\E_{\wt{w}_r}[s_r]^2 \\
    \leq & ~ \E_{\wt{w}_r}[s_r^2]\\
    \leq & ~\E_{\wt{w}_r} \left[ \left( {\bf 1}_{A_{i,r}\vee A_{j,r}} \right)^2 \right] \\
     \leq & ~ \frac {4R}{\sqrt{2\pi}} \\
     \leq  &~ 2 R .
\end{align*}}
We also have $|s_r|\leq 1$. 

Fix $b \in B$ and consider $s_{1, b}, \ldots, s_{m_0, b}$. Applying Bernstein's inequality (Lemma~\ref{lem:bernstein}), we get that for all $t>0$,{
\begin{align*}
     ~ \Pr \left[\sum_{r=1}^{m_0} s_{r, b} \geq 2 m_0 R +m_0 t \right] 
    \leq & ~ \Pr \left[\sum_{r=1}^{m_0} (s_{r, b}-\E[s_{r, b}])\geq m_0 t \right]\\
    \leq & ~ \exp \left( - \frac{ m_0^2t^2/2 }{ 2 m_0 R   + m_0 t / 3 } \right).
\end{align*}}
Choosing $t = R$, we get that 
\begin{align*}
    \Pr \left[\sum_{r=1}^{m_0} s_{r, b} \geq 3 m_0 R  \right]
    \leq & ~ \exp \left( -\frac{ m_0^2  R^2 /2 }{ 2 m_0 R + m_0  R /3 } \right) 
     \leq  ~ \exp \left( - m_0 R / 10 \right) .
\end{align*}
Thus, we have 
\begin{align*}
\Pr \left[ \frac{1}{m_0} \sum_{r=1}^{m_0} s_r \geq 3  R \right] \leq \exp( - m_0 R /10 ).
\end{align*}
Next, taking a union bound over $B$ such events, 
\begin{align*}
\Pr \left[ \frac{1}{m} \sum_{r=1}^{m} s_r \geq 3  R \right] 
=  & ~
\Pr \left[ \frac{1}{B} \sum_{b=1}^B \frac{1}{m_0} \sum_{r=1}^{m_0} s_{r, b} \geq 3  R \right] \\
= & ~
\Pr \left[ \sum_{b=1}^B \frac{1}{m_0} \sum_{r=1}^{m_0} s_{r, b} \geq 3  R \cdot B \right] \leq  ~ B \cdot \exp( - m_0 R /10 ).
\end{align*}
This completes the proof.
\end{proof}

\section{Analysis: convergence}\label{sec:analysis_convergence}

\subsection{The continuous case}
\label{sec:cont}

We first consider the continuous case, in which the learning rate $\eta$ is sufficiently small. This provides an intuition for the discrete case.

For any $s \in [0,t]$, we define the kernel matrix $H(s) \in \R^{n \times n}$:  
\begin{align*}
H(s)_{i,j} = \frac{1}{m} \sum_{r=1}^m x_i^\top x_j {\bf 1}_{ w_r(s)^\top x_i \geq 0, w_r(s)^\top x_j \geq 0 }.
\end{align*} 
We consider the following dynamics of a gradient update: 
\begin{align}
\label{eq:gradient-dynamic-continuous}
\frac{\partial W(t)}{\partial t} = \frac{1}{n^2}\frac{\partial L(W(t))}{\partial W(t)} 
\end{align}
The dynamics of prediction can be written as follows, which is a simple calculation: 
\begin{fact}\label{fact:dudt}
$
\frac{\d}{\d t} u(t)= \frac{m}{n^2}H(t) \cdot (y-u(t)) .
$
\end{fact}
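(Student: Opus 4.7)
The plan is to differentiate $u_i(t) = \sum_{r=1}^m a_r \phi(w_r(t)^\top x_i)$ using the chain rule, with the sub-gradient convention already adopted for the ReLU in Eq.~\eqref{eq:relu_derivative}. This will give
\[
\frac{d u_i(t)}{dt} \;=\; \sum_{r=1}^m \left\langle \frac{\partial f_i(W(t))}{\partial w_r},\; \frac{d w_r(t)}{dt} \right\rangle \;=\; \sum_{r=1}^m a_r \mathbf{1}[w_r(t)^\top x_i \geq 0]\, x_i^\top \frac{d w_r(t)}{dt}.
\]
Then I would substitute the gradient-flow rule Eq.~\eqref{eq:gradient-dynamic-continuous} together with the explicit squared-loss gradient from Eq.~\eqref{eq:gradient1} (read with the gradient-descent sign, so that $\partial_t w_r$ equals $-n^{-2}\partial L/\partial w_r$; the statement as written in Fact~\ref{fact:dudt} requires this sign convention on the flow).

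After substitution, each summand contains the product $a_r \mathbf{1}[w_r^\top x_i \geq 0] \cdot a_r \mathbf{1}[w_r^\top x_j \geq 0]\, x_i^\top x_j$. The key algebraic simplification is that $a_r^2 = 1$ for every $r$ regardless of the sign pattern fixed at initialization, so the second-layer weights disappear from the expression entirely. What remains is
\[
\frac{d u_i(t)}{dt} \;=\; \frac{1}{n^2} \sum_{j=1}^n (y_j - u_j(t)) \sum_{r=1}^m x_i^\top x_j\, \mathbf{1}[w_r(t)^\top x_i \geq 0,\; w_r(t)^\top x_j \geq 0],
\]
and recognising the inner sum as exactly $m \cdot H(t)_{i,j}$ by the definition of $H(t)$ in Section~\ref{sec:cont} yields the claim in vector form.

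The only delicate point is the measure-zero set on which some inner product $w_r(t)^\top x_i$ vanishes: there the ReLU is not differentiable, and I would handle it exactly as in \cite{dzps19} by taking the sub-gradient representative from Eq.~\eqref{eq:relu_derivative}, noting that the activation-pattern boundaries are encountered only on a Lebesgue-null set of times along the piecewise-analytic trajectory, so that the one-sided derivatives of $u_i(t)$ agree almost everywhere with the right-hand side. I do not expect any real obstacle; the statement is a purely kinematic identity for the continuous flow, and, in particular, the coupled initialization of Definition~\ref{def:initialization} plays no role in its derivation since only $a_r^2 = 1$ is used.
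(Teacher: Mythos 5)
Your proof is correct and follows essentially the same route as the paper's: chain rule, substitution of the gradient flow and the squared-loss gradient, the $a_r^2=1$ cancellation, and recognition of the definition of $H(t)$. Your side remark about the sign is a fair catch — Eq.~\eqref{eq:gradient-dynamic-continuous} as printed omits the minus sign, and the paper's own derivation silently reinstates it, so the correct flow is $\partial_t W = -n^{-2}\,\partial L/\partial W$ exactly as you state.
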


\begin{proof}
	For each $i\in [n]$,
	we have
	
	\begin{align*}
	  ~ \frac{\d}{\d t} u_i(t) 
	= & ~ \sum_{r=1}^m \left\langle \frac{\partial f(W(t),a,x_i)}{\partial w_r(t)},\frac{\d w_r(t)}{\d t} \right\rangle\\
	= & ~ \sum_{r=1}^m \left\langle \frac{\partial f(W(t),a,x_i)}{\partial w_r(t)},-\frac{1}{n^2}\cdot \frac{\partial L(w(t),a)}{\partial w_r(t)} \right\rangle\\
	= & ~ \frac{1}{n^2}\sum_{r=1}^m \Big\langle \frac{\partial f(W(t),a,x_i)}{\partial w_r(t)},
	- \sum_{j=1}^n ( f(W,x_j,a_r) - y_i ) a_r x_j {\bf 1}_{ w_r^\top x_j \geq 0 } \Big\rangle\\
	= & ~ \frac{1}{n^2}\sum_{j=1}^n (y_j-u_j(t)) \sum_{r=1}^{m}\cdot \left\langle \frac{\partial f(W(t),a,x_i)}{\partial w_r(t)},\frac{\partial f(W(t),a,x_j)}{\partial w_r(t)} \right\rangle\\
	= & ~ \sum_{j=1}^n (y_j-u_j(t)) \frac{m}{n^2} \cdot H(t)_{i,j}
	\end{align*}
	where the first step follows from the chain rule, the second step follows from Eq.~\eqref{eq:gradient-dynamic-continuous}, the third step uses Eq. \eqref{eq:gradient1}, the fourth step uses Eq. \eqref{eq:relu_derivative},  and the last step uses the definition of the matrix $H$.
\end{proof}

\begin{lemma}\label{lem:3.3}
Suppose for $0 \leq s \leq t$, $\lambda_{\min} ( H( w(s) ) ) \geq \lambda / 2$. Let $D_{\cts}$ be defined as
$
D_{\cts} := \frac{ \sqrt{n} \| y - u(0) \|_2 }{ m \lambda }.
$
Then we have 
\begin{align*}
1. & ~ & \| w_r(t) - w_r(0) \|_2 \leq & ~ D_{\cts} , \forall r \in [m], \\
2. & ~ & \| y - u(t) \|_2^2 \leq  &~ \exp( - \lambda t ) \cdot \| y - u(0) \|_2^2.
\end{align*}
\end{lemma}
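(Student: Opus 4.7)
Both parts follow from the standard strategy of differentiating the squared prediction error along the gradient flow, applying the spectral lower bound to obtain exponential decay, and then integrating the implied velocity bound on each weight row.

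For part 2, I would combine Fact~\ref{fact:dudt} with the chain rule to obtain
\begin{align*}
\frac{\d}{\d t}\|y - u(t)\|_2^2
= -2(y - u(t))^\top \frac{\d u(t)}{\d t}
= -\frac{2m}{n^2}(y-u(t))^\top H(t)(y-u(t)).
\end{align*}
The hypothesis $\lambda_{\min}(H(s))\geq \lambda/2$ on $[0,t]$ makes the right-hand side at most $-\lambda_{\mathrm{eff}}\|y-u(s)\|_2^2$, where $\lambda_{\mathrm{eff}}$ reduces to $\lambda$ once one accounts for the normalization of the continuous-time flow in \eqref{eq:gradient-dynamic-continuous}. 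Grönwall's inequality then yields $\|y-u(t)\|_2^2 \leq e^{-\lambda t}\|y-u(0)\|_2^2$, which is part 2.

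For part 1, I would bound the pointwise velocity of each weight row. From the gradient formula \eqref{eq:gradient1} together with $\|x_i\|_2 = 1$ and $|a_r|=1$, Cauchy--Schwarz gives
\begin{align*}
\left\|\frac{\partial L(W(s))}{\partial w_r(s)}\right\|_2
\leq \sum_{i=1}^n |u_i(s) - y_i|
\leq \sqrt{n}\,\|y - u(s)\|_2.
\end{align*}
Substituting into the continuous dynamics \eqref{eq:gradient-dynamic-continuous} and invoking part 2, the velocity $\|\d w_r(s)/\d s\|_2$ is bounded (up to the overall normalization of the flow) by a term proportional to $\tfrac{\sqrt{n}}{m}\,e^{-\lambda s/2}\|y - u(0)\|_2$. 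Integrating from $0$ to $t$ converts the $e^{-\lambda s/2}$ factor into a $2/\lambda$ factor, yielding $\|w_r(t) - w_r(0)\|_2 \leq D_{\cts}$.

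\textbf{Main obstacle.} Once Fact~\ref{fact:dudt} is in hand the argument is just a Grönwall computation together with an integrated velocity bound, so the present lemma itself is not the bottleneck. The real subtlety lies in how this lemma is later used: the spectral hypothesis $\lambda_{\min}(H(s))\geq \lambda/2$ is taken as a standing assumption here, and the hard step -- deferred to a subsequent statement -- is to bootstrap it globally in time by feeding the weight-displacement bound of part 1 into the perturbation estimate of Lemma~\ref{lem:3.2} and combining with the initialization concentration of Lemma~\ref{lem:3.1}. A secondary, purely book-keeping issue is reconciling the exact normalization constants appearing in \eqref{eq:gradient-dynamic-continuous}, Fact~\ref{fact:dudt}, and the definition of $D_{\cts}$; these only affect constant factors and are orthogonal to the main argument.
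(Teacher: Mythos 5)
Your proposal follows essentially the same route as the paper: use Fact~\ref{fact:dudt} to derive a differential inequality for $\|y-u(t)\|_2^2$, invoke $\lambda_{\min}(H(s))\geq\lambda/2$ and Gr\"onwall to get the exponential decay, then bound $\|\tfrac{\d}{\d s}w_r(s)\|_2$ via Cauchy--Schwarz (using $\|x_i\|_2=1$, $|a_r|=1$) and integrate. Your intermediate expression places a spurious $1/m$ inside the integrand rather than letting it emerge from integrating $e^{-\frac{m\lambda}{2n^2}s}$, but as you note this is a book-keeping slip about normalization constants and does not affect the argument's substance.
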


\begin{proof}
	
	Recall that we can write the dynamics of prediction as 
	\begin{align*}
	\frac{ \d }{ \d t} u(t) = \frac{m}{n^2} \cdot H(t) \cdot ( y - u(t) ) .
	\end{align*}
	We can calculate the loss function dynamics
	\begin{align*}
	\frac{\d }{ \d t } \| y - u(t) \|_2^2
	= & ~ - 2 ( y - u(t) )^\top \cdot \frac{m}{n^2} \cdot H(t) \cdot ( y - u(t) ) \\
	\leq & ~ - \frac{m}{n^2} \lambda \| y - u(t) \|_2^2 .
	\end{align*}
	
	Thus we have $\frac{\d}{ \d t} ( \exp( \frac{m}{n^2} \lambda t) \| y - u(t) \|_2^2 ) \leq 0$ and that $\exp( \frac{m}{n^2} \lambda t ) \| y - u(t) \|_2^2$ is a decreasing function with respect to $t$.
	
	Using this fact, we can bound the loss by
	\begin{align}\label{eq:yut}
	\| y - u(t) \|_2^2 \leq \exp( - \frac{m}{n^2} \lambda t ) \| y - u(0) \|_2^2.
	\end{align}

	Now, we can bound the gradient norm. For $0 \leq s \leq t$,
	\begin{align}\label{eq:gradient_bound}
	 \left\| \frac{ \d }{ \d s } w_r(s) \right\|_2 
	= & ~ \frac{1}{n^2}\left\| \sum_{i=1}^n (y_i - u_i) \cdot a_r x_i \cdot {\bf 1}_{ w_r(s)^\top x_i \geq 0 } \right\|_2 \notag\\
	\leq & ~ \frac{1}{n^2}\sum_{i=1}^n | y_i - u_i(s) | \notag\\
	\leq & ~\frac{1}{n^{3/2}}\| y - u(s) \|_2 \\
	\leq & ~ \frac{1}{n^{3/2}} \exp( - \frac{m}{n^2} \lambda s ) \| y - u(0) \|_2.\notag 
	\end{align}
	where the first step follows from Eq. \eqref{eq:gradient1} and Eq.~\eqref{eq:gradient-dynamic-continuous}, the second step follows from the triangle inequality and $a_r=\pm 1$ for $r\in [m]$ and $\|x_i\|_2=1$ for $i\in [n]$, the third step follows from the Cauchy-Schwarz inequality, and the last step follows from Eq. \eqref{eq:yut}.
	
	Integrating the gradient, we can bound the distance from the initialization
	\begin{align*}
	\| w_r(t) - w_r(0) \|_2 \leq & ~ \int_0^t \left\| \frac{\d}{ \d s} w_r(s) \right\|_2 \d s \\
	\leq & ~ \frac{ \sqrt{n} \| y - u(0) \|_2 }{ m \lambda } .
	\end{align*}
\end{proof}

\begin{lemma}\label{lem:3.4}
If $D_{\cts}<R$.
then for all $t\geq 0$,
$\lambda_{\min}(H(t))\geq \frac{1}{2}\lambda$.
Moreover,
\begin{align*}
1. & ~ & \|w_r(t)-w_r(0)\|_2 \leq & ~ D_{\cts}, \forall r \in [m], \\
2. & ~ & \|y-u(t)\|_2^2 \leq & ~ \exp(-\frac{m}{n^2}  \lambda t) \cdot \|y-u(0)\|_2^2.
\end{align*}
\end{lemma}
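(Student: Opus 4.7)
\medskip

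\noindent\textbf{Proof plan for Lemma \ref{lem:3.4}.}
The plan is a standard continuity (bootstrap) argument that couples the weight movement, the stability of $H(t)$, and the lower bound on $\lambda_{\min}(H(t))$ into a self-sustaining invariant. I would define
\[
  t_0 := \sup\bigl\{\, t \geq 0 : \lambda_{\min}(H(s)) \geq \tfrac{\lambda}{2} \text{ for all } s \in [0,t] \,\bigr\},
\]
and show $t_0 = \infty$. That $t_0 > 0$ follows from Lemma \ref{lem:3.1}, which gives $\lambda_{\min}(H(0)) \geq \tfrac{3\lambda}{4}$, together with the continuity of $w_r(t)$ in $t$ (hence of $H(t)$, via an averaging argument over the indicator patterns, where the set of $t$ on which any indicator $\mathbf{1}_{w_r(t)^\top x_i \geq 0}$ can flip is measure-zero by the anti-concentration of Gaussians).

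Next, suppose for contradiction that $t_0 < \infty$. On $[0,t_0)$ the hypothesis of Lemma \ref{lem:3.3} is satisfied, so that lemma delivers the two desired conclusions on $[0,t_0)$:
\[
  \|w_r(s) - w_r(0)\|_2 \leq D_{\cts} < R,
  \qquad
  \|y - u(s)\|_2^2 \leq \exp\!\bigl(-\tfrac{m}{n^2}\lambda s\bigr)\|y - u(0)\|_2^2,
\]
for every $r \in [m]$ and $s \in [0,t_0)$. By continuity of $w_r(\cdot)$ the weight bound extends to $s = t_0$, still with $D_{\cts} < R$. I would then invoke Lemma \ref{lem:3.2} with $\widetilde{w}_r = w_r(t_0)$ to conclude
\[
  \|H(t_0) - H(0)\|_F \;<\; 2 n D_{\cts},
\]
and combine this with $\|H(0) - H^{\cts}\|_F \leq \tfrac{\lambda}{4}$ from Lemma \ref{lem:3.1}. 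Provided $R$ (and hence $D_{\cts}$) is small enough to ensure $2 n D_{\cts} \leq \tfrac{\lambda}{4}$ (which is the scaling implicitly baked into the hypothesis $D_{\cts} < R$ under the standing choice of $R$ elsewhere in the paper), Weyl's inequality then yields $\lambda_{\min}(H(t_0)) \geq \lambda - \tfrac{\lambda}{2} = \tfrac{\lambda}{2}$, in fact with a strict slack. By continuity of $\lambda_{\min}(H(\cdot))$ at $t_0$, the condition $\lambda_{\min}(H(s)) \geq \tfrac{\lambda}{2}$ must persist on some open neighborhood $[0, t_0 + \varepsilon)$, contradicting the definition of $t_0$. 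Hence $t_0 = \infty$, and Lemma \ref{lem:3.3} applies on all of $[0,\infty)$, yielding both stated conclusions.

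The main obstacle I expect is the continuity of $\lambda_{\min}(H(t))$ at the boundary point $t_0$: the matrix $H(t)$ is a sum of products of indicators that jump as $t$ varies, so one should argue that (i) with probability one no $\mathbf{1}_{w_r(t)^\top x_i \geq 0}$ flips exactly at $t_0$, or (ii) one-sided continuity plus the strict slack obtained above is enough. A clean way is to note that the set of times at which some indicator changes is at most countable (by the smooth dynamics of the weights together with Gaussian anti-concentration on $\|w_r(0)\|_2$) and use left-continuity to close the argument. Everything else is a clean chaining of Lemmas \ref{lem:3.1}, \ref{lem:3.2}, and \ref{lem:3.3}.
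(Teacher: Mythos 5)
Your plan is the same bootstrap contradiction argument as the paper, chaining Lemmas~\ref{lem:3.1}, \ref{lem:3.2}, and \ref{lem:3.3}, but you chose a different stopping-time invariant, and that choice is where the trouble lives. You set $t_0$ as the last time the eigenvalue condition $\lambda_{\min}(H(s))\geq\lambda/2$ holds; the paper instead sets $t_0 = \inf\{t>0 : \max_r \|w_r(t)-w_r(0)\|_2 \geq R\}$, i.e., it tracks the \emph{weight displacement}, not the eigenvalue. The paper then argues: for all $t'\leq t_0$ every weight has moved at most $R$, so Lemma~\ref{lem:3.2} gives $\lambda_{\min}(H(t'))\geq\lambda/2$; Lemma~\ref{lem:3.3} then forces $\|w_r(t_0)-w_r(0)\|_2\leq D_{\cts}<R$, contradicting that at $t_0$ some weight has displacement exactly $R$. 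The decisive advantage of this choice is that $t\mapsto \|w_r(t)-w_r(0)\|_2$ is genuinely continuous (it is the integral of a bounded vector field), so "at $t_0$ some displacement equals $R$" is immediate and there is nothing to patch. Your choice requires you to push the eigenvalue bound from $[0,t_0)$ up to and past $t_0$, and $H(t)$ is a piecewise-constant matrix that can jump at activation-pattern flip times, so you are forced into the one-sided-continuity discussion you yourself flag as "the main obstacle." The workaround you sketch (countably many flip times plus left-continuity plus Gaussian anti-concentration at $t=0$) does not obviously deliver the \emph{right}-continuity you need at $t_0$ to extend the invariant forward, and it is not the route the paper takes. Two smaller inaccuracies worth noting: Lemma~\ref{lem:3.2} gives $\|H(t_0)-H(0)\|_F < 2nR$, not $< 2nD_{\cts}$ (the $R$ in its conclusion is the assumed displacement radius, a fixed parameter, not the actual displacement; using $D_{\cts}$ in its place would also degrade the failure probability), and the Weyl step should run off $\lambda_{\min}(H(0))\geq\tfrac{3}{4}\lambda$ from Lemma~\ref{lem:3.1}, losing only the single $\tfrac{\lambda}{4}$ from Lemma~\ref{lem:3.2}. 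Neither of these is fatal, but the continuity issue is exactly what the paper's choice of stopping time is designed to avoid; if you rewrite your proof tracking $\max_r\|w_r(t)-w_r(0)\|_2$ instead of $\lambda_{\min}(H(t))$, the obstacle disappears and you recover the paper's argument.
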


\begin{proof}
	Assume the conclusion does not hold at time $t$.
	We argue that there must be some $s\leq t$ so that $\lambda_{\min}( H ( s ) )<\frac {1}{2}\lambda$.

	If $\lambda_{\min}( H ( t  ) )<\frac {1}{2}\lambda$, then we can simply take $s=t$.
	
	Otherwise since the conclusion does not hold, there exists $r$ so that 
	\begin{align*}
	\|w_r(t)-w_r(0)\|\geq D_{\cts}
	\end{align*}
	or
	\begin{align*}
	\|y-u(t)\|_2^2>\exp(- \frac{m}{n^2}  \lambda t)\|y-u(0)\|_2^2.
	\end{align*}
	
	Then by Lemma \ref{lem:3.3}, there exists $s\leq t$ such that 
	\begin{align*}
	\lambda_{\min}( H ( s  ) )<\frac {1}{2}\lambda.
	\end{align*}
	
	By Lemma \ref{lem:3.2}, there exists $t_0> 0$ defined as
	\begin{align*}
	t_0=\inf \left\{ t>0 : \max_{r\in [m]} \|w_r(t)-w_r(0)\|_2^2\geq R \right\}.
	\end{align*}
	
	Thus at time $t_0$, there exists $r\in [m]$ satisfying $\|w_r(t_0)-w_r(0)\|_2^2=R$.
	
	By Lemma \ref{lem:3.2},
	\begin{align*}
	\lambda_{\min}(H(t'))\geq \frac {1}{2}\lambda , \forall t'\leq t_0.
	\end{align*}

	However, by Lemma \ref{lem:3.3}, this implies 
	\begin{align*}
	\|w_r(t_0)-w_r(0)\|_2\leq D_{\cts}<R,
	\end{align*}
	which is a contradiction.
\end{proof}

Combining Lemma~\ref{lem:3.3} and Lemma~\ref{lem:3.4}, we get that for a linear convergence to hold, it suffices to guarantee that

\begin{align*}
\| w_r(t+1) - w_r(0) \|_2 \leq \frac{ 4 \sqrt{n} \| y - u (0) \|_2 }{ m \lambda } < R 
\end{align*}
which implies
\begin{align*}
\frac{ 4 \sqrt{n} \sqrt{n} }{ m \lambda } < \frac{\lambda}{n} \Rightarrow m \geq O(n^2\lambda^{-2})
\end{align*}
Note that the first step holds since $\|y - u(0)\|_2 = O(\sqrt{n})$ (see Claim~\ref{cla:yu0}).

\subsection{The discrete case}
\label{sec:dis}

We next move to the discrete case. The major difference from the continuous case is that the learning rate is not negligible and there is a second order term for gradient descent which we need to handle.

\begin{theorem}[Formal version of Theorem~\ref{thm:main_informal}]\label{thm:main_formal}
Suppose there are $n$ input data points in $d$-dimensional space. 
Recall that $\lambda=\lambda_{\min}(H^{\cts})>0$.
Suppose the width of the neural network  satisfies that 
\begin{align*}
m = \Omega( \lambda^{-2} n^2 \log^3 (n/\delta) ).
\end{align*}
We initialize $W \in \R^{d \times m}$ and $a \in \R^m$ as in Definition~\ref{def:initialization}, and we set the step size, also called the learning rate, to be
\begin{align*}
\eta = O( \lambda / (n^2 m) ).
\end{align*}  
Then with probability at least $1-\delta$ over the random initialization, we have for $k = 0,1,2,\ldots$ that 
\begin{align}\label{eq:quartic_condition}
\| u (t) - y \|_2^2 \leq ( 1 - m \eta \lambda / 2 )^k \cdot \| u (0) - y \|_2^2.
\end{align}
Further, for any accuracy parameter $\epsilon \in (0,1)$, if we choose the number of iterations 
\begin{align*}
T = \Theta \Big( \frac{ \log (n/\epsilon) }{ m \eta \lambda } \Big) = \lambda^{-2} n^2 \log (n/\epsilon),
\end{align*}
then
\begin{align*}
\| u (T) - y \|_2^2 \leq \epsilon.
\end{align*}
\end{theorem}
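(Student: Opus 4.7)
\textbf{Proof plan for Theorem \ref{thm:main_formal}.} The plan is to establish Eq.~\eqref{eq:quartic_condition} by induction on $k$, using the continuous case analysis (Section~\ref{sec:cont}) as a guide but carefully accounting for the discretization second-order term. The base case $k=0$ is immediate. For the inductive step, I would assume Eq.~\eqref{eq:quartic_condition} holds for all $i \leq k$ and prove it for $k+1$. The first step is to invoke Lemma~\ref{lem:4.1:intro} to conclude that all weight vectors lie in the ball $\|w_r(t+1) - w_r(0)\|_2 \leq D := \frac{4\sqrt{n}\|y - u(0)\|_2}{m\lambda}$ for every $r \in [m]$. Here we critically exploit the coupled initialization (Definition~\ref{def:initialization}), which gives $u(0) = 0$ and hence $\|y - u(0)\|_2 = \|y\|_2 = O(\sqrt{n})$ (no $\sqrt{\log n}$ factor), yielding $D = O(n/(m\lambda))$.

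Given this bound, set $R := D$ and apply Lemma~\ref{lem:3.1} and Lemma~\ref{lem:3.2} to control the kernel matrix: Lemma~\ref{lem:3.1} gives $\lambda_{\min}(H(0)) \geq \tfrac{3}{4}\lambda$ w.h.p.\ once $m = \Omega(\lambda^{-2} n^2 \log(nB/\delta))$, and Lemma~\ref{lem:3.2} together with a union bound over iterations ensures $\|H(t) - H(0)\|_F \leq 2nR$ along the trajectory, so $\lambda_{\min}(H(t)) \geq \lambda/2$. Next, I would use the one-step decomposition
\begin{align*}
\|y - u(t+1)\|_2^2 = \|y - u(t)\|_2^2 + C_1 + C_2 + C_3 + C_4
\end{align*}
with the $C_i$'s defined in Section~\ref{sec:techoverview}, and bound each term separately via Claims~\ref{cla:C1}--\ref{cla:C4}: $C_1 \leq -m\eta\lambda\|y-u(t)\|_2^2$ (progress term from the kernel's spectral gap); $C_2, C_3 \lesssim m\eta n R\|y - u(t)\|_2^2$ (error from neurons whose ReLU activation pattern may flip, bounded by $R$ via the anti-concentration/perturbation analysis); and $C_4 \leq m^2\eta^2 n^2 \|y - u(t)\|_2^2$ (the discretization term, obtained by bounding $\|u(t+1) - u(t)\|_2$ via the gradient norm).

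Combining these gives the contraction factor $\alpha = 1 - m\eta\lambda + O(m\eta nR) + m^2\eta^2 n^2$. To force $\alpha \leq 1 - m\eta\lambda/2$, I need two conditions: (i) $m\eta n R = O(\eta \lambda \cdot (mnR/\lambda)) = O(\eta \lambda \cdot n^2/\lambda^2)$ must be absorbed, which requires $mR \ll \lambda/n$, i.e., $m \gtrsim n^2/\lambda^2$ (up to logs), and (ii) the discretization term $m^2\eta^2 n^2 \leq m\eta\lambda/4$ requires $\eta \leq \lambda/(4 mn^2)$, matching the theorem's choice $\eta = O(\lambda/(mn^2))$. With these choices both terms are dominated by half of the progress term, closing the induction. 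Finally, iterating the contraction $T = \Theta(\log(n/\epsilon)/(m\eta\lambda)) = \Theta(n^2 \lambda^{-2} \log(n/\epsilon))$ times and using $\|y - u(0)\|_2^2 = O(n)$ yields $\|u(T) - y\|_2^2 \leq \epsilon$.

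\textbf{Main obstacle.} The delicate part is the discretization term $C_4$, which did not appear in the continuous analysis of Section~\ref{sec:cont}. It couples $\eta$ to $m$ (through $\eta m^2 \eta n^2$) and forces a smaller learning rate than in prior work, but this is exactly compensated by the $1/\sqrt{m}$-free output function so that the per-iteration \emph{absolute} progress is preserved. A secondary subtlety is verifying that the perturbation/anti-concentration arguments of Lemma~\ref{lem:3.2} remain valid under coupled initialization: since each random direction appears twice with opposite second-layer sign, we must take $B = 2$ in Lemma~\ref{lem:3.1} and take union bounds carefully, but the dependence between the two coupled copies does not affect the ReLU activation patterns (which depend only on $\mathrm{sign}(w_r^\top x_i)$), so the concentration bounds go through unchanged up to a constant factor in $m$.
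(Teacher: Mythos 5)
Your proposal follows the paper's own proof essentially step by step: induction on the iteration count via Lemma~\ref{lem:4.1} to bound the weight movement, the one-step decomposition into $C_1,\dots,C_4$ controlled by Claims~\ref{cla:C1}--\ref{cla:C4}, and the crucial use of the coupled initialization to get $u(0)=0$ and hence $\|y-u(0)\|_2=O(\sqrt{n})$, which is exactly what lowers the width to $\Omega(\lambda^{-2}n^2)$ and forces $\eta=O(\lambda/(n^2 m))$. The only cosmetic deviations are that you set $R:=D$ rather than fixing $R=\lambda/(64n)$ and separately verifying $D<R$, and that your union bound over iterations is unnecessary since the high-probability event in Lemma~\ref{lem:3.2} depends only on the random initialization and holds simultaneously for all perturbations in the radius-$R$ ball; neither changes the argument.
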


\paragraph{Correctness}
We prove Theorem \ref{thm:main_formal} by induction.
The base case is $i=0$ and it is trivially true.
Assume for $i=0,\ldots,k$ we have proved Eq. \eqref{eq:quartic_condition} to be true.
We want to show that Eq. \eqref{eq:quartic_condition} holds for $i=k+1$.

From the induction hypothesis,
we have the following Lemma (see proof in Section \ref{sec:missing_proof}) stating that the weights should not change too much. 
Note that the Lemma is a variation of Corollary 4.1 in \cite{dzps19}.
\begin{lemma}\label{lem:4.1}
If Eq. \eqref{eq:quartic_condition} holds for $i = 0, \ldots, k$, then we have for all $r\in [m]$
\begin{align*}
\| w_r(t+1) - w_r(0) \|_2 \leq \frac{ 4 \sqrt{n} \| y - u (0) \|_2 }{ m \lambda } := D.
\end{align*}
\end{lemma}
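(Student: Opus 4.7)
The plan is to expand $w_r(k+1) - w_r(0)$ as a telescoping sum of gradient descent updates, bound each gradient by the current prediction error via the explicit formula \eqref{eq:gradient1}, invoke the induction hypothesis \eqref{eq:quartic_condition} to get geometric decay of these errors, and then sum the resulting geometric series.

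First I would use the update rule \eqref{eq:w_update} to write
\[
w_r(k+1) - w_r(0) \;=\; -\eta \sum_{i=0}^{k} \frac{\partial L(W(i))}{\partial w_r(i)},
\]
so by the triangle inequality it suffices to bound each $\bigl\lVert \partial L(W(i))/\partial w_r(i) \bigr\rVert_2$. Using the explicit gradient formula \eqref{eq:gradient1}, together with $a_r \in \{-1,+1\}$, $\|x_i\|_2 = 1$, and the indicator being bounded by $1$, I get
\[
\left\| \frac{\partial L(W(i))}{\partial w_r(i)} \right\|_2 \;\leq\; \sum_{j=1}^n |u_j(i) - y_j| \;\leq\; \sqrt{n}\, \| u(i) - y \|_2,
\]
where the last step is Cauchy–Schwarz. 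A key point is that this bound does not depend on $r$, so every weight vector's movement is controlled uniformly by the residual norm.

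Next I would invoke the induction hypothesis that \eqref{eq:quartic_condition} holds for $i = 0, \ldots, k$, giving $\| u(i) - y \|_2 \leq (1 - m\eta\lambda/2)^{i/2} \|u(0) - y\|_2$. Plugging in and summing,
\[
\| w_r(k+1) - w_r(0) \|_2 \;\leq\; \eta \sqrt{n}\, \| u(0) - y \|_2 \sum_{i=0}^{k} \bigl(1 - m\eta\lambda/2\bigr)^{i/2}.
\]
To bound the geometric series I would extend the sum to infinity and use the elementary inequality $1 - \sqrt{1-x} \geq x/2$ valid for $x \in [0,1]$ (checked e.g.\ by noting equality at $x=0$ and comparing derivatives), which gives
\[
\sum_{i=0}^{\infty} \bigl(1 - m\eta\lambda/2\bigr)^{i/2} \;=\; \frac{1}{1 - \sqrt{1 - m\eta\lambda/2}} \;\leq\; \frac{4}{m\eta\lambda}.
\]
Substituting back cancels the $\eta$ and yields
\[
\| w_r(k+1) - w_r(0) \|_2 \;\leq\; \eta \sqrt{n}\, \|u(0) - y\|_2 \cdot \frac{4}{m\eta\lambda} \;=\; \frac{4 \sqrt{n}\, \|y - u(0)\|_2}{m\lambda} \;=\; D,
\]
which is the desired bound. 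There is no serious obstacle here; the only thing to be careful about is ensuring the choice of learning rate makes $m\eta\lambda/2 \in (0,1)$ so the geometric inequality applies, which is immediate from the hypothesis $\eta = O(\lambda/(n^2 m))$ in Theorem \ref{thm:main_formal}.
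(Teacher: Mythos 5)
Your proof is correct and takes essentially the same route as the paper: telescope the update rule, bound each gradient norm by $\sqrt{n}\,\|y-u(i)\|_2$ via the gradient formula and Cauchy--Schwarz, apply the induction hypothesis for geometric decay, and sum the resulting series. The only cosmetic difference is that you justify the geometric-series bound cleanly via $1-\sqrt{1-x}\ge x/2$ (yielding $\le 4/(m\eta\lambda)$), whereas the paper states the sum as an equality with a constant that does not quite match the asserted value — your explicit inequality is the more careful version of the same step.
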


Next, we calculate the difference of predictions between two consecutive iterations, analogous to the $\frac{\d u_i(t)}{ \d t }$ term in Fact \ref{fact:dudt}.
For each $i \in [n]$, we have
{
\begin{align*}
u_i(t+1) - u_i(t) 
= & ~ \sum_{r=1}^m a_r \cdot \left( \phi( w_r(t+1)^\top x_i ) - \phi(w_r(t)^\top x_i ) \right) \\
= & ~ \sum_{r=1}^m a_r \cdot z_{i,r} .
\end{align*}
}
where
\begin{align*}
z_{i,r} :=  \phi \left( \Big( w_r(t) - \eta \frac{ \partial L( W(t) ) }{ \partial w_r(t) } \Big)^\top x_i \right) - \phi ( w_r(t)^\top x_i ) .
\end{align*}

Here we divide the right hand side into two parts. $v_{1,i}$ represents the terms for which the pattern does not change, while $v_{2,i}$ represents the terms for which the pattern may change. For each $i \in [n]$,
we define the set $S_i\subset [m]$ as
\begin{align*}
    S_i:= & ~ \{r\in [m]:\forall 
    w\in \mathbb{R}^d \text{ s.t. }   \|w-w_r(0)\|_2\leq R, 
     ~ \mathbf{1}_{w_r(0)^\top x_i\geq 0}=\mathbf{1}_{w^\top x_i\geq 0}\}.
\end{align*}
Then we define $v_{1,i}$ and $v_{2,i}$ as follows
{
\begin{align*}
v_{1,i} : = & ~ \sum_{r \in S_i} a_r z_{i,r}, \\
v_{2,i} : = & ~ \sum_{r \in \ov{S}_i} a_r z_{i,r}.
\end{align*} 
}

Define $H$ and $H^{\bot} \in \R^{n \times n}$ as
\begin{align}
H(t)_{i,j} := & ~ \frac{1}{m} \sum_{r=1}^m x_i^\top x_j {\bf 1}_{ w_r(t)^\top x_i \geq 0, w_r(t)^\top x_j \geq 0 } , \label{eq:def_H(t)}\\
H(t)^{\bot}_{i,j} := & ~ \frac{1}{m} \sum_{r\in \ov{S}_i} x_i^\top x_j {\bf 1}_{ w_r(t)^\top x_i \geq 0, w_r(t)^\top x_j \geq 0 } \label{eq:def_H(t)^bot}.
\end{align}
and
\begin{align*}
C_1 := & ~ -2 \eta (y - u(t))^\top H(t) ( y - u(t) ) , \\
C_2 := & ~ + 2 \eta ( y - u(t) )^\top H(t)^{\bot} ( y - u(t) ) , \\
C_3 := & ~ - 2 ( y - u(t) )^\top v_2 , \\
C_4 := & ~ \| u (t+1) - u(t) \|_2^2 . 
\end{align*}

Then we have (the proof is deferred to Section \ref{sec:missing_proof})
\begin{claim}\label{cla:inductive_claim}
\begin{align*}
\| y - u(t+1) \|_2^2 = \| y - u(t) \|_2^2 + C_1 + C_2 + C_3 + C_4.
\end{align*}
\end{claim}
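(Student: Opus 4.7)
The claim is a purely algebraic identity and requires no concentration or probabilistic tool beyond the induction hypothesis. I would begin from the standard expansion
\[
\|y - u(t+1)\|_2^2 = \|y - u(t)\|_2^2 - 2(y - u(t))^\top (u(t+1) - u(t)) + \|u(t+1) - u(t)\|_2^2,
\]
whose last term is exactly $C_4$ by definition. It then remains to verify that $-2(y - u(t))^\top (u(t+1) - u(t)) = C_1 + C_2 + C_3$. Using the coordinate-wise decomposition $u_i(t+1) - u_i(t) = \sum_{r=1}^m a_r z_{i,r}$ and the partition $[m] = S_i \cup \bar S_i$, the inner product splits into a contribution from $v_1 = (v_{1,i})_i$ and from $v_2 = (v_{2,i})_i$; the $v_2$-part equals $C_3$ by construction, so the heart of the proof is the identification $-2(y - u(t))^\top v_1 = C_1 + C_2$.

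For the key step I would exploit the defining property of $S_i$: for every $r \in S_i$ the indicator $\mathbf{1}_{w_r^\top x_i \geq 0}$ is preserved under the gradient step, since both $w_r(t)$ and $w_r(t+1)$ lie within distance $R$ of $w_r(0)$ by Lemma \ref{lem:4.1} (with parameters chosen so that $D \leq R$). This lets me linearize the ReLU difference as
\[
z_{i,r} = \bigl(w_r(t+1) - w_r(t)\bigr)^\top x_i \cdot \mathbf{1}_{w_r(t)^\top x_i \geq 0} \qquad \text{for } r \in S_i.
\]
Plugging in the update rule \eqref{eq:w_update} together with the gradient formula \eqref{eq:gradient1}, and using $a_r^2 = 1$, produces
\[
a_r z_{i,r} = -\eta \sum_{j=1}^n (u_j(t) - y_j)\, x_j^\top x_i\, \mathbf{1}_{w_r(t)^\top x_j \geq 0}\, \mathbf{1}_{w_r(t)^\top x_i \geq 0}.
\]

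Summing this expression over $r \in S_i$ and rewriting $\sum_{r \in S_i} = \sum_{r=1}^{m} - \sum_{r \in \bar S_i}$, the two resulting double sums over $r$ match exactly, up to normalization, the entries $H(t)_{i,j}$ and $H(t)^\bot_{i,j}$ from definitions \eqref{eq:def_H(t)} and \eqref{eq:def_H(t)^bot}. Contracting with $-2(y_i - u_i(t))$ and summing over $i$ then yields a difference of two quadratic forms in $y - u(t)$ with kernels $H(t)$ and $H(t)^\bot$, matching $C_1$ and $C_2$ respectively, where the positive sign of $C_2$ arises from the subtraction $\sum_{r=1}^{m} - \sum_{r \in \bar S_i}$. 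The only subtle point is the pattern-preservation linearization of $\phi$ for $r \in S_i$; once that observation is in hand, the rest is careful bookkeeping with the indicator functions and the identity $a_r^2 = 1$, and I expect no further obstacle.
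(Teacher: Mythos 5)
Your proposal is correct and takes essentially the same route as the paper: expand $\|y-u(t+1)\|_2^2$ by the standard quadratic identity (giving $C_4$ as the cross-free term), split $u(t+1)-u(t)$ into $v_1+v_2$ along the partition $[m]=S_i\,\dot\cup\,\bar S_i$ (the $v_2$ part yielding $C_3$), and rewrite $v_1$ as $m\eta(y-u(t))^\top(H(t)-H(t)^\bot)$ to produce $C_1+C_2$. The one thing you make explicit that the paper's write-up glosses over is the justification for the exact linearization $z_{i,r}=(w_r(t+1)-w_r(t))^\top x_i\,\mathbf{1}_{w_r(t)^\top x_i\geq 0}$ for $r\in S_i$ --- namely that both $w_r(t)$ and $w_r(t+1)$ stay within the $R$-ball of $w_r(0)$ by Lemma~\ref{lem:4.1} with $D<R$, so the activation pattern is frozen; this is the key fact that makes the identity hold (and is worth flagging, since the ``Claim'' is not unconditional), and you handle it correctly.
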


Applying Claim~\ref{cla:C1}, \ref{cla:C2}, \ref{cla:C3} and \ref{cla:C4} gives
\begin{align*}
  \| y - u(t+1) \|_2^2 
\leq & ~ \| y - u(t) \|_2^2 
  \cdot ( 1 - m \eta \lambda + 8 m \eta n R  + 8 m \eta n R  + m^2 \eta^2 n^2 ).
\end{align*}

\paragraph{Choice of $\eta$ and $R$.}

Next, we want to choose $\eta$ and $R$ such that
\begin{equation}\label{eq:choice_of_eta_R}
( 1 - m \eta \lambda + 8 m \eta n R  + 8 m \eta n R  + m^2 \eta^2 n^2 ) \leq (1- m \eta\lambda/2) .
\end{equation}

If we set $\eta=\frac{\lambda }{4n^2 m }$ and $R=\frac{\lambda}{64n}$, we have 
\begin{align*}
8 \eta n R  + 8 \eta n R = 16 \eta n R \leq  \eta \lambda /4 ,
\mathrm{~~~and~~~} m^2 \eta^2 n^2 \leq m \eta \lambda / 4.
\end{align*}
This implies
\begin{align*}
\| y - u(t+1) \|_2^2 \leq & ~ \| y - u(t) \|_2^2 \cdot ( 1 - m \eta \lambda / 2 )
\end{align*}
holds with probability at least $1-\poly(n,B) \cdot \exp(-mR/10)$. 
\paragraph{Over-parameterization size, lower bound on $m$.}

We require 
\begin{align*}
 D=  \frac{ 4 \sqrt{n} \| y - u(0) \|_2 }{ m\lambda } < R = \frac{\lambda}{64n} ,
\end{align*}
and
\begin{align*}
  \poly(n,B) \cdot \exp( - m R / 10 ) \leq  \delta .
\end{align*}
By Claim \ref{cla:yu0}, it is sufficient to choose 
\begin{align*}
 m = \Omega( \lambda^{-2} n^2 \log(m/\delta)\log^2(n/\delta) ).
\end{align*}

\begin{comment}
 \begin{claim}\label{cla:yu0}
For $0<\delta<1$,
with probability at least $1-\delta$,
\begin{align*}
\|y-u(0)\|_2^2=O(n\log(m/\delta)\log^2(n/\delta)).
\end{align*}
\end{claim}
The proof of Claim \ref{cla:yu0} is deferred to Section \ref{sec:proof_yu0}.
\end{comment}

\begin{comment}

\begin{claim}\label{cla:C1}
Let $C_1 = -2 \eta (y - u(t))^\top H(t) ( y - u(t) )$. We have
\begin{align*}
C_1 \leq - \eta \lambda\cdot \| y - u(t) \|_2^2 
\end{align*}
holds with probability at least $1-n^2 \cdot \exp(-m R /10)$.
\end{claim}
The proof is in Section \ref{sec:proof_c1}.

\end{comment}

\begin{comment}
\begin{claim}\label{cla:C2}
Let $C_2 = 2 \eta ( y - u(t) )^\top H(t)^{\bot} ( y - u(t) )$. We have
\begin{align*}
C_2 \leq 8\eta nR\cdot \| y - u(t) \|_2^2
\end{align*}
holds with probability $1-n\cdot \exp(-mR)$.
\end{claim}

The proof is in Section \ref{sec:proof_c2}.
\end{comment}

\begin{comment}

\begin{claim}\label{cla:C3}
Let $C_3 = - 2 (y - u(t))^\top v_2$. Then we have
\begin{align*}
C_3 \leq 8 \eta nR\cdot \| y - u(t) \|_2^2  .
\end{align*}
with probability at least $1-n\cdot \exp(-mR)$.
\end{claim}
The proof is in Section \ref{sec:proof_c3}

\end{comment}

\begin{comment}

\begin{claim}\label{cla:C4}
Let $C_4  = \| u (t+1) - u(t) \|_2^2$. Then we have
\begin{align*}
C_4 \leq \eta^2 n^2 \cdot \| y - u(t) \|_2^2.
\end{align*}
\end{claim}
The proof is in Section \ref{sec:proof_c4}

\end{comment}

\section{Technical claims }\label{sec:missing_proof}

\begin{table*}[!htbp]\caption{{\bf Nt.} stands for notation. 
		$m$ is the width of the neural network. $n$ is the number of input data points. $\delta$ is the failure probability.}
	\centering
	\begin{tabular}{ | l| l| l| l| } 
		\hline
		{\bf Nt.} & {\bf Choice} & {\bf Place} & {\bf Comment} \\\hline
		$\lambda$ & $:= \lambda_{\min}(H^{\cts}) $ & Lemma~\ref{lem:3.1}  & Data-dependent \\ \hline
		$R$ & $\lambda/n$ & Eq.~\eqref{eq:choice_of_eta_R} & Maximal allowed movement of weight \\ \hline
		$D_{\cts}$ & $\frac{ \sqrt{n} \| y - u(0) \|_2 }{ m \lambda }$ & Lemma~\ref{lem:3.3} & Actual distance moved of weight, continuous case  \\ \hline
		$D$ & $\frac{ 4\sqrt{n} \| y - u(0) \|_2 }{ m \lambda }$ & Lemma~\ref{lem:4.1} & Actual distance moved of weight, discrete case  \\ \hline
		$\eta$ & $\lambda/( n^2 m )$ & Eq.~\eqref{eq:choice_of_eta_R} & Step size of gradient descent \\ \hline
		$m_0$ & $\geq \lambda^{-2} n^2 \log(B n/\delta)$ & Lemma~\ref{lem:3.1} & Bounding discrete $H$ and continuous $H$ \\ \hline
		$m_0$ & $\geq R^{-1} \log(B n / \delta)$ & Lemma~\ref{lem:3.2} & Bounding discrete $H(w)$ and discrete $H(w+\Delta w)$ \\ \hline
		$m_0$ & $\geq R^{-1} \log (B n / \delta)$ & Lemma~\ref{cla:C1} & \\ \hline
		$m_0$ & $\geq R^{-1} \log (B n / \delta)$ & Lemma~\ref{cla:C2} & \\ \hline
		$m_0$ & $\geq R^{-1} \log (B n / \delta)$ & Lemma~\ref{cla:C3} & \\ \hline
		$m$ & $\lambda^{-2} n^2 \log^3( m n/\delta)$  & Lemma~\ref{lem:3.4}, Claim~\ref{cla:yu0} & $D < R$ and $\| y - u(0) \|_2^2 = \wt{O}(n)$ \\ \hline
		$m_0$ & $m/2$ &  & The number of different Gaussian vectors \\ \hline
		$B$ & $2$ & & Size of each block \\ \hline
		$T$ & $\lambda^{-2} n^2 \log (1/\epsilon)$ & & \\ \hline
	\end{tabular}
	
\end{table*}

\subsection{Proof of Lemma \ref{lem:4.1}}
\begin{proof}
	We use the norm of the gradient to bound this distance,
	\begin{align*}
	\| w_r(k+1) - w_r(0) \|_2
	\leq & ~ \eta \sum_{i=0}^k \left\| \frac{ \partial L( W(i) ) }{ \partial w_r(i) } \right\|_2 \\
	\leq & ~ \eta \sum_{i=0}^k \Big\| \sum_{j=1}^n ( y_j - u(i)_j ) \cdot a_{r} x_j \cdot {\bf 1}_{ \langle w_r(s), x_j \rangle \geq 0 } \Big\|_2 \\
	\leq & ~ \sum_{i=0}^k \sum_{j=1}^n | y_j - u(i)_j |\\
	\leq & ~ \eta \sum_{i=0}^k   \sqrt{n} \| y - u(i) \|_2  \\
	\leq & ~ \eta \sum_{i=0}^k  \sqrt{n} ( 1 - m \eta \lambda / 2 )^{i/2} \| y - u(0) \|_2 \\
	\leq & ~ \eta \sum_{i=0}^{\infty} \sqrt{n} ( 1 - m \eta \lambda / 2 )^{i/2} \| y - u(0) \|_2  \\
	= & ~  \frac{ 4 \sqrt{n} \| y - u(0) \|_2 }{ m \lambda },
	\end{align*}
	where the first step follows from Eq. \eqref{eq:w_update}, the second step follows from the expression of the gradient (see Eq. \eqref{eq:gradient_bound}), the third step follows from $|a_r|=1$, $\| x_j \|_2=1$ and ${\bf 1}_{\langle w_r(s), x_j \rangle \geq 0} \leq 1$, the fourth step follows from the Cauchy-Schwarz inequality, the fifth step follows from the induction hypothesis, the sixth step relaxes the summation to an infinite summation, and the last step follows from $\sum_{i=0}^{\infty}(1- m \eta\lambda/2)^{i/2}=\frac {2}{m \eta\lambda}$.
	
	Thus, we complete the proof.
\end{proof}
\subsection{Proof of Claim \ref{cla:inductive_claim}}
\begin{proof}
We can rewrite $u(k+1) - u(k) \in \R^n$ in the following sense
\begin{align*}
u(k+1) - u(k) = v_1 + v_2 .
\end{align*}

Then, we can rewrite $v_{1,i} \in \R$ with the notation of $H$ and $H^{\bot}$
\begin{align*}
v_{1,i} 
= & - \eta  \sum_{j=1}^n x_i^\top x_j (u_j - y_j) \sum_{r \in S_i} {\bf 1}_{ w_r(k)^\top x_i \geq 0 , w_r(k)^\top x_j \geq 0 } = - m \eta \sum_{j=1}^n (u_j - y_j) ( H_{i,j}(k) - H_{i,j}^{\bot}(k) ) ,
\end{align*}
which means vector $v_1 \in \R^n$ can be written as
\begin{align}\label{eq:rewrite_v1}
v_1 = m \cdot \eta ( y - u(k) )^\top ( H( k ) - H^{\bot}( k ) ) .
\end{align}

We can rewrite $\| y - u(k+1) \|_2^2$ as follows:
\begin{align*}
 ~\| y - u(k+1) \|_2^2
= & ~ \| y - u(k) - ( u(k+1) - u(k) ) \|_2^2 \\
= & ~ \| y - u(k) \|_2^2 \underbrace{ - 2 ( y - u(k) )^\top  ( u(k+1) - u(k) ) }_{ := C_1 + C_2 + C_3 } 
 + \underbrace{ \| u (k+1) - u(k) \|_2^2 }_{ := C_4 } .
\end{align*}

We can rewrite the second term in the above equation in the following sense,
\begin{align*}
& ~ ( y - u(k) )^\top ( u(k+1) - u(k) ) \\
= & ~ ( y - u(k) )^\top ( v_1 + v_2 )  \\
= & ~ ( y - u(k) )^\top v_1 + ( y - u(k) )^\top v_2  \\
= & ~  \underbrace{ m \eta ( y - u(k) )^\top H(k) ( y - u (k) ) }_{ - C_1 / 2 } ~ \underbrace{ - m \eta ( y - u(k) )^\top H(k)^{\bot} ( y - u(k) ) }_{ - C_2 / 2 } ~ \underbrace{ + ( y - u(k) )^\top v_2 }_{ - C_3 / 2 },
\end{align*}
where the third step follows from Eq.~\eqref{eq:rewrite_v1}.

Thus, we have
\begin{align*}
~\| y - u(k+1) \|_2^2 =& ~ \| y - u(k) \|_2^2 + C_1 + C_2 + C_3 + C_4 \\
\leq & ~ {  \| y - u(k) \|_2^2 ( 1 - m \eta \lambda + 8 m \eta n R  + 8 m \eta n R  + m^2 \eta^2 n^2 ) }
\end{align*}
where the last step follows from Claims~\ref{cla:C1}, \ref{cla:C2}, \ref{cla:C3} and \ref{cla:C4},
whose proofs are given later.
\end{proof}

\subsection{Proof of Claim \ref{cla:yu0}}

 \begin{claim}\label{cla:yu0}
For $0<\delta<1$,
with probability at least $1-\delta$,
\begin{align*}
\|y-u(0)\|_2^2=O(n\log(m/\delta)\log^2(n/\delta)).
\end{align*}
\end{claim}

\begin{proof}

Due to the way we choose $w$ and $a$, it is easy to see that $i(0) = {\bf 0} \in \R^n$. Thus
\begin{align*}
    \| y - u(0) \|_2^2 = \| y \|_2^2 = O(n),
\end{align*}
where the last step follows from $|y_i|=O(1)$ and $y \in \R^n$.

\end{proof}

\subsection{Proof of Claim \ref{cla:C1}}

\begin{claim}\label{cla:C1}
Let $C_1 = -2 m \eta (y - u(k))^\top H(k) ( y - u(k) )$. We have that
\begin{align*}
C_1 \leq - m \eta \lambda\cdot \| y - u(k) \|_2^2 
\end{align*}
holds with probability at least $1-n^2 \cdot B \cdot \exp(-m_0 R /10)$.
\end{claim}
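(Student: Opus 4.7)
The plan is to show that the quadratic form is controlled by a lower bound on $\lambda_{\min}(H(k))$, and then conclude by the standard inequality $v^\top M v \geq \lambda_{\min}(M)\|v\|_2^2$ for a symmetric PSD matrix $M$. So the whole task reduces to proving $\lambda_{\min}(H(k)) \geq \lambda/2$ at the current iterate $W(k)$.

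To obtain this eigenvalue bound, I will combine the two perturbation results already at hand via a triangle inequality. First, Lemma \ref{lem:3.1} guarantees, under $m_0 = \Omega(\lambda^{-2} n^2 \log(nB/\delta))$, that $\|H^{\dis} - H^{\cts}\|_F \leq \lambda/4$ with probability $1-\delta$, and hence $\lambda_{\min}(H^{\dis}) \geq 3\lambda/4$. Second, the inductive hypothesis (Eq. \eqref{eq:quartic_condition} up to step $k$) combined with Lemma \ref{lem:4.1} ensures that $\|w_r(k) - w_r(0)\|_2 \leq D < R$ for all $r \in [m]$, where $R$ is chosen as in Eq. \eqref{eq:choice_of_eta_R}. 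Lemma \ref{lem:3.2} then gives $\|H(k) - H^{\dis}\|_F < 2nR$ with the advertised failure probability $n^2 \cdot B \cdot \exp(-m_0 R/10)$. Combining these,
\begin{align*}
\|H(k) - H^{\cts}\|_F \leq \|H(k) - H^{\dis}\|_F + \|H^{\dis} - H^{\cts}\|_F \leq 2nR + \lambda/4 \leq \lambda/4 + \lambda/4 = \lambda/2,
\end{align*}
where the last step uses $R \leq \lambda/(8n)$, which is satisfied by the choice $R = \lambda/(64n)$ made in the overall proof. By Weyl's inequality, this forces $\lambda_{\min}(H(k)) \geq \lambda - \lambda/2 = \lambda/2$.

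Plugging this in finishes the claim: $C_1 = -2m\eta (y-u(k))^\top H(k) (y-u(k)) \leq -2m\eta \cdot (\lambda/2)\|y-u(k)\|_2^2 = -m\eta\lambda \|y-u(k)\|_2^2$. The failure probability is inherited entirely from the Lemma \ref{lem:3.2} step, since the Lemma \ref{lem:3.1} bound is absorbed into the global $\delta$ chosen for the initialization. No real obstacle is expected here; the only subtlety is making sure the constants in $R$ are consistent with the later choice in Eq. \eqref{eq:choice_of_eta_R}, so the $2nR + \lambda/4 \leq \lambda/2$ inequality goes through cleanly.
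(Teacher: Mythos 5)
Your proof is correct and follows essentially the same approach as the paper: bound $\lambda_{\min}(H(k)) \geq \lambda/2$ via perturbation (Lemma~\ref{lem:3.2} with $R \leq \lambda/(8n)$), then apply the quadratic-form inequality. The only minor difference is that you route the triangle inequality explicitly through $H^{\cts}$ and invoke Lemma~\ref{lem:3.1}, whereas the paper compares $H(k)$ directly to $H(0)$ and is slightly loose in writing ``$\lambda = \lambda_{\min}(H(0))$'' rather than $\lambda_{\min}(H(0)) \geq 3\lambda/4$; your bookkeeping is in fact a touch more careful, but the argument is the same.
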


\begin{proof}
By Lemma \ref{lem:3.2} and our choice of $R<\frac{\lambda}{8n}$,
we have $\|H(0)-H(k)\|_F\leq 2n\cdot \frac{\lambda}{8n}=\frac {\lambda}{4}$.
Recall that $\lambda=\lambda_{\min}(H(0))$.
Therefore
\begin{align*}
\lambda_{\min}(H(k)) \geq \lambda_{\min}(H(0))- \|H(0)-H(k)\|\geq \lambda /2.
\end{align*}
Then we have
\begin{align*}
  (y - u(k))^\top H(k) ( y - u(k) ) \geq \| y - u(k) \|_2^2 \cdot \lambda / 2.
\end{align*}
Thus, we complete the proof.
\end{proof}

\subsection{Proof of Claim \ref{cla:C2}}

\begin{claim}\label{cla:C2}
Let $C_2 = 2 m \cdot \eta ( y - u(k) )^\top H(k)^{\bot} ( y - u(k) )$. We have that
\begin{align*}
C_2 \leq 8m \cdot \eta  n R \cdot \| y - u(k) \|_2^2
\end{align*}
holds with probability $1 - n \cdot B \cdot \exp(-m_0 R)$.
\end{claim}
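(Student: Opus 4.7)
The plan is to reduce the bound on the quadratic form $C_2$ to a Frobenius-norm bound on $H(k)^{\bot}$, and then to control that Frobenius norm by showing that every row of $H(k)^{\bot}$ is supported on a provably small set of indices $\bar S_i$. The key observation is that $r \in \bar S_i$ precisely when the event $A_{i,r}$ from Eq.~\eqref{eq:Air_bound} holds, which in turn is equivalent to $|w_r(0)^\top x_i| \leq R$ since $\|x_i\|_2 = 1$.

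First I would bound $|\bar S_i|$ for each fixed $i \in [n]$. By Gaussian anti-concentration (Lemma~\ref{lem:anti_gaussian}), each event $A_{i,r}$ has probability at most $\tfrac{4R}{5}$. Because of the coupled initialization, all $B$ weights inside the same block are identical, so the indicators $\mathbf{1}_{A_{i,r}}$ are constant within a block, and the randomness is carried by $m_0$ independent block-level Bernoullis of mean at most $\tfrac{4R}{5}$. Applying Bernstein's inequality (Lemma~\ref{lem:bernstein}) with a deviation parameter of order $R$ gives $|\bar S_i| \leq 4 m R$ with failure probability $\exp(-c\, m_0 R)$ for an absolute constant $c > 0$; a union bound over the $n$ points (absorbing any extra factor of $B$ from the copy index into the stated probability) yields that $|\bar S_i| \leq 4mR$ simultaneously for all $i\in[n]$ with probability at least $1 - n B \exp(-m_0 R)$.

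Conditioning on this event, I would bound the Frobenius norm of $H(k)^{\bot}$ entrywise, using $|x_i^\top x_j| \leq 1$ and the trivial bound on the ReLU indicator:
\begin{align*}
|H(k)^{\bot}_{i,j}| \leq \frac{|\bar S_i|}{m} \leq 4R, \qquad \text{hence} \qquad \|H(k)^{\bot}\|_F \leq 4 R n.
\end{align*}
Finally, for any (not necessarily symmetric) matrix $M$ and any vector $v$, Cauchy--Schwarz applied to the double sum $v^\top M v = \sum_{i,j} v_i M_{i,j} v_j$ gives $|v^\top M v| \leq \|M\|_F \|v\|_2^2$. Setting $v = y - u(k)$ and $M = H(k)^{\bot}$ yields
\begin{align*}
C_2 \;\leq\; 2 m \eta \cdot \|H(k)^{\bot}\|_F \cdot \|y - u(k)\|_2^2 \;\leq\; 8 m \eta n R \cdot \|y - u(k)\|_2^2,
\end{align*}
which is exactly the desired bound.

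The main obstacle will be correctly exploiting the block structure when applying Bernstein: independence lives at the level of the $m_0$ blocks, not of the $m$ individual weights, and the deviation parameter must be tuned so that the final multiplicative constant is $4$ rather than a looser one, since such a constant would propagate into the convergence constants. Everything else closely parallels the argument for Claim~\ref{cla:C1} and is essentially routine.
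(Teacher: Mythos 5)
Your proposal is correct and follows essentially the same route as the paper: both arguments reduce the quadratic form to a Frobenius-norm bound on $H(k)^{\bot}$ via the inequality $|v^\top M v| \leq \|M\|_F \|v\|_2^2$, and both control $\|H(k)^{\bot}\|_F$ by bounding $|\bar S_i|$ through Bernstein's inequality applied to the $m_0$ independent block-level indicators, using the anti-concentration estimate $\Pr[\,r \in \bar S_i\,] \lesssim R$. The only cosmetic differences are that you bound each entry $|H(k)^{\bot}_{i,j}| \leq |\bar S_i|/m \leq 4R$ directly and sum over the $n^2$ entries, whereas the paper packages the same computation as a separate fact ($\|H(k)^{\bot}\|_F^2 \leq \frac{n}{m^2}\sum_i |\bar S_i|^2$); and you observe that the block structure makes the $B$ copies of each indicator identical so the union bound over $b \in [B]$ is unnecessary, whereas the paper carries the $B$ factor through a union bound. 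Both observations are correct, and your version of the Bernstein step gives the stated probability with the constant $4$ as required.
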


\begin{proof}
Note that
\begin{align*}
C_2 \leq 2 \eta \| y - u(k) \|_2^2 \| H(k)^{\bot} \|.
\end{align*}

We thus need an upper bound on $\| H(k)^{\bot} \|$. Since $\| \cdot \| \leq \| \cdot \|_F$, it suffices to upper bound $\| \cdot \|_F$.

For each $i \in [n]$, we define $y_i$ as follows
\begin{align*}
y_i=\sum_{r=1}^m\mathbf{1}_{r\in \ov{S}_i} .
\end{align*}
For each $i\in [n]$, $b\in [B]$, we define
\begin{align*}
y_i^b=\sum_{r=1}^{m_0}\mathbf{1}_{r\in \ov{S}_i} .
\end{align*}

Using Fact~\ref{fact:bound_H_k_bot}, we have $\| H(k)^{\bot} \|_2 \leq \frac{n}{m^2} \sum_{i=1}^n y_i^2 $.

Fix $i \in [n]$. Our plan is to use Bernstein's inequality (Lemma \ref{lem:bernstein}) to upper bound $y_i$ with high probability.

First by Eq.~\eqref{eq:Air_bound} we have 
$
\E[\mathbf{1}_{r\in \ov{S}_i}]\leq R 
$. 
We also have
\begin{align*}
\E \left[(\mathbf{1}_{r\in \ov{S}_i}-\E[\mathbf{1}_{r\in \ov{S}_i}])^2 \right]
 = & ~ \E[\mathbf{1}_{r\in \ov{S}_i}^2]-\E[\mathbf{1}_{r\in \ov{S}_i}]^2\\
\leq & ~ \E[\mathbf{1}_{r\in \ov{S}_i}^2] \\
\leq & ~ R .
\end{align*}
Finally we have $|\mathbf{1}_{r\in \ov{S}_i}-\E[\mathbf{1}_{r\in \ov{S}_i}]|\leq 1$.

Notice that $\{\mathbf{1}_{r\in \ov{S}_i}\}_{r=1}^{m_0}$ are mutually independent,
since $\mathbf{1}_{r\in \ov{S}_i}$ only depends on $w_r(0)$.
Hence from Bernstein's inequality (Lemma \ref{lem:bernstein}) we have for all $t>0$,
\begin{align*}
\Pr \left[ y_i > m_0\cdot R+t \right] \leq \exp \left(-\frac{ t^2/2 }{ m_0 \cdot R + t/3} \right).
\end{align*}
By setting $t=3 m_0 R$, we have
\begin{align}\label{eq:Si_size_bound}
\Pr \left[ y_i^b > 4 m_0 R \right] \leq \exp( - m_0 R ).
\end{align}
Since we have $B$ such copies of the above inequality, it follows that
\begin{align*}
\Pr \left[ y_i > 4 m R \right]
= & ~ \Pr \left[ \sum_{b=1}^B y_i^b > 4 m_0 R \cdot B\right] 
\leq  ~ B \cdot \exp( - m_0 R )
\end{align*}

Hence by a union bound, with probability at least $1-n \cdot B \cdot \exp(-m_0R)$,
\begin{align*}
\| H(k)^{\bot} \|_F^2\leq \frac{n}{m^2}\cdot n\cdot (4mR)^2=16n^2R^2 .
\end{align*}
Putting it all together we have
\begin{align*}
\| H(k)^{\bot} \|\leq \| H(k)^{\bot} \|_F\leq 4nR
\end{align*}
with probability at least $1-n \cdot B \cdot \exp( - m_0 R )$.

\end{proof}

\subsection{Proof of Claim \ref{cla:C3}}\label{sec:proof_c3}

\begin{claim}\label{cla:C3}
Let $C_3 = - 2 (y - u(k))^\top v_2$. Then we have
\begin{align*}
C_3 \leq 8 m \eta n R \cdot \| y - u(k) \|_2^2  
\end{align*}
with probability at least $1 - n \cdot B \cdot \exp( - m_0 R )$.
\end{claim}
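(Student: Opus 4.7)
The plan is to bound $|C_3|$ by Cauchy--Schwarz via $|C_3|\leq 2\|y-u(k)\|_2\,\|v_2\|_2$, so the task reduces to proving $\|v_2\|_2 \leq 4m\eta n R\,\|y-u(k)\|_2$ with probability at least $1 - n\cdot B\cdot \exp(-m_0 R)$. By definition, $v_{2,i} = \sum_{r\in\bar S_i} a_r z_{i,r}$ with $|a_r|=1$, so I first bound a single summand $|z_{i,r}|$, then control the cardinality $|\bar S_i|$ uniformly in $i$.

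For the first step, I use that $\phi$ is $1$-Lipschitz to obtain
\[
|z_{i,r}| \leq \eta\,\Bigl|\bigl(\tfrac{\partial L(W(k))}{\partial w_r(k)}\bigr)^\top x_i\Bigr|.
\]
Plugging in the gradient formula \eqref{eq:gradient1} together with $|a_r|=1$, $\|x_j\|_2=1$ and $\mathbf{1}_{w_r^\top x_j\geq 0}\leq 1$, the triangle inequality and then Cauchy--Schwarz give
\[
\Bigl|\bigl(\tfrac{\partial L(W(k))}{\partial w_r(k)}\bigr)^\top x_i\Bigr| \leq \sum_{j=1}^n |u_j(k)-y_j| \leq \sqrt{n}\,\|y-u(k)\|_2,
\]
and hence $|z_{i,r}| \leq \eta\sqrt{n}\,\|y-u(k)\|_2$, uniformly in $i,r$. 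This is the pointwise Lipschitz bound on the per-neuron prediction change, and it does not depend on whether the activation pattern flipped.

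For the second step, I reuse the cardinality estimate from the proof of Claim \ref{cla:C2}: by \eqref{eq:Si_size_bound} combined with the block structure ($m = m_0 B$),
\[
\Pr[\,|\bar S_i| > 4mR\,] \leq B\cdot\exp(-m_0 R),
\]
and a union bound over the $n$ data points gives $\max_{i\in[n]} |\bar S_i| \leq 4mR$ with probability at least $1 - n\cdot B\cdot \exp(-m_0 R)$. Combining with the first step yields $|v_{2,i}| \leq 4mR\cdot \eta\sqrt{n}\,\|y-u(k)\|_2$ for every $i$, whence $\|v_2\|_2 \leq \sqrt{n}\cdot 4m\eta R\sqrt{n}\,\|y-u(k)\|_2 = 4m\eta n R\,\|y-u(k)\|_2$. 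Then
\[
C_3 = -2(y-u(k))^\top v_2 \leq 2\|y-u(k)\|_2\,\|v_2\|_2 \leq 8m\eta n R\,\|y-u(k)\|_2^2,
\]
which is the claimed bound.

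I do not expect any real obstacle: the probabilistic content (bounding $|\bar S_i|$ via Bernstein and a union bound) has already been carried out for Claim \ref{cla:C2}, and the deterministic content is just ReLU Lipschitz continuity plus Cauchy--Schwarz. The only minor care needed is to make sure that the gradient bound on $|(\partial L/\partial w_r)^\top x_i|$ does not use any information about the active set $\bar S_i$, so that the Lipschitz estimate on $z_{i,r}$ and the high-probability estimate on $|\bar S_i|$ can be multiplied without any conditioning issues.
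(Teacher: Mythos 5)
Your proposal is correct and follows essentially the same route as the paper's proof: Cauchy--Schwarz to reduce to bounding $\|v_2\|_2$, the ReLU Lipschitz bound combined with the gradient formula to get $|z_{i,r}|\leq\eta\sqrt{n}\|y-u(k)\|_2$, and the Bernstein cardinality estimate $|\ov{S}_i|\leq 4mR$ from Claim~\ref{cla:C2} with a union bound over $i\in[n]$. The only cosmetic difference is that the paper factors out $\max_{r}\|\partial L/\partial w_r\|_2$ before summing over $r\in\ov{S}_i$, whereas you bound each $|z_{i,r}|$ uniformly first; both yield the identical $16m^2n^2R^2\eta^2\|y-u(k)\|_2^2$ bound on $\|v_2\|_2^2$.
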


\begin{proof}
Using the Cauchy-Schwarz inequality, we have
$
C_3 \leq 2 \| y - u(k) \|_2 \cdot \| v_2 \|_2
$. 
We can upper bound $\| v_2 \|_2$ in the following way
\begin{align*}
\| v_2 \|_2^2
\leq &~ \sum_{i=1}^n \left( \eta  \sum_{ r \in \ov{S}_i } \left| \left( \frac{ \partial L(W(k)) }{ \partial w_r(k) } \right)^\top x_i \right|\right)^2\\
= &~  \eta^2 \sum_{i=1}^n \left(\sum_{r=1}^m \mathbf{1}_{r\in \ov{S}_i}\left| \left( \frac{ \partial L(W(k)) }{ \partial w_r(k) } \right)^\top x_i \right|\right)^2\\
\leq &~ \eta^2 \cdot \max_{r \in [m]} \left|  \frac{ \partial L(W(k)) }{ \partial w_r(k) } \right|^2\cdot\sum_{i=1}^n \left(\sum_{r=1}^m \mathbf{1}_{r\in \ov{S}_i}\right)^2\\
 \leq & ~  \eta^2 \cdot (  \sqrt{n}  \| u(k) - y\|_2 )^2 \cdot \sum_{i=1}^n \left(\sum_{r=1}^m \mathbf{1}_{r\in \ov{S}_i}\right)^2\\
  \leq & ~  \eta^2 \cdot ( \sqrt{n}  \| u(k) - y\|_2 )^2 \cdot \sum_{i=1}^n (4mR)^2
  =  ~ 16m^2 n^2R^2\eta^2\| u(k) - y\|_2^2,
\end{align*}
where the first step follows from the definition of $v_2$, the fourth step follows from $\max_{r \in [m]} | \frac{ \partial L (W(k)) }{ \partial w_r(k) } | \leq \sqrt{n} \cdot \| u(k) - y \|_2$, and the fifth step follows from $\sum_{r=1}^m {\bf 1}_{r \in \ov{S}_i } \leq 4 m R$ which holds with probability at least $1 - n \cdot B \cdot \exp( - m_0 R )$.
\end{proof}

\subsection{Proof of Claim \ref{cla:C4}}

\begin{claim}\label{cla:C4}
Let $C_4  = \| u (k+1) - u(k) \|_2^2$. Then we have
\begin{align*}
C_4 \leq m^2 \cdot \eta^2 n^2 \cdot \| y - u(k) \|_2^2.
\end{align*}
\end{claim}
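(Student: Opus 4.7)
}
The plan is to bound $C_4 = \sum_{i=1}^n (u_i(k+1) - u_i(k))^2$ by controlling each $z_{i,r}$ individually via the $1$-Lipschitz property of ReLU, and then applying the triangle inequality over the $m$ neurons and Cauchy--Schwarz over the $n$ data points. No concentration argument or probabilistic event is needed here; this is a deterministic bound on a single gradient step.

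First, I would expand $u_i(k+1) - u_i(k) = \sum_{r=1}^m a_r z_{i,r}$ as defined in the text, and observe that since $\phi(\cdot)$ is $1$-Lipschitz and $|a_r|=1$,
\[
|z_{i,r}| \leq \eta \left| \left( \frac{\partial L(W(k))}{\partial w_r(k)} \right)^\top x_i \right| \leq \eta \cdot \left\| \frac{\partial L(W(k))}{\partial w_r(k)} \right\|_2,
\]
where the last step uses the Cauchy--Schwarz inequality together with $\|x_i\|_2 = 1$. Next, I would bound the gradient norm using Eq.~\eqref{eq:gradient1}: since $|a_r| = 1$, $\|x_j\|_2 = 1$, and the indicator is at most $1$,
\[
\left\| \frac{\partial L(W(k))}{\partial w_r(k)} \right\|_2 \leq \sum_{j=1}^n |u_j(k) - y_j| \leq \sqrt{n}\, \|u(k) - y\|_2,
\]
by the triangle inequality followed by Cauchy--Schwarz.

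Combining these, $|z_{i,r}| \leq \eta \sqrt{n}\, \|u(k) - y\|_2$, and therefore
\[
|u_i(k+1) - u_i(k)| \leq \sum_{r=1}^m |z_{i,r}| \leq m \eta \sqrt{n}\, \|u(k) - y\|_2.
\]
Squaring and summing over $i \in [n]$ yields $C_4 \leq n \cdot m^2 \eta^2 n \cdot \|y - u(k)\|_2^2 = m^2 \eta^2 n^2 \|y - u(k)\|_2^2$, as claimed. I do not anticipate any obstacle, since the proof is a direct chain of norm inequalities; the only mild subtlety is to use the $1$-Lipschitz property of ReLU rather than attempting to track the sign patterns, which would unnecessarily involve the set $S_i$ used elsewhere in the analysis. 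This worst-case bound on $C_4$ is ultimately balanced against the linear-in-$\eta$ decrease from $C_1$ through the choice $\eta = \Theta(\lambda/(n^2 m))$ in Eq.~\eqref{eq:choice_of_eta_R}.
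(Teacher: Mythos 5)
Your proof is correct and follows essentially the same chain of inequalities as the paper's (much terser) argument: bound each $z_{i,r}$ via the $1$-Lipschitz property of ReLU and $\|x_i\|_2=1$, bound the per-neuron gradient norm by $\sqrt{n}\,\|y-u(k)\|_2$, then sum over $r$ and $i$. You have simply spelled out the intermediate steps that the paper compresses into two inequalities with references to Eqs.~\eqref{eq:w_update} and \eqref{eq:gradient_bound}.
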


\begin{proof}
We have
\begin{align*}
C_4 \leq & ~ \eta^2 \sum_{i=1}^n \left( \sum_{r=1}^m \Big\| \frac{ \partial L( W(k) ) }{ \partial w_r(k) } \Big\|_2 \right)^2 \leq  ~ m^2 \cdot \eta^2 n^2 \| y - u(k) \|_2^2.
\end{align*}
where the first step follows from Eq.~\eqref{eq:w_update}
and the last step follows from Eq.~\eqref{eq:gradient_bound}.
\end{proof}

\subsection{Proof of Fact \ref{fact:bound_H_k_bot}}

\begin{fact}\label{fact:bound_H_k_bot}
Let $H(k)^{\bot}$ be defined as in Eq.~\eqref{eq:def_H(k)^bot}. Then we have
	\begin{align*}
	\| H(k)^{\bot} \|_2 \leq \frac{n}{m^2} \sum_{i=1}^n y_i^2 .
	\end{align*}
\end{fact}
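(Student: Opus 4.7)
The plan is to reduce the spectral norm bound to an entrywise Frobenius estimate, using only the unit-norm normalization $\|x_i\|_2 = 1$ and the definition $y_i = \sum_{r=1}^m \mathbf{1}_{r\in\ov{S}_i}$. First I would invoke $\|H(k)^\bot\|_2 \leq \|H(k)^\bot\|_F$, reducing the problem to bounding the squared Frobenius norm by $\frac{n}{m^2}\sum_{i=1}^n y_i^2$, which is exactly the quantity that gets used in the proof of Claim \ref{cla:C2}.

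The key step is a trivial entrywise bound. For any $i,j \in [n]$, the triangle inequality gives
\begin{align*}
|H(k)^\bot_{i,j}| = \bigg|\frac{1}{m}\sum_{r\in\ov{S}_i} x_i^\top x_j \mathbf{1}_{w_r(k)^\top x_i \geq 0,\, w_r(k)^\top x_j \geq 0}\bigg| \leq \frac{|x_i^\top x_j|}{m} \sum_{r \in \ov{S}_i} 1 \leq \frac{y_i}{m},
\end{align*}
where the last inequality uses $|x_i^\top x_j| \leq \|x_i\|_2 \|x_j\|_2 = 1$ (Cauchy--Schwarz with the normalization assumption) and the definition of $y_i$. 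Note that, crucially, the index set $\ov{S}_i$ depends only on $i$, which allows the bound to factor out of the sum over $j$.

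Squaring and summing yields
\begin{align*}
\|H(k)^\bot\|_F^2 = \sum_{i=1}^n \sum_{j=1}^n |H(k)^\bot_{i,j}|^2 \leq \sum_{i=1}^n \sum_{j=1}^n \frac{y_i^2}{m^2} = \frac{n}{m^2}\sum_{i=1}^n y_i^2,
\end{align*}
and the claim follows from $\|H(k)^\bot\|_2^2 \leq \|H(k)^\bot\|_F^2$. There is essentially no obstacle here; the fact is a direct consequence of the norm inequality $\|\cdot\|_2 \leq \|\cdot\|_F$, the unit-norm assumption, and the trivial observation that the inner sum over $\ov{S}_i$ has exactly $y_i$ terms, each contributing an indicator bounded by $1$.
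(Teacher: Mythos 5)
Your proof is correct and follows essentially the same computation as the paper's: both arguments rest on $\|\cdot\|_2 \leq \|\cdot\|_F$, the normalization $|x_i^\top x_j| \leq 1$, the bound of the indicator by $1$, and the observation that the resulting inner sum over $r\in\ov{S}_i$ is exactly $y_i$, independent of $j$. The only cosmetic difference is that you bound each entry $|H(k)^\bot_{i,j}|$ directly by $y_i/m$ before squaring and summing, whereas the paper expands $\|H(k)^\bot\|_F^2$ first and then peels off the $(x_i^\top x_j)^2$ and indicator factors; these are the same steps in a different order. (Both you and the paper actually establish the squared bound $\|H(k)^\bot\|_F^2 \leq \frac{n}{m^2}\sum_i y_i^2$, which is what is used in Claim~\ref{cla:C2}; the stated Fact, as printed, is missing a square on the left-hand side, but that is a typo in the paper rather than a flaw in your argument.)
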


\begin{proof}
	We have
	\begin{align*}
	\| H(k)^{\bot} \|_F^2
	= & ~ \sum_{i=1}^n\sum_{j=1}^n (H(k)^{\bot}_{i,j})^2\\
	= & ~ \sum_{i=1}^n\sum_{j=1}^n \Big( \frac {1} {m}\sum_{r\in \ov{S}_i} x_i^\top x_j\mathbf{1}_{w_r(k)^\top x_i\geq 0,w_r(k)^\top x_j\geq 0} \Big)^2\\
	= & ~ \sum_{i=1}^n\sum_{j=1}^n \Big( \frac {1} {m}\sum_{r=1}^m x_i^\top x_j\mathbf{1}_{w_r(k)^\top x_i\geq 0,w_r(k)^\top x_j\geq 0} \cdot \mathbf{1}_{r\in \ov{S}_i} \Big)^2\\
	= & ~ \sum_{i=1}^n\sum_{j=1}^n \left( \frac {x_i^\top x_j} {m} \right)^2 \Big( \sum_{r=1}^m \mathbf{1}_{w_r(k)^\top x_i\geq 0,w_r(k)^\top x_j\geq 0} \cdot \mathbf{1}_{r\in \ov{S}_i} \Big)^2 \\
	\leq & ~ \frac{1}{m^2} \sum_{i=1}^n\sum_{j=1}^n \Big( \sum_{r=1}^m \mathbf{1}_{w_r(k)^\top x_i\geq 0,w_r(k)^\top x_j\geq 0} \cdot \mathbf{1}_{r\in \ov{S}_i} \Big)^2 \\
	= & ~ \frac{n}{m^2} \sum_{i=1}^n \Big( \sum_{r=1}^m \mathbf{1}_{r\in \ov{S}_i} \Big)^2 \\
	= & ~ \frac{n}{m^2} \sum_{i=1}^n y_i^2 .
	\end{align*}
	where the only inequality follows from $\| x_i \|_2 , \| x_j \|_2 \leq 1$.
\end{proof}

\end{document}